



\documentclass[sigconf]{aamas} 


\usepackage{balance} 



\setcopyright{ifaamas}
\acmConference[AAMAS '22]{Proc.\@ of the 21st International Conference
on Autonomous Agents and Multiagent Systems (AAMAS 2022)}{May 9--13, 2022}
{Online}{P.~Faliszewski, V.~Mascardi, C.~Pelachaud,
M.E.~Taylor (eds.)}
\copyrightyear{2022}
\acmYear{2022}
\acmDOI{}
\acmPrice{}
\acmISBN{}



\acmSubmissionID{506}


\title[Robust LfO with Model Misspecification]{Robust Learning from Observation with Model Misspecification}


\author{Luca Viano}
\affiliation{
  \institution{LIONS, EPFL}
  \city{Lausanne}
  \country{Switzerland}}
\email{luca.viano@epfl.ch}

\author{Yu-Ting Huang}
\affiliation{
  \institution{EPFL}
  \city{Lausanne}
  \country{Switzerland}}
\email{y.t.huang.tp@gmail.com}

\author{Parameswaran Kamalaruban}
\affiliation{
  \institution{The Alan Turing Institute}
  \city{London}
  \country{United Kingdom}}
\email{kparameswaran@turing.ac.uk}

\author{Craig Innes}
\affiliation{
  \institution{The University of Edinburgh}
  \city{Edinburgh}
  \country{United Kingdom}}
\email{craig.innes@ed.ac.uk}

\author{Subramanian Ramamoorthy}
\affiliation{
  \institution{The University of Edinburgh}
  \city{Edinburgh}
  \country{United Kingdom}}
\email{s.ramamoorthy@ed.ac.uk	}

\author{Adrian Weller}
\affiliation{
  \institution{University of Cambridge,}
  \institution{The Alan Turing Institute}
  \country{United Kingdom}}
\email{aw665@cam.ac.uk}


\begin{abstract}
Imitation learning (IL) is a popular paradigm for training policies in robotic systems when specifying the reward function is difficult. However, despite the success of IL algorithms, they impose the somewhat unrealistic requirement that the expert demonstrations must come from the same domain in which a new imitator policy is to be learned. We consider a practical setting, where (i) state-only expert demonstrations from the real (deployment) environment are given to the learner, (ii) the imitation learner policy is trained in a simulation (training) environment whose transition dynamics is slightly different from the real environment, and (iii) the learner does not have any access to the real environment during the training phase beyond the batch of demonstrations given. Most of the current IL methods, such as generative adversarial imitation learning and its state-only variants, fail to imitate the optimal expert behavior under the above setting. By leveraging insights from the Robust reinforcement learning (RL) literature and building on recent adversarial imitation approaches, we propose a robust IL algorithm to learn policies that can effectively transfer to the real environment without fine-tuning. Furthermore, we empirically demonstrate on continuous-control benchmarks that our method outperforms the state-of-the-art state-only IL method in terms of the zero-shot transfer performance in the real environment and robust performance under different testing conditions.
\end{abstract}



\keywords{Sim-to-real transfer; Imitation Learning; Learning from Observation; Robust Reinforcement Learning} 


         
\newcommand{\BibTeX}{\rm B\kern-.05em{\sc i\kern-.025em b}\kern-.08em\TeX}

\usepackage[utf8]{inputenc} 
\usepackage[T1]{fontenc}    
\usepackage{hyperref}       
\usepackage{url}            
\usepackage{booktabs}       
\usepackage{amsfonts}       
\usepackage{nicefrac}       
\usepackage{microtype}      
\usepackage{xcolor}         

\usepackage{wrapfig}

\usepackage{enumitem}

\RequirePackage{algorithm}
\RequirePackage{algorithmic}

\usepackage{graphicx}
\usepackage{subfig}

\usepackage{amsmath}
\usepackage{amsthm}
\usepackage{stmaryrd}
\usepackage{comment}
\usepackage{setspace}
\newcommand{\bc}[1]{\left\{{#1}\right\}}
\newcommand{\br}[1]{\left({#1}\right)}
\newcommand{\bs}[1]{\left[{#1}\right]}
\newcommand{\abs}[1]{\left| {#1} \right|}

\newcommand{\norm}[1]{\left\| {#1} \right\|}

\renewcommand{\P}[1]{\mathbb{P}\bs{{#1}}}

\newcommand{\E}[1]{\mathbb{E}\bs{{#1}}}

\newcommand{\Eee}[2]{\mathbb{E}_{#1}\bs{{#2}}}

\newtheorem{theorem}{Theorem}

\DeclareMathOperator*{\argmin}{arg\,min}
\DeclareMathOperator*{\argmax}{arg\,max}

\allowdisplaybreaks


\begin{document}


\pagestyle{fancy}
\fancyhead{}


\maketitle 



\section{Introduction}
\label{sec:intro} 

Deep Reinforcement Learning (RL)~\citep{sutton1999policy,silver2014deterministic,schulman2015trust} methods have demonstrated impressive performance in continuous control~\citep{lillicrap2015continuous}, and robotics~\citep{levine2016end}. However, a broader application of these methods in real-world domains is impeded by the challenges in designing a proper reward function~\citep{schaal1999imitation,amodei2016concrete,everitt2016avoiding}. Imitation Learning (IL) algorithms~\citep{ng2000algorithms,ziebart2008maximum,ho2016generative} address this issue by replacing reward functions with expert demonstrations, which are easier to collect in most scenarios. However, despite the success of IL algorithms, they typically impose the somewhat unrealistic requirement that the state-action demonstrations must be collected from the same environment as the one in which the imitator is trained. In this work, we focus on a more realistic setting for imitation learning, where: 
\begin{enumerate}[leftmargin=*]
\item the expert demonstrations collected from the real (deployment) environment by executing an expert policy only contain states, 
\item the learner is trained in a simulation (training) environment, and does not have access to the real environment during the training phase beyond the batch of demonstrations given, and
\item the simulation environment does not model the real environment exactly, i.e., there exists a transition dynamics mismatch between these environments. 
\end{enumerate}
The learned policy under the above setting is transferred to the real environment on which its final performance is evaluated. Existing IL methods either do not apply under the above setting or result in poor transfer performance. 

\looseness-1A large body of work in IL, such as Generative Adversarial Imitation Learning (GAIL~\citep{ho2016generative}) and its variants, has focused on the setting with demonstrations that contain both states and actions, which are difficult to obtain for real-world settings such as learning from videos~\citep{handa2020dexpilot}. Further, closely following the state-action demonstrations limits the the ability to generalize across environments~\citep{radosavovic2020state}. Training agents in simulation environments not only provides data at low-cost, but also alleviates safety concerns related to the trial-and-error process with real robots. However, building a high-fidelity simulator that perfectly models the real environment would require a large computational budget. Low-fidelity simulations are feasible, due to their speed, but the gap between the simulated and real environments degrades the performance of the policies when transferred to real robots~\citep{zhao2020sim}. To this end, we consider the following research question: \emph{how to train an imitator policy in an offline manner with state-only expert demonstrations and a misspecified simulator such that the policy performs well in the real environment?}

\begin{table*}[h]
\caption{Comparison of our method with the existing imitation learning methods that also consider dynamics mismatch. However, the existing methods do not fit under the specific setting that we study. The expert, training, and deployment are denoted by $M^\mathrm{exp}$, $M^\mathrm{tr}$, and $M^\mathrm{dep}$ respectively. The corresponding transition dynamics are denoted by $T^\mathrm{exp}$, $T^\mathrm{tr}$, and $T^\mathrm{dep}$ respectively. Note that the expert demonstrations are collected from $M^\mathrm{exp}$, the imitation learning agent is trained on $M^\mathrm{tr}$, and the trained policy is finally evaluated on $M^\mathrm{dep}$. Our Robust-GAILfO method has minimal access to $M^\mathrm{dep}$ to select an appropriate $\alpha$. Note that our robust GAILfO method is applicable in both: (i) $T^{\mathrm{dep}} = T^{\mathrm{exp}} \neq T^{\mathrm{tr}}$ setting, and (ii) $T^{\mathrm{dep}} \neq T^{\mathrm{exp}} \neq T^{\mathrm{tr}}$ setting. In setting (i), our primary motivation is that accessing the deployment environment is costly, e.g., interacting with a remote deployment environment is costly due to communication constraints. In setting (ii), after the deployment, the agent has to be robust against potential environmental changes during the test time.}
\label{table:related-work}
\begin{tabular}{llll}\toprule
\textit{IL Methods} & \textit{Type of Demonstrations} & \textit{Access to $M^{\mathrm{dep}}$ during training} & \textit{Dynamics mismatch}  \\ \midrule
GAIL~\citep{ho2016generative} & state-action & yes & $T^{\mathrm{exp}} = T^{\mathrm{tr}} = T^{\mathrm{dep}}$ \\
GAILfO~\citep{torabi2018generative} & state-only & yes & $T^{\mathrm{exp}} = T^{\mathrm{tr}} = T^{\mathrm{dep}}$ \\
AIRL~\citep{fu2017learning} & state-action & yes & $T^{\mathrm{exp}} = T^{\mathrm{tr}} \neq T^{\mathrm{dep}}$ \\
I2L~\citep{gangwani2020stateonly} & state-only & yes & $T^{\mathrm{exp}} \neq T^{\mathrm{tr}} = T^{\mathrm{dep}}$ \\
SAIL~\citep{liu2019state} & state-only & yes & $T^{\mathrm{exp}} \neq T^{\mathrm{tr}} = T^{\mathrm{dep}}$ \\
GARAT~\citep{desai2020imitation} & state-only & yes & $T^{\mathrm{dep}} = T^{\mathrm{exp}} \neq T^{\mathrm{tr}}$ \\
HIDIL~\citep{jiang2020offline} & state-action & no & $T^{\mathrm{dep}} = T^{\mathrm{exp}} \neq T^{\mathrm{tr}}$ \\
IDDM~\citep{yang2019imitation} & state-only & yes & $T^{\mathrm{exp}} = T^{\mathrm{tr}} = T^{\mathrm{dep}}$ \\
ILPO~\citep{edwards2019imitating} & state-only & yes & $T^{\mathrm{exp}} = T^{\mathrm{tr}} =  T^{\mathrm{dep}}$ \\
Robust-GAILfO (ours) & state-only & no & $T^{\mathrm{dep}} = T^{\mathrm{exp}} \neq T^{\mathrm{tr}}$ and $T^{\mathrm{dep}} \neq T^{\mathrm{exp}} \neq T^{\mathrm{tr}}$ \\ \bottomrule
\end{tabular}
\end{table*}

\looseness-1The Adversarial Inverse Reinforcement Learning (AIRL) method from~\citep{fu2017learning} recovers reward functions that can be used to transfer behaviors across changes in dynamics. However, one needs to retrain a policy in the deployment environment with the recovered reward function, whereas we consider a zero-shot transfer setting. In addition, AIRL depends on state-action demonstrations. Recently,~\citep{gangwani2020stateonly,liu2019state} have studied the imitation learning problem under the transition dynamics mismatch between the expert and the learner environments. However, they do not aim to learn policies that are transferable to the expert (real) environment; instead, they optimize the performance in the learner (simulation) environment. In~\citep{desai2020imitation}, the authors attempt to match the simulation environment closer to the real environment by interacting with the real environment during the training phase. A setting very close to ours is considered in~\citep{jiang2020offline}; their method involves learning an inverse dynamics model of the real environment based on the state-action expert demonstrations. None of these methods are directly applicable under our setting (see Table~\ref{table:related-work}). 

\looseness-1We propose a robust IL method for learning robust policies under the above-discussed setting that can be effectively transferred to the real environment without further fine-tuning during deployment. Our method is built upon the robust RL literature~\citep{iyengar2005robust,nilim2005robust,pinto2017robust,tessler2019action} and the IL literature inspired by GAN-based adversarial learning~\citep{ho2016generative,torabi2018generative}. In particular, our algorithm is a robust variant of the Generative Adversarial Imitation Learning from Observation (GAILfO~\citep{torabi2018generative}) algorithm, a state-only IL method based on GAIL. We discuss how our method addresses the dynamics mismatch issue by exploiting the equivalence between the robust MDP formulation and the two-player Markov game~\citep{pinto2017robust,tessler2019action}. In the finite MDP setting,~\citep{viano2020robust} have proposed a robust inverse reinforcement learning method to address the transition dynamics mismatch between the expert and the learner. Our Markov game formulation in Section~\ref{sec:robust-lfo-mg} closely follows that of~\citep{viano2020robust}, and in Section~\ref{sec:robust-gailfo-full}, we scale it high-dimensional continuous control setting using the techniques from GAIL literature. On the empirical side, we are interested in the sim-to-real transfer performance, whereas~\citep{viano2020robust} have considered the performance in the learner environment itself.

\looseness-1We evaluate the efficacy of our method on the continuous control MuJoCo environments. In our experiments, we consider different sources of dynamics mismatch such as joint-friction, and agent-mass. An expert policy is trained under the default dynamics (acting as the real environment). The imitator policy is learned under a modified dynamics (acting as the simulation environment), where one of the mass and friction configurations is changed. The experimental results show that, with appropriate choice of the level of adversarial perturbation, the robustly trained IL policies in the simulator transfer successfully to the real environment compared to the standard GAILfO. We also empirically show that the policies learned by our method are robust to environmental shift during testing.

\section{Related Work}
\label{sec:relatedwork}


\paragraph{Imitation Learning}
\looseness-1Ho and Ermon~\citep{ho2016generative} propose a framework, called Generative Adversarial Imitation Learning (GAIL), for directly extracting a policy from trajectories without recovering a reward function as an intermediate step. GAIL utilizes a discriminator to distinguish between the state-action pairs induced by the expert and the learner policy. GAIL was further extended by Fu et al.~\citep{fu2017learning} to produce a scalable inverse reinforcement learning algorithm based on adversarial reward learning. This approach gives a policy as well as a reward function. Our work is closely related to the state-only IL methods that do not require actions in the expert demonstrations~\citep{torabi2018generative,yang2019imitation}. Inspired by GAIL,~\citep{torabi2018generative} have proposed the Generative Adversarial Imitation Learning from Observation (GAILfO) algorithm for state-only IL. GAILfO tries to minimize the divergence between the state transition occupancy measures of the learner and the expert.

\paragraph{Robust Reinforcement Learning}
\looseness-1In the robust MDP formulation~\citep{iyengar2005robust,nilim2005robust}, the policy is evaluated by the worst-case performance in a class of MDPs centered around a reference environment. In the context of forward RL, some works build on the robust MDP framework, such as~\citep{rajeswaran2016epopt,peng2018sim,mankowitz2019robust}. However, our work is closer to the line of work that leverages the equivalence between action-robust and robust MDPs. In~\citep{morimoto2005robust}, the authors have introduced the notion of worst-case disturbance in the $H_\infty$-control literature to the reinforcement learning paradigm. They consider an adversarial game where an adversary tries to make the worst possible disturbance while an agent tries to make the best control input. Recent literature in RL has proposed a range of robust algorithms based on this game-theoretic perspective~\citep{doyle2013feedback,pinto2017robust,tessler2019action,kamalaruban2020robust}.

\section{Problem Setup and Background}
\label{sec:Setup}

This section formalizes the learning from observation (LfO) problem with model misspecification. 


\paragraph{Environment and Policy}
The environment is formally represented by a Markov decision process (MDP) $M_c := \br{\mathcal{S},\mathcal{A},T,\gamma,P_0,c}$. The state and action spaces are denoted by $\mathcal{S}$ and $\mathcal{A}$, respectively. $T: \mathcal{S} \times \mathcal{S} \times \mathcal{A} \rightarrow \bs{0,1}$ captures the state transition dynamics, i.e., $T\br{s' \mid s,a}$ denotes the probability of landing in state $s'$ by taking action $a$ from state $s$. Here, $c: \mathcal{S} \times \mathcal{S} \to \mathbb{R}$ is the cost function, $\gamma \in \br{0,1}$ is the discounting factor, and $P_0: \mathcal{S} \rightarrow \bs{0,1}$ is an initial distribution over the state space $\mathcal{S}$. We denote an MDP without a cost function by $M = M_c \backslash c = \bc{\mathcal{S}, \mathcal{A}, T, \gamma, P_0}$. We denote a policy $\pi: \mathcal{S} \times \mathcal{A} \rightarrow \bs{0,1}$ as a mapping from a state to a probability distribution over the action space. The set of all stationary stochastic policies is denoted by $\Pi$. For any policy $\pi$ in the MDP $M$, we define the state transition occupancy measure as follows: $\rho^\pi_M \br{s,s'} := \sum_a T\br{s' \mid s,a} \cdot \pi\br{a \mid s} \cdot \sum_{t=0}^\infty \gamma^t \P{S_t = s \mid \pi, M}$. Here, $\P{S_t = s \mid \pi, M}$ denotes the probability of visiting the state $s$ after $t$ steps by following the policy $\pi$ in $M$. The total expected cost of any policy $\pi$ in the MDP $M_c$ is defined as follows: $\Eee{\rho^\pi_{M}}{c\br{s,s'}} := \E{\sum_{t = 0}^\infty \gamma^t c\br{s_t,s_{t+1}}}$, where $s_0 \sim P_0$, $a_t \sim \pi\br{\cdot | s_t}$, $s_{t+1} \sim T \br{\cdot | s_t, a_t}$. A policy $\pi$ is \emph{optimal} for the MDP $M_c$ if $\pi \in \argmin_{\pi'} \Eee{\rho^{\pi'}_{M}}{c\br{s,s'}}$, and we denote an optimal policy by $\pi^*_{M_c}$.

\paragraph{Learner and Expert}
\looseness-1We have two entities: an imitation learner, and an expert. We consider two MDPs, $M^{\mathrm{sim}} = \bc{\mathcal{S}, \mathcal{A}, T^{\mathrm{sim}}, \gamma, P_0}$ and $M^{\mathrm{real}} = \bc{\mathcal{S}, \mathcal{A}, T^{\mathrm{real}}, \gamma, P_0}$, that differ only in the transition dynamics. The true cost function $c^*: \mathcal{S} \times \mathcal{S} \to \mathbb{R}$ is known only to the expert. The learner is trained in the MDP $M^{\mathrm{sim}}$ and is not aware of the true cost function, i.e., it only has access to $M_{c^*}^{\mathrm{sim}} \backslash c^*$. The expert provides demonstrations to the learner  by following the optimal policy $\pi^{*}_{M_{c^*}^{\mathrm{real}}}$ in the expert MDP $M^{\mathrm{real}}$. Typically, in the imitation learning literature, it is assumed that $T^{\mathrm{sim}} = T^{\mathrm{real}}$.  In this work, we consider the setting where there is a transition dynamics mismatch between the learner and the expert, i.e., $T^{\mathrm{sim}} \neq T^{\mathrm{real}}$. The learner tries to recover a policy that closely matches the intention of the expert, based on the occupancy measure $\rho_E \br{s,s'} := \rho^{\pi^*_{M^{\mathrm{real}}_{c^*}}}_{M^{\mathrm{real}}}\br{s,s'}$ (or the demonstrations drawn according to it) received from the expert. The learned policy is evaluated in the expert environment w.r.t. the true cost function, i.e., $M_{c^*}^{\mathrm{real}}$.


\paragraph{Imitation Learning}
\looseness-1We consider the imitation learner model that matches the expert's state transition occupancy measure $\rho_E$~\citep{ziebart2008maximum,ho2016generative,torabi2018generative}. In particular, the learner policy is obtained via solving the following primal problem:
\begin{align}
\min_{\pi \in \Pi} \quad& - H_{\rho^\pi_{M^{\mathrm{sim}}}}\br{\pi} \label{opt_start}\\
\text{subject to} \quad& \rho^{\pi}_{M^{\mathrm{sim}}}\br{s,s'} ~=~ \rho_E \br{s,s'} , \quad \forall s,s' \in \mathcal{S} ,
\label{opt_end} 
\end{align}
where $H_{\rho^\pi_{M^{\mathrm{sim}}}}\br{\pi} :=  \E{\sum_{t = 0}^\infty - \gamma^t \log \pi(a_t | s_t)}$ is the $\gamma$-discounted causal entropy of $\pi$. The corresponding dual problem is given by:
\[
\max_{c \in \mathbb{R}^{\mathcal{S} \times \mathcal{S}}} \br{\min_{\pi \in \Pi} ~ - H_{\rho^\pi_{M^{\mathrm{sim}}}}\br{\pi} + \Eee{\rho^{\pi}_{M^{\mathrm{sim}}}}{c\br{s,s'}}} - \Eee{\rho_E}{c\br{s,s'}} ,
\]
where the costs $c\br{s,s'}$ serve as dual variables for the equality constraints. 

\paragraph{Maximum Causal Entropy (MCE) Inverse Reinforcement Learning (IRL)}
MCE-IRL algorithm~\citep{ziebart2008maximum,ziebart2010modeling} involves a two-step procedure. First, it looks for a cost function $c \in \mathcal{C}$ that assigns low cost to the expert policy and high cost to other policies. Then, it learns a policy by solving a certain reinforcement learning problem with the found cost function. Formally, given a convex cost function regularizer\footnote{$\overline{\mathbb{R}}$ denotes the extended real numbers $\mathbb{R} \cup \bc{+ \infty}$} $\psi: \mathbb{R}^{\mathcal{S} \times \mathcal{S}} \to \overline{\mathbb{R}}$, first, we recover a cost function $\tilde{c}$ by solving the following $\psi$-regularized problem:
\begin{align*}
\textsc{IRL}_\psi \br{\rho_E} ~=~& \argmax_{c \in \mathbb{R}^{\mathcal{S} \times \mathcal{S}}} ~ - \psi \br{c} - \Eee{\rho_E}{c\br{s,s'}} \\
&\quad \quad \quad \quad + \br{\min_{\pi \in \Pi} ~ - \lambda H_{\rho^\pi_{M^{\mathrm{sim}}}}\br{\pi} + \Eee{\rho^{\pi}_{M^{\mathrm{sim}}}}{c\br{s,s'}}}
\end{align*}
Then, we input the learned cost function $\tilde{c} \in \textsc{IRL}_\psi \br{\rho_E}$ into an entropy-regularized reinforcement learning problem:
\[
\textsc{RL} \br{c} ~=~ \argmin_{\pi \in \Pi} ~ - \lambda H_{\rho^\pi_{M^{\mathrm{sim}}}}\br{\pi} + \Eee{\rho^{\pi}_{M^{\mathrm{sim}}}}{c\br{s,s'}} ,
\]
which aims to find a policy that minimizes the cost function and maximizes the entropy.

\paragraph{Generative Adversarial Imitation Learning from Observation (GAILfO)}
Recently,~\citep{ho2016generative,torabi2018generative} have shown that, for a specific choice of the regularizer $\psi$, the two-step procedure $\textsc{RL} \circ \textsc{IRL}_\psi \br{\rho_E}$ of the MCE-IRL algorithm can be reduced to the following optimization problem using GAN discriminator:
\begin{align*}
\min_{\pi \in \Pi} ~ \max_{D \in \br{0,1}^{\mathcal{S} \times \mathcal{S}}} & - \lambda H_{\rho^\pi_{M^{\mathrm{sim}}}}\br{\pi} + \Eee{\rho^{\pi}_{M^{\mathrm{sim}}}}{\log{D\br{s,s'}}} \\
& \quad \quad \quad \quad \quad \quad \quad + \Eee{\rho_E}{\log\br{1-D\br{s,s'}}} ,
\end{align*}
where $D:\mathcal{S} \times \mathcal{S} \to (0,1)$ is a classifier trained to discriminate between the state-next state pairs that arise from the
expert and the imitator. Excluding the entropy term,
the above loss function is similar to the loss of generative
adversarial networks~\citep{goodfellow2014generative}. Even though the occupancy measure matching methods were shown to scale well to high-dimensional problems, they are not robust against dynamics mismatch~\citep{gangwani2020stateonly}.

\section{Robust Learning from Observation via Markov Game}
\label{sec:robust_il_algo}

\subsection{Markov Game}
\label{sec:robust-lfo-mg}

In this section, we focus on recovering a learner policy via imitation learning framework in a robust manner, under the setting described in Section~\ref{sec:intro}. To this end, we consider a class of transition matrices such that it contains both $T^\mathrm{sim}$ and $T^\mathrm{real}$. In particular, for a given $\alpha > 0$, we define the class $\mathcal{T}^{\alpha}$ as follows: 
\begin{equation}
\mathcal{T}^{\alpha} := \bc{\alpha T^{\mathrm{sim}} (s' | s,a) + \bar{\alpha} \sum_b \pi (b | s) \cdot T^{\mathrm{sim}} (s' | s,b), \forall \pi \in \Pi} ,
\label{learner_unc_set}
\end{equation}
where $\bar{\alpha} = (1 - \alpha)$. We define the corresponding class of MDPs as follows: $\mathcal{M}^{\alpha} := \bc{\bc{\mathcal{S}, \mathcal{A}, T^{\alpha}, \gamma, P_0}, \, \forall T^{\alpha} \in \mathcal{T}^{\alpha}}$. We need to choose $\alpha$ such that $M^\mathrm{real} \in \mathcal{M}^\alpha$. 

Our aim is to find a learner policy that performs well in the MDP $M_{c^*}^{\mathrm{real}}$ by using the state-only demonstrations from $\rho_E$, without knowing or interacting with $M^\mathrm{real}$ during training. Thus, we try to learn a robust policy over the class $\mathcal{M}^\alpha$, while aligning with the expert's state transition occupancy measure $\rho_E$, and acting only in $M^\mathrm{sim}$. By doing this, we ensure that the resulting policy performs reasonably well on any MDP $M \in \mathcal{M}^\alpha$ including $M^\mathrm{real}$ w.r.t. the true cost function $c^*$. Based on this idea, we propose the following robust learning from observation (LfO) problem:  
\begin{align}
\min_{\pi^{\mathrm{pl}} \in \Pi} \max_{M \in \mathcal{M}^{\alpha}} \quad& - H_{\rho^{\pi^{\mathrm{pl}}}_{M}}\br{\pi^{\mathrm{pl}}} \label{mdp_opt_start}\\
\text{subject to} \quad& \rho^{\pi^{\mathrm{pl}}}_{M}\br{s,s'} ~=~ \rho_E \br{s,s'} , \, \forall s,s' \in \mathcal{S} ,
\label{mdp_opt_end} 
\end{align}
where our learner policy matches the expert's state transition occupancy measure $\rho_E$ under the most adversarial MDP belonging to the set $\mathcal{M}^\alpha$. The corresponding dual problem is given by:
\begin{align}
&\max_{c \in \mathbb{R}^{\mathcal{S} \times \mathcal{S}}} \br{\min_{\pi^{\mathrm{pl}} \in \Pi} \max_{M \in \mathcal{M}^{\alpha}} ~ - H_{\rho^{\pi^{\mathrm{pl}}}_{M}}\br{\pi^{\mathrm{pl}}} + \Eee{\rho^{\pi^{\mathrm{pl}}}_{M}}{c\br{s,s'}}} \nonumber \\
& \quad \quad \quad \quad  \quad \quad \quad \quad - \Eee{\rho_E}{c\br{s,s'}} . \label{eq:robust-mdp-il-dual-form}
\end{align}
\looseness-1In the dual problem, for any $c$, we attempt to learn a robust policy over the class $\mathcal{M}^{\alpha}$ with respect to the entropy regularized reward function. The parameter $c$ plays the role of aligning the learner's policy with the expert's occupancy measure via constraint satisfaction.

\looseness-1For any given $c$, we need to solve the inner min-max problem of~\eqref{eq:robust-mdp-il-dual-form}. However, during training, we only have access to the MDP $M^\mathrm{sim}$. To this end, we utilize the equivalence between the robust MDP~\citep{iyengar2005robust,nilim2005robust} formulation and the action-robust MDP~\citep{pinto2017robust,tessler2019action} formulation shown in~\citep{tessler2019action}. We can interpret the minimization over the environment class as the minimization over a set of opponent policies that with probability $1 - \alpha$ take control of the agent and perform the worst possible move from the current agent state. We can write:
\begin{align}
& \min_{\pi^{\mathrm{pl}} \in \Pi} \max_{M \in \mathcal{M}^{\alpha}} - H_{\rho^{\pi^{\mathrm{pl}}}_{M}}\br{\pi^{\mathrm{pl}}} + \Eee{\rho^{\pi^{\mathrm{pl}}}_{M}}{c\br{s,s'}} \nonumber \\
~=~& \min_{\pi^{\mathrm{pl}} \in \Pi} \max_{T^\alpha \in \mathcal{T}^{\alpha}}  \E{G_c \bigm| \pi^{\mathrm{pl}}, P_0, T^\alpha} \nonumber \\
~=~& \min_{\pi^{\mathrm{pl}} \in \Pi} \max_{\pi^{\mathrm{op}} \in \Pi} \E{G_c \bigm| \alpha \pi^{\mathrm{pl}} + (1 - \alpha) \pi^{\mathrm{op}}, M^{\mathrm{sim}}} , \label{equivalence_new}
\end{align}
where $G_c := \sum_{t=0}^{\infty} \gamma^t \bc{c\br{s_t, s_{t+1}} - H^{\pi^{pl}}(A|S=s_t)}$. The above equality holds due to the derivation in section 3.1 of~\citep{tessler2019action}. We can formulate the problem~\eqref{equivalence_new} as a two-player zero-sum Markov game~\citep{littman1994markov} with transition dynamics given by
\begin{align*}
T^{\mathrm{two},\alpha}(s' | s, a^{\mathrm{pl}}, a^{\mathrm{op}}) ~=~& \alpha T^{\mathrm{sim}}(s' | s, a^{\mathrm{pl}}) + (1 - \alpha) T^{\mathrm{sim}}(s' | s, a^{\mathrm{op}}) ,
\end{align*}
where $a^{\mathrm{pl}}$ is an action chosen according to the player policy and $a^{\mathrm{op}}$ according to the opponent policy. As a result, we reach a two-player Markov game with a regularization term for the player as follows:
\begin{equation}
\argmin_{\pi^{\mathrm{pl}} \in \Pi} \max_{\pi^{\mathrm{op}} \in \Pi} \E{G_{c} \bigm| \pi^{\mathrm{pl}}, \pi^{\mathrm{op}}, M^{\mathrm{two},\alpha}} , 
\label{objective}
\end{equation}
where $M^{\mathrm{two},\alpha} = \bc{\mathcal{S}, \mathcal{A}, \mathcal{A}, T^{\mathrm{two},\alpha}, \gamma, P_0}$ is the two-player MDP associated with the above game. 

\begin{algorithm}[t]
	\caption{Robust GAILfO}
	\label{alg:robust-gailfo}
	\begin{algorithmic}
	    \STATE \textbf{Input:} state-only expert demonstrations $\mathcal{D}^E$, opponent strength parameter $\alpha$.
		\STATE \textbf{Initialize:} discriminator $D_w$, actor policy $\pi_\theta$, and adversary policy $\pi_\phi$.
        \FOR{$n \in \bc{1,2,\dots, N}$}
		\STATE \looseness-1collect trajectories $\tau_i$ by executing the policies $\pi^{\mathrm{pl}}_\theta$ and $ \pi^{\mathrm{op}}_\phi$ (see Algorithm~\ref{alg:collect_trajs}), and store them in the demonstrations buffer $\mathcal{D}$.
		\STATE update the discriminator $D_w$ to classify the expert demonstrations $\tau_E \in \mathcal{D}^E$ from the samples $\tau_i \in \mathcal{D}$, i.e., update $w$ via gradient ascent with the following gradient:
		\begin{align*}
		    \widehat{\mathbb{E}}_{\tau_i\in \mathcal{D}}[\nabla_w \log D_w (s, s^\prime)] + \widehat{\mathbb{E}}_{\tau_E \in \mathcal{D}^E}[\nabla_w \log (1 - D_w (s, s^\prime))] .
		\end{align*}
		\STATE update the reward function $R_w(s, s^\prime) \gets - \log D_w(s, s^\prime)$.
		\STATE compute the following gradient estimates:
		\begin{align*}	    
		\widehat{\nabla}_{\theta} J(\theta, \phi) =& \frac{1}{\abs{\mathcal{D}}}  \sum_{\tau_i \in \mathcal{D}} \sum_{t}\gamma^t \nabla_{\theta} \log \pi_{\theta, \phi}^\mathrm{mix}(a^i_t|s^i_t) \bs{G^i_t + \lambda G^{\mathrm{log}, i}_t} \\
		\widehat{\nabla}_{\phi} J(\theta, \phi) =& \frac{1}{\abs{\mathcal{D}}} \sum_{\tau_i \in \mathcal{D}} \sum_{t}\gamma^t \nabla_{\phi} \log \pi_{\theta, \phi}^\mathrm{mix}(a^i_t|s^i_t) \bs{G^i_t + \lambda G^{\mathrm{log}, i}_t} ,
		\end{align*}
		where $G^i_t = \sum^{T}_{k=t+1} \gamma^{k - t - 1}R_w(s^i_k, s^i_{k+1})$ and $G^{\mathrm{log},i}_t = \sum^{T}_{k=t+1} - \gamma^{k - t - 1} H^{\pi^{\mathrm{pl}}_{\theta}}(A|S=s^i_k)$
		\STATE update the policies $\pi^{\mathrm{pl}}_\theta$ and $\pi^{\mathrm{op}}_\phi$ using PPO with the gradient estimates above. 
		\ENDFOR
	\end{algorithmic}
\end{algorithm}

\begin{algorithm}[t]
	\caption{Collecting Trajectories}
	\label{alg:collect_trajs}
	\begin{algorithmic}
	\STATE \textbf{Input:} total number of trajectories $N_{\mathrm{traj}}$, reward function $R_w$.
	\FOR{$n \in \bc{1,2, \dots, N_{\mathrm{traj}}}$}
	    \STATE $t \gets 0$
	    \STATE initialize an empty trajectory $\tau$.
		\WHILE{not $\mathrm{done}$}
			\STATE observe state $s_t$.
			\STATE sample actions $a^{\mathrm{pl}}_t \sim \pi^{\mathrm{pl}}_\theta(\cdot | s_t)$, $a^{\mathrm{op}}_t = \pi^{\mathrm{op}}_\phi(\cdot |s_t)$.
			\STATE execute $a^{\mathrm{op}}_t$ with probability $\bar{\alpha}$, or $a^{\mathrm{pl}}_t$ with probability $\alpha$.
		    \STATE observe $r_{t+1} = R_w(s_t, s_{t+1})$, next state $s_{t+1}$, and $\mathrm{done}$.
			\STATE store the tuple $(s_t, a^{\mathrm{pl}}_t, a^{\mathrm{op}}_t, s_{t+1}, r_{t+1})$ in the trajectory $\tau$.
	    \ENDWHILE	
        \STATE $\mathcal{D} \leftarrow \mathcal{D} \cup \bc{\tau}$.
    \ENDFOR
    \STATE \textbf{Output:} $\mathcal{D}$
	\end{algorithmic}
\end{algorithm}

\subsection{Robust GAILfO}
\label{sec:robust-gailfo-full}

In this section, we present our robust Generative Adversarial Imitation Learning from Observation (robust GAILfO) algorithm based on the discussions in Section~\ref{sec:robust-lfo-mg}. We begin with the robust variant of the two-step procedure $\textsc{RL} \circ \textsc{IRL}_\psi \br{\rho_E}$ of the MCE-IRL algorithm:
\begin{align*}
\textsc{IRL}_\psi \br{\rho_E} =& \argmax_{c \in \mathbb{R}^{\mathcal{S} \times \mathcal{S}}} - \psi \br{c} - \Eee{\rho_E}{c\br{s,s'}} \\
& \quad \quad + \min_{\pi^{\mathrm{pl}} \in \Pi} \max_{\pi^{\mathrm{op}} \in \Pi} - \lambda H_{\rho^{\pi^\mathrm{mix}}_{M^{\mathrm{sim}}}}\br{\pi^\mathrm{pl}} + \Eee{\rho^{\pi^\mathrm{mix}}_{M^{\mathrm{sim}}}}{c\br{s,s'}} \\
\textsc{RL} \br{c} =& \argmin_{\pi^{\mathrm{pl}} \in \Pi}  \max_{\pi^{\mathrm{op}} \in \Pi} - \lambda H_{\rho^{\pi^\mathrm{mix}}_{M^{\mathrm{sim}}}}\br{\pi^\mathrm{pl}} + \Eee{\rho^{\pi^\mathrm{mix}}_{M^{\mathrm{sim}}}}{c\br{s,s'}} ,
\end{align*}
where $\pi^\mathrm{mix} = \alpha \pi^{\mathrm{pl}} + (1-\alpha) \pi^{\mathrm{op}}$. Then, similar to~\citep{ho2016generative,torabi2018generative}, the above two step procedure  can be reduced to the following optimization problem using the discriminator $D:\mathcal{S} \times \mathcal{S} \to (0,1)$:
\begin{align*}
\min_{\pi^{\mathrm{pl}} \in \Pi} \max_{\pi^{\mathrm{op}} \in \Pi} \max_{D \in \br{0,1}^{\mathcal{S} \times \mathcal{S}}} & - \lambda H_{\rho^{\pi^\mathrm{mix}}_{M^{\mathrm{sim}}}}\br{\pi^\mathrm{pl}} + \Eee{\rho^{\pi^\mathrm{mix}}_{M^{\mathrm{sim}}}}{\log{D\br{s,s'}}} \\
& \quad \quad + \Eee{\rho_E}{\log\br{1-D\br{s,s'}}} . 
\end{align*}
We parameterize the policies and the discriminator as $\pi^\mathrm{pl}_\theta$, $\pi^\mathrm{op}_\phi$, and $D_w$ (with parameters $\theta$, $\phi$, and $w$), and rewrite the above problem as follows:
\begin{align*}
\min_{\theta} \max_{\phi} \max_{w} & - \lambda H_{\rho^{\pi^\mathrm{mix}_{\theta,\phi}}_{M^{\mathrm{sim}}}}\br{\pi^\mathrm{pl}_\theta} + \Eee{\rho^{\pi^\mathrm{mix}_{\theta,\phi}}_{M^{\mathrm{sim}}}}{\log{D_w\br{s,s'}}} \\
& \quad \quad + \Eee{\rho_E}{\log\br{1-D_w\br{s,s'}}} ,
\end{align*}
where $\pi^\mathrm{mix}_{\theta,\phi} = \alpha \pi^{\mathrm{pl}}_\theta + (1-\alpha) \pi^{\mathrm{op}}_\phi$. We solve the above problem by taking gradient steps alternatively w.r.t. $\theta$, $\phi$, and $w$. The calculation for the gradient estimates are given in Appendix~\ref{app:robust-lfo-details}. Following~\citep{ho2016generative,torabi2018generative}, we use the proximal policy optimization (PPO~\citep{schulman2017proximal}) to update the policies parameters. Our complete algorithm is given in Algorithm~\ref{alg:robust-gailfo}.

We also note that one could use any robust RL approach (including domain randomization) to solve the inner min-max problem of~\eqref{eq:robust-mdp-il-dual-form}. In our work, we used the action-robustness approach since: (i) in the robust RL literature, the equivalence between the domain randomization approach and the action-robustness approach is already established~\citep{tessler2019action}, and (ii) compared to the domain randomization approach, the action-robustness approach only requires access to a single simulation environment and creates a range of environments via action perturbations.

\section{Experiments}
\label{sec:experiments}

\begin{figure}[!h] 
\centering
\begin{tabular}{ccc}
\subfloat[HalfCheetah]{%
       \includegraphics[width=0.27\linewidth]{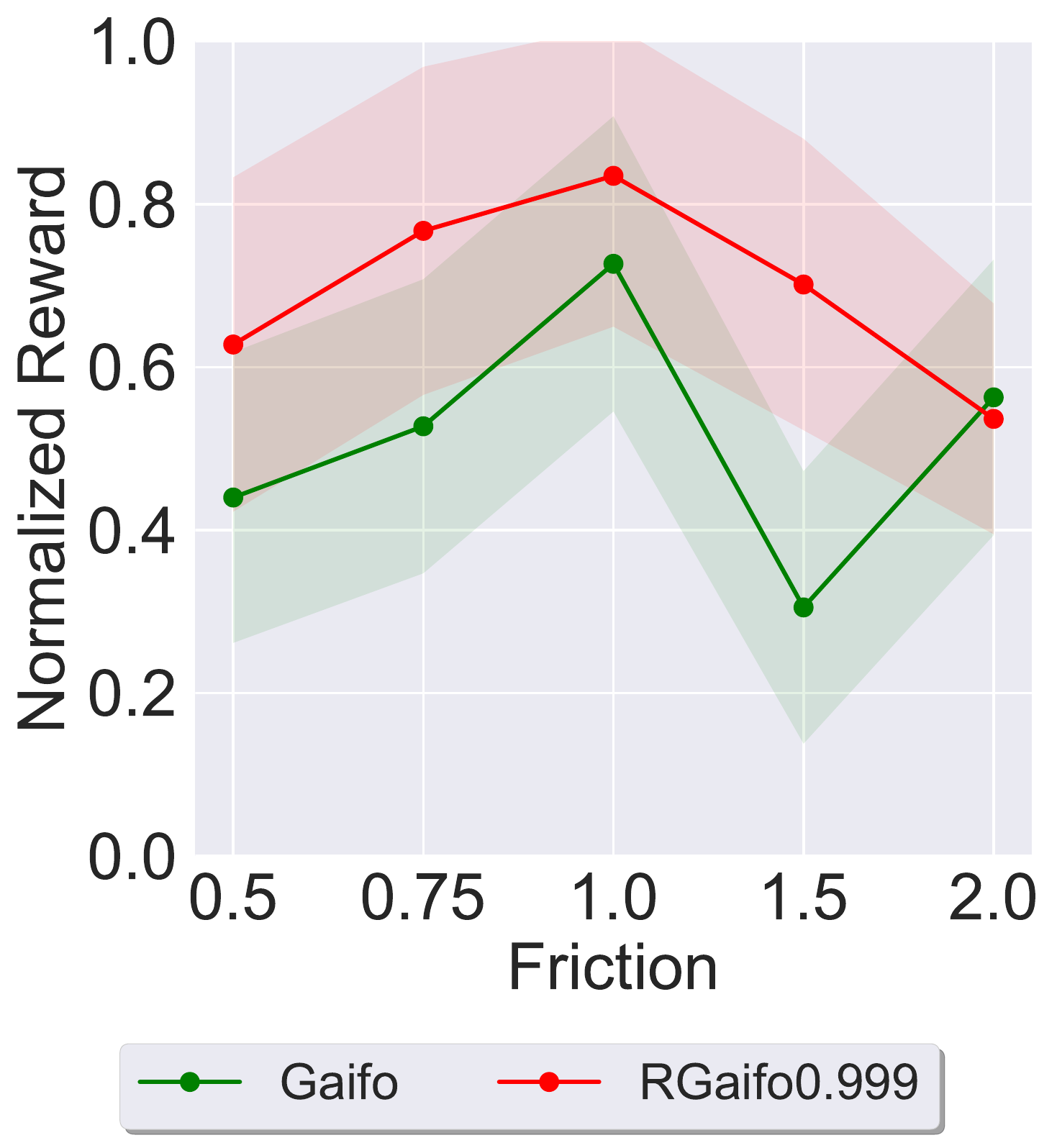}
     } &
\subfloat[Walker]{%
       \includegraphics[width=0.27\linewidth]{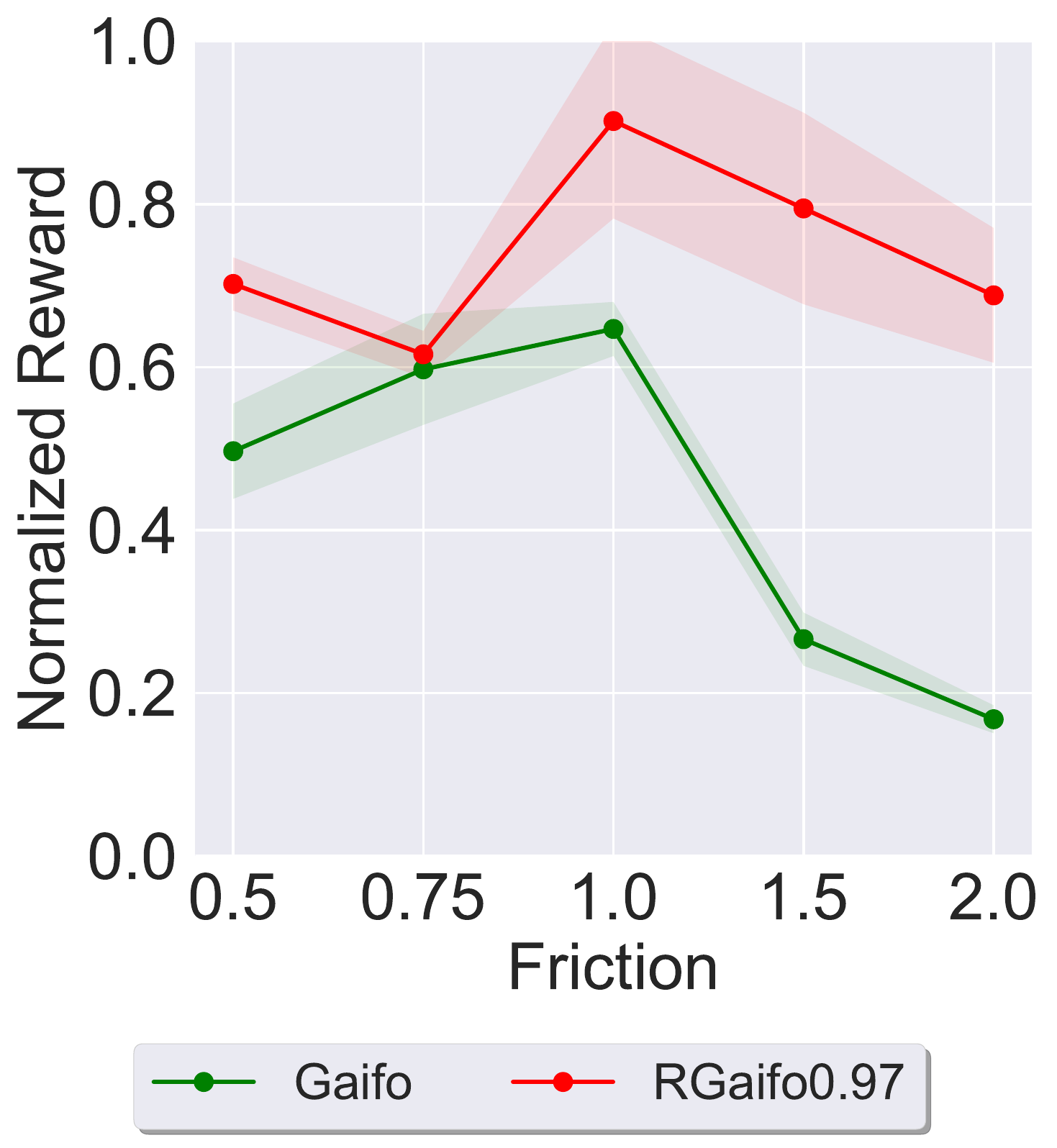}
     } &
\subfloat[Hopper]{%
       \includegraphics[width=0.27\linewidth]{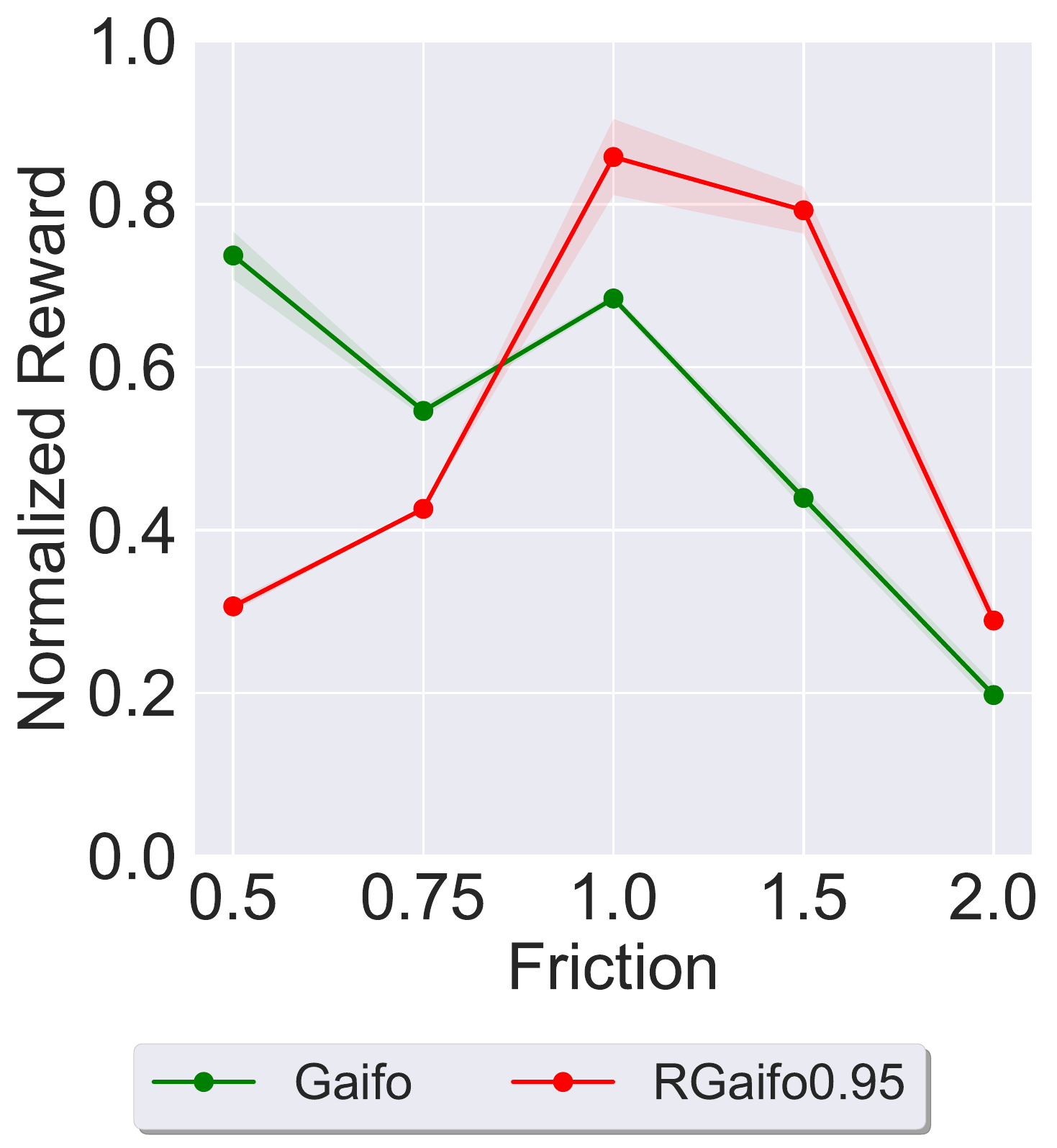}
     } \\
\end{tabular}
\caption{The average (over $3$ seeds) transfer performance of Algorithm~\ref{alg:robust-gailfo} with different values of $\alpha$ for each MuJoCo task as reported in the legend of each plot. The x-axis denotes the relative friction of the learner environment $M^\mathrm{sim}$. The policies are evaluated in $M^\mathrm{real}_{c^*}$ over $1e5$ steps truncating the last episode if it does not terminate.}
\label{fig:TransferFrictionFixedAlpha}
\end{figure}

\begin{figure}[!h] 
\centering
\begin{tabular}{ccc}
\subfloat[HalfCheetah]{%
       \includegraphics[width=0.27\linewidth]{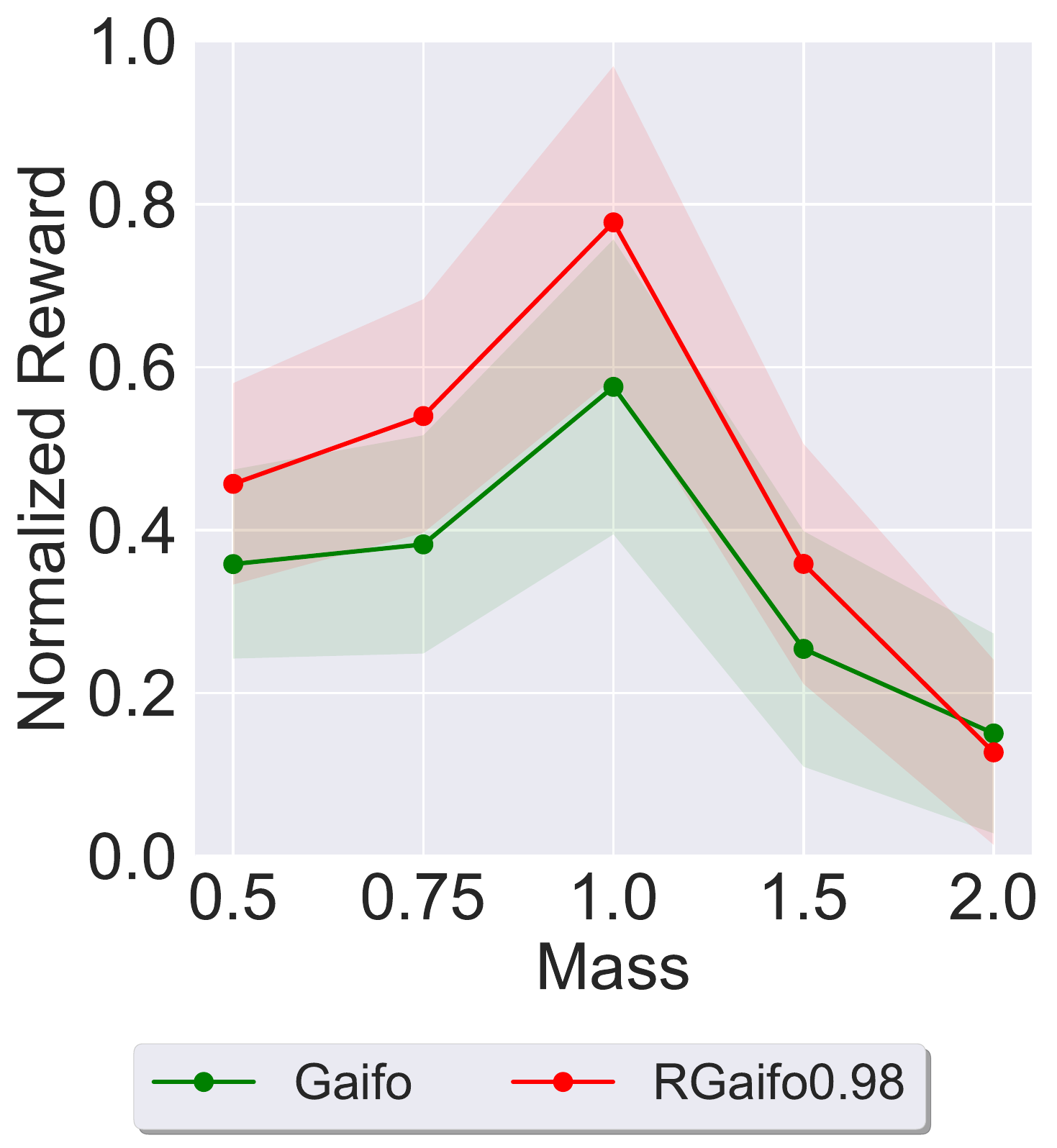}
     } &
\subfloat[Walker]{%
       \includegraphics[width=0.27\linewidth]{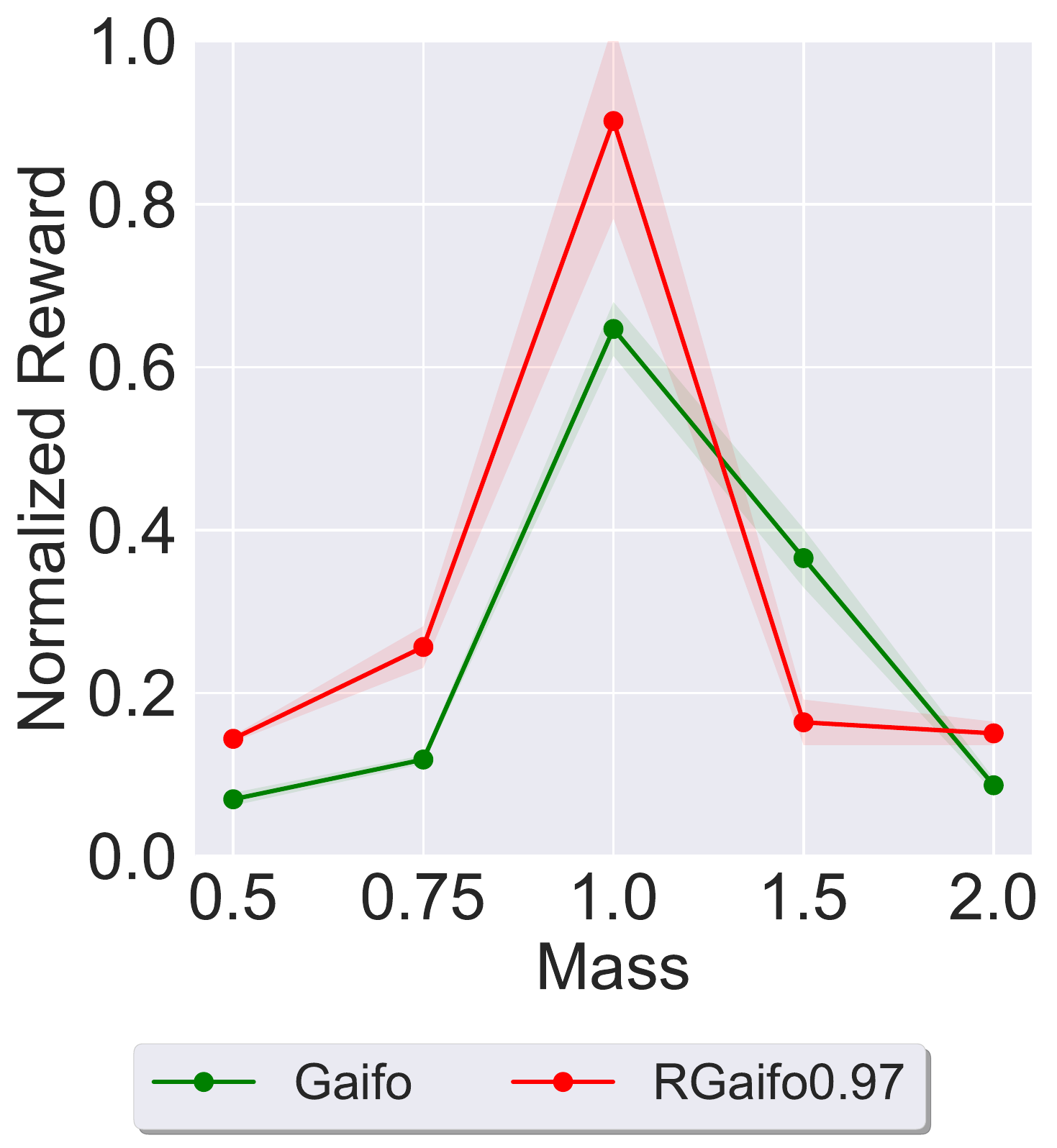}
     } &
\subfloat[Hopper]{%
       \includegraphics[width=0.27\linewidth]{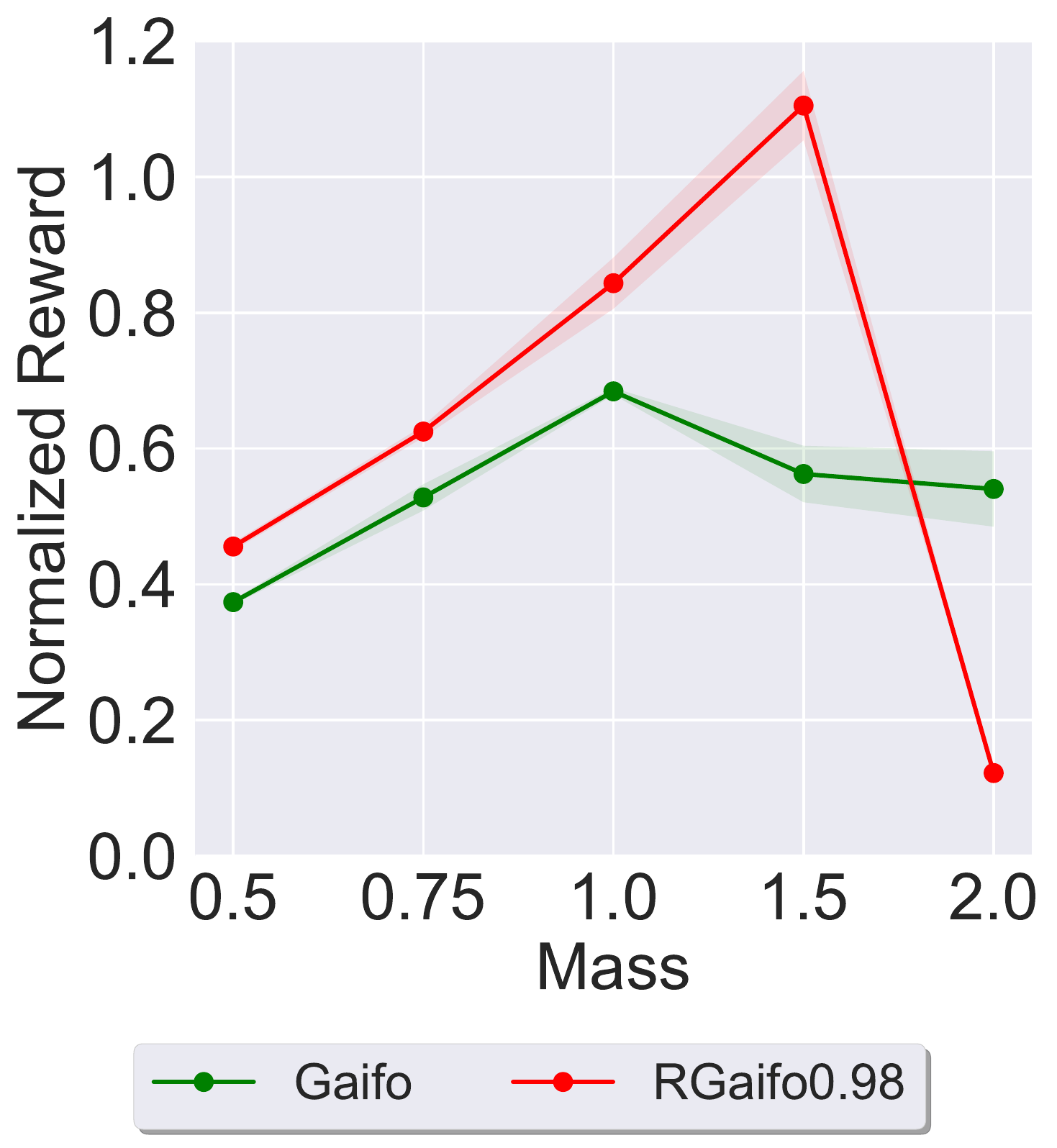}
     } \\
\subfloat[InvDoublePend]{%
       \includegraphics[width=0.27\linewidth]{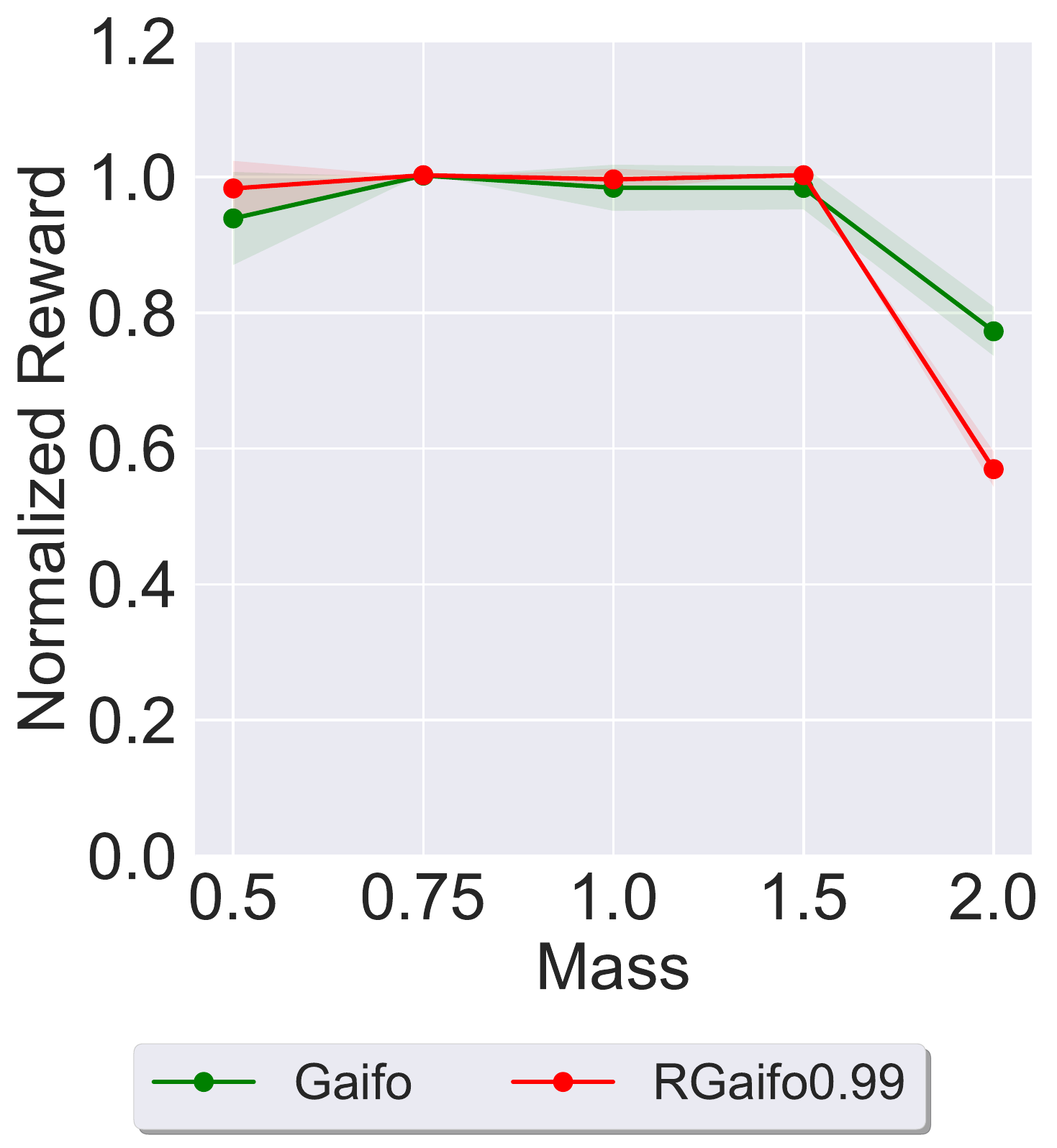}
     } &
\subfloat[Swimmer]{%
       \includegraphics[width=0.27\linewidth]{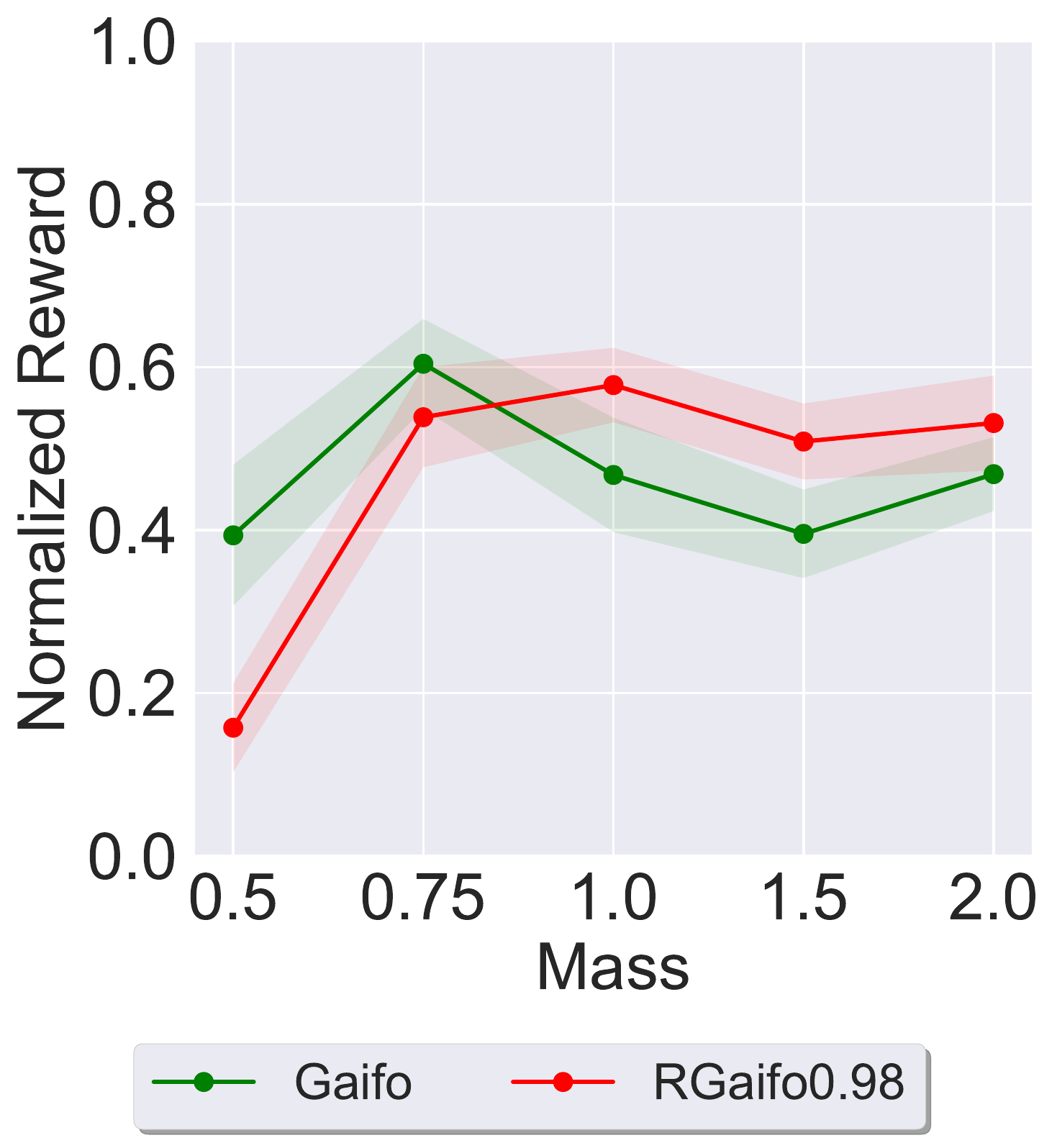}
     } &
     \\
\end{tabular}
\caption{The average (over $3$ seeds) transfer performance of Algorithm~\ref{alg:robust-gailfo} with different values of $\alpha$ for each MuJoCo task as reported in the legend of each plot. The x-axis denotes the relative mass of the learner environment $M^\mathrm{sim}$. The policies are evaluated in $M^\mathrm{real}_{c^*}$ over $1e5$ steps truncating the last episode if it does not terminate.}
\label{fig:TransferMassFixedAlpha}
\end{figure}

\begin{figure*}[!h] 
\centering
\begin{tabular}{ccccc}
\subfloat[HalfCheetah]{%
       \includegraphics[width=0.16\linewidth]{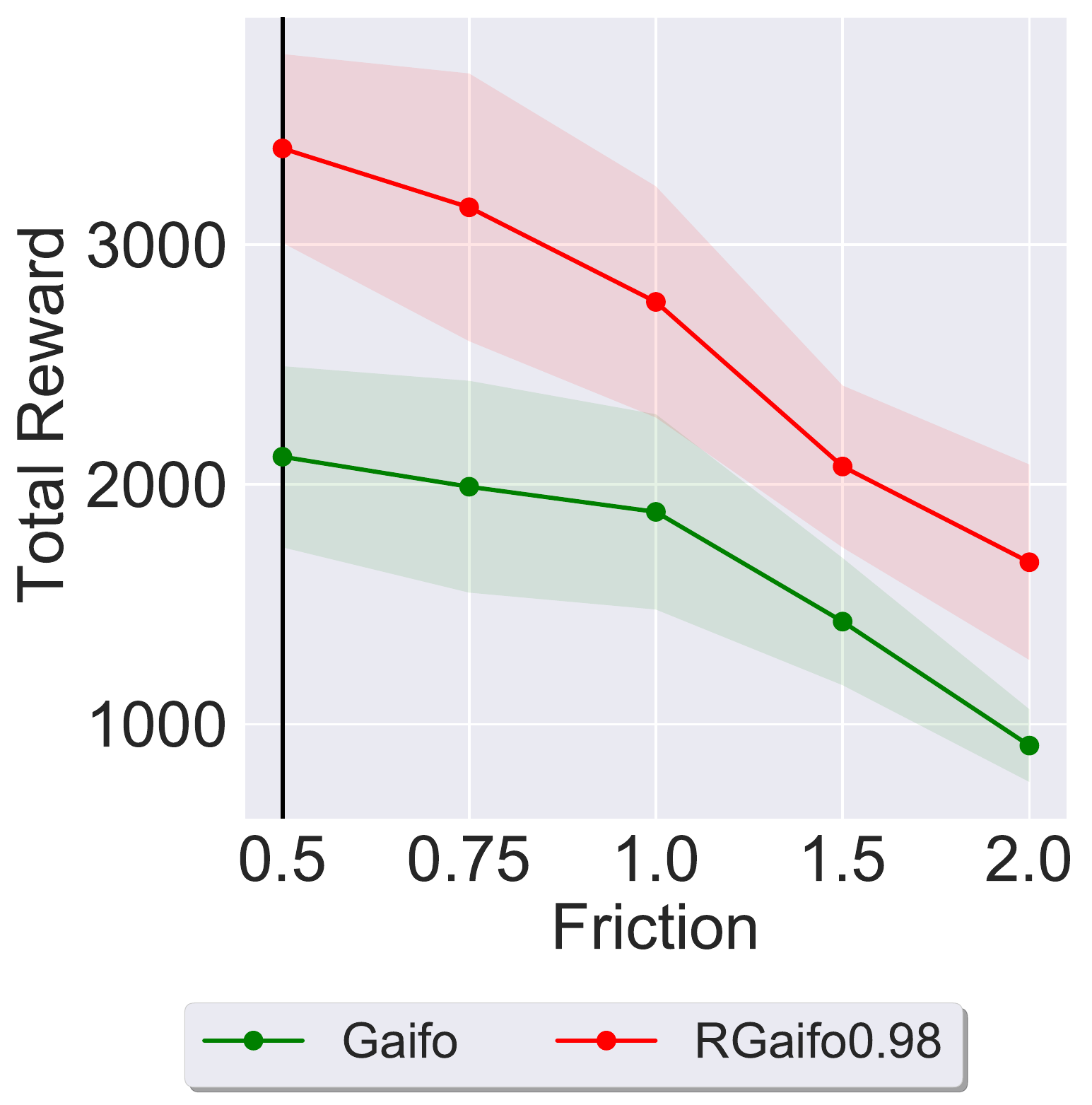}
     } &
\subfloat[HalfCheetah]{%
       \includegraphics[width=0.16\linewidth]{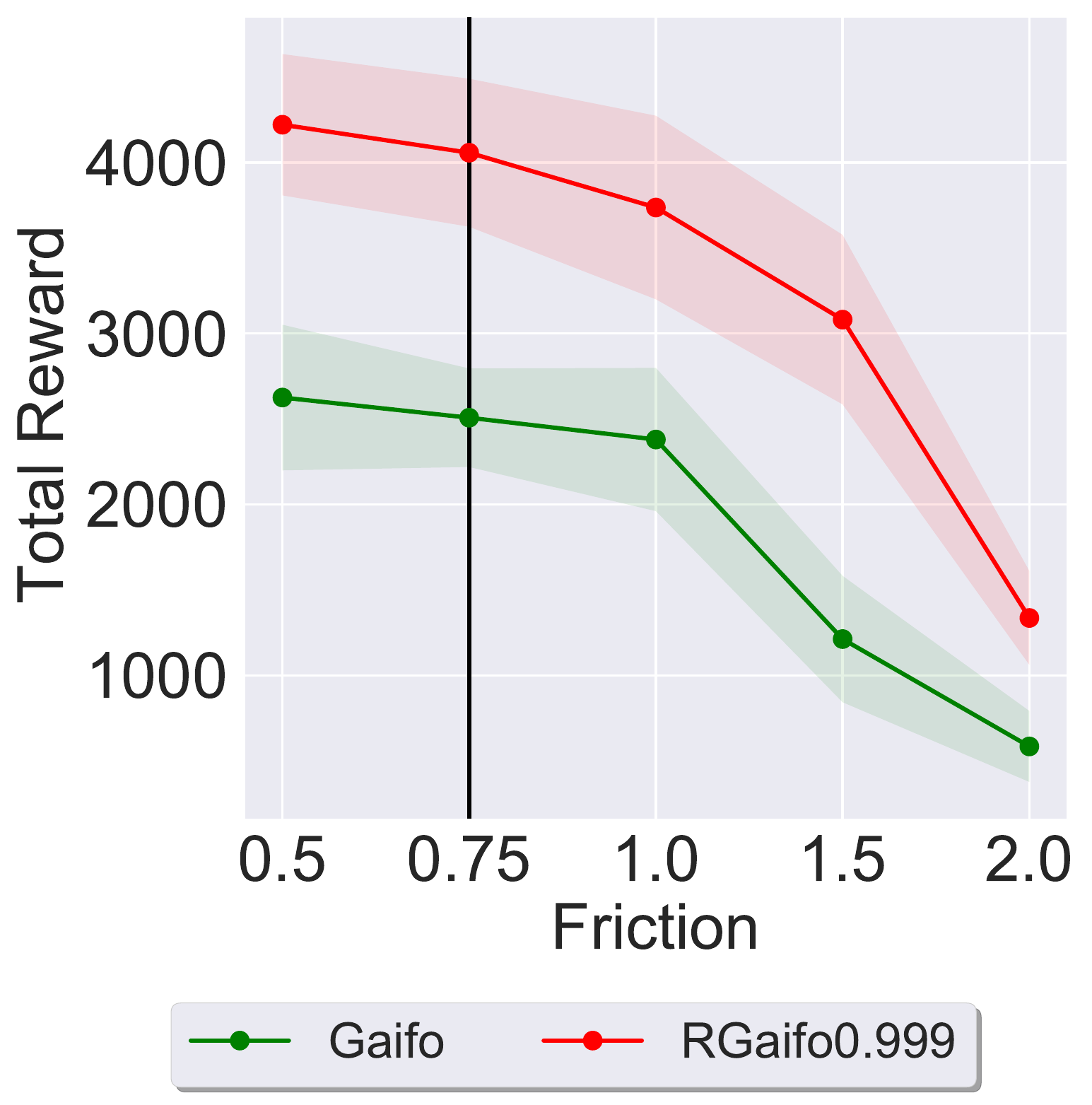}
     } &
\subfloat[HalfCheetah]{%
       \includegraphics[width=0.16\linewidth]{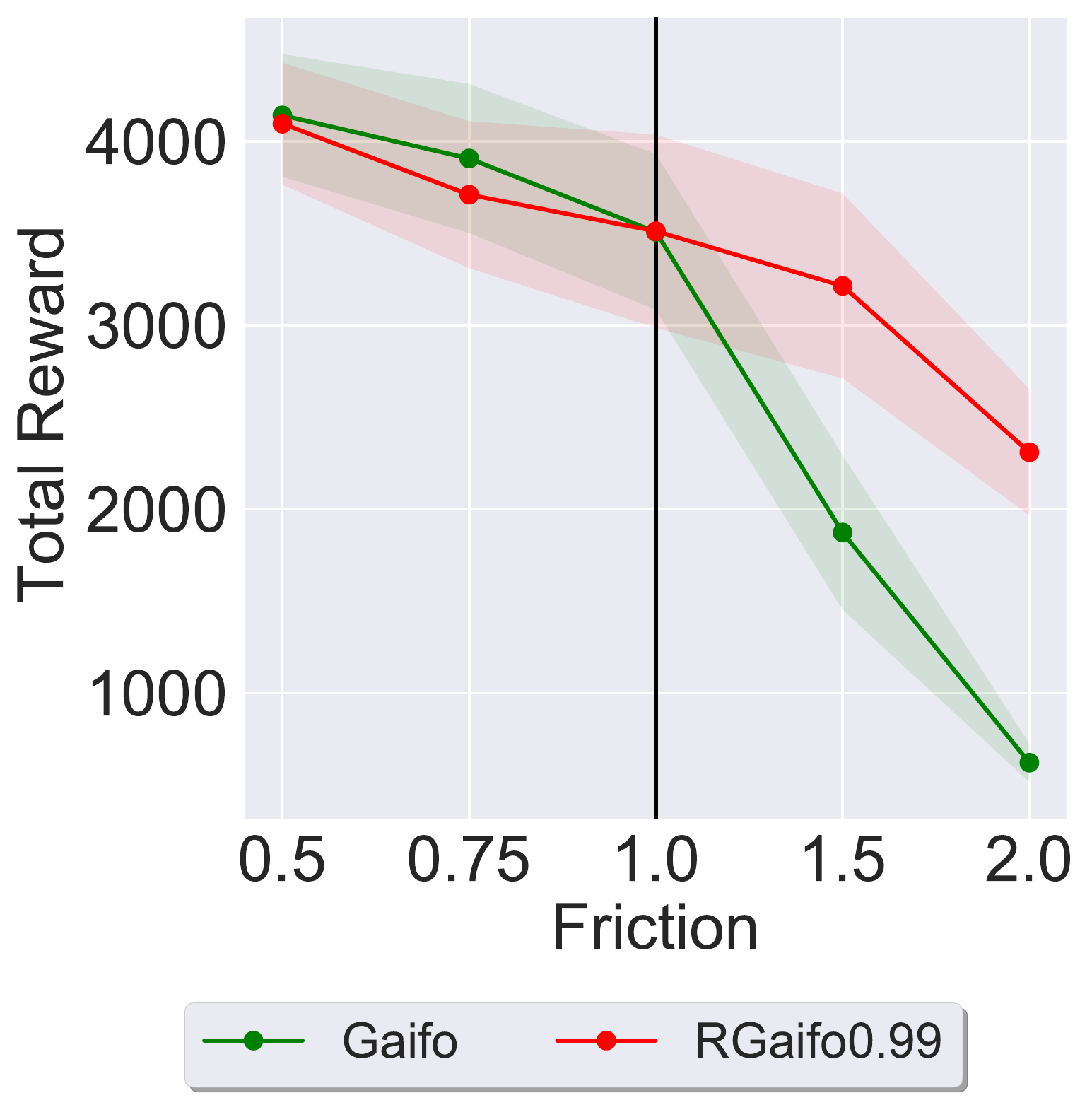}
     } &
\subfloat[HalfCheetah]{%
       \includegraphics[width=0.16\linewidth]{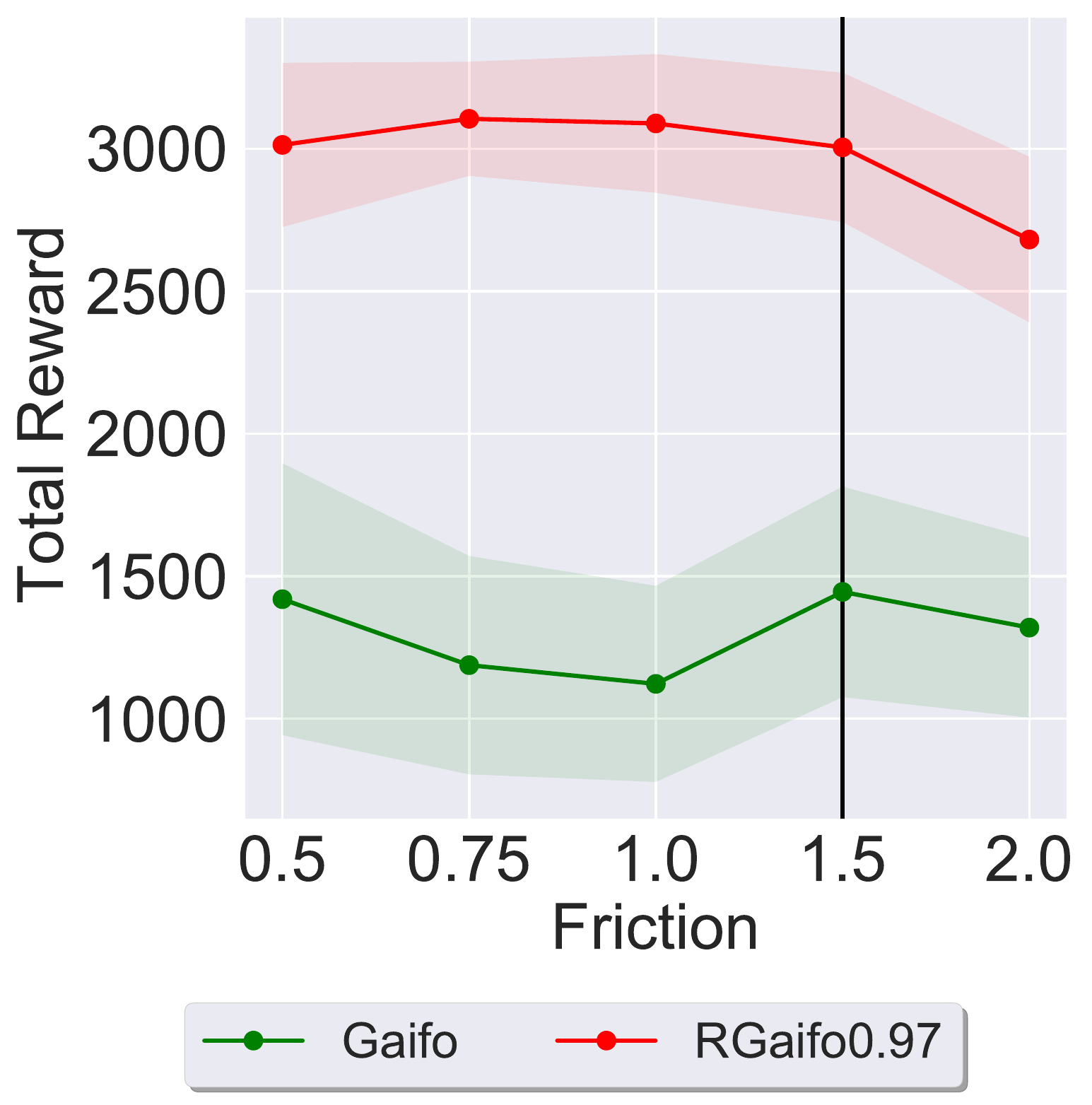}
     } &
\subfloat[HalfCheetah]{%
       \includegraphics[width=0.16\linewidth]{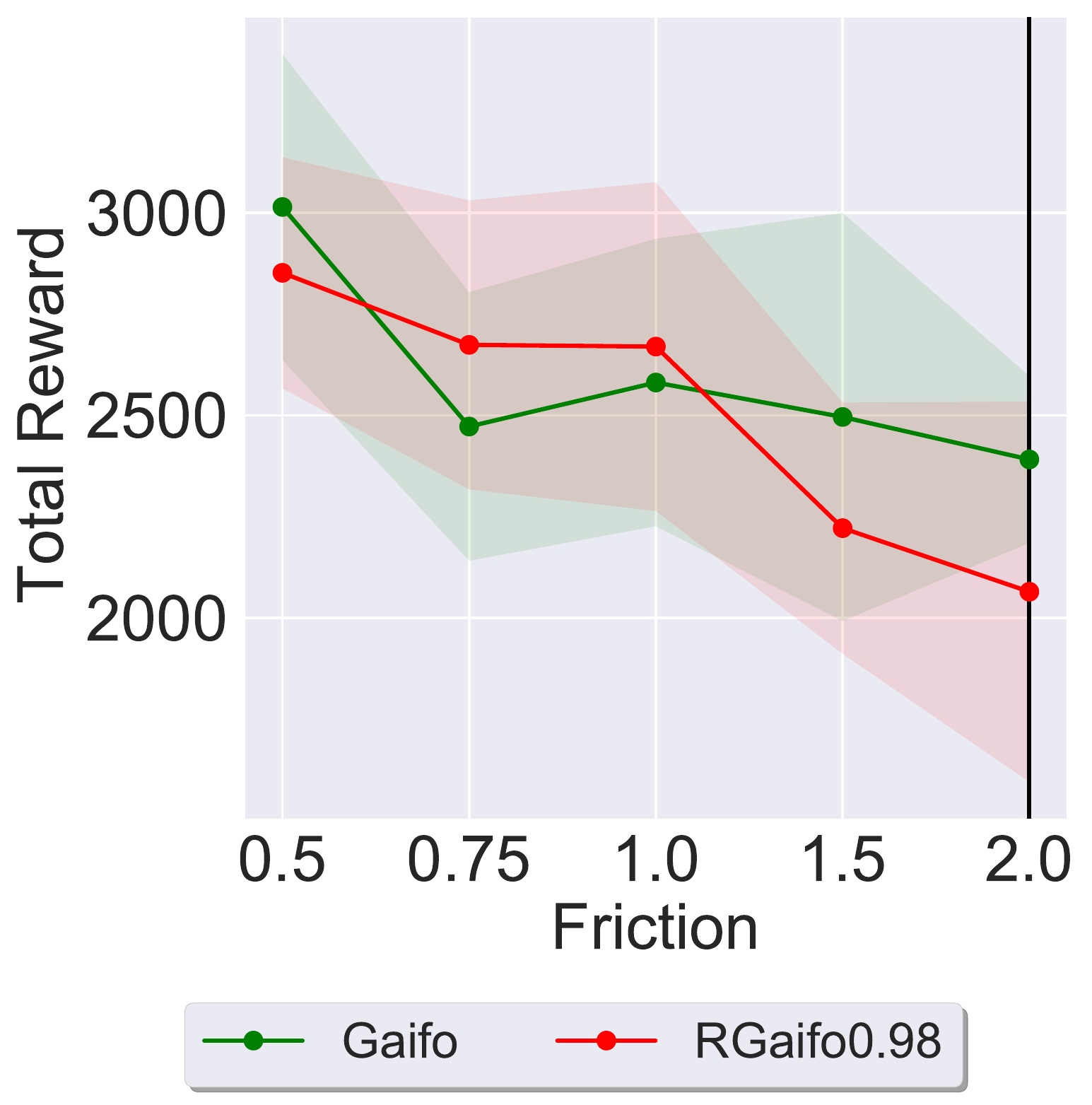}
     } \\
\subfloat[Walker]{%
       \includegraphics[width=0.16\linewidth]{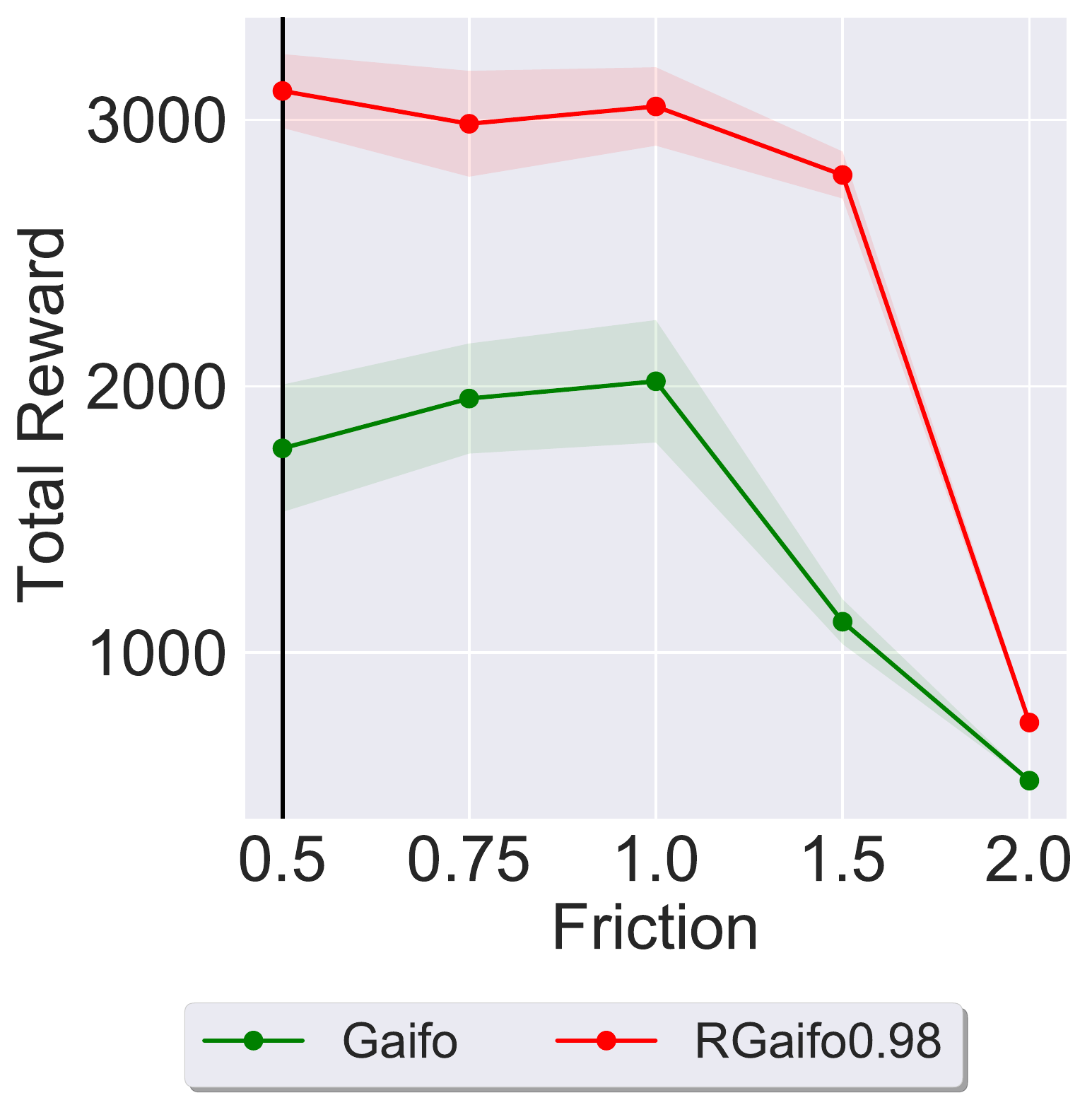}
     } &
\subfloat[Walker]{%
       \includegraphics[width=0.16\linewidth]{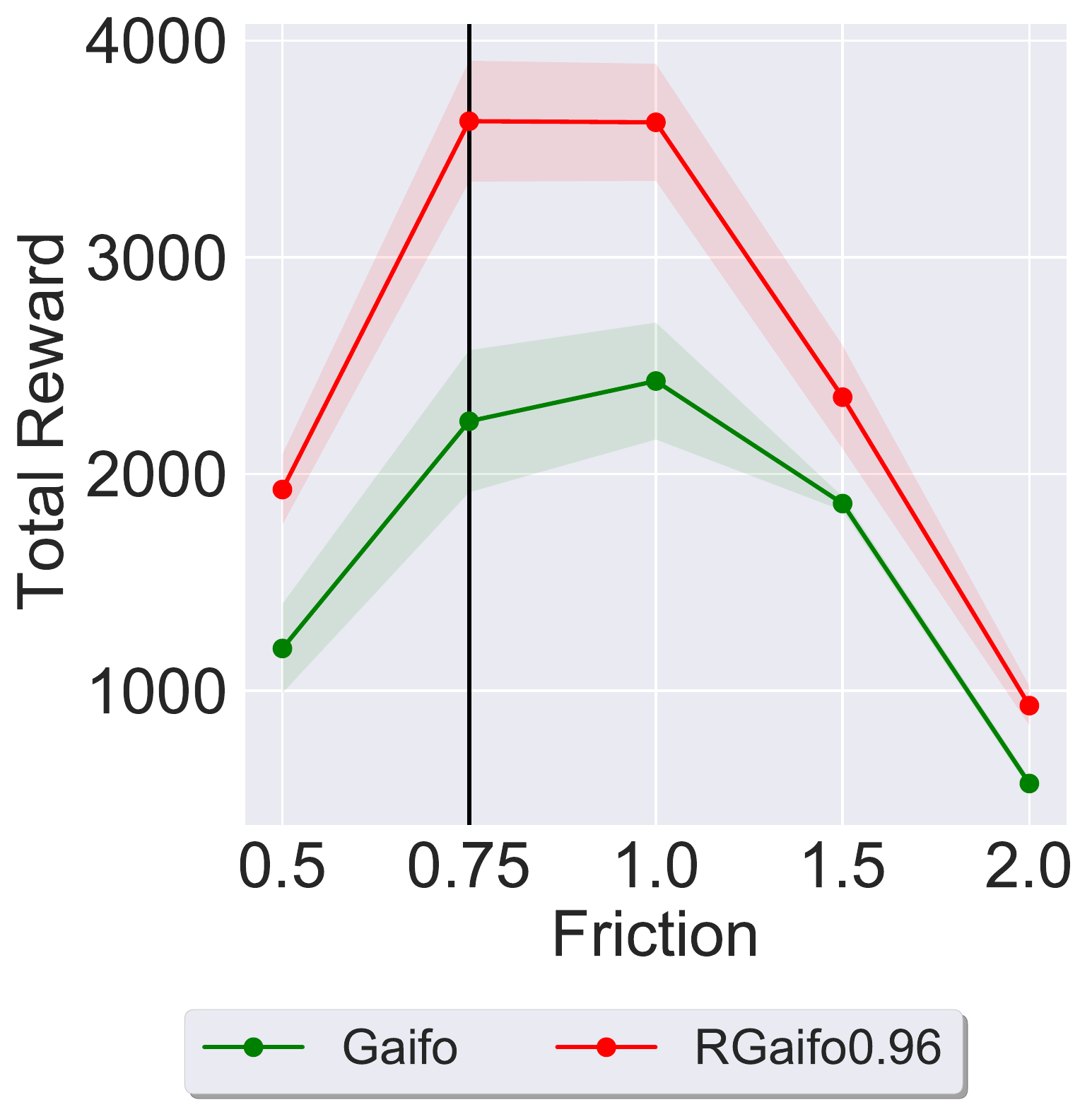}
     } &
\subfloat[Walker]{%
       \includegraphics[width=0.16\linewidth]{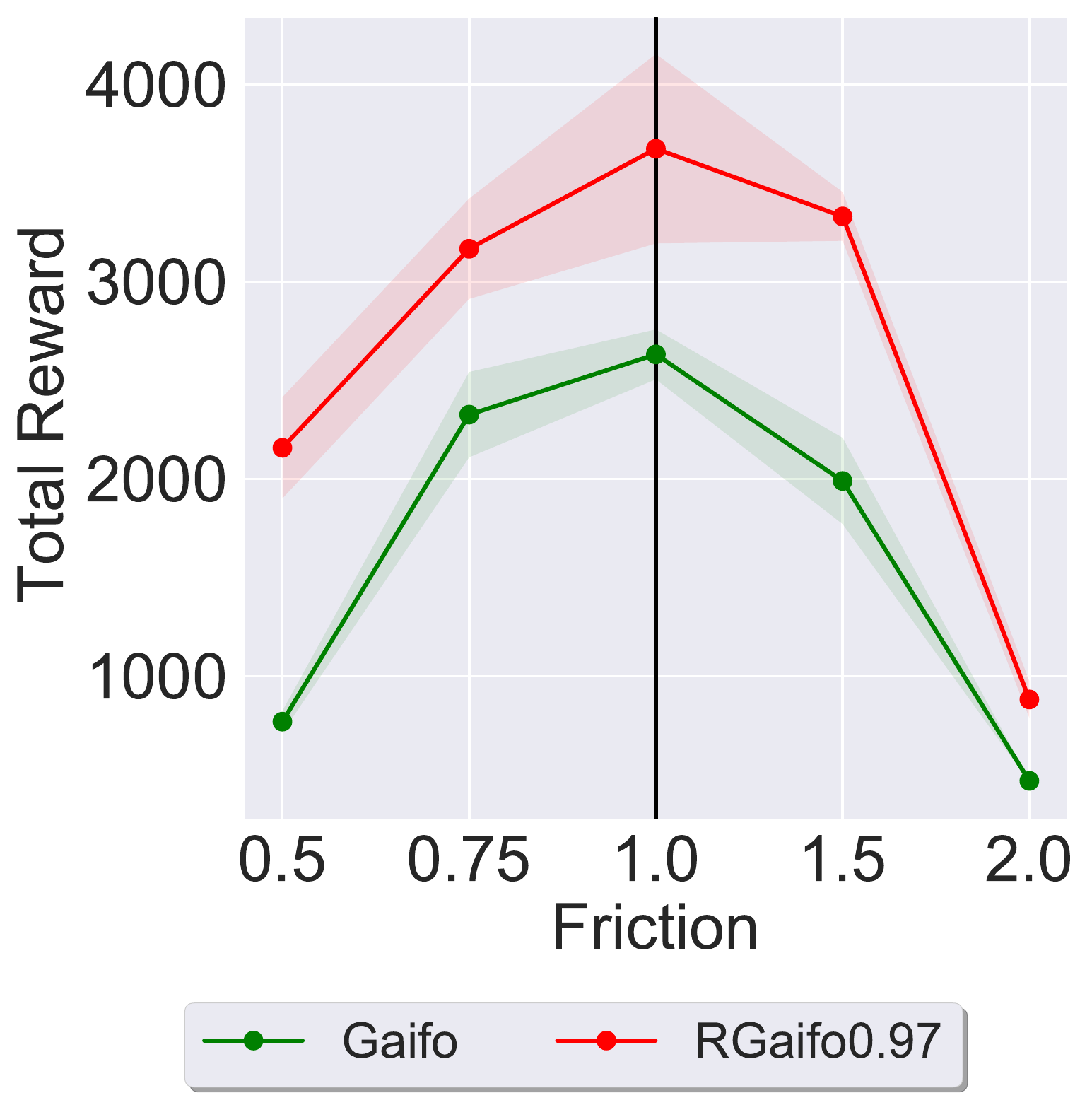}
     } &
\subfloat[Walker]{%
       \includegraphics[width=0.16\linewidth]{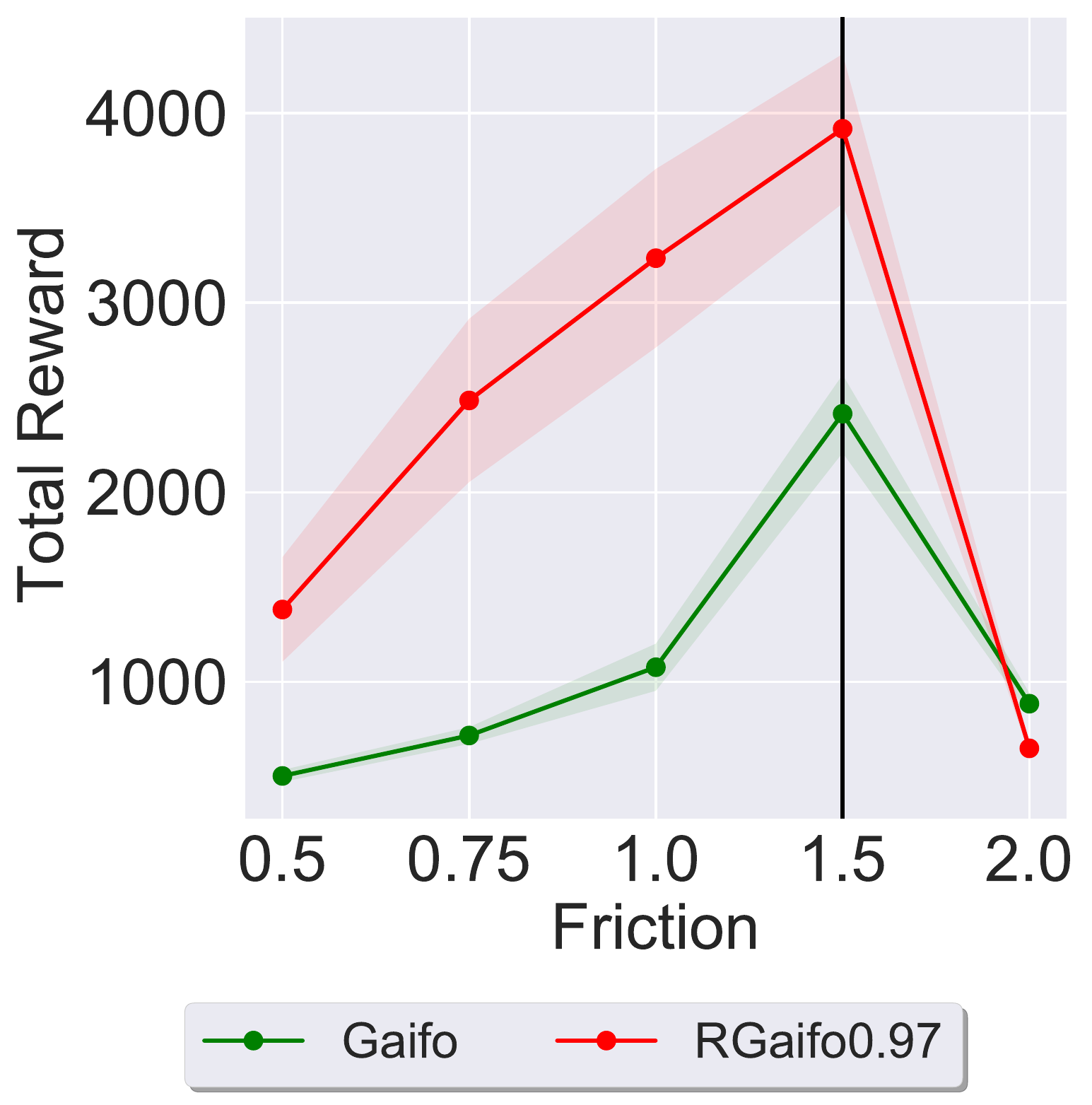}
     } &
\subfloat[Walker]{%
       \includegraphics[width=0.16\linewidth]{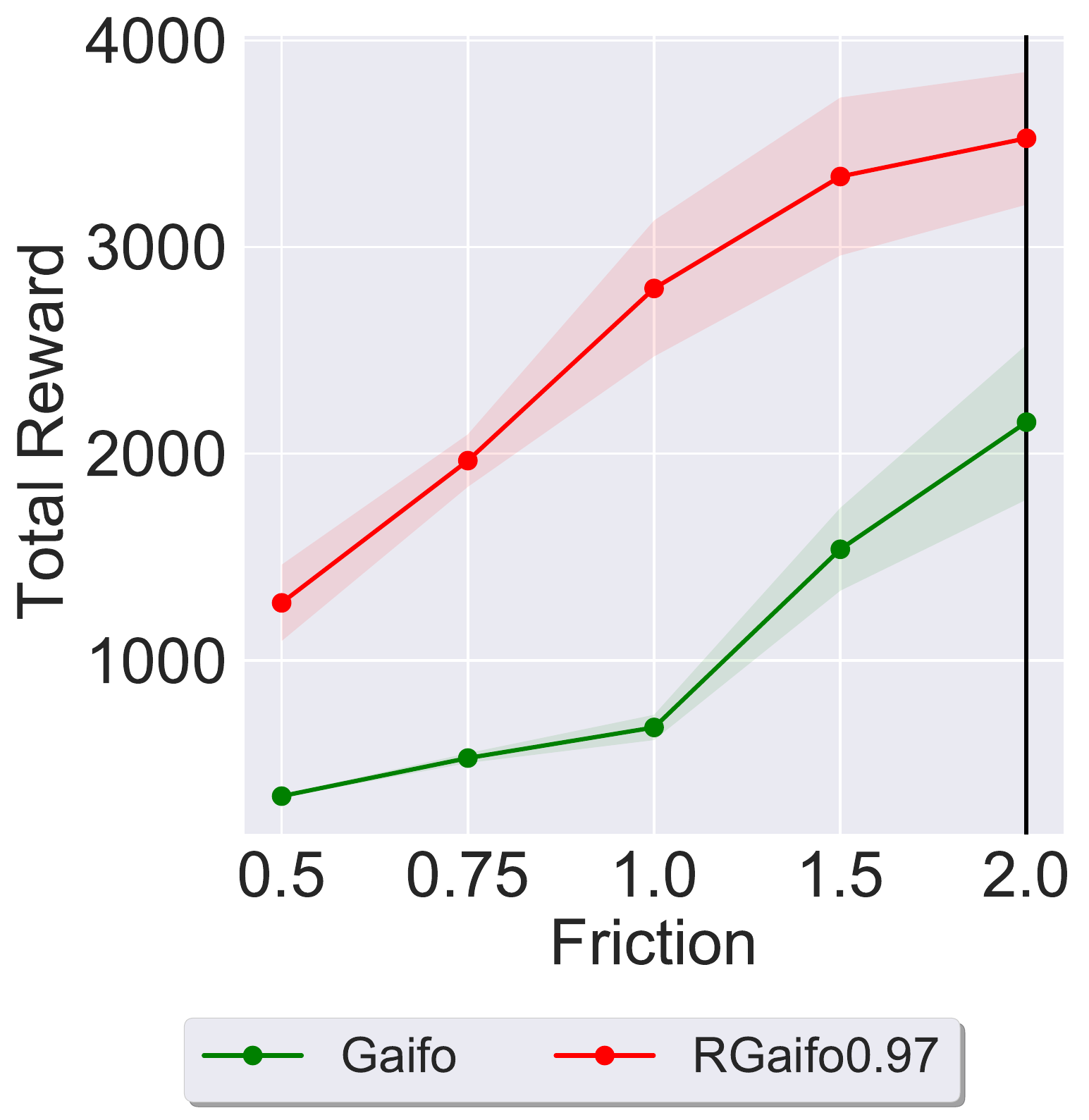}
     } \\
\subfloat[Hopper]{%
       \includegraphics[width=0.16\linewidth]{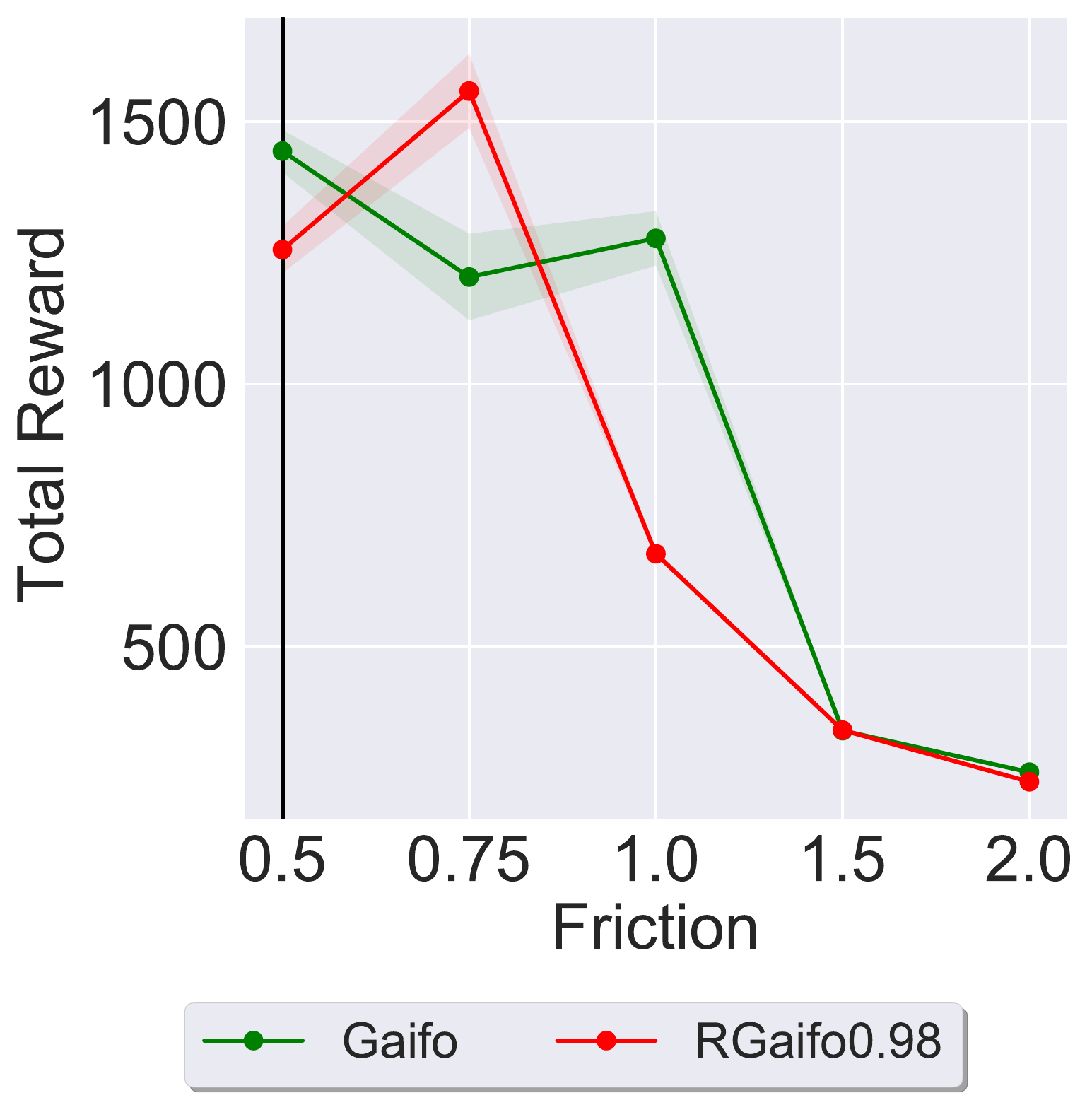}
     } &
\subfloat[Hopper]{%
       \includegraphics[width=0.16\linewidth]{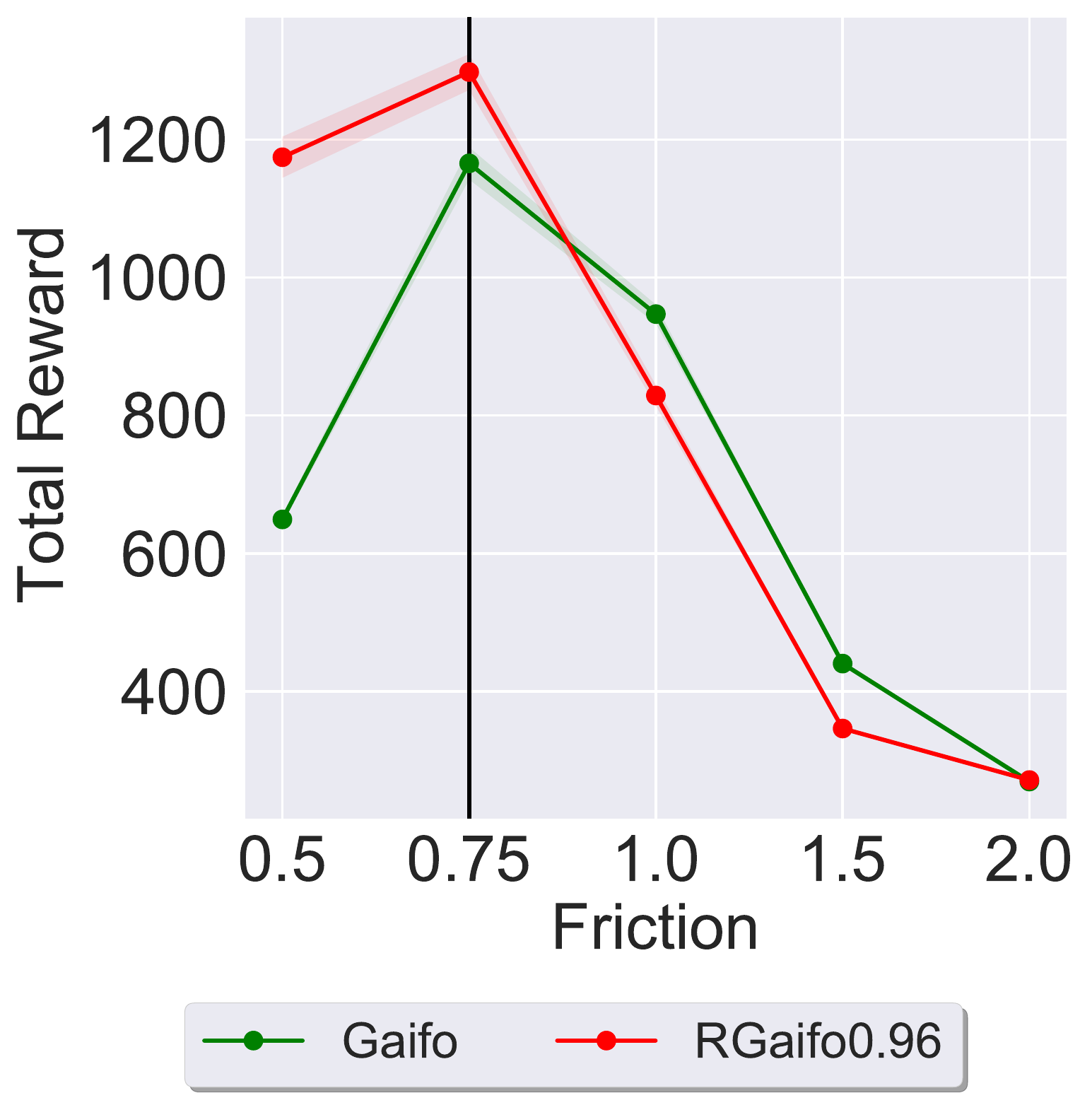}
     } &
\subfloat[Hopper]{%
       \includegraphics[width=0.16\linewidth]{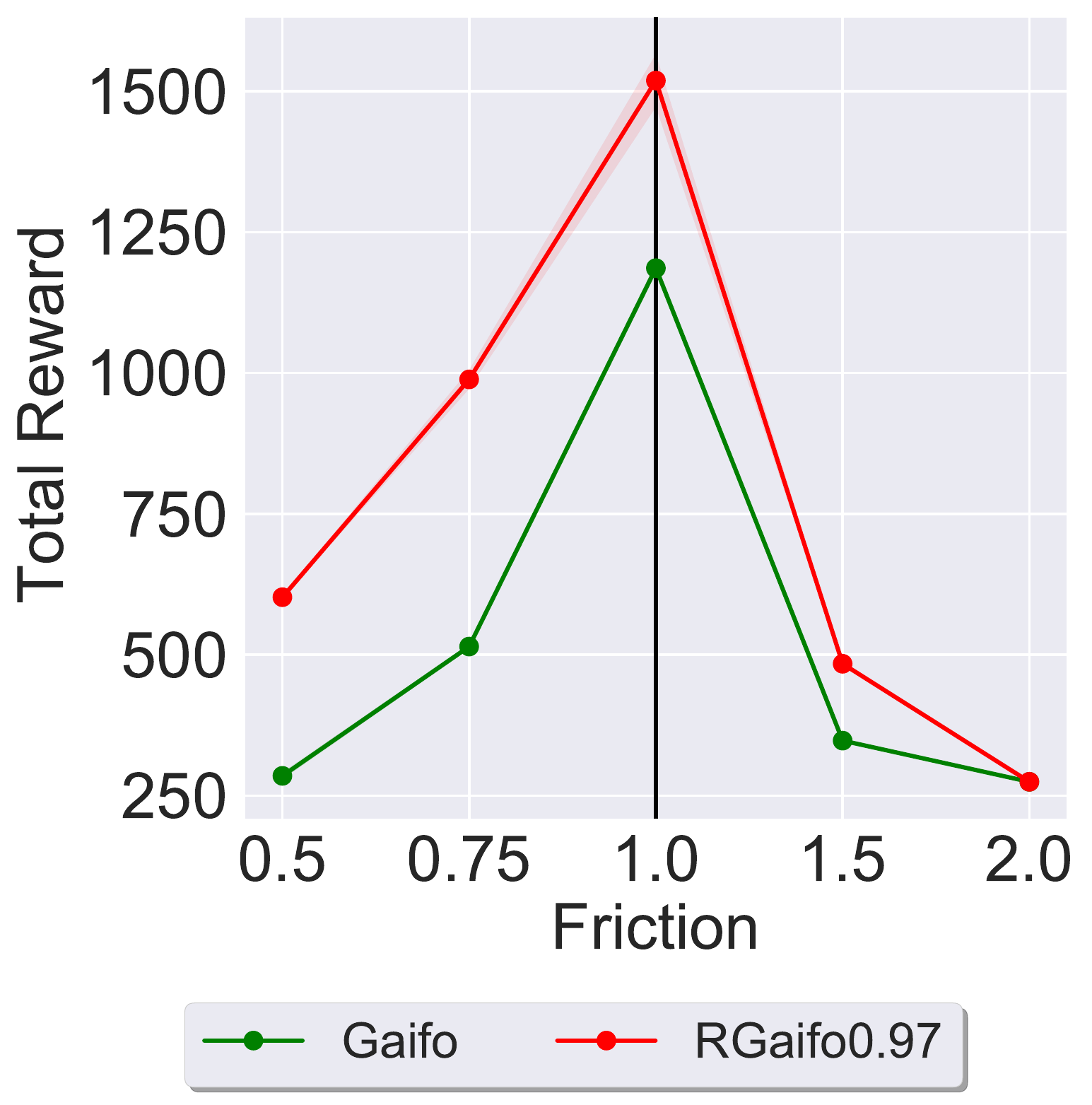}
     } &
\subfloat[Hopper]{%
       \includegraphics[width=0.16\linewidth]{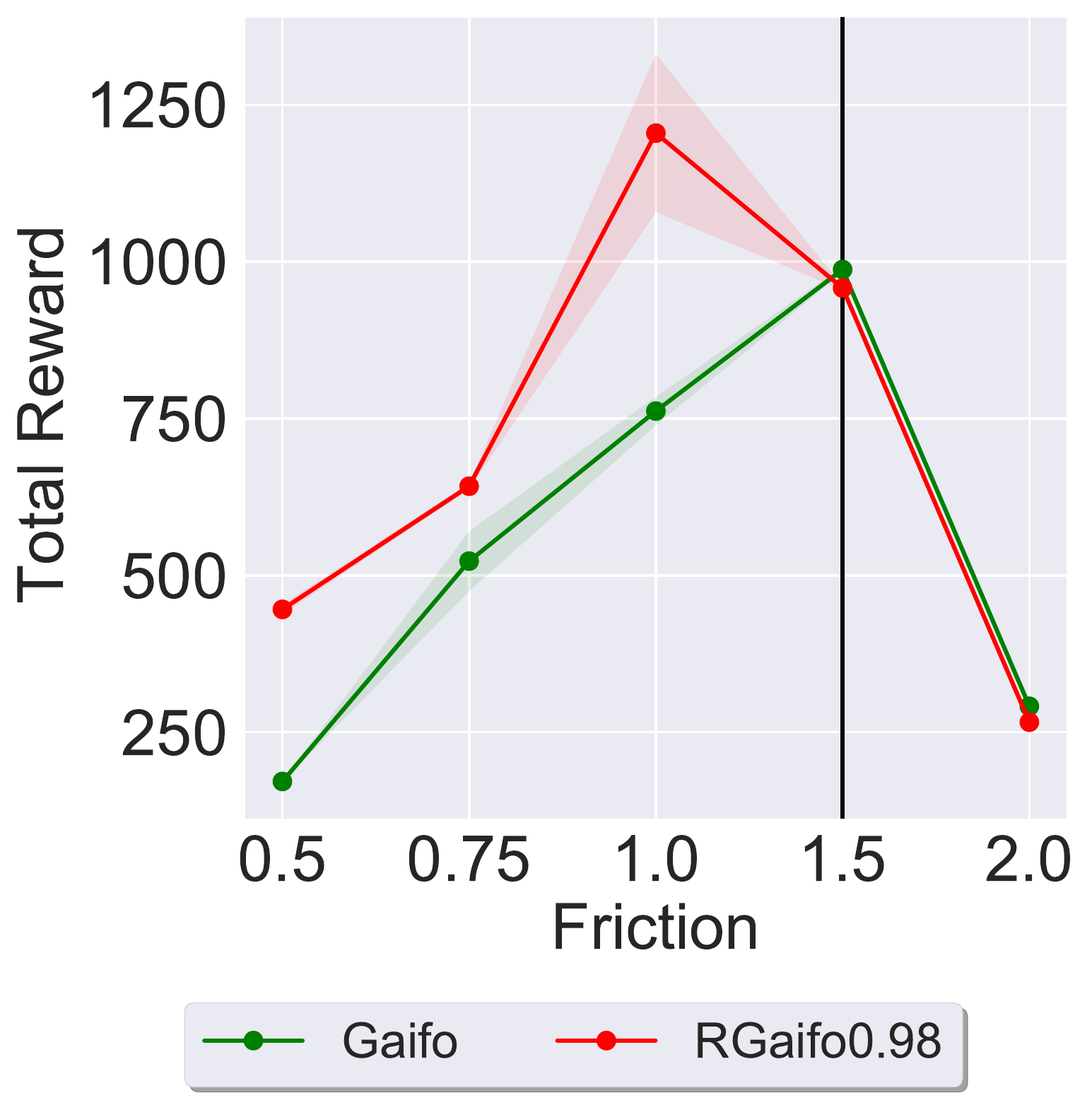}
     } &
\subfloat[Hopper]{%
       \includegraphics[width=0.16\linewidth]{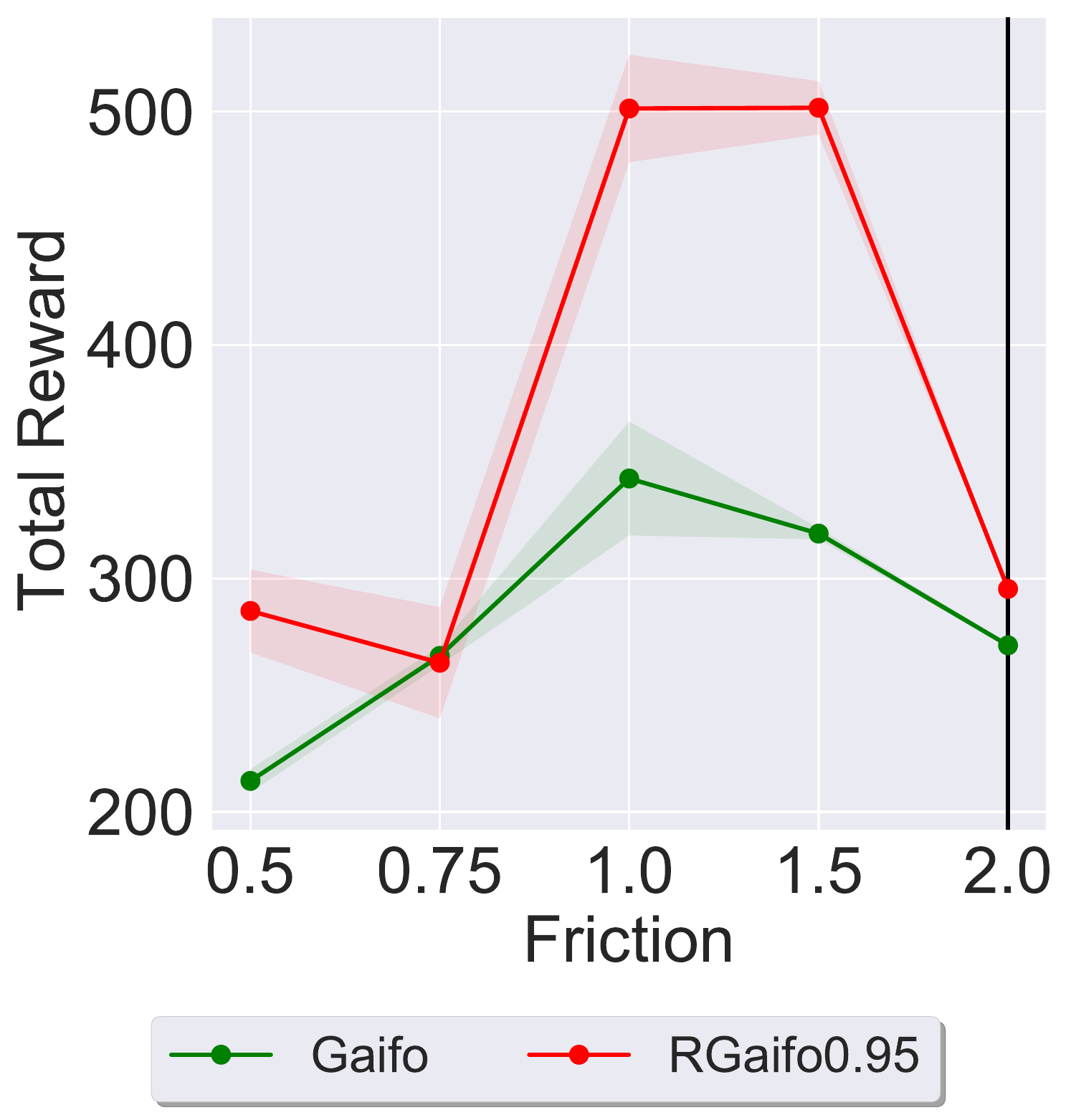}
     } \\

\end{tabular}
\caption{The average (over $3$ seeds) robust performance of Algorithm~\ref{alg:robust-gailfo} with different values of $\alpha$ for each MuJoCo task as reported in the legend of each plot. The expert environment $M^\mathrm{real}$, in which the demonstrations are collected, has relative friction $1.0$. In each plot, the black vertical line corresponds to the relative friction of the learner environment $M^\mathrm{sim}$ where we trained the policy with Algorithm~\ref{alg:robust-gailfo}. The x-axis denotes the relative friction of the test environment $M^\mathrm{test}$ in which the policies are evaluated. The policies are evaluated over $1e5$ steps truncating the last episode if it does not terminate.}
\label{fig:RobustnessFrictionFixedAlpha}
\end{figure*}

\begin{figure*}[!h] 
\centering
\begin{tabular}{ccccc}
\subfloat[HalfCheetah]{%
       \includegraphics[width=0.16\linewidth]{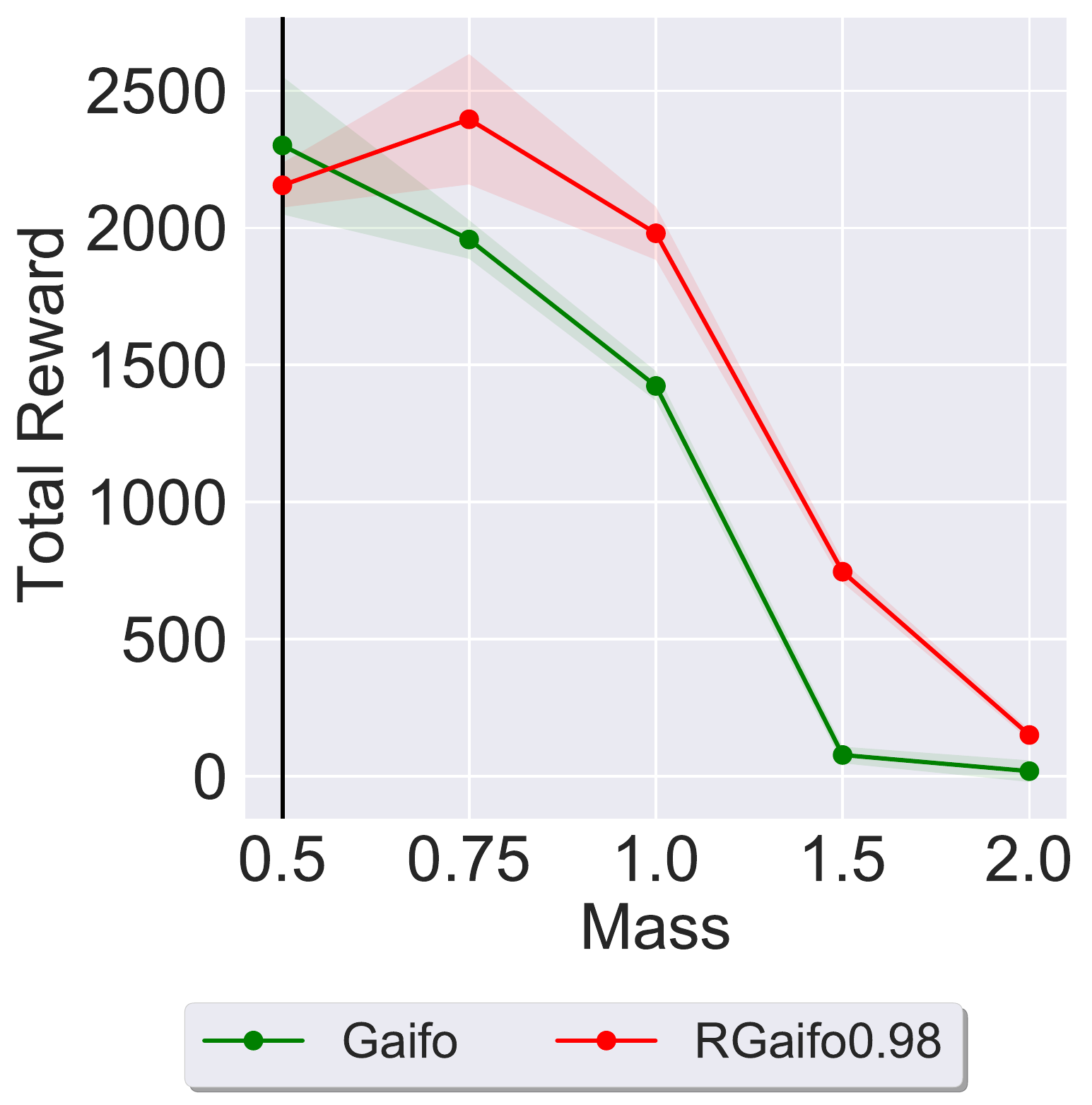}
     } &
\subfloat[HalfCheetah]{%
       \includegraphics[width=0.16\linewidth]{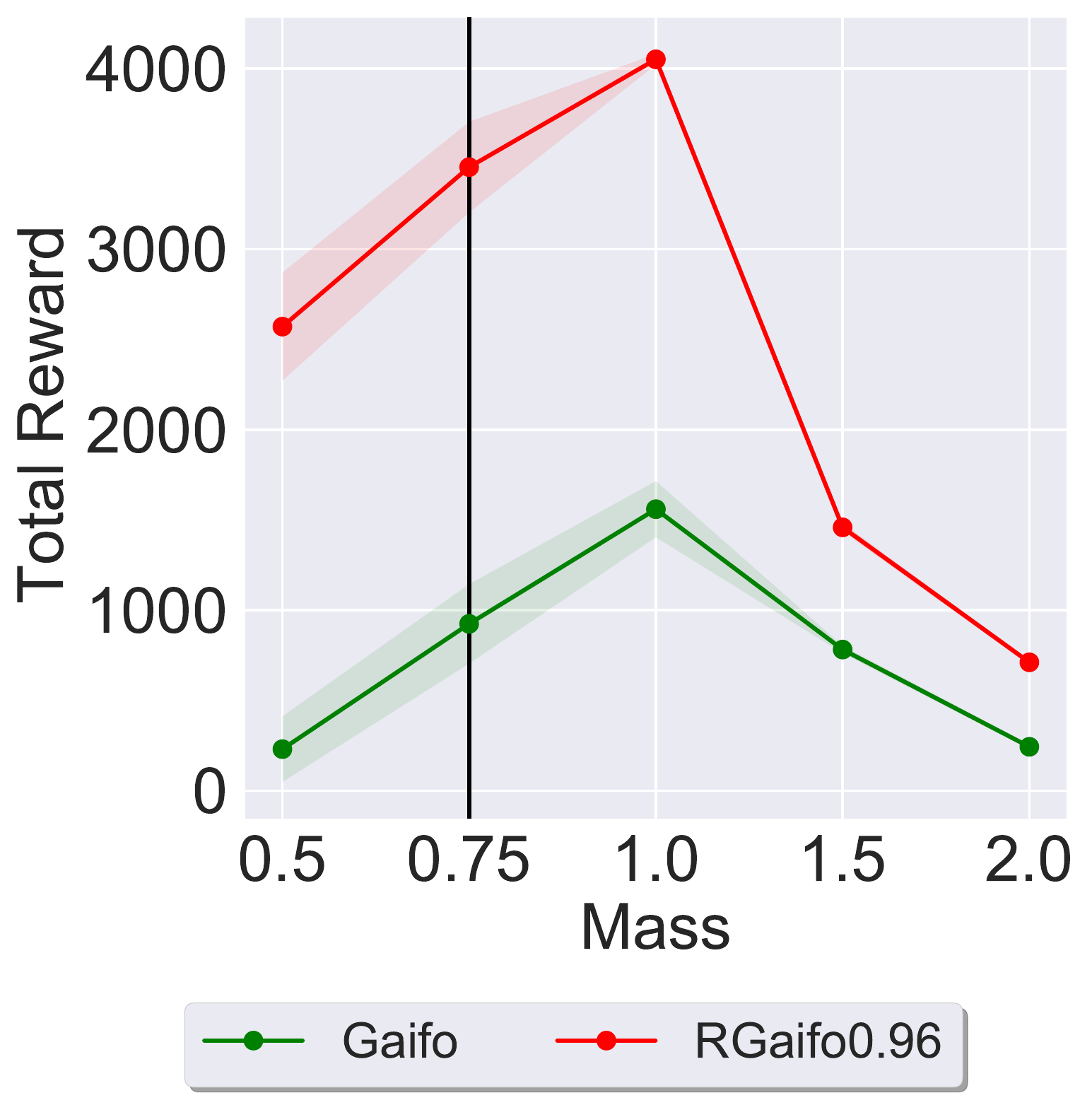}
     } &
\subfloat[HalfCheetah]{%
       \includegraphics[width=0.16\linewidth]{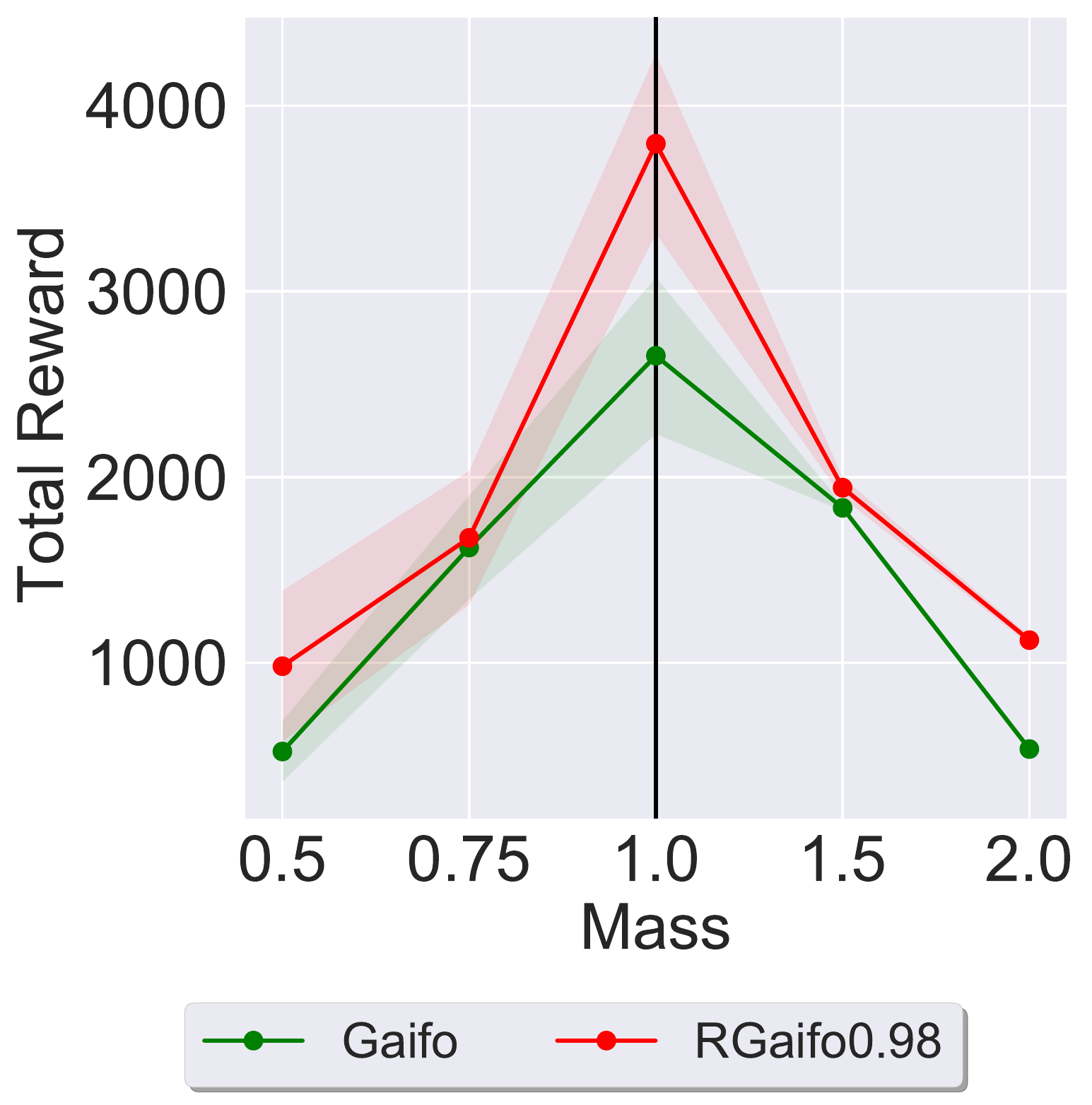}
     } &
\subfloat[HalfCheetah]{%
       \includegraphics[width=0.16\linewidth]{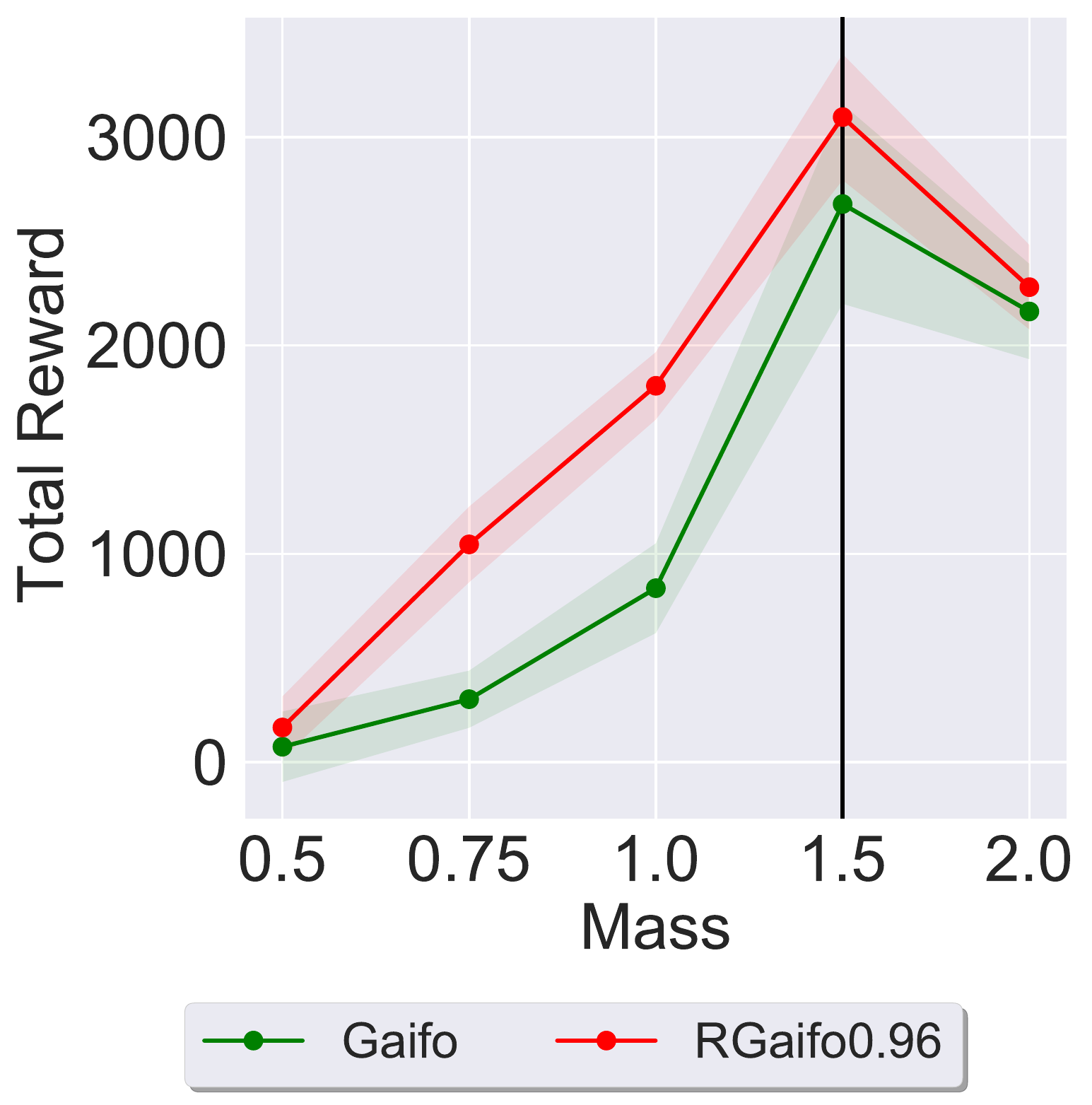}
     } &
\subfloat[HalfCheetah]{%
       \includegraphics[width=0.16\linewidth]{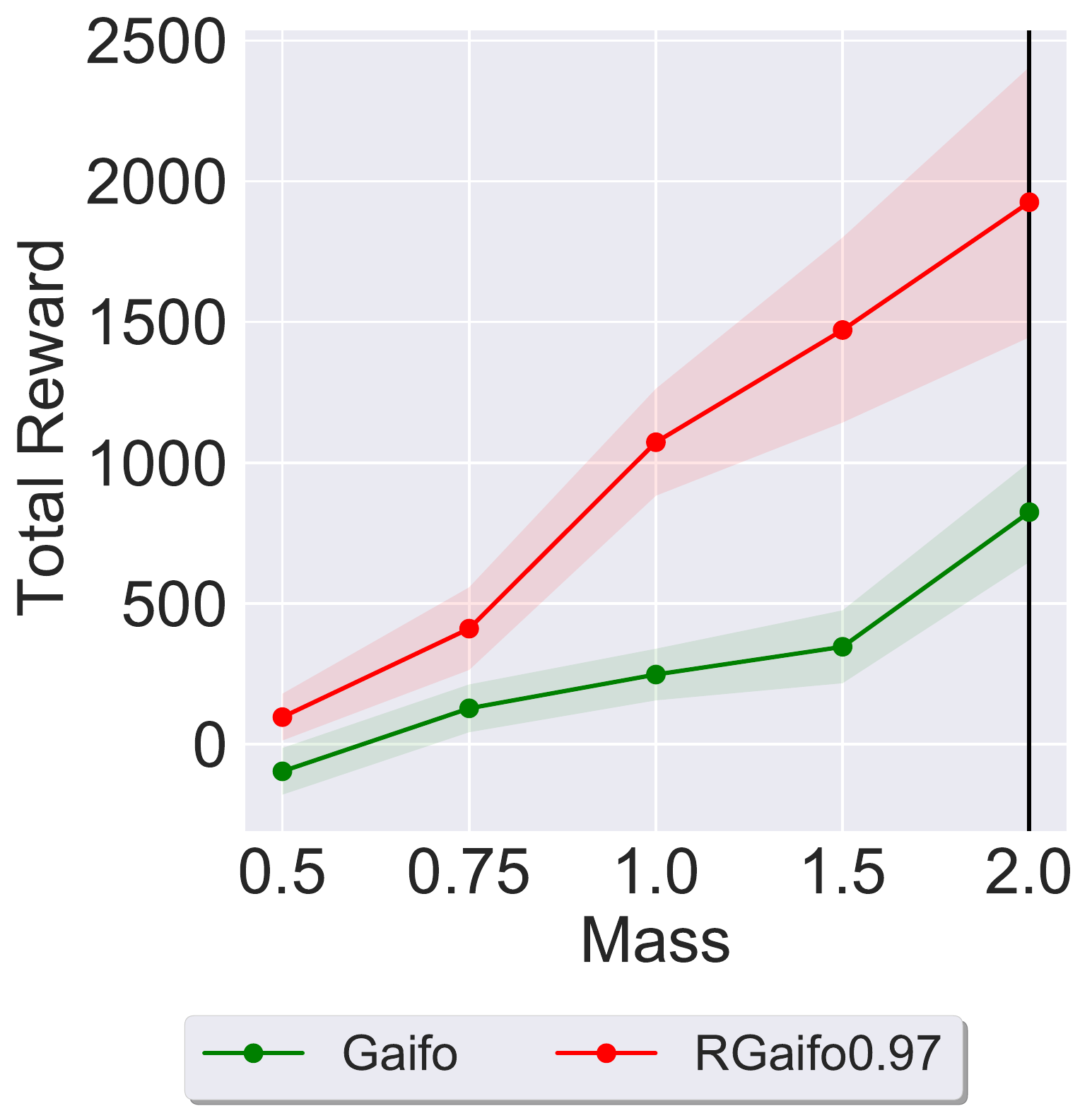}
     } \\
\subfloat[Walker]{%
       \includegraphics[width=0.16\linewidth]{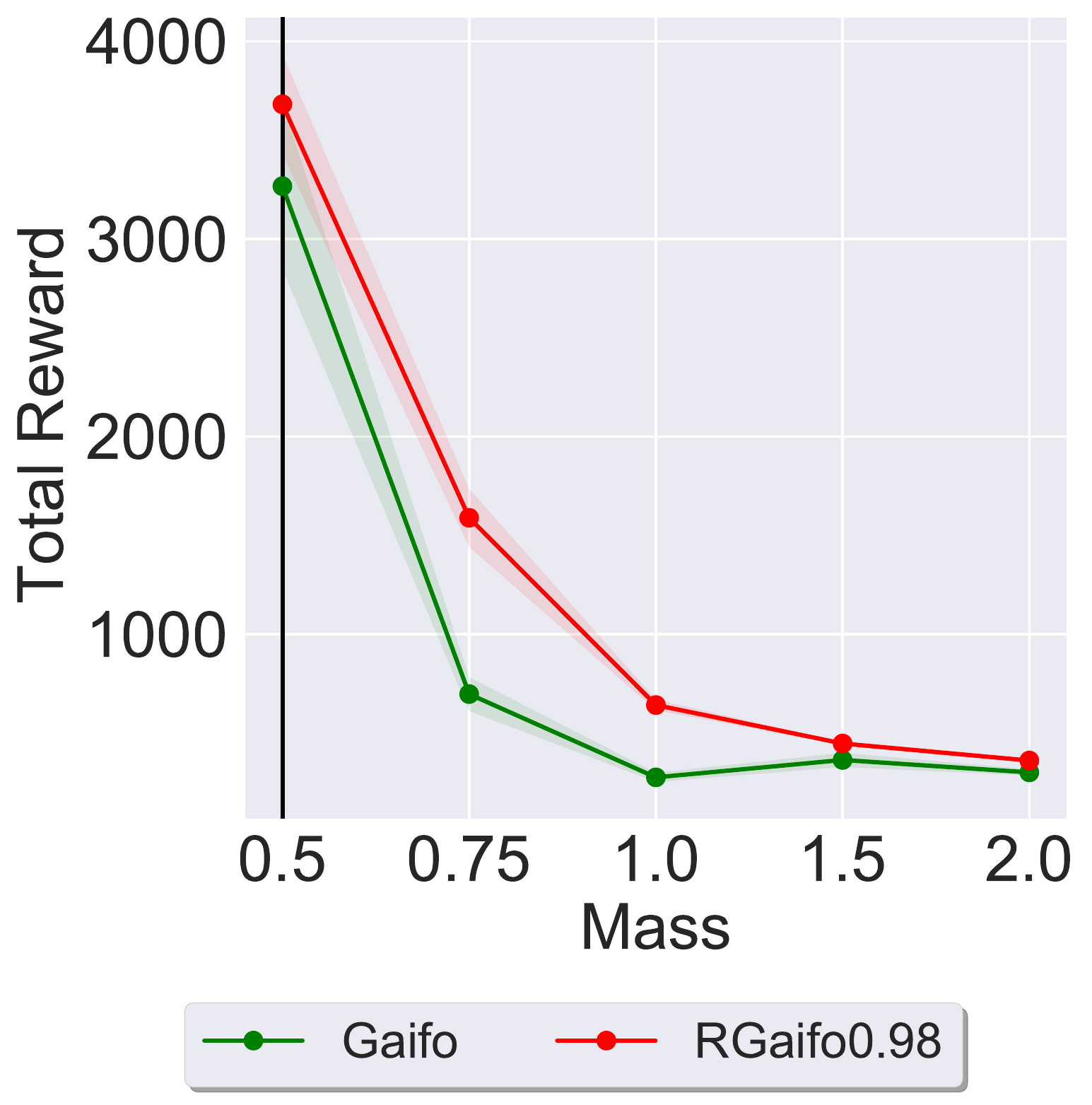}
     } &
\subfloat[Walker]{%
       \includegraphics[width=0.16\linewidth]{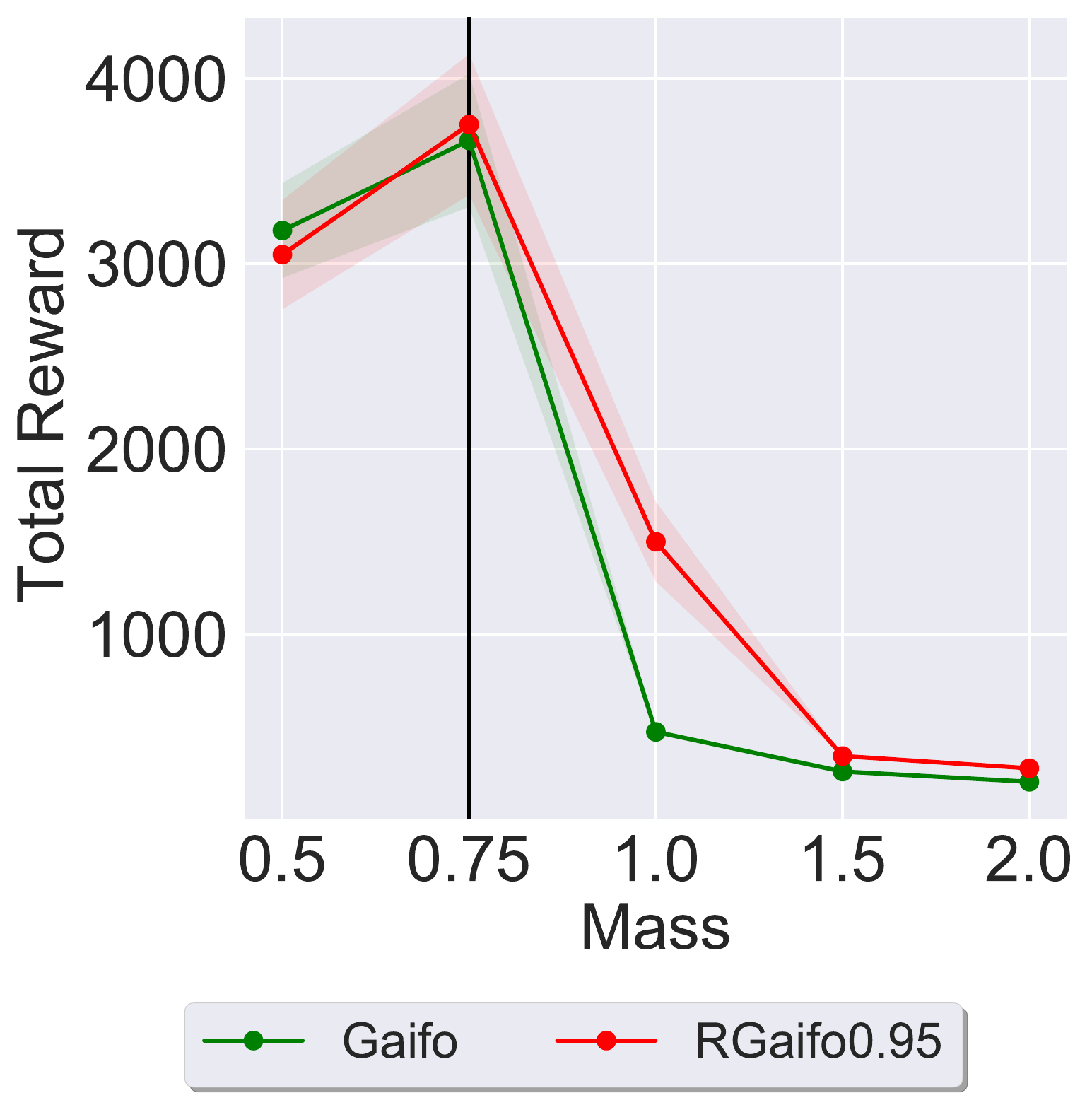}
     } &
\subfloat[Walker]{%
       \includegraphics[width=0.16\linewidth]{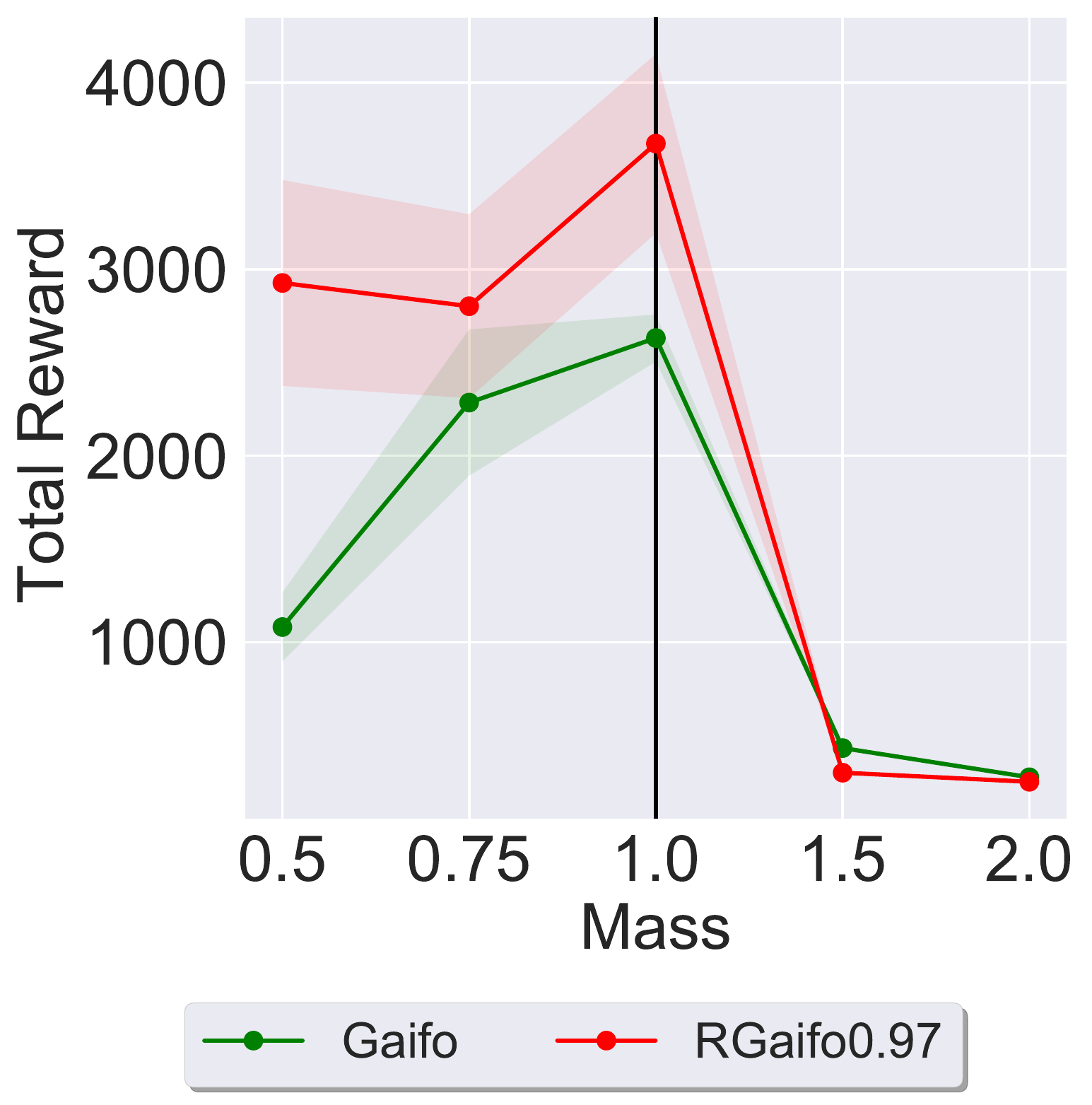}
     } &
\subfloat[Walker]{%
       \includegraphics[width=0.16\linewidth]{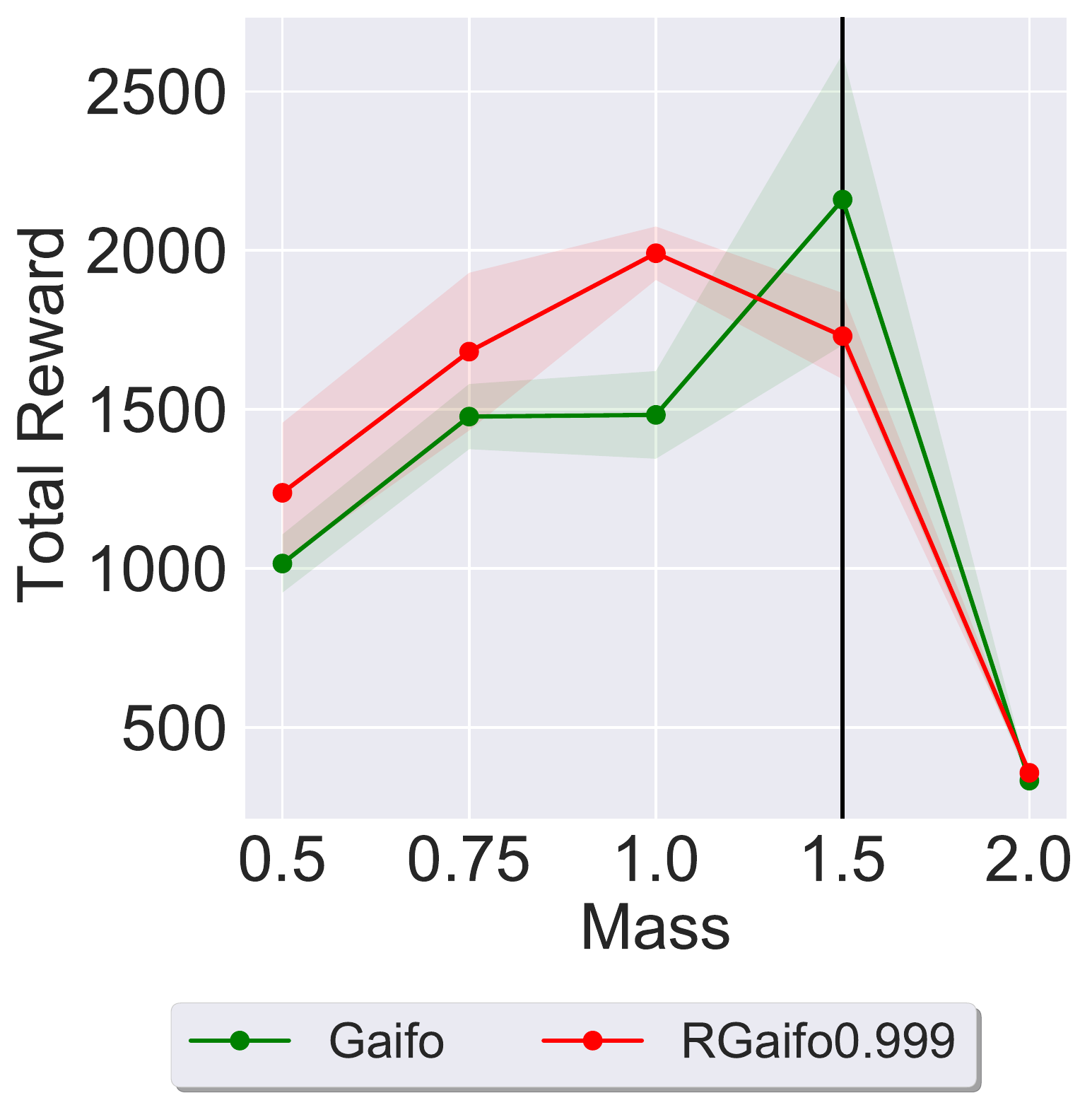}
     } &
\subfloat[Walker]{%
       \includegraphics[width=0.16\linewidth]{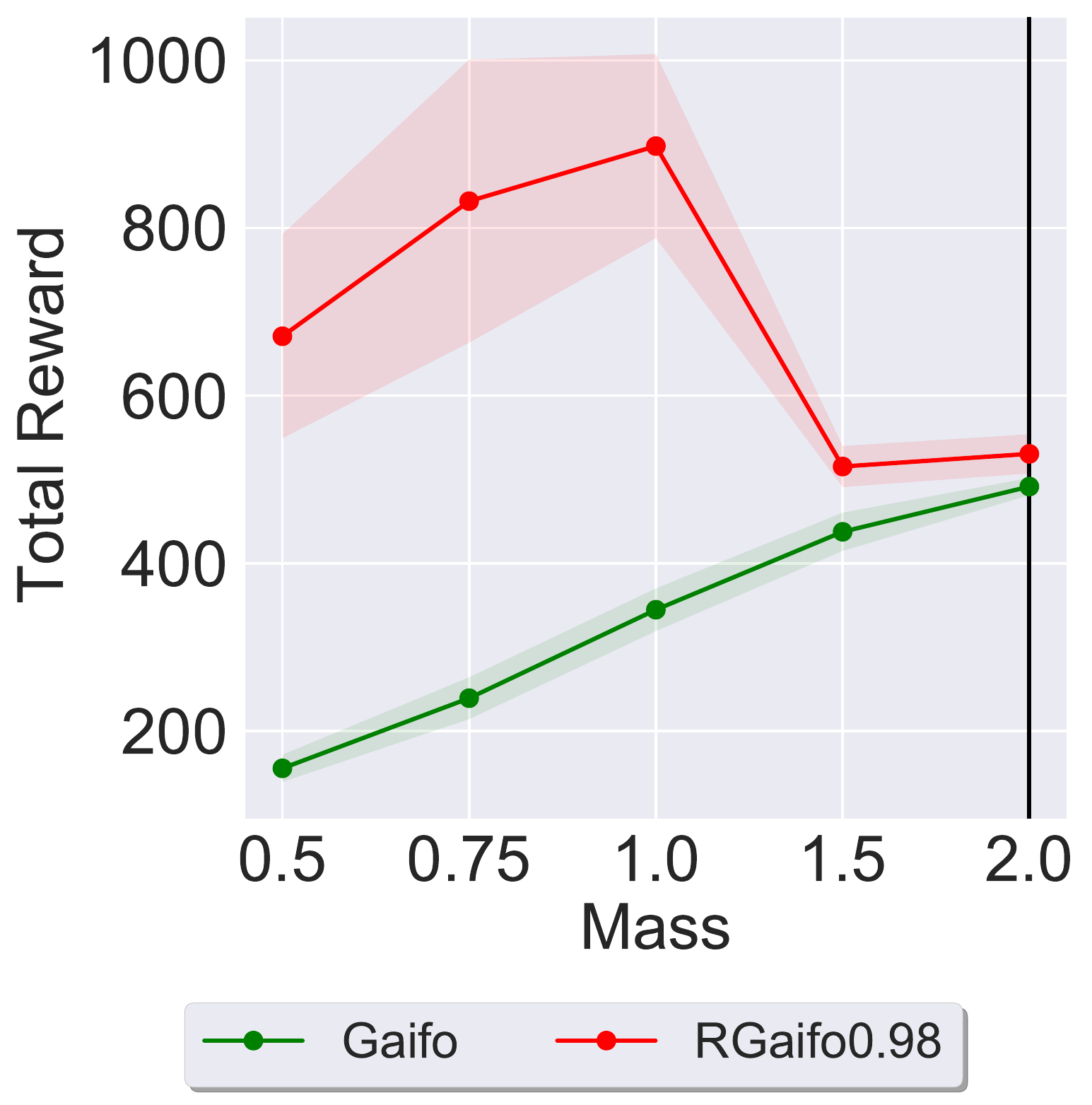}
     } \\
\subfloat[Hopper]{%
       \includegraphics[width=0.16\linewidth]{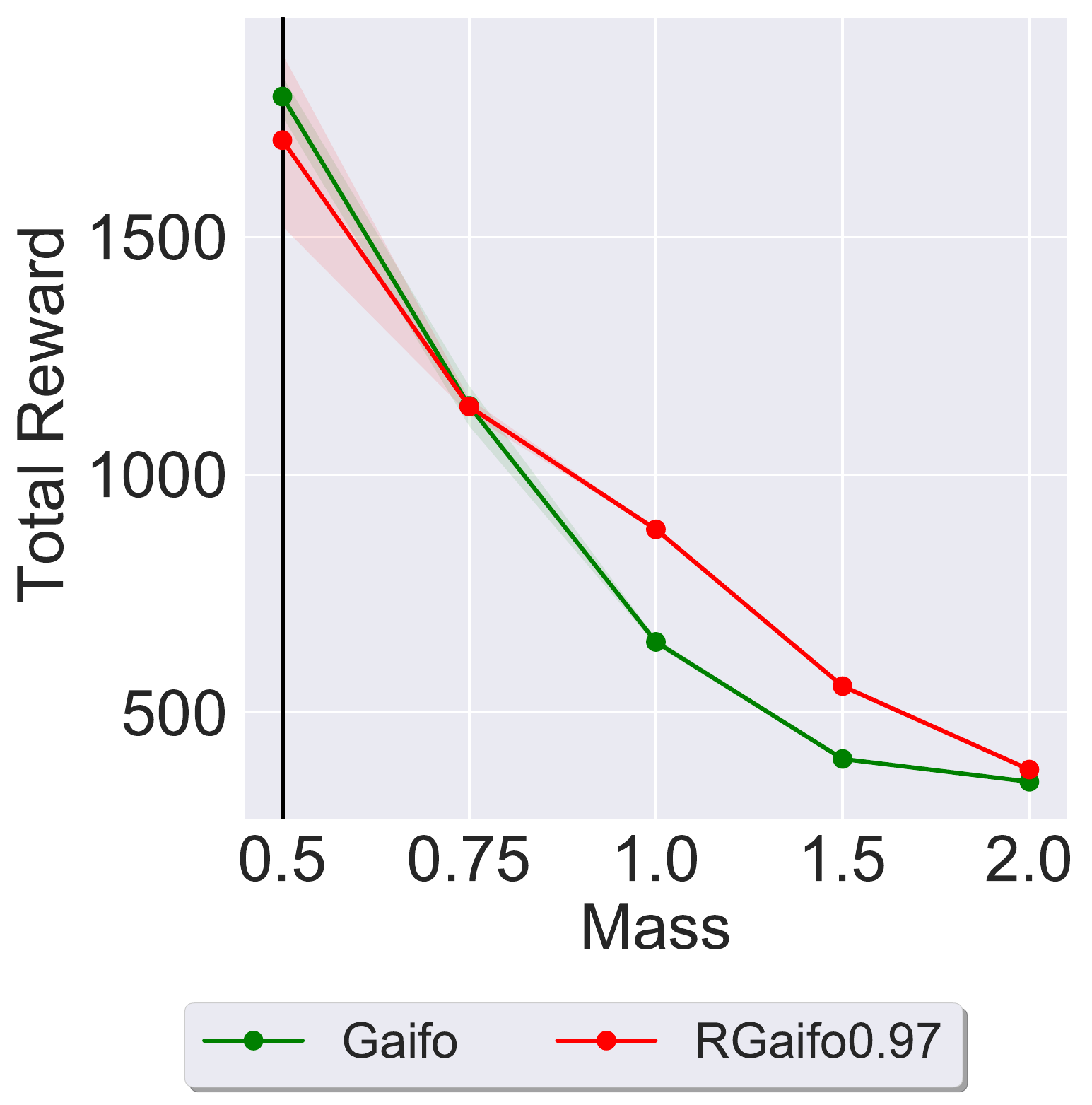}
     } &
\subfloat[Hopper]{%
       \includegraphics[width=0.16\linewidth]{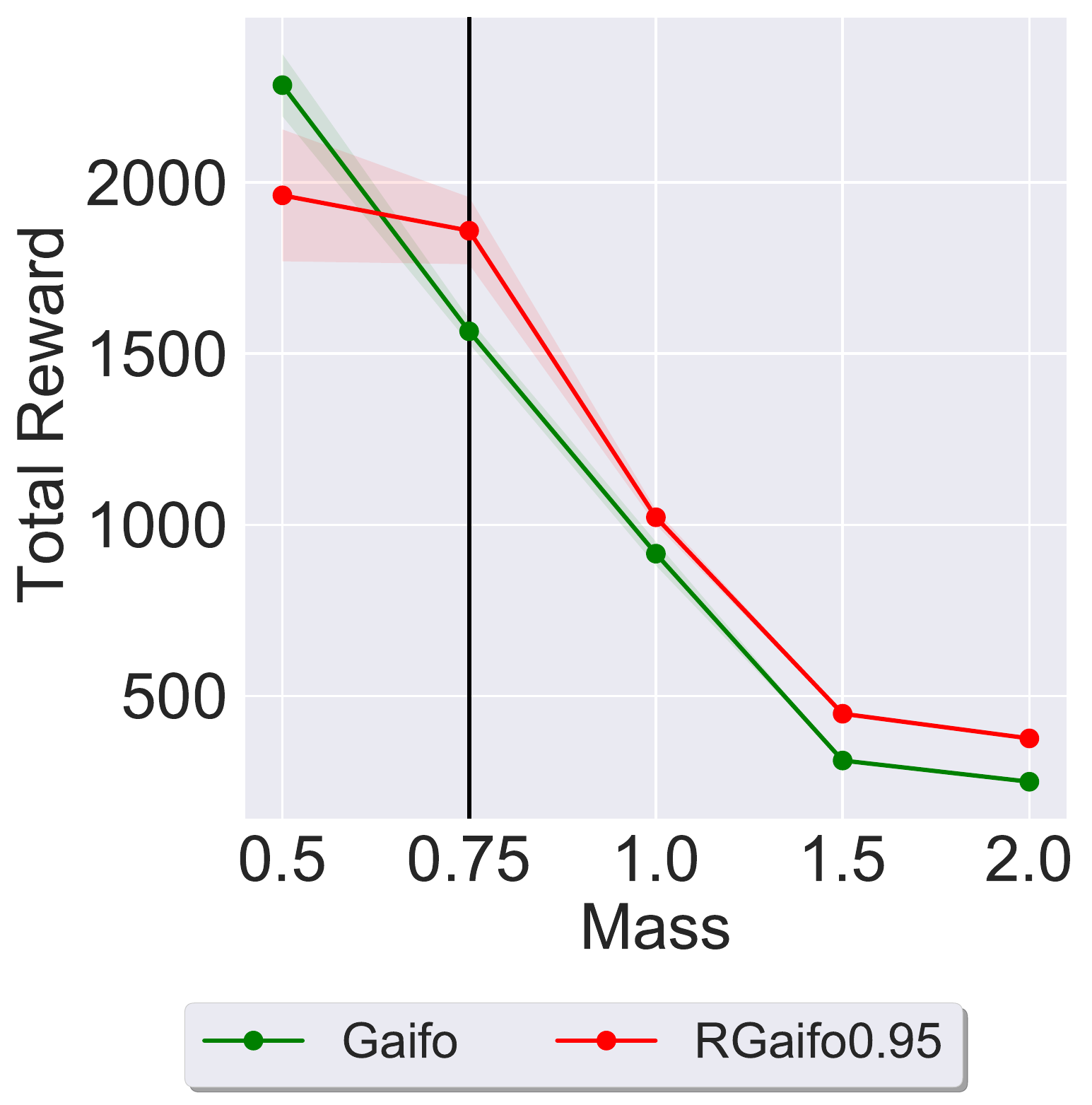}
     } &
\subfloat[Hopper]{%
       \includegraphics[width=0.16\linewidth]{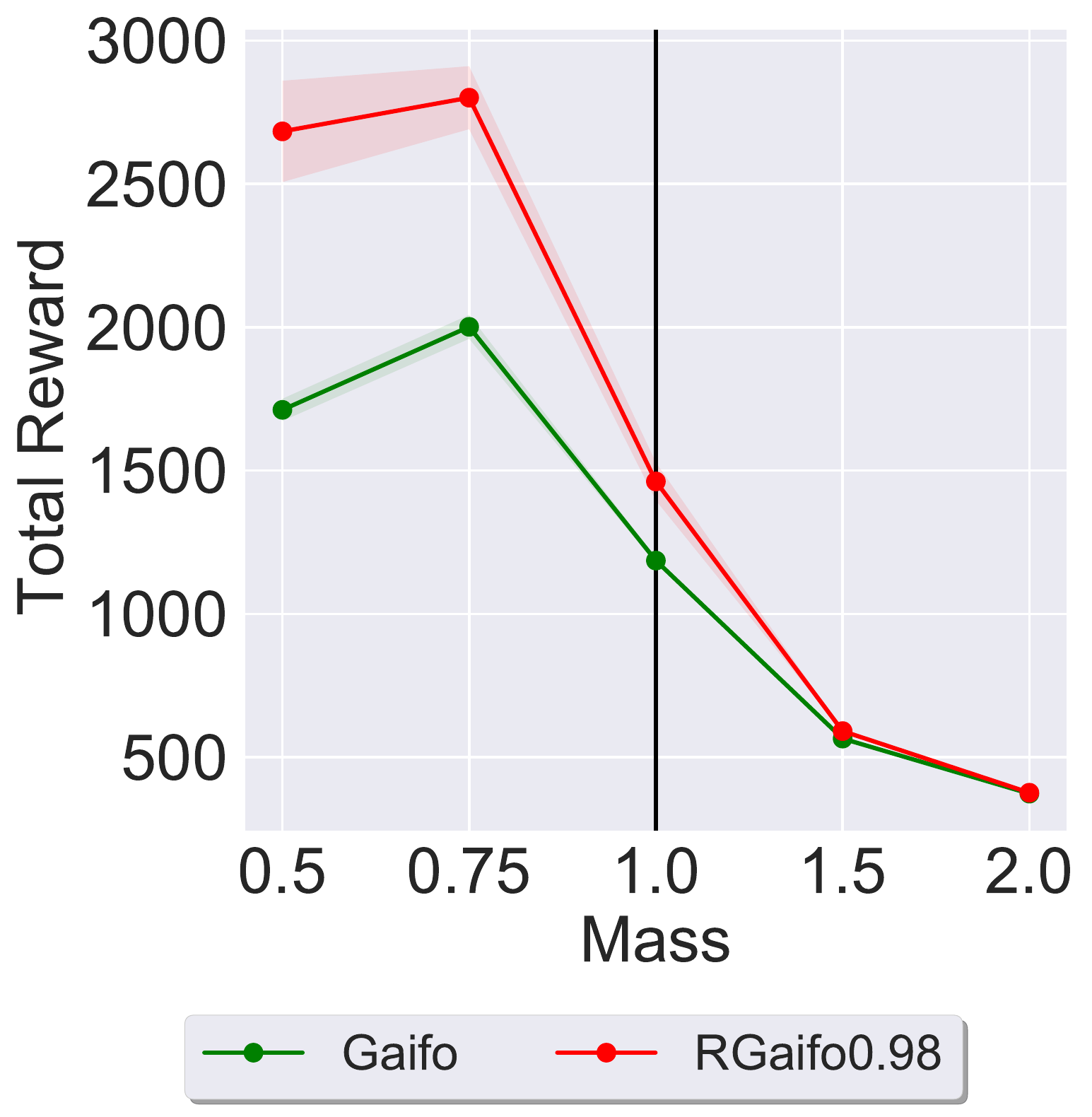}
     } &
\subfloat[Hopper]{%
       \includegraphics[width=0.16\linewidth]{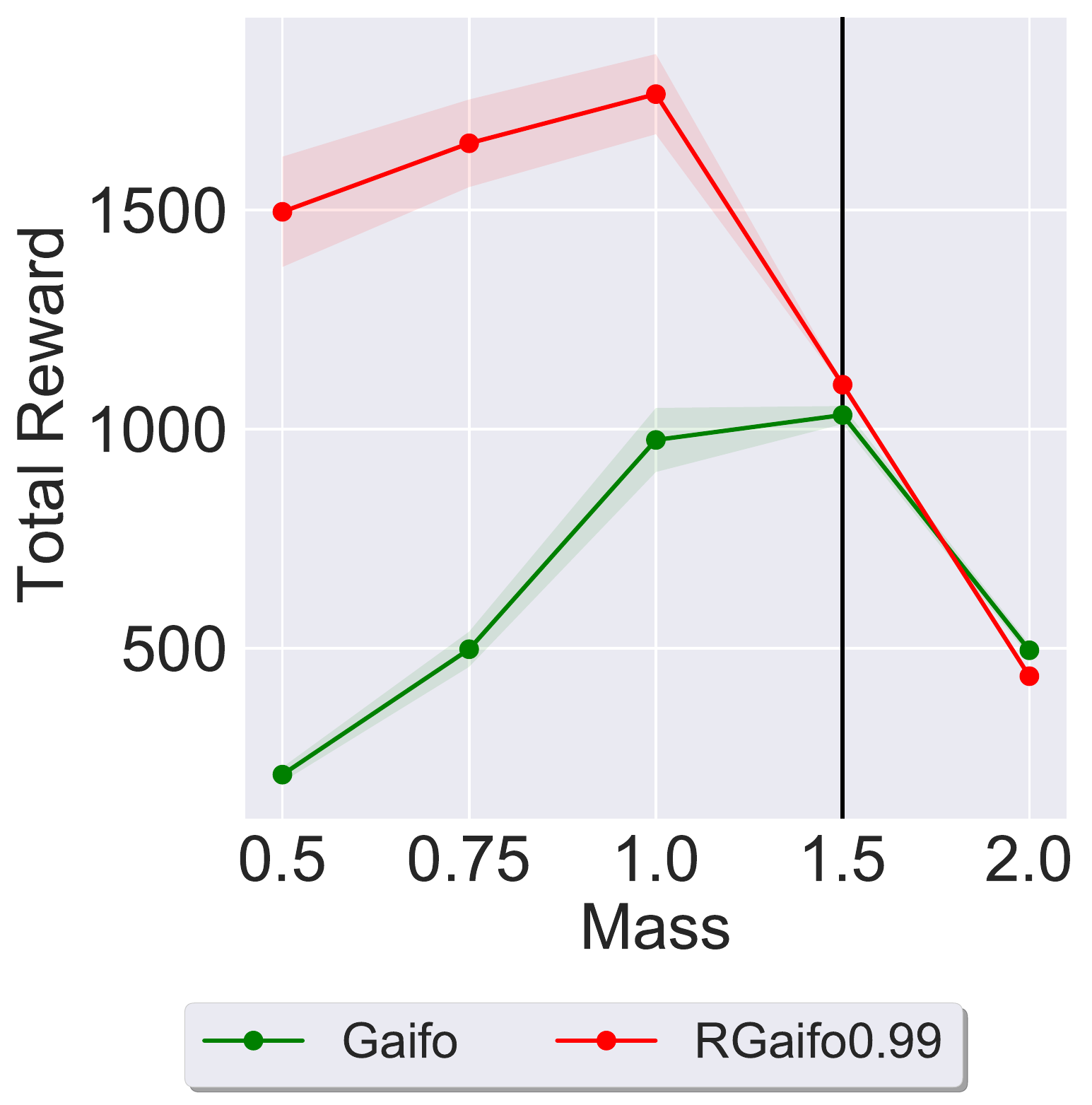}
     } &
\subfloat[Hopper]{%
       \includegraphics[width=0.16\linewidth]{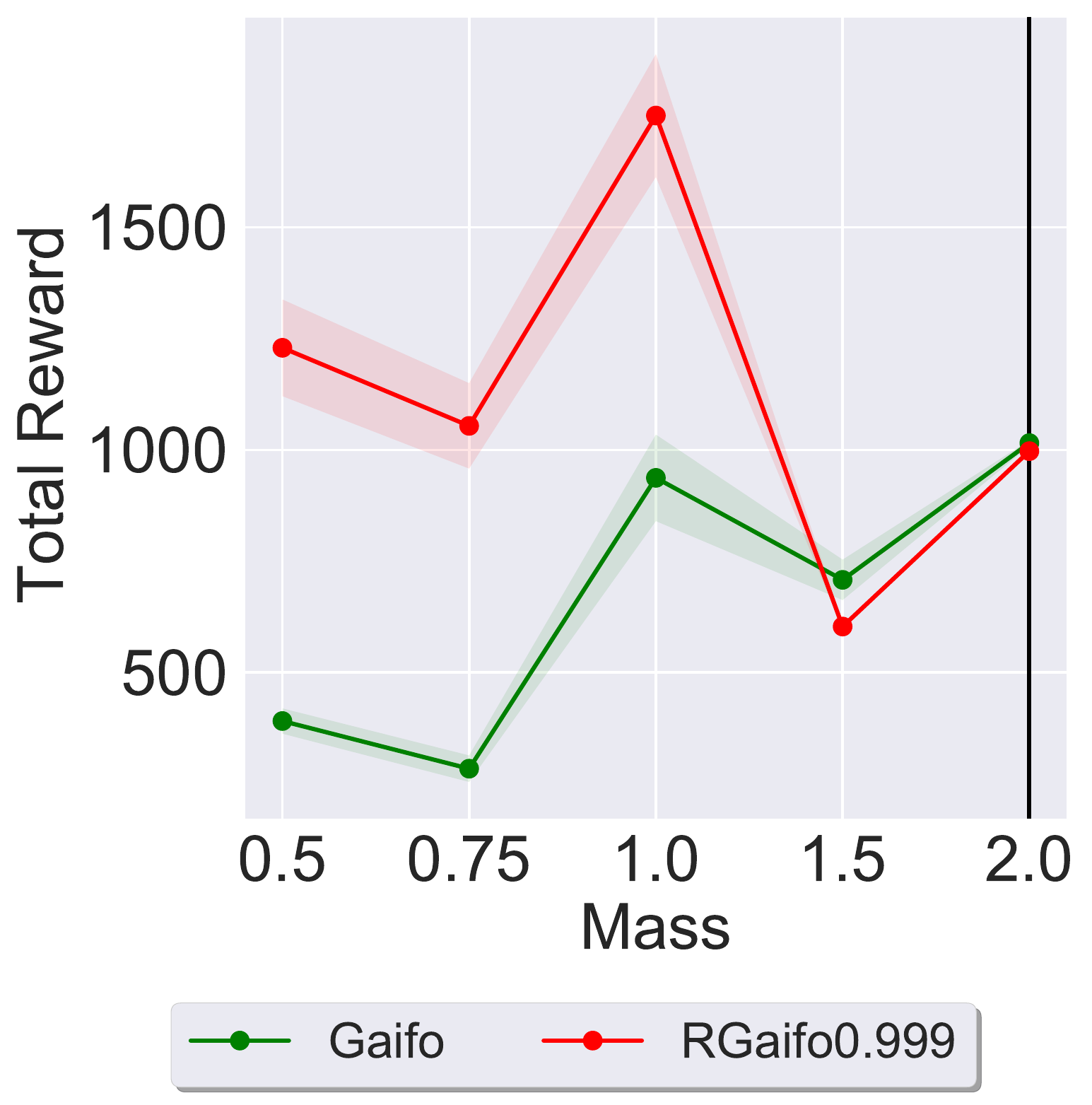}
     } \\
\subfloat[InvDoublePend]{%
       \includegraphics[width=0.16\linewidth]{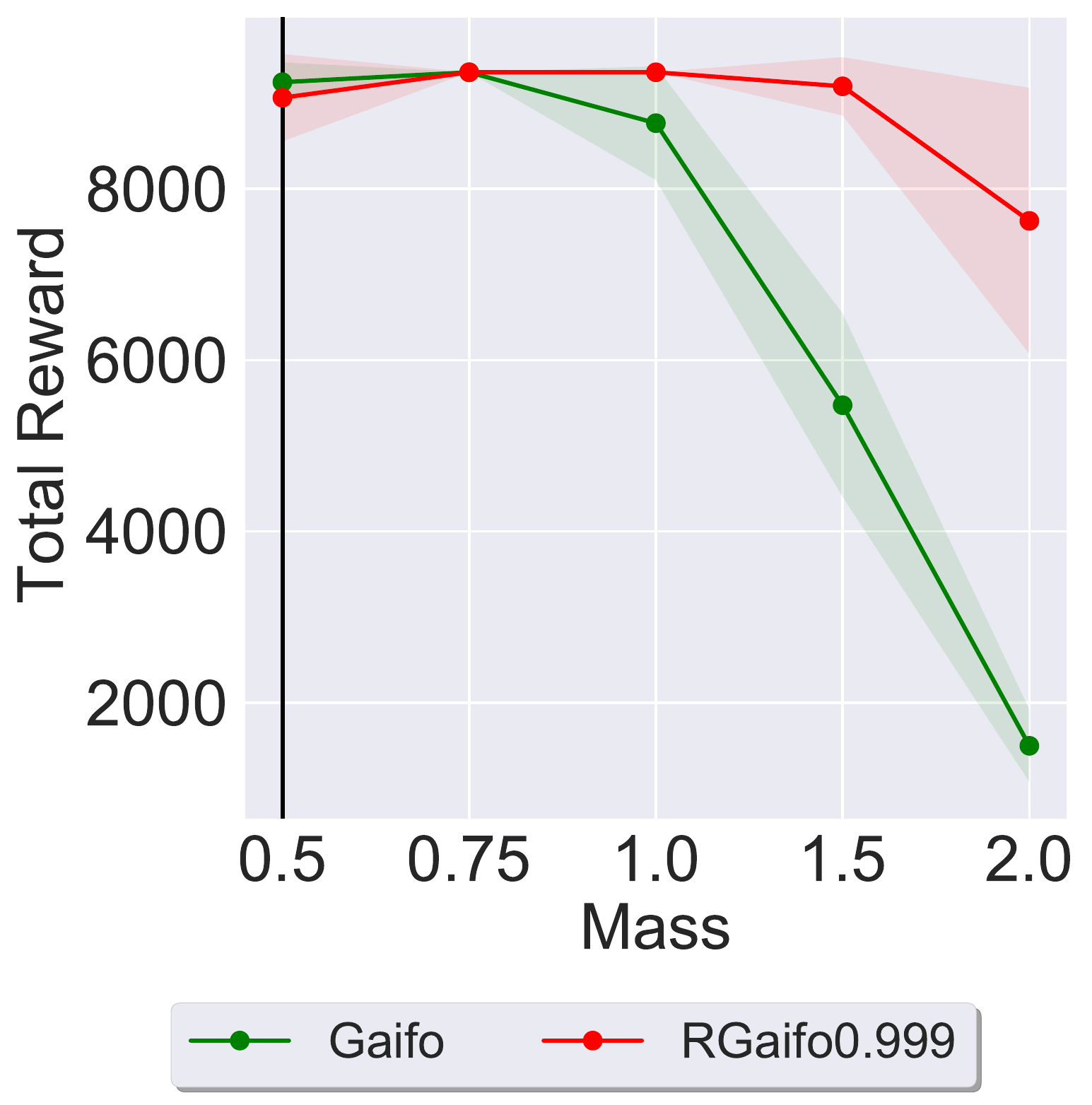}
     } &
\subfloat[InvDoublePend]{%
       \includegraphics[width=0.16\linewidth]{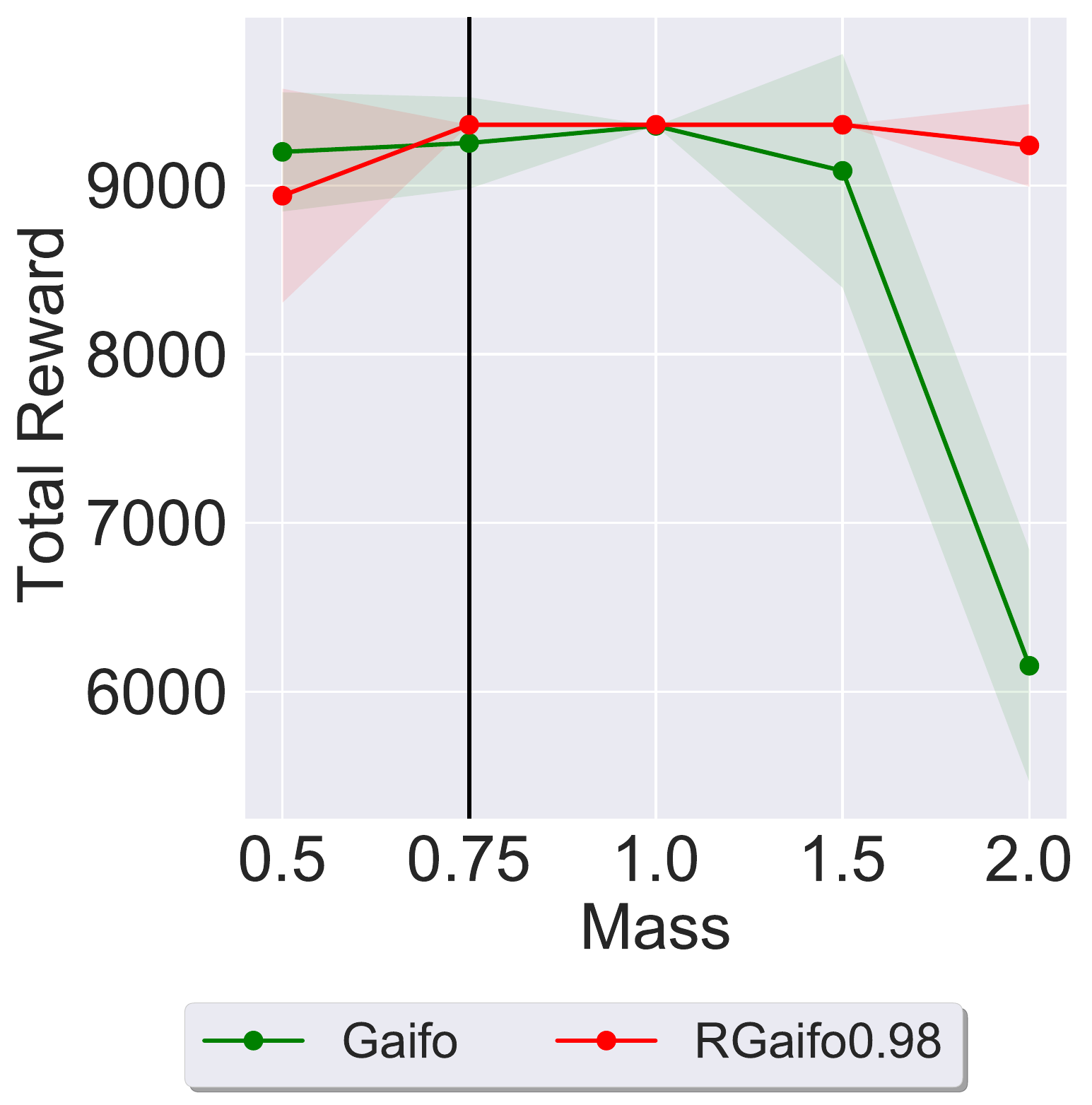}
     } &
\subfloat[InvDoublePend]{%
       \includegraphics[width=0.16\linewidth]{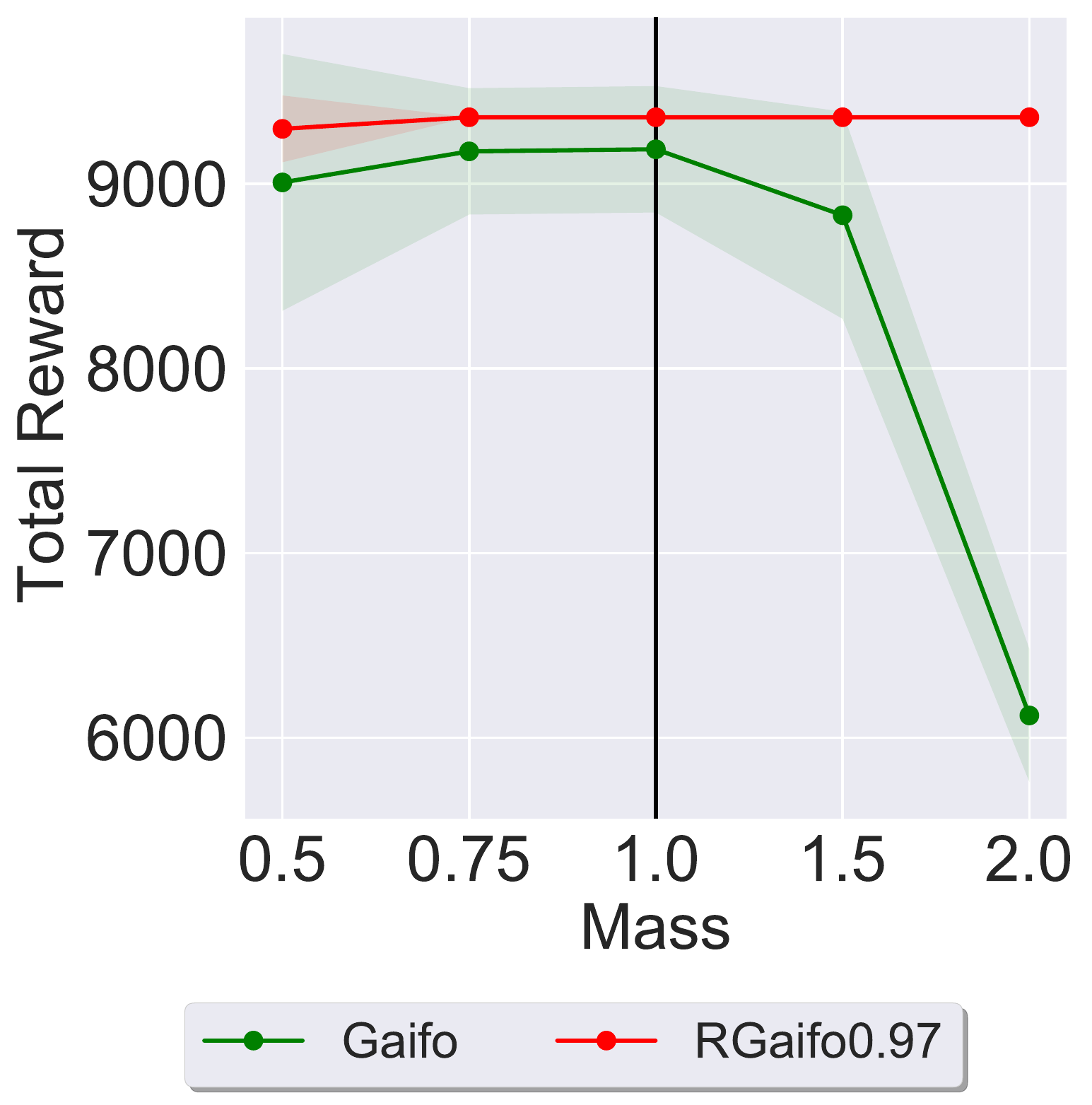}
     } &
\subfloat[InvDoublePend]{%
       \includegraphics[width=0.16\linewidth]{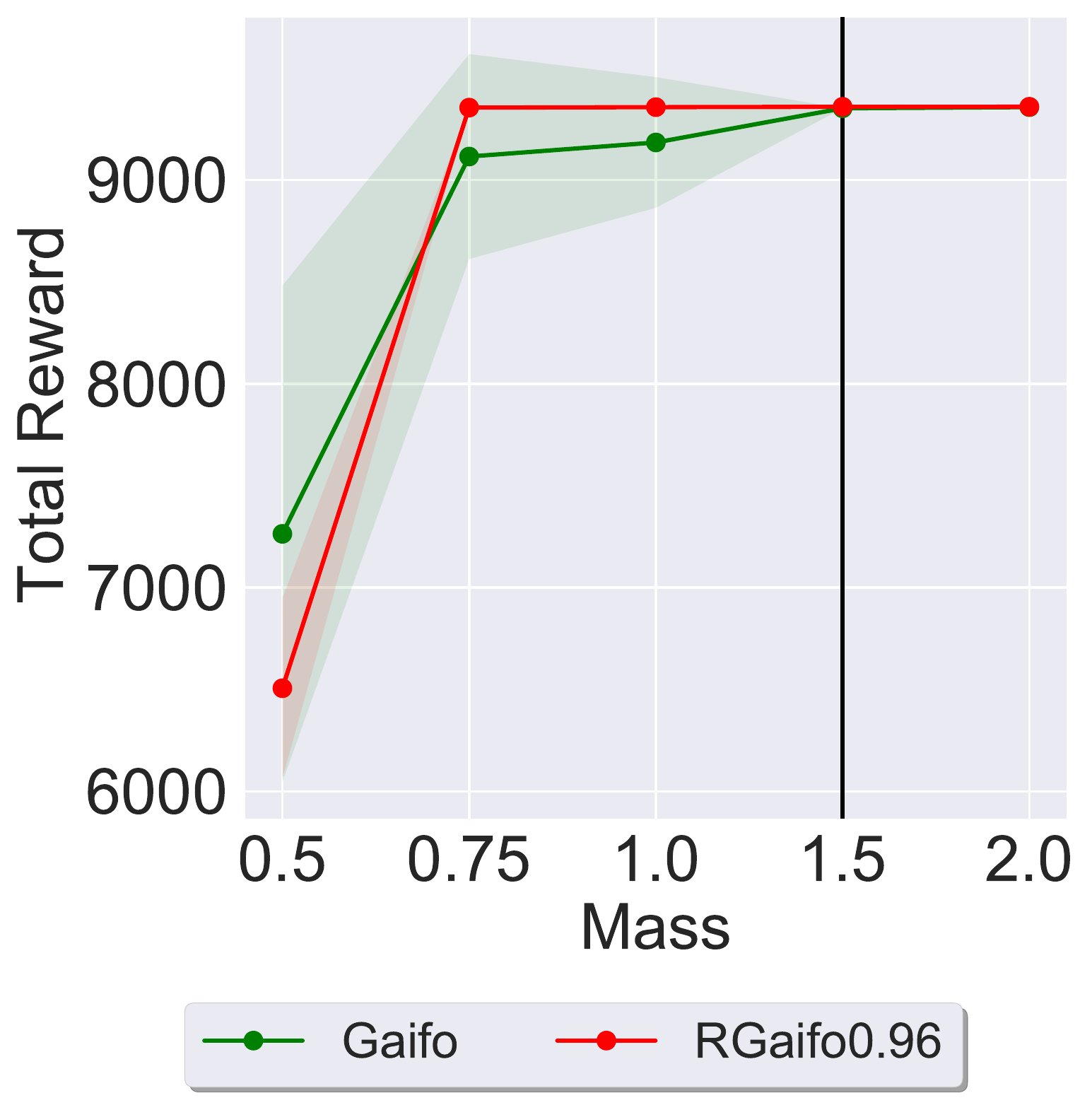}
     } &
\subfloat[InvDoublePend]{%
       \includegraphics[width=0.16\linewidth]{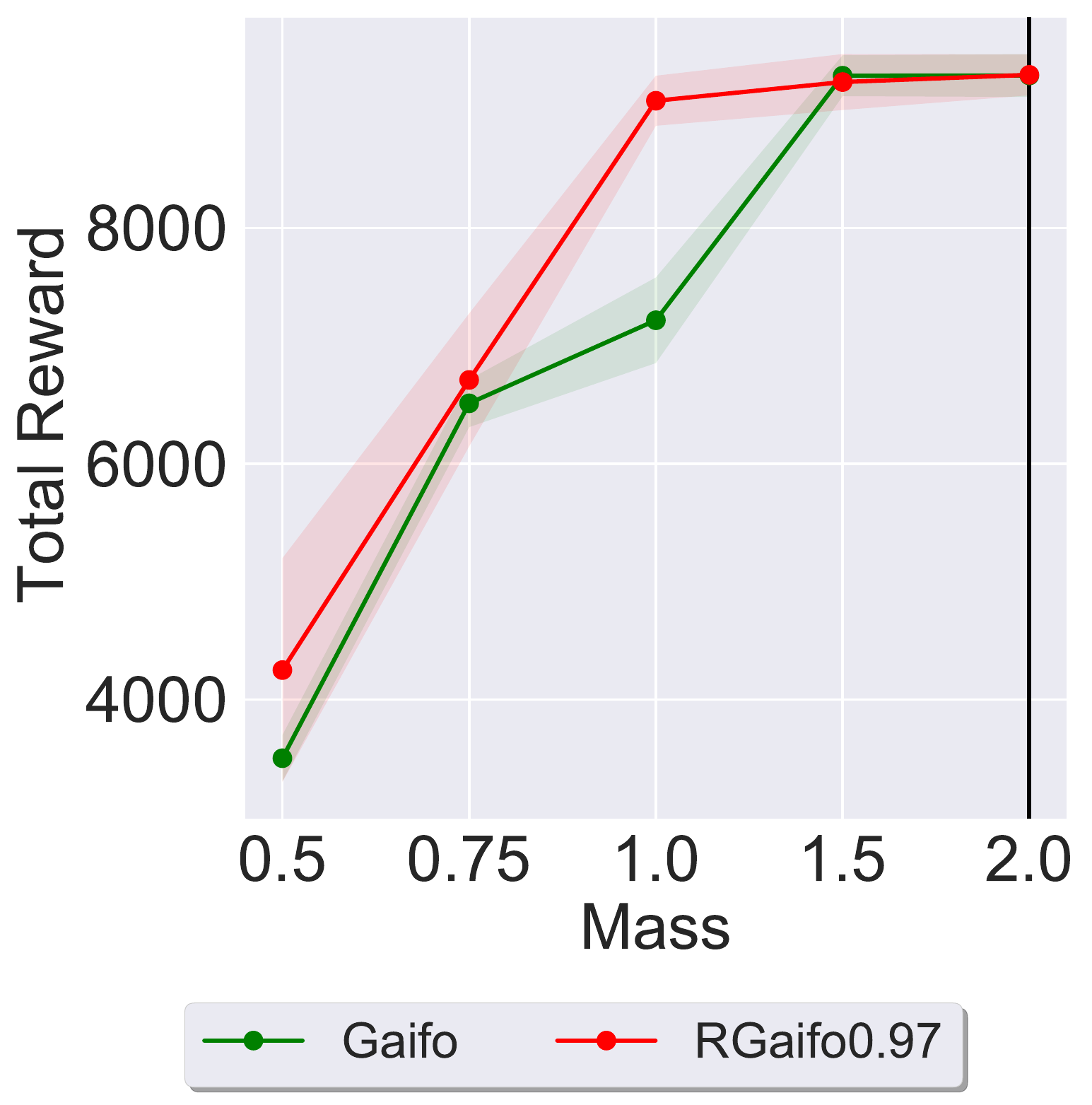}
     } \\
\subfloat[Swimmer]{%
       \includegraphics[width=0.16\linewidth]{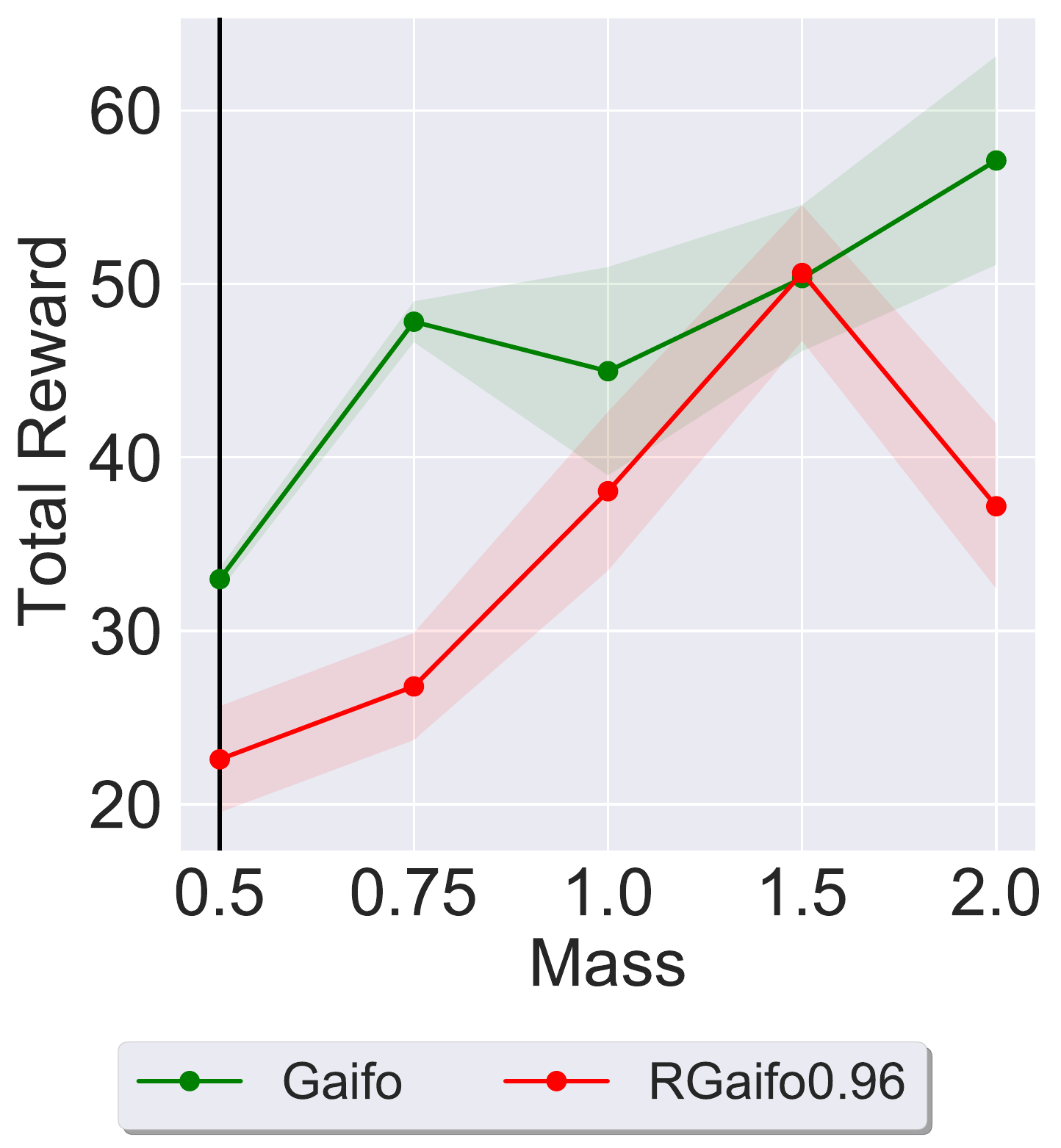}
     } &
\subfloat[Swimmer]{%
       \includegraphics[width=0.16\linewidth]{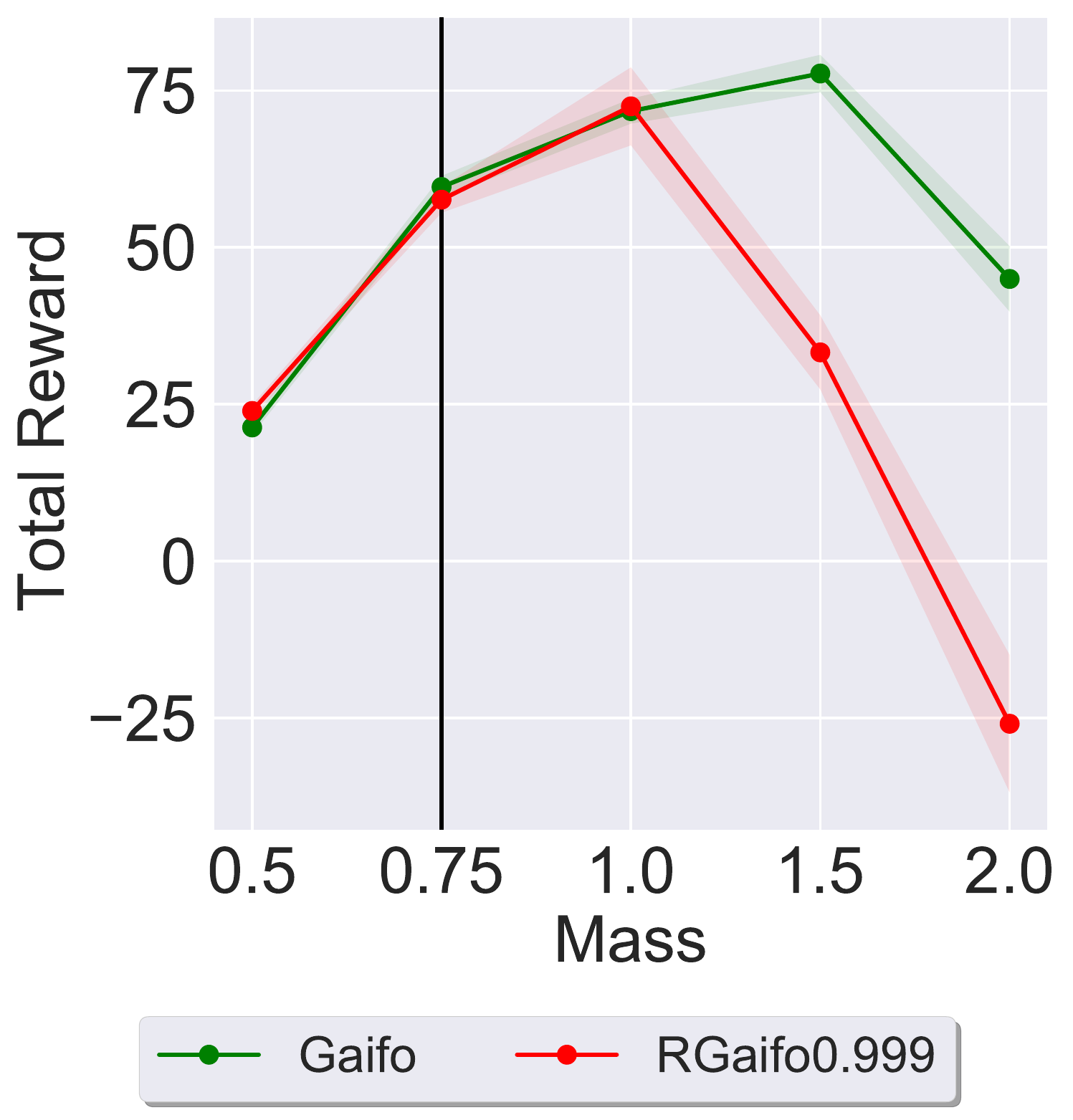}
     } &
\subfloat[Swimmer]{%
       \includegraphics[width=0.16\linewidth]{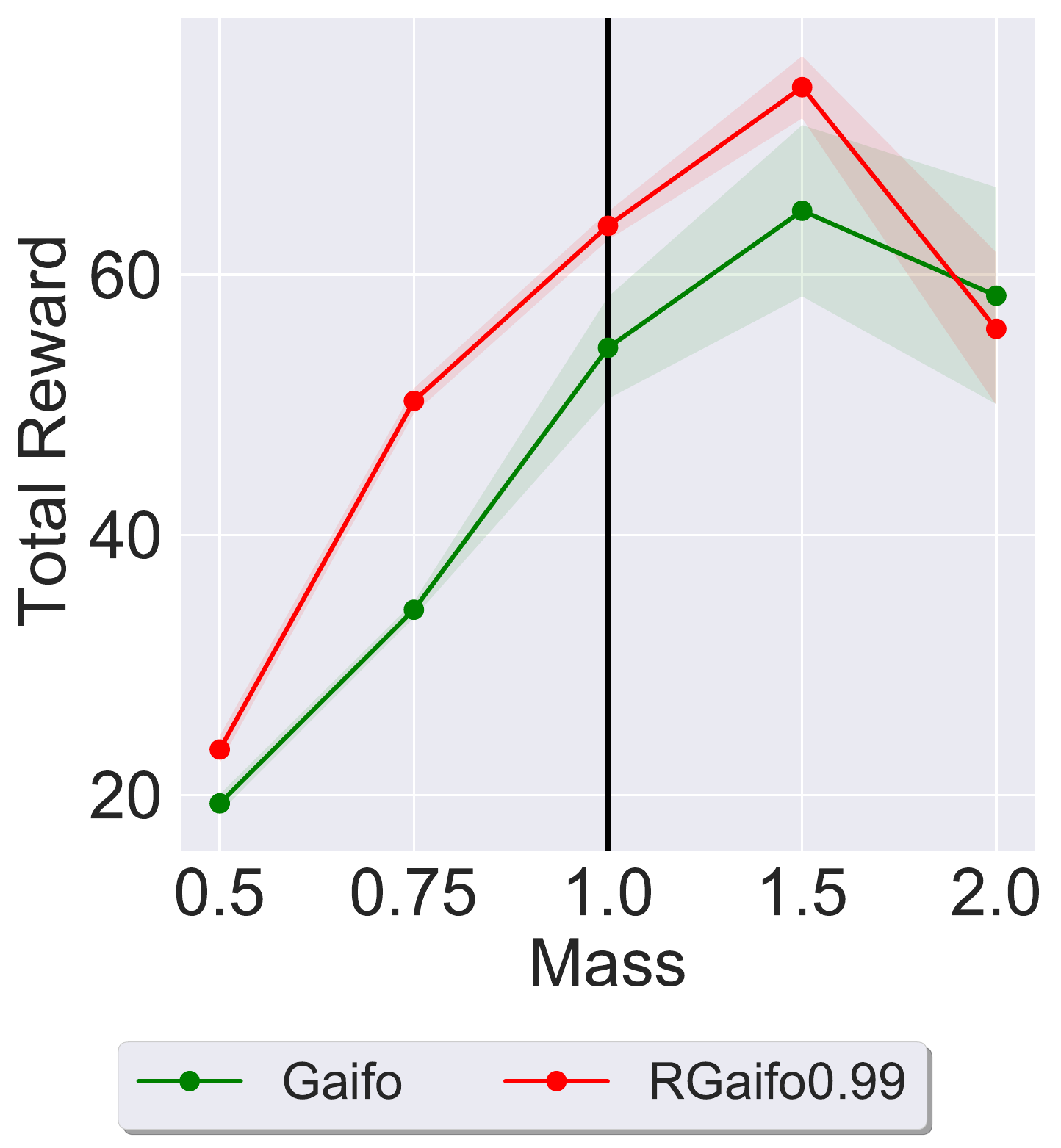}
     } &
\subfloat[Swimmer]{%
       \includegraphics[width=0.16\linewidth]{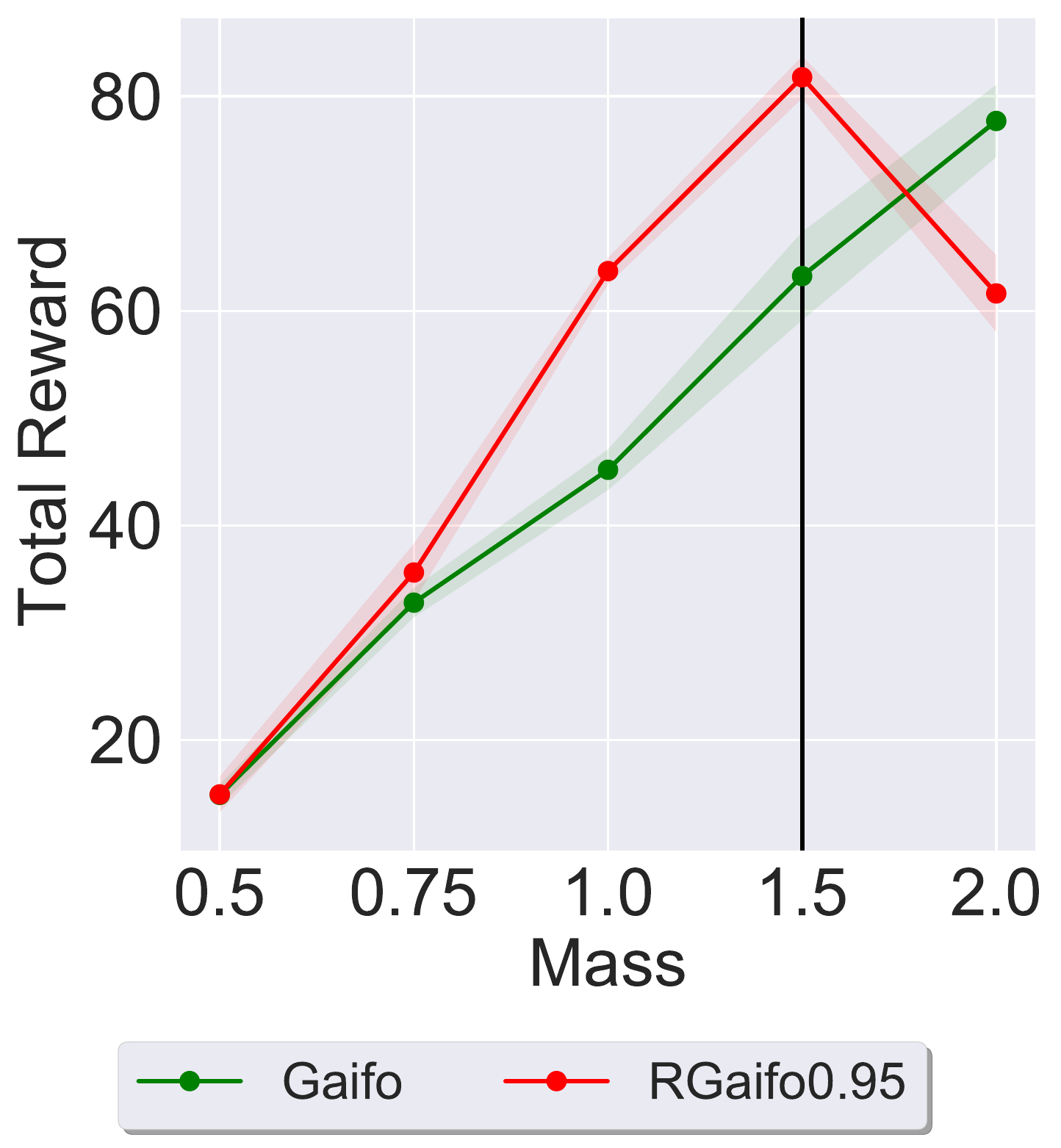}
     } &
\subfloat[Swimmer]{%
       \includegraphics[width=0.16\linewidth]{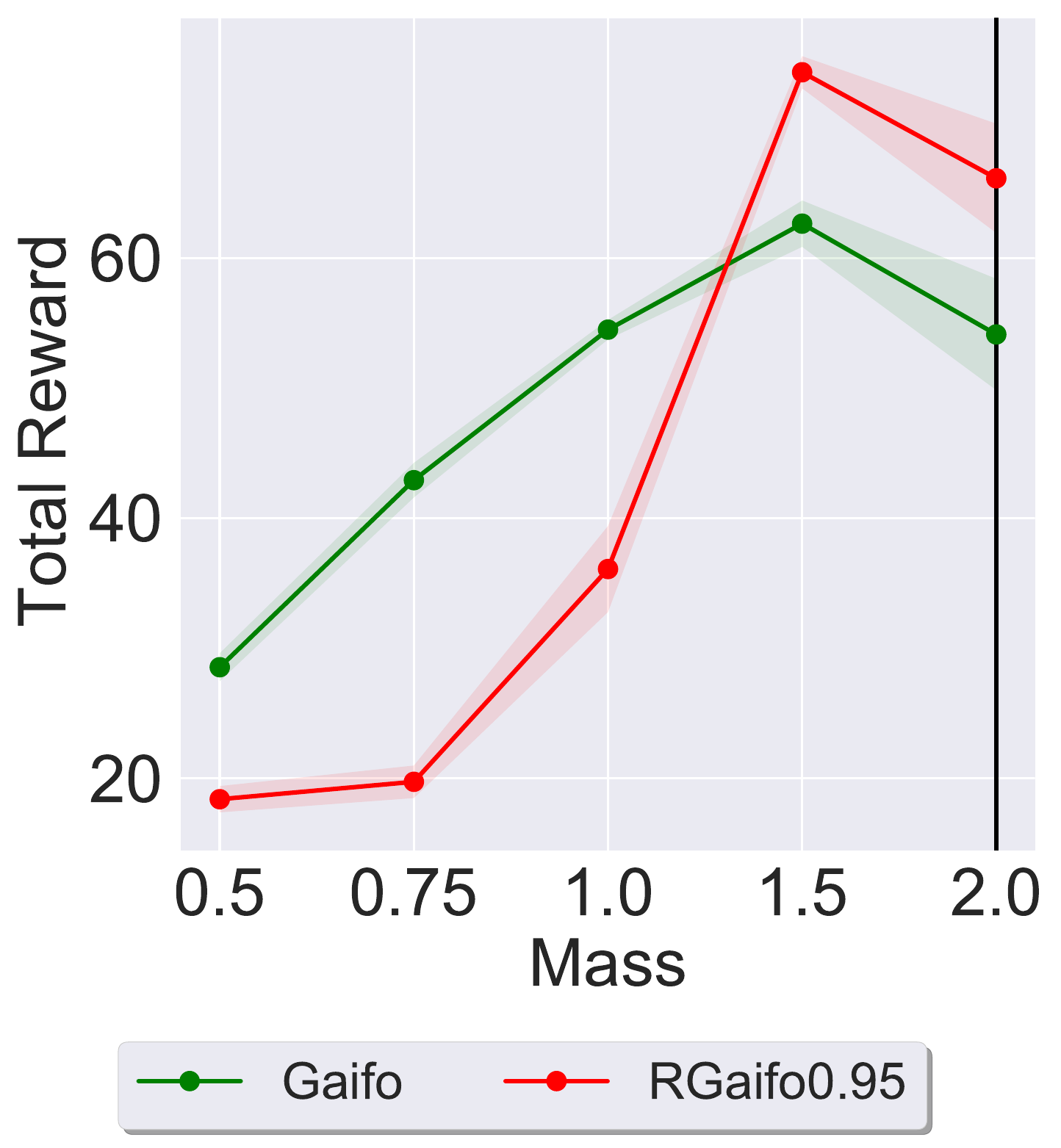}
     } \\

\end{tabular}
\caption{The average (over $3$ seeds) robust performance of Algorithm~\ref{alg:robust-gailfo} with different values of $\alpha$ for each MuJoCo task as reported in the legend of each plot. The expert environment $M^\mathrm{real}$, in which the demonstrations are collected, has relative mass $1.0$. In each plot, the black vertical line corresponds to the relative mass of the learner environment $M^\mathrm{sim}$ where we trained the policy with Algorithm~\ref{alg:robust-gailfo}. The x-axis denotes the relative mass of the test environment $M^\mathrm{test}$ in which the policies are evaluated. The policies are evaluated over $1e5$ steps truncating the last episode if it does not terminate.}
\label{fig:RobustnessMassFixedAlpha}
\end{figure*}

We compare the performance of our robust GAILfO algorithm with different values of $\alpha \in \bc{1.0, 0.999, 0.99, 0.98,0.97,0.96,0.95,0.90}$ against the standard GAILfO algorithm proposed in \citep{torabi2018generative}. To the best of our knowledge, GAILfO is the only large-scale imitation learning method that is applicable under the setting described in Section~\ref{sec:intro} (see Table~\ref{table:related-work}).

\subsection{Continuous Control Tasks on MuJoCo}

In this section, we evaluate the performance of our method on standard continuous control benchmarks available on OpenAI Gym~\citep{brockman2016openai} utilizing the MuJoCo environment~\citep{todorov2012mujoco}. Specifically, we benchmark on five tasks: Half-Cheetah, Walker, Hopper, Swimmer, and Inverted-Double-Pendulum. Details of these environments can be found in~\citep{brockman2016openai} and on the GitHub website.

The default configurations of the MuJoCo environment (provided in OpenAI Gym) is regarded as the real or deployment environment ($M^\mathrm{real}$), and the expert demonstrations are collected there. We do not assume any access to the expert MDP beyond this during the training phase. We construct the simulation or training environments ($M^\mathrm{sim}$) for the imitator by modifying some parameters independently: (i) the mass of the bot in $M^\mathrm{sim}$ is $\bc{0.5, 0.75, 1.0, 1.5, 2.0} \times$ the mass in $M^\mathrm{real}$, and (ii) the friction coefficient on all the joints of the bot in $M^\mathrm{sim}$ is $\bc{0.5, 0.75, 1.0, 1.5, 2.0} \times$ the coefficient in $M^\mathrm{real}$. 

We train an agent on each task by proximal policy optimization (PPO) algorithm~\citep{schulman2017proximal} using the rewards defined in the OpenAI Gym~\citep{brockman2016openai}. We use the resulting stochastic policy as the expert policy $\pi^E$. In all our experiments, 10 state-only expert demonstrations collected by the expert policy $\pi^E$ in the real environment $M^\mathrm{real}$ is given to the learner.

Our Algorithm~\ref{alg:robust-gailfo} implementation is based on the codebase from \url{https://github.com/Khrylx/PyTorch-RL}. We use a two-layer feedforward neural network structure of (128, 128, tanh) for both actors (agent and adversary) and discriminator. The actor or policy networks are trained by the proximal policy optimization (PPO) method. For training the discriminator $D$, we use Adam~\citep{kingma2014adam} with a learning rate of $1e-4$. For each environment-mismatch pair, we identified the best performing $\alpha$ parameter based on the ablation study reported in Appendix~\ref{app:transfer-performance}. The learner is trained in the simulator $M^\mathrm{sim}$ for $\approx$3M time steps. We run our experiments, for each environment, with 3 different seeds. We report the mean and standard error of the performance (cumulative true rewards) over 3 trials. The cumulative reward is normalized with ones earned by $\pi^E$ and a random policy so that 1.0 and 0.0 indicate the performance of $\pi^E$ and the random policy, respectively.

Figures~\ref{fig:TransferFrictionFixedAlpha},~and~\ref{fig:TransferMassFixedAlpha} plot the performance of the policy evaluated on the deployment environment ($M^\mathrm{real}$). The x-axis corresponds to the simulation environment ($M^\mathrm{sim}$) on which the policy is trained on. We observe that our robust GAILfO produces policies that can be successfully transferred to the $M^\mathrm{real}$ environment from $M^\mathrm{sim}$ compared to the standard GAILfO. 


Finally, we evaluate the robustness of the policies trained by our algorithm (with different dynamics mismatch) under different testing conditions. At test time, we evaluate the learned policies by changing the mass and friction values and estimating the cumulative rewards. As shown in Figures~\ref{fig:RobustnessFrictionFixedAlpha}~and~\ref{fig:RobustnessMassFixedAlpha}, our Algorithm~\ref{alg:robust-gailfo} outperforms the baseline in terms of robustness as well.

\subsection{Continuous Gridworld Tasks under Additive Transition Dynamics Mismatch}


\begin{figure}[!h] 
\centering
\includegraphics[width=0.2\textwidth]{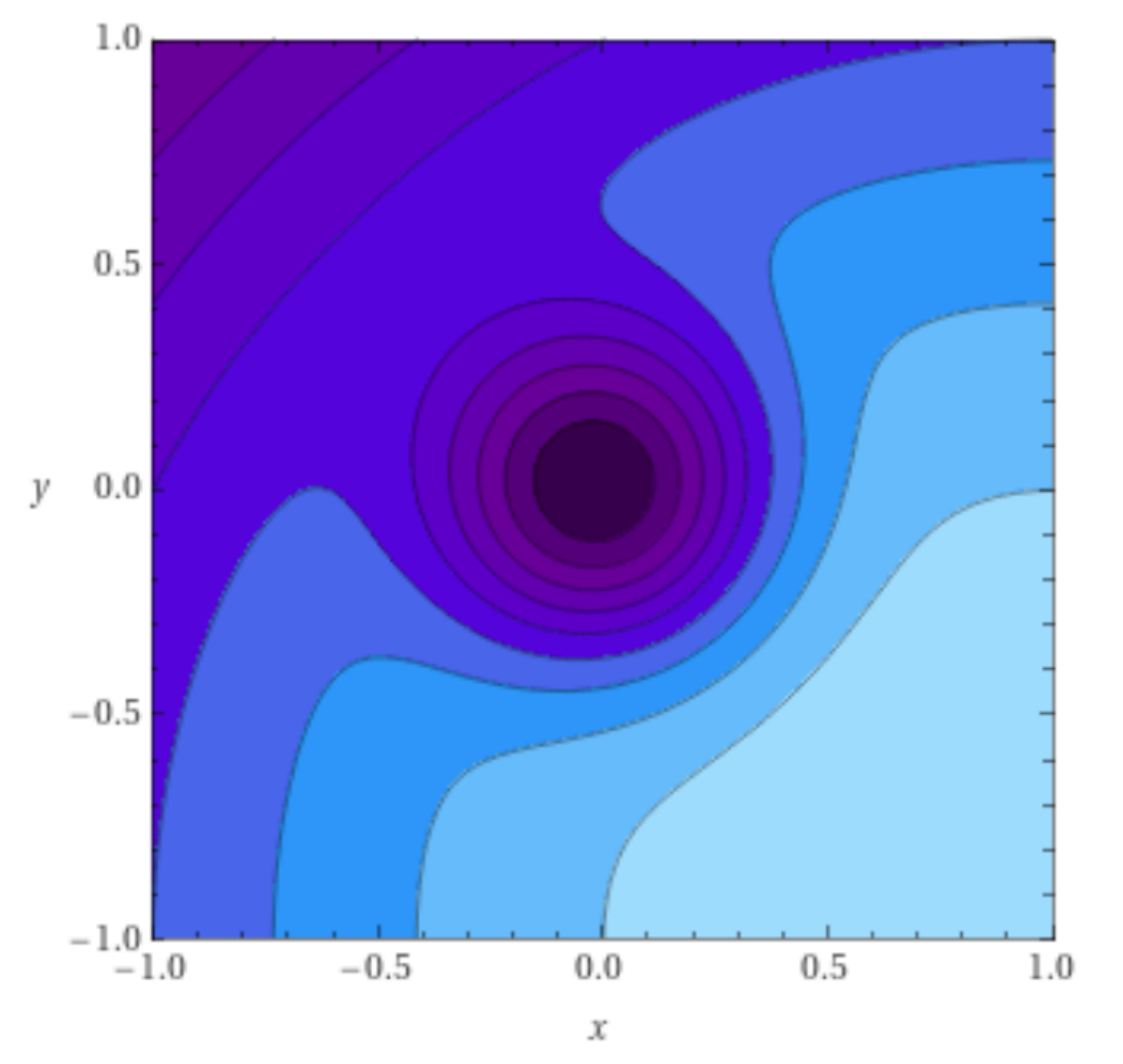}
\caption{The contour curves for the reward function of the 2D gridworld environment.}
\label{fig:env_continuous}
\end{figure}

\looseness-1In this section, we evaluate the effectiveness of our method on a continuous gridworld environment under a transition dynamics mismatch induced by additive noise. Specifically, we consider a 2D environment, where we denote the horizontal coordinate as $x \in [0,1]$ and vertical one as $y \in [0,1]$. The agent starts in the upper left corner, i.e., the coordinate $(0,1)$, and the episode ends when the agent reaches the lower right region defined by the indicator function $\mathbf{1}\{x\in [0.95, 1], y \in [-1, -0.95]\} $. The reward function is given by: $R(x,y) = -(x-1)^2 -(y+1)^2 -80 e^{-8(x^2 + y^2)} + 10 \cdot \mathbf{1}\{x\in [0.95, 1], y \in [-1, -0.95]\}$. Figure~\ref{fig:env_continuous} provides a graphical representation of the reward function. Note that the central region of the 2D environment represents a low reward area that should be avoided. The action space for the agent is given by $\mathcal{A} = [-0.5, 0.5]^2$, and the transition dynamics are given by: $s_{t+1} = s_t + \frac{a_t}{10}$ with probability (w.p.) $1 - \epsilon$, and $s_{t+1} = s_t - \frac{s_t}{10 \norm{s_t}_2}$ w.p. $\epsilon$. Thus, with probability $\epsilon$, the environment does not respond to the action taken by the agent, but it takes a step towards the low reward area centered at the origin, i.e., $- \frac{s_t}{10 \norm{s_t}_2}$. The agent should therefore pass far enough from the origin. The parameter $\epsilon$ can be varied to create a dynamic mismatch, e.g., higher $\epsilon$ corresponds to a more difficult environment.

We use three experts trained with $\epsilon = 0.0, \epsilon=0.05, \text{ and } \epsilon = 0.1$. The learners act in a different environment with the following values for $\epsilon$: $0.0, 0.05, 0.1, 0.15, 0.2$. Figure~\ref{fig:best_alpha_continuous} plots the performance of the trained learner policy evaluated on the expert environment. The x-axis corresponds to the learner environment on which the learner policy is trained. In general, we observe a behavior comparable to the MuJoCo experiments. We can often find an appropriate value for $\alpha$ such that Robust GAILfO learns to imitate under mismatch largely better than standard GAILfO. 

\begin{figure}[!h] 
\centering
\begin{tabular}{ccc}
\subfloat[Expert $\epsilon = 0.0$]{%
       \includegraphics[width=0.3\linewidth]{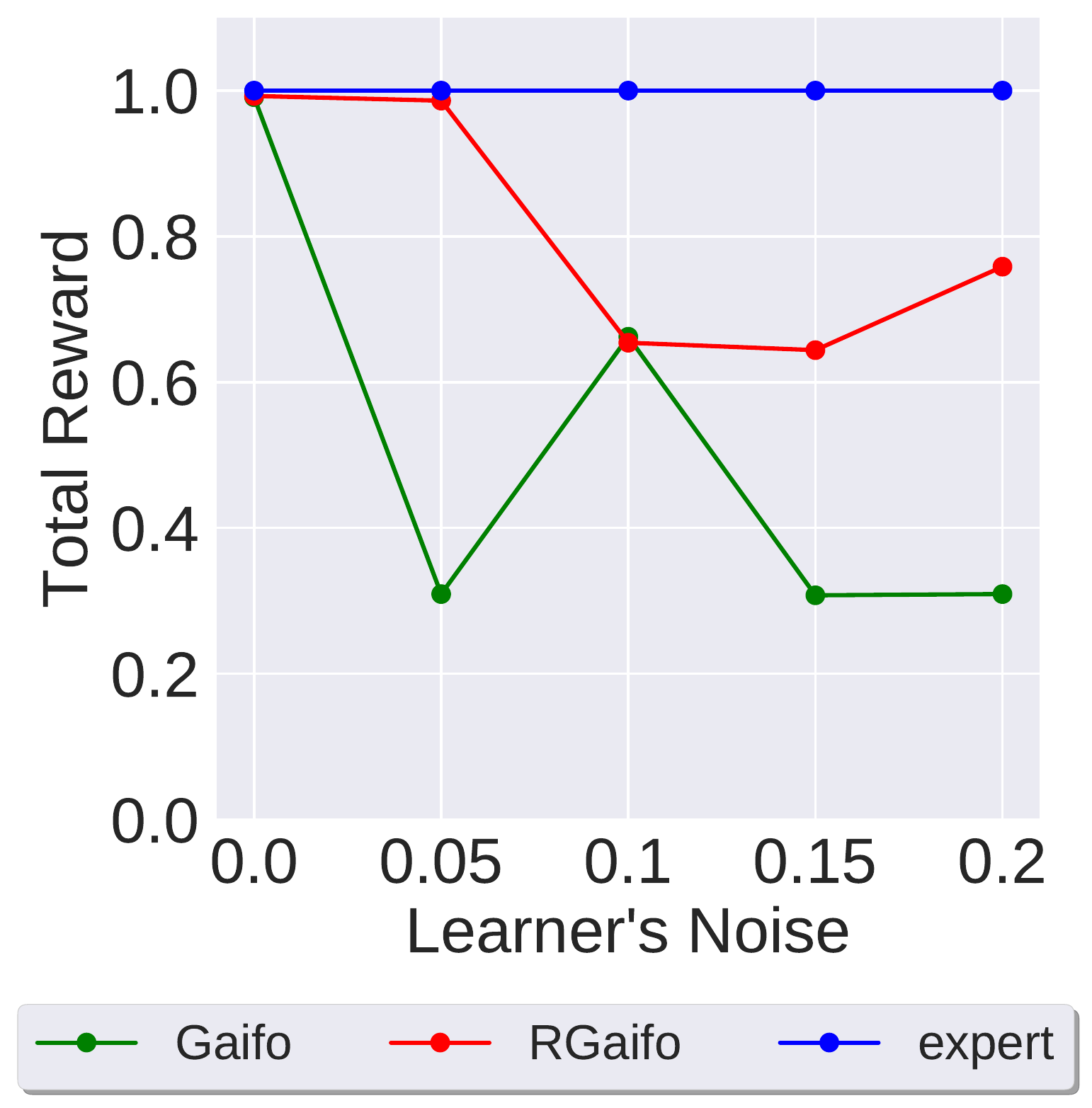}
     } &
\subfloat[Expert $\epsilon = 0.05$]{%
       \includegraphics[width=0.3\linewidth]{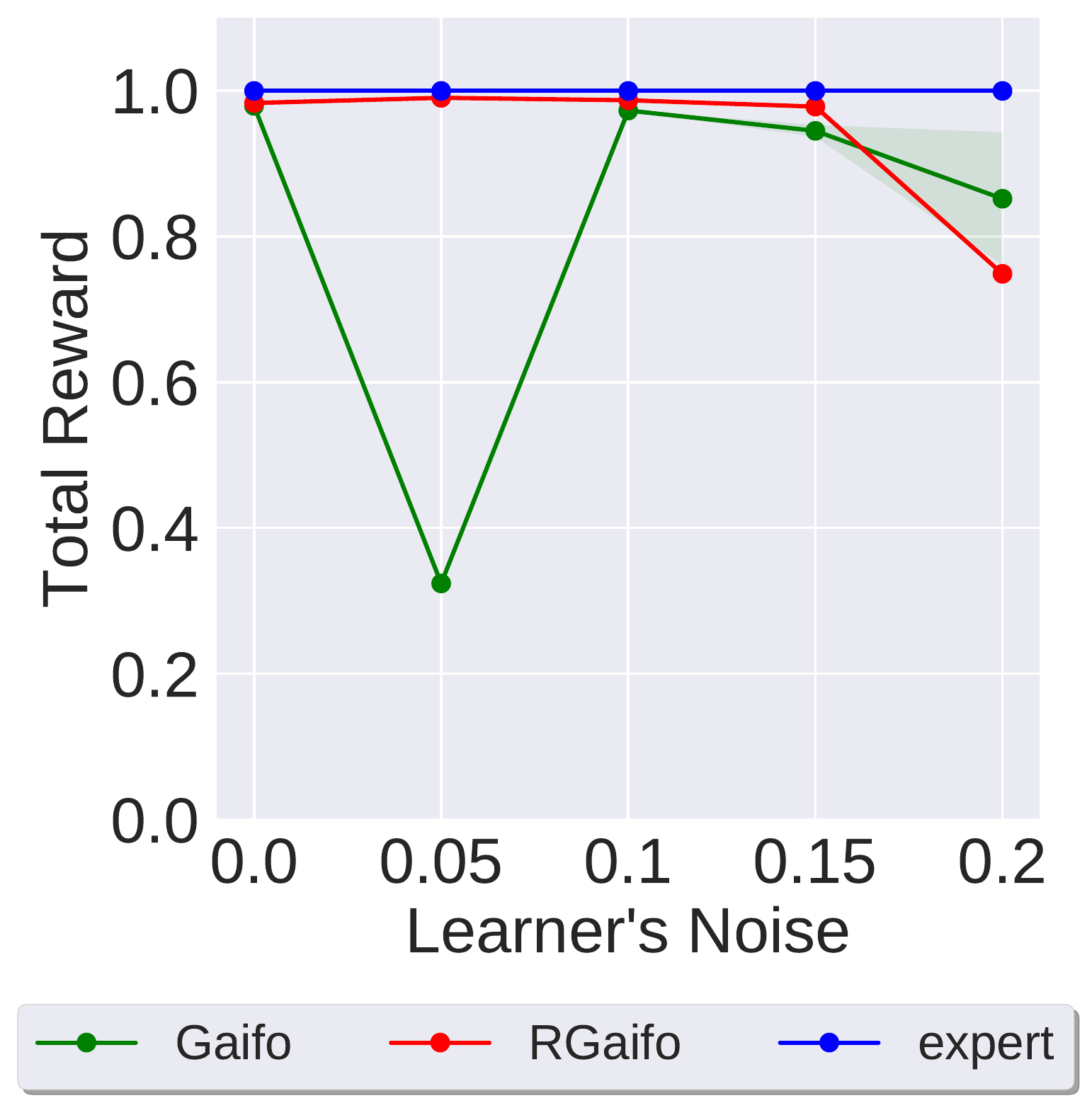}
     } &
\subfloat[Expert $\epsilon = 0.1$]{%
       \includegraphics[width=0.3\linewidth]{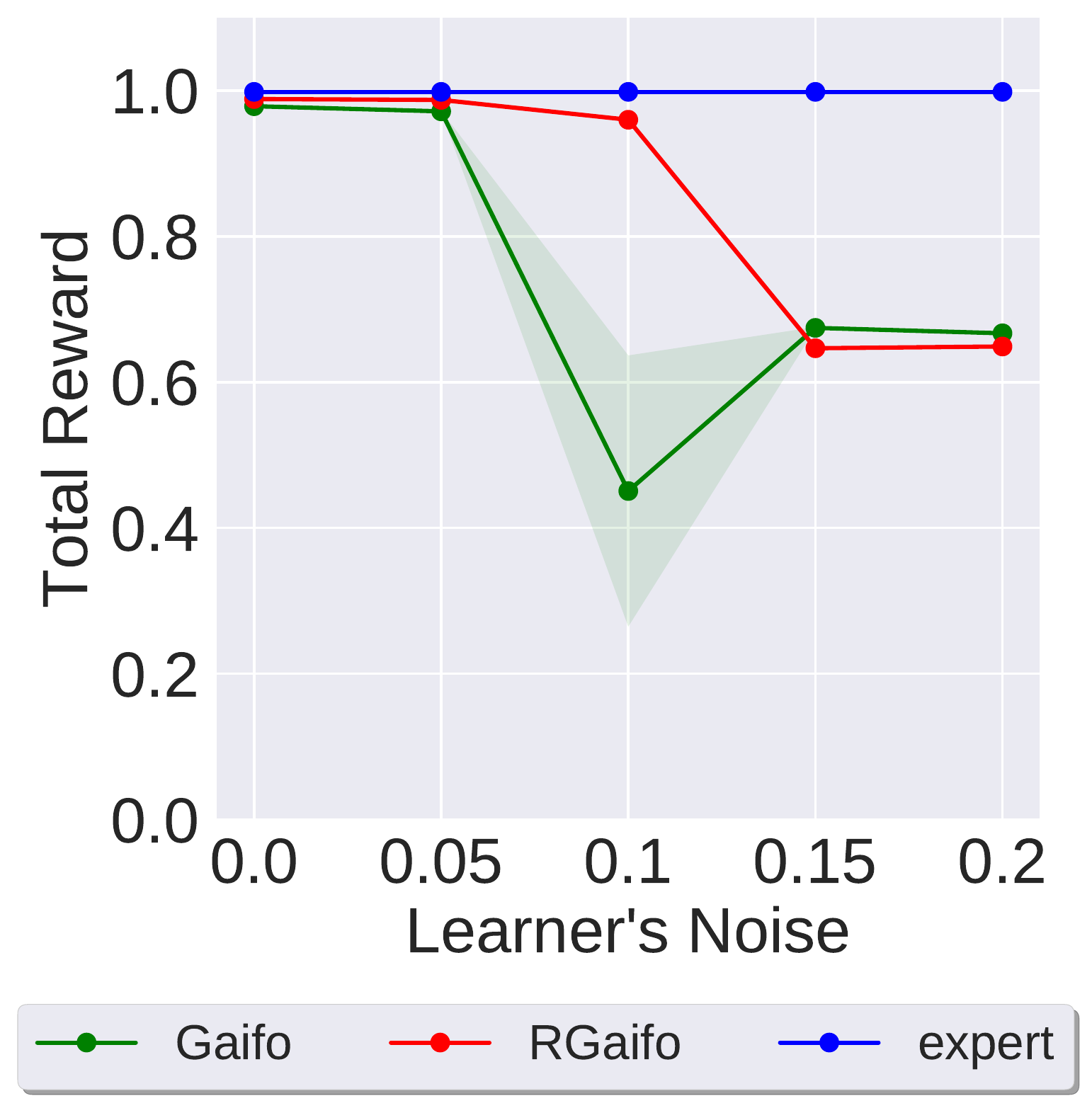}
     } \\
\end{tabular}
\caption{Average performance (over $3$ seeds) of Algorithm~\ref{alg:robust-gailfo} with different values of $\alpha$ for each mismatch (i.e., each point on the x-axis) in the environment shown in Figure~\ref{fig:env_continuous}. The $\alpha$ values are chosen based on the ablation study in Figure~\ref{fig:ablation_continuous} (see Appendix~\ref{app:transfer-performance-grid}). The x-axis denotes the $\epsilon$ value of the learner environment. The policies are evaluated over $1e5$ steps truncating the last episode if it does not terminate. In Appendix~\ref{app:new_seeds}, we verify that our strategy of choosing appropriate $\alpha$ value does not introduce maximization bias. \label{fig:best_alpha_continuous}}
\end{figure}

\subsection{Choice of $\alpha$}

We note that one has to carefully choose the value of $\alpha$ to avoid too conservative behavior (see Figure~\ref{fig:TransferFrictionAblation} in Appendix~\ref{app:transfer-performance}). In principle, given a rough estimate $\widehat T^E$ of the expert dynamics $T^E$, one could choose this value based on Eq.~\eqref{learner_unc_set}. However, the choice of suitable $\alpha$ value is also affected by the other design choices of the algorithm, e.g., how many iterations the player and adversary are updated in the inner loop, and function approximators used.

In order to estimate the accuracy of the simulator, we can execute a safe baseline policy in both the simulator and the real environment, collect trajectories or datasets, and compute an estimate of the transition-dynamics distance between them. We can also utilize the performance difference lemma from~\cite{even2003approximate} to obtain a lower bound on the transition dynamics mismatch based on the value function difference in the two environments.

Apart from the final evaluation, we also minimally access (in our experiments) the deployment environment for choosing the appropriate value for $\alpha$. Compared to training a policy in the deployment environment from scratch, accessing the deployment environment to choose $\alpha$ is sample-efficient. We only need to evaluate the final policies (trained in the simulation environment) once for each value of $\alpha$. When we already have a reasonable estimate of $\alpha$, we can also reduce these evaluations.  


\section{Conclusions}
\label{sec:conclusions}

In this work, we propose a robust LfO method to solve an offline imitation-learning problem, in which a few state-only expert demonstrations and a simulator with misspecified dynamics are given to the learner. Even though our Algorithm~\ref{alg:robust-gailfo} is not essentially different from the standard robust RL methods, the robust optimization problem formulation to derive our algorithm is important and novel in the IL context. Experiment results in continuous control tasks on MuJoCo show that our method clearly outperforms the standard GAILfO in terms of the transfer performance (with model misspecification) in the real environment, as well as the robust performance under varying testing conditions. 

Our algorithm falls under the category of zero-shot sim-to-real transfer~\citep{zhao2020sim} with expert demonstrations, making our method well suited for robotics applications. In principle, one can easily incorporate the two-player Markov game idea into any imitation learning algorithm and derive its robust version. This work can be considered a direction towards improving the sample efficiency of IL algorithms in terms of the number of environment interactions through robust training on a misspecified simulator.




\begin{acks}
Luca Viano has received financial support from the Enterprise for Society Center (E4S). Parameswaran Kamalaruban acknowledges support from The Alan Turing Institute. Craig Innes and Subramanian Ramamoorthy are supported by a grant from the UKRI Strategic Priorities Fund to the UKRI Research Node on Trustworthy Autonomous Systems Governance and Regulation (EP/V026607/1, 2020-2024). Adrian Weller acknowledges support from a Turing AI Fellowship under grant EP/V025379/1, EPSRC grant EP/V056522/1, The Alan Turing Institute, and the Leverhulme Trust via CFI.
\end{acks}



\clearpage

\bibliographystyle{ACM-Reference-Format} 
\balance
\bibliography{robust_lfo_2021}


\begin{thebibliography}{41}


\ifx \showCODEN    \undefined \def \showCODEN     #1{\unskip}     \fi
\ifx \showDOI      \undefined \def \showDOI       #1{#1}\fi
\ifx \showISBNx    \undefined \def \showISBNx     #1{\unskip}     \fi
\ifx \showISBNxiii \undefined \def \showISBNxiii  #1{\unskip}     \fi
\ifx \showISSN     \undefined \def \showISSN      #1{\unskip}     \fi
\ifx \showLCCN     \undefined \def \showLCCN      #1{\unskip}     \fi
\ifx \shownote     \undefined \def \shownote      #1{#1}          \fi
\ifx \showarticletitle \undefined \def \showarticletitle #1{#1}   \fi
\ifx \showURL      \undefined \def \showURL       {\relax}        \fi
\providecommand\bibfield[2]{#2}
\providecommand\bibinfo[2]{#2}
\providecommand\natexlab[1]{#1}
\providecommand\showeprint[2][]{arXiv:#2}

\bibitem[\protect\citeauthoryear{Amodei, Olah, Steinhardt, Christiano,
  Schulman, and Man{\'e}}{Amodei et~al\mbox{.}}{2016}]%
        {amodei2016concrete}
\bibfield{author}{\bibinfo{person}{Dario Amodei}, \bibinfo{person}{Chris Olah},
  \bibinfo{person}{Jacob Steinhardt}, \bibinfo{person}{Paul Christiano},
  \bibinfo{person}{John Schulman}, {and} \bibinfo{person}{Dan Man{\'e}}.}
  \bibinfo{year}{2016}\natexlab{}.
\newblock \showarticletitle{Concrete problems in AI safety}.
\newblock \bibinfo{journal}{\emph{arXiv preprint arXiv:1606.06565}}
  (\bibinfo{year}{2016}).
\newblock


\bibitem[\protect\citeauthoryear{Brockman, Cheung, Pettersson, Schneider,
  Schulman, Tang, and Zaremba}{Brockman et~al\mbox{.}}{2016}]%
        {brockman2016openai}
\bibfield{author}{\bibinfo{person}{Greg Brockman}, \bibinfo{person}{Vicki
  Cheung}, \bibinfo{person}{Ludwig Pettersson}, \bibinfo{person}{Jonas
  Schneider}, \bibinfo{person}{John Schulman}, \bibinfo{person}{Jie Tang},
  {and} \bibinfo{person}{Wojciech Zaremba}.} \bibinfo{year}{2016}\natexlab{}.
\newblock \showarticletitle{Openai gym}.
\newblock \bibinfo{journal}{\emph{arXiv preprint arXiv:1606.01540}}
  (\bibinfo{year}{2016}).
\newblock


\bibitem[\protect\citeauthoryear{Desai, Durugkar, Karnan, Warnell, Hanna, and
  Stone}{Desai et~al\mbox{.}}{2020}]%
        {desai2020imitation}
\bibfield{author}{\bibinfo{person}{Siddharth Desai}, \bibinfo{person}{Ishan
  Durugkar}, \bibinfo{person}{Haresh Karnan}, \bibinfo{person}{Garrett
  Warnell}, \bibinfo{person}{Josiah Hanna}, {and} \bibinfo{person}{Peter
  Stone}.} \bibinfo{year}{2020}\natexlab{}.
\newblock \showarticletitle{An Imitation from Observation Approach to Transfer
  Learning with Dynamics Mismatch}. In \bibinfo{booktitle}{\emph{Advances in
  Neural Information Processing Systems}}.
\newblock


\bibitem[\protect\citeauthoryear{Doyle, Francis, and Tannenbaum}{Doyle
  et~al\mbox{.}}{2013}]%
        {doyle2013feedback}
\bibfield{author}{\bibinfo{person}{John~C Doyle}, \bibinfo{person}{Bruce~A
  Francis}, {and} \bibinfo{person}{Allen~R Tannenbaum}.}
  \bibinfo{year}{2013}\natexlab{}.
\newblock \bibinfo{booktitle}{\emph{Feedback control theory}}.
\newblock \bibinfo{publisher}{Courier Corporation}.
\newblock


\bibitem[\protect\citeauthoryear{Edwards, Sahni, Schroecker, and
  Isbell}{Edwards et~al\mbox{.}}{2019}]%
        {edwards2019imitating}
\bibfield{author}{\bibinfo{person}{Ashley Edwards}, \bibinfo{person}{Himanshu
  Sahni}, \bibinfo{person}{Yannick Schroecker}, {and} \bibinfo{person}{Charles
  Isbell}.} \bibinfo{year}{2019}\natexlab{}.
\newblock \showarticletitle{Imitating latent policies from observation}. In
  \bibinfo{booktitle}{\emph{International Conference on Machine Learning}}.
\newblock


\bibitem[\protect\citeauthoryear{Even-Dar and Mansour}{Even-Dar and
  Mansour}{2003}]%
        {even2003approximate}
\bibfield{author}{\bibinfo{person}{Eyal Even-Dar} {and} \bibinfo{person}{Yishay
  Mansour}.} \bibinfo{year}{2003}\natexlab{}.
\newblock \showarticletitle{Approximate equivalence of Markov decision
  processes}.
\newblock In \bibinfo{booktitle}{\emph{Learning Theory and Kernel Machines}}.
  \bibinfo{publisher}{Springer}, \bibinfo{pages}{581--594}.
\newblock


\bibitem[\protect\citeauthoryear{Everitt and Hutter}{Everitt and
  Hutter}{2016}]%
        {everitt2016avoiding}
\bibfield{author}{\bibinfo{person}{Tom Everitt} {and} \bibinfo{person}{Marcus
  Hutter}.} \bibinfo{year}{2016}\natexlab{}.
\newblock \showarticletitle{Avoiding wireheading with value reinforcement
  learning}. In \bibinfo{booktitle}{\emph{International Conference on
  Artificial General Intelligence}}.
\newblock


\bibitem[\protect\citeauthoryear{Fu, Luo, and Levine}{Fu et~al\mbox{.}}{2018}]%
        {fu2017learning}
\bibfield{author}{\bibinfo{person}{Justin Fu}, \bibinfo{person}{Katie Luo},
  {and} \bibinfo{person}{Sergey Levine}.} \bibinfo{year}{2018}\natexlab{}.
\newblock \showarticletitle{Learning Robust Rewards with Adversarial Inverse
  Reinforcement Learning}. In \bibinfo{booktitle}{\emph{International
  Conference on Learning Representations}}.
\newblock


\bibitem[\protect\citeauthoryear{Gangwani and Peng}{Gangwani and Peng}{2020}]%
        {gangwani2020stateonly}
\bibfield{author}{\bibinfo{person}{Tanmay Gangwani} {and} \bibinfo{person}{Jian
  Peng}.} \bibinfo{year}{2020}\natexlab{}.
\newblock \showarticletitle{State-only Imitation with Transition Dynamics
  Mismatch}. In \bibinfo{booktitle}{\emph{International Conference on Learning
  Representations}}.
\newblock


\bibitem[\protect\citeauthoryear{Goodfellow, Pouget-Abadie, Mirza, Xu,
  Warde-Farley, Ozair, Courville, and Bengio}{Goodfellow et~al\mbox{.}}{2014}]%
        {goodfellow2014generative}
\bibfield{author}{\bibinfo{person}{Ian~J. Goodfellow}, \bibinfo{person}{Jean
  Pouget-Abadie}, \bibinfo{person}{Mehdi Mirza}, \bibinfo{person}{Bing Xu},
  \bibinfo{person}{David Warde-Farley}, \bibinfo{person}{Sherjil Ozair},
  \bibinfo{person}{Aaron Courville}, {and} \bibinfo{person}{Yoshua Bengio}.}
  \bibinfo{year}{2014}\natexlab{}.
\newblock \showarticletitle{Generative Adversarial Networks}. In
  \bibinfo{booktitle}{\emph{Advances in Neural Information Processing
  Systems}}.
\newblock


\bibitem[\protect\citeauthoryear{Handa, Van~Wyk, Yang, Liang, Chao, Wan,
  Birchfield, Ratliff, and Fox}{Handa et~al\mbox{.}}{2020}]%
        {handa2020dexpilot}
\bibfield{author}{\bibinfo{person}{Ankur Handa}, \bibinfo{person}{Karl
  Van~Wyk}, \bibinfo{person}{Wei Yang}, \bibinfo{person}{Jacky Liang},
  \bibinfo{person}{Yu-Wei Chao}, \bibinfo{person}{Qian Wan},
  \bibinfo{person}{Stan Birchfield}, \bibinfo{person}{Nathan Ratliff}, {and}
  \bibinfo{person}{Dieter Fox}.} \bibinfo{year}{2020}\natexlab{}.
\newblock \showarticletitle{DexPilot: Vision-Based Teleoperation of Dexterous
  Robotic Hand-Arm System}. In \bibinfo{booktitle}{\emph{IEEE International
  Conference on Robotics and Automation}}.
\newblock


\bibitem[\protect\citeauthoryear{Ho and Ermon}{Ho and Ermon}{2016}]%
        {ho2016generative}
\bibfield{author}{\bibinfo{person}{Jonathan Ho} {and} \bibinfo{person}{Stefano
  Ermon}.} \bibinfo{year}{2016}\natexlab{}.
\newblock \showarticletitle{Generative adversarial imitation learning}. In
  \bibinfo{booktitle}{\emph{Advances in Neural Information Processing
  Systems}}.
\newblock


\bibitem[\protect\citeauthoryear{Iyengar}{Iyengar}{2005}]%
        {iyengar2005robust}
\bibfield{author}{\bibinfo{person}{Garud~N Iyengar}.}
  \bibinfo{year}{2005}\natexlab{}.
\newblock \showarticletitle{Robust dynamic programming}.
\newblock \bibinfo{journal}{\emph{Mathematics of Operations Research}}
  (\bibinfo{year}{2005}).
\newblock


\bibitem[\protect\citeauthoryear{Jiang, Pang, and Yu}{Jiang
  et~al\mbox{.}}{2020}]%
        {jiang2020offline}
\bibfield{author}{\bibinfo{person}{Shengyi Jiang}, \bibinfo{person}{Jingcheng
  Pang}, {and} \bibinfo{person}{Yang Yu}.} \bibinfo{year}{2020}\natexlab{}.
\newblock \showarticletitle{Offline Imitation Learning with a Misspecified
  Simulator}. In \bibinfo{booktitle}{\emph{Advances in Neural Information
  Processing Systems}}.
\newblock


\bibitem[\protect\citeauthoryear{Kamalaruban, Huang, Hsieh, Rolland, Shi, and
  Cevher}{Kamalaruban et~al\mbox{.}}{2020}]%
        {kamalaruban2020robust}
\bibfield{author}{\bibinfo{person}{Parameswaran Kamalaruban},
  \bibinfo{person}{Yu-Ting Huang}, \bibinfo{person}{Ya-Ping Hsieh},
  \bibinfo{person}{Paul Rolland}, \bibinfo{person}{Cheng Shi}, {and}
  \bibinfo{person}{Volkan Cevher}.} \bibinfo{year}{2020}\natexlab{}.
\newblock \showarticletitle{Robust reinforcement learning via adversarial
  training with langevin dynamics}. In \bibinfo{booktitle}{\emph{Advances in
  Neural Information Processing Systems}}.
\newblock


\bibitem[\protect\citeauthoryear{Kingma and Ba}{Kingma and Ba}{2015}]%
        {kingma2014adam}
\bibfield{author}{\bibinfo{person}{Diederik~P Kingma} {and}
  \bibinfo{person}{Jimmy Ba}.} \bibinfo{year}{2015}\natexlab{}.
\newblock \showarticletitle{Adam: A Method for Stochastic Optimization}. In
  \bibinfo{booktitle}{\emph{International Conference on Learning
  Representations}}.
\newblock


\bibitem[\protect\citeauthoryear{Levine, Finn, Darrell, and Abbeel}{Levine
  et~al\mbox{.}}{2016}]%
        {levine2016end}
\bibfield{author}{\bibinfo{person}{Sergey Levine}, \bibinfo{person}{Chelsea
  Finn}, \bibinfo{person}{Trevor Darrell}, {and} \bibinfo{person}{Pieter
  Abbeel}.} \bibinfo{year}{2016}\natexlab{}.
\newblock \showarticletitle{End-to-end training of deep visuomotor policies}.
\newblock \bibinfo{journal}{\emph{The Journal of Machine Learning Research}}
  (\bibinfo{year}{2016}).
\newblock


\bibitem[\protect\citeauthoryear{Lillicrap, Hunt, Pritzel, Heess, Erez, Tassa,
  Silver, and Wierstra}{Lillicrap et~al\mbox{.}}{2016}]%
        {lillicrap2015continuous}
\bibfield{author}{\bibinfo{person}{Timothy~P Lillicrap},
  \bibinfo{person}{Jonathan~J Hunt}, \bibinfo{person}{Alexander Pritzel},
  \bibinfo{person}{Nicolas Heess}, \bibinfo{person}{Tom Erez},
  \bibinfo{person}{Yuval Tassa}, \bibinfo{person}{David Silver}, {and}
  \bibinfo{person}{Daan Wierstra}.} \bibinfo{year}{2016}\natexlab{}.
\newblock \showarticletitle{Continuous control with deep reinforcement
  learning}. In \bibinfo{booktitle}{\emph{International Conference on Learning
  Representations}}.
\newblock


\bibitem[\protect\citeauthoryear{Littman}{Littman}{1994}]%
        {littman1994markov}
\bibfield{author}{\bibinfo{person}{Michael~L Littman}.}
  \bibinfo{year}{1994}\natexlab{}.
\newblock \showarticletitle{Markov Games as a Framework for Multi-Agent
  Reinforcement Learning}. In \bibinfo{booktitle}{\emph{International
  Conference on Machine Learning}}.
\newblock


\bibitem[\protect\citeauthoryear{Liu, Ling, Mu, and Su}{Liu
  et~al\mbox{.}}{2020}]%
        {liu2019state}
\bibfield{author}{\bibinfo{person}{Fangchen Liu}, \bibinfo{person}{Zhan Ling},
  \bibinfo{person}{Tongzhou Mu}, {and} \bibinfo{person}{Hao Su}.}
  \bibinfo{year}{2020}\natexlab{}.
\newblock \showarticletitle{State Alignment-based Imitation Learning}. In
  \bibinfo{booktitle}{\emph{International Conference on Learning
  Representations}}.
\newblock


\bibitem[\protect\citeauthoryear{Mankowitz, Levine, Jeong, Abdolmaleki,
  Springenberg, Shi, Kay, Hester, Mann, and Riedmiller}{Mankowitz
  et~al\mbox{.}}{2020}]%
        {mankowitz2019robust}
\bibfield{author}{\bibinfo{person}{Daniel~J Mankowitz}, \bibinfo{person}{Nir
  Levine}, \bibinfo{person}{Rae Jeong}, \bibinfo{person}{Abbas Abdolmaleki},
  \bibinfo{person}{Jost~Tobias Springenberg}, \bibinfo{person}{Yuanyuan Shi},
  \bibinfo{person}{Jackie Kay}, \bibinfo{person}{Todd Hester},
  \bibinfo{person}{Timothy Mann}, {and} \bibinfo{person}{Martin Riedmiller}.}
  \bibinfo{year}{2020}\natexlab{}.
\newblock \showarticletitle{Robust Reinforcement Learning for Continuous
  Control with Model Misspecification}. In
  \bibinfo{booktitle}{\emph{International Conference on Learning
  Representations}}.
\newblock


\bibitem[\protect\citeauthoryear{Morimoto and Doya}{Morimoto and Doya}{2005}]%
        {morimoto2005robust}
\bibfield{author}{\bibinfo{person}{Jun Morimoto} {and} \bibinfo{person}{Kenji
  Doya}.} \bibinfo{year}{2005}\natexlab{}.
\newblock \showarticletitle{Robust reinforcement learning}.
\newblock \bibinfo{journal}{\emph{Neural Computation}} (\bibinfo{year}{2005}).
\newblock


\bibitem[\protect\citeauthoryear{Ng, Russell, et~al\mbox{.}}{Ng
  et~al\mbox{.}}{2000}]%
        {ng2000algorithms}
\bibfield{author}{\bibinfo{person}{Andrew~Y Ng}, \bibinfo{person}{Stuart~J
  Russell}, {et~al\mbox{.}}} \bibinfo{year}{2000}\natexlab{}.
\newblock \showarticletitle{Algorithms for inverse reinforcement learning}. In
  \bibinfo{booktitle}{\emph{International Conference on Machine Learning}}.
\newblock


\bibitem[\protect\citeauthoryear{Nilim and El~Ghaoui}{Nilim and
  El~Ghaoui}{2005}]%
        {nilim2005robust}
\bibfield{author}{\bibinfo{person}{Arnab Nilim} {and} \bibinfo{person}{Laurent
  El~Ghaoui}.} \bibinfo{year}{2005}\natexlab{}.
\newblock \showarticletitle{Robust control of Markov decision processes with
  uncertain transition matrices}.
\newblock \bibinfo{journal}{\emph{Operations Research}} (\bibinfo{year}{2005}).
\newblock


\bibitem[\protect\citeauthoryear{Peng, Andrychowicz, Zaremba, and Abbeel}{Peng
  et~al\mbox{.}}{2018}]%
        {peng2018sim}
\bibfield{author}{\bibinfo{person}{Xue~Bin Peng}, \bibinfo{person}{Marcin
  Andrychowicz}, \bibinfo{person}{Wojciech Zaremba}, {and}
  \bibinfo{person}{Pieter Abbeel}.} \bibinfo{year}{2018}\natexlab{}.
\newblock \showarticletitle{Sim-to-real transfer of robotic control with
  dynamics randomization}. In \bibinfo{booktitle}{\emph{IEEE International
  Conference on Robotics and Automation}}.
\newblock


\bibitem[\protect\citeauthoryear{Pinto, Davidson, Sukthankar, and Gupta}{Pinto
  et~al\mbox{.}}{2017}]%
        {pinto2017robust}
\bibfield{author}{\bibinfo{person}{Lerrel Pinto}, \bibinfo{person}{James
  Davidson}, \bibinfo{person}{Rahul Sukthankar}, {and} \bibinfo{person}{Abhinav
  Gupta}.} \bibinfo{year}{2017}\natexlab{}.
\newblock \showarticletitle{Robust Adversarial Reinforcement Learning}. In
  \bibinfo{booktitle}{\emph{International Conference on Machine Learning}}.
\newblock


\bibitem[\protect\citeauthoryear{Radosavovic, Wang, Pinto, and
  Malik}{Radosavovic et~al\mbox{.}}{2020}]%
        {radosavovic2020state}
\bibfield{author}{\bibinfo{person}{Ilija Radosavovic},
  \bibinfo{person}{Xiaolong Wang}, \bibinfo{person}{Lerrel Pinto}, {and}
  \bibinfo{person}{Jitendra Malik}.} \bibinfo{year}{2020}\natexlab{}.
\newblock \showarticletitle{State-only imitation learning for dexterous
  manipulation}.
\newblock \bibinfo{journal}{\emph{arXiv preprint arXiv:2004.04650}}
  (\bibinfo{year}{2020}).
\newblock


\bibitem[\protect\citeauthoryear{Rajeswaran, Ghotra, Ravindran, and
  Levine}{Rajeswaran et~al\mbox{.}}{2017}]%
        {rajeswaran2016epopt}
\bibfield{author}{\bibinfo{person}{Aravind Rajeswaran},
  \bibinfo{person}{Sarvjeet Ghotra}, \bibinfo{person}{Balaraman Ravindran},
  {and} \bibinfo{person}{Sergey Levine}.} \bibinfo{year}{2017}\natexlab{}.
\newblock \showarticletitle{EPOpt: Learning Robust Neural Network Policies
  Using Model Ensembles}. In \bibinfo{booktitle}{\emph{International Conference
  on Learning Representations}}.
\newblock


\bibitem[\protect\citeauthoryear{Schaal}{Schaal}{1999}]%
        {schaal1999imitation}
\bibfield{author}{\bibinfo{person}{Stefan Schaal}.}
  \bibinfo{year}{1999}\natexlab{}.
\newblock \showarticletitle{Is imitation learning the route to humanoid
  robots?}
\newblock \bibinfo{journal}{\emph{Trends in Cognitive Sciences}}
  (\bibinfo{year}{1999}).
\newblock


\bibitem[\protect\citeauthoryear{Schulman, Levine, Abbeel, Jordan, and
  Moritz}{Schulman et~al\mbox{.}}{2015}]%
        {schulman2015trust}
\bibfield{author}{\bibinfo{person}{John Schulman}, \bibinfo{person}{Sergey
  Levine}, \bibinfo{person}{Pieter Abbeel}, \bibinfo{person}{Michael Jordan},
  {and} \bibinfo{person}{Philipp Moritz}.} \bibinfo{year}{2015}\natexlab{}.
\newblock \showarticletitle{Trust region policy optimization}. In
  \bibinfo{booktitle}{\emph{International Conference on Machine Learning}}.
\newblock


\bibitem[\protect\citeauthoryear{Schulman, Wolski, Dhariwal, Radford, and
  Klimov}{Schulman et~al\mbox{.}}{2017}]%
        {schulman2017proximal}
\bibfield{author}{\bibinfo{person}{John Schulman}, \bibinfo{person}{Filip
  Wolski}, \bibinfo{person}{Prafulla Dhariwal}, \bibinfo{person}{Alec Radford},
  {and} \bibinfo{person}{Oleg Klimov}.} \bibinfo{year}{2017}\natexlab{}.
\newblock \showarticletitle{Proximal policy optimization algorithms}.
\newblock \bibinfo{journal}{\emph{arXiv preprint arXiv:1707.06347}}
  (\bibinfo{year}{2017}).
\newblock


\bibitem[\protect\citeauthoryear{Silver, Lever, Heess, Degris, Wierstra, and
  Riedmiller}{Silver et~al\mbox{.}}{2014}]%
        {silver2014deterministic}
\bibfield{author}{\bibinfo{person}{David Silver}, \bibinfo{person}{Guy Lever},
  \bibinfo{person}{Nicolas Heess}, \bibinfo{person}{Thomas Degris},
  \bibinfo{person}{Daan Wierstra}, {and} \bibinfo{person}{Martin Riedmiller}.}
  \bibinfo{year}{2014}\natexlab{}.
\newblock \showarticletitle{Deterministic policy gradient algorithms}. In
  \bibinfo{booktitle}{\emph{International Conference on Machine Learning}}.
\newblock


\bibitem[\protect\citeauthoryear{Sutton, McAllester, Singh, and Mansour}{Sutton
  et~al\mbox{.}}{1999}]%
        {sutton1999policy}
\bibfield{author}{\bibinfo{person}{Richard~S Sutton}, \bibinfo{person}{David~A
  McAllester}, \bibinfo{person}{Satinder~P Singh}, {and}
  \bibinfo{person}{Yishay Mansour}.} \bibinfo{year}{1999}\natexlab{}.
\newblock \showarticletitle{Policy gradient methods for reinforcement learning
  with function approximation.}. In \bibinfo{booktitle}{\emph{Advances in
  Neural Information Processing Systems}}.
\newblock


\bibitem[\protect\citeauthoryear{Tessler, Efroni, and Mannor}{Tessler
  et~al\mbox{.}}{2019}]%
        {tessler2019action}
\bibfield{author}{\bibinfo{person}{Chen Tessler}, \bibinfo{person}{Yonathan
  Efroni}, {and} \bibinfo{person}{Shie Mannor}.}
  \bibinfo{year}{2019}\natexlab{}.
\newblock \showarticletitle{Action Robust Reinforcement Learning and
  Applications in Continuous Control}. In
  \bibinfo{booktitle}{\emph{International Conference on Machine Learning}}.
\newblock


\bibitem[\protect\citeauthoryear{Todorov, Erez, and Tassa}{Todorov
  et~al\mbox{.}}{2012}]%
        {todorov2012mujoco}
\bibfield{author}{\bibinfo{person}{Emanuel Todorov}, \bibinfo{person}{Tom
  Erez}, {and} \bibinfo{person}{Yuval Tassa}.} \bibinfo{year}{2012}\natexlab{}.
\newblock \showarticletitle{Mujoco: A physics engine for model-based control}.
  In \bibinfo{booktitle}{\emph{IEEE/RSJ International Conference on Intelligent
  Robots and Systems}}.
\newblock


\bibitem[\protect\citeauthoryear{Torabi, Warnell, and Stone}{Torabi
  et~al\mbox{.}}{2018}]%
        {torabi2018generative}
\bibfield{author}{\bibinfo{person}{Faraz Torabi}, \bibinfo{person}{Garrett
  Warnell}, {and} \bibinfo{person}{Peter Stone}.}
  \bibinfo{year}{2018}\natexlab{}.
\newblock \showarticletitle{Generative adversarial imitation from observation}.
\newblock \bibinfo{journal}{\emph{arXiv preprint arXiv:1807.06158}}
  (\bibinfo{year}{2018}).
\newblock


\bibitem[\protect\citeauthoryear{Viano, Huang, Kamalaruban, Weller, and
  Cevher}{Viano et~al\mbox{.}}{2021}]%
        {viano2020robust}
\bibfield{author}{\bibinfo{person}{Luca Viano}, \bibinfo{person}{Yu-Ting
  Huang}, \bibinfo{person}{Parameswaran Kamalaruban}, \bibinfo{person}{Adrian
  Weller}, {and} \bibinfo{person}{Volkan Cevher}.}
  \bibinfo{year}{2021}\natexlab{}.
\newblock \showarticletitle{Robust Inverse Reinforcement Learning under
  Transition Dynamics Mismatch}. In \bibinfo{booktitle}{\emph{Advances in
  Neural Information Processing Systems}}.
\newblock


\bibitem[\protect\citeauthoryear{Yang, Ma, Huang, Sun, Liu, Huang, and
  Gan}{Yang et~al\mbox{.}}{2019}]%
        {yang2019imitation}
\bibfield{author}{\bibinfo{person}{Chao Yang}, \bibinfo{person}{Xiaojian Ma},
  \bibinfo{person}{Wenbing Huang}, \bibinfo{person}{Fuchun Sun},
  \bibinfo{person}{Huaping Liu}, \bibinfo{person}{Junzhou Huang}, {and}
  \bibinfo{person}{Chuang Gan}.} \bibinfo{year}{2019}\natexlab{}.
\newblock \showarticletitle{Imitation learning from observations by minimizing
  inverse dynamics disagreement}. In \bibinfo{booktitle}{\emph{Advances in
  Neural Information Processing Systems}}.
\newblock


\bibitem[\protect\citeauthoryear{Zhao, Queralta, and Westerlund}{Zhao
  et~al\mbox{.}}{2020}]%
        {zhao2020sim}
\bibfield{author}{\bibinfo{person}{Wenshuai Zhao},
  \bibinfo{person}{Jorge~Pe{\~n}a Queralta}, {and} \bibinfo{person}{Tomi
  Westerlund}.} \bibinfo{year}{2020}\natexlab{}.
\newblock \showarticletitle{Sim-to-Real Transfer in Deep Reinforcement Learning
  for Robotics: a Survey}. In \bibinfo{booktitle}{\emph{IEEE Symposium Series
  on Computational Intelligence}}.
\newblock


\bibitem[\protect\citeauthoryear{Ziebart}{Ziebart}{2010}]%
        {ziebart2010modeling}
\bibfield{author}{\bibinfo{person}{Brian~D Ziebart}.}
  \bibinfo{year}{2010}\natexlab{}.
\newblock \showarticletitle{Modeling purposeful adaptive behavior with the
  principle of maximum causal entropy}.
\newblock  (\bibinfo{year}{2010}).
\newblock


\bibitem[\protect\citeauthoryear{Ziebart, Maas, Bagnell, and Dey}{Ziebart
  et~al\mbox{.}}{2008}]%
        {ziebart2008maximum}
\bibfield{author}{\bibinfo{person}{Brian~D Ziebart}, \bibinfo{person}{Andrew~L
  Maas}, \bibinfo{person}{J~Andrew Bagnell}, {and} \bibinfo{person}{Anind~K
  Dey}.} \bibinfo{year}{2008}\natexlab{}.
\newblock \showarticletitle{Maximum entropy inverse reinforcement learning}. In
  \bibinfo{booktitle}{\emph{AAAI Conference on Artificial Intelligence}}.
\newblock


\end{thebibliography}

\newpage
\appendix


\clearpage
\onecolumn

\section*{Code Repository}

\url{https://github.com/lviano/robust_gaifo}

\section{Details on the equivalence between Action Robust MDP and Robust MDP}

In the following we prove the last equality of Eq.~\eqref{equivalence_new}.
\begin{theorem}
Given the set 
\begin{equation*}
\mathcal{T}^{\alpha} ~:=~ \bc{ T : T(s^\prime|s,a) =  \alpha T^{\mathrm{sim}}(s^\prime|s,a) + (1 - \alpha)\bar{T}(s^\prime|s) , \, \bar{T}(s^\prime|s) = \sum_a \pi(a|s) T^{\mathrm{sim}}(s^\prime|s,a), \quad \forall \pi \in \Pi}
\end{equation*}
and a cost function depending only on states, i.e. $r: \mathcal{S}\times\mathcal{S}\rightarrow \mathbb{R}$, define $G_c = \sum^{\infty}_{t=0} \gamma^t r(s_t, s_{t+1})$.
Then, the following holds:
\begin{equation*}
\min_{\pi^{\mathrm{pl}} \in \Pi} \max_{T^\alpha \in \mathcal{T}^{\alpha}}  \E{G_c \bigm| \pi^{\mathrm{pl}}, P_0, T^\alpha} = \min_{\pi^{\mathrm{pl}} \in \Pi} \max_{\pi^{\mathrm{op}} \in \Pi} \E{G_c \bigm| \alpha \pi^{\mathrm{pl}} + (1 - \alpha) \pi^{\mathrm{op}}, M^{\mathrm{sim}}} .
\end{equation*}
In particular, the result in Eq.~\eqref{equivalence_new} follows from the choice: $ r(s_t, s_{t+1}) = c(s_t,s_{t+1}) + H^{\pi^{\mathrm{pl}}}(A|S=s_t)$.
\end{theorem}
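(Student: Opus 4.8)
The plan is to collapse both sides of the identity onto a single common object --- the marginal state-to-state transition kernel induced by a fixed player policy --- and to exploit that the return $G_c$ depends on states only. First I would fix an arbitrary $\pi^{\mathrm{pl}} \in \Pi$ and observe that, because $r \colon \mathcal{S} \times \mathcal{S} \to \mathbb{R}$ ignores actions, $\E{G_c \mid \cdot}$ is a functional of the law of the state trajectory $(s_0, s_1, \dots)$, which is itself determined by $P_0$ together with the one-step marginal kernel $P(s' \mid s) = \sum_a \pi^{\mathrm{pl}}(a \mid s)\, T(s' \mid s, a)$. Thus it suffices to show that the inner maximizations on the two sides range over exactly the same set of marginal kernels; the outer $\min_{\pi^{\mathrm{pl}}}$ then yields the full equality.

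The heart of the argument is a direct computation of this marginal kernel on each side. On the left, substituting $T^\alpha(s' \mid s,a) = \alpha T^{\mathrm{sim}}(s' \mid s,a) + (1-\alpha)\bar{T}(s' \mid s)$ and using $\sum_a \pi^{\mathrm{pl}}(a \mid s) = 1$ pulls the action-independent term $\bar{T}$ out of the sum, yielding $\alpha \sum_a \pi^{\mathrm{pl}}(a \mid s) T^{\mathrm{sim}}(s' \mid s,a) + (1-\alpha)\bar{T}(s' \mid s)$. On the right, expanding the mixture policy in $M^{\mathrm{sim}}$ gives $\alpha \sum_a \pi^{\mathrm{pl}}(a \mid s) T^{\mathrm{sim}}(s' \mid s,a) + (1-\alpha)\sum_a \pi^{\mathrm{op}}(a \mid s) T^{\mathrm{sim}}(s' \mid s,a)$. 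Setting the policy $\pi$ that defines $\bar{T}$ equal to $\pi^{\mathrm{op}}$ makes the two expressions coincide. Since $\mathcal{T}^\alpha$ is parameterized exactly by the policy $\pi$ generating $\bar{T}$, and $\pi^{\mathrm{op}}$ also ranges over all of $\Pi$, the achievable sets of marginal kernels --- and hence the attainable values of $\E{G_c \mid \cdot}$ --- agree, so the inner maxima are equal for every $\pi^{\mathrm{pl}}$.

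To then deduce Eq.~\eqref{equivalence_new}, I would note that for a fixed player policy the causal-entropy bonus $H^{\pi^{\mathrm{pl}}}(A \mid S = s_t)$ is a function of the current state alone, so folding it into the cost as a state-dependent reward keeps the hypotheses of the theorem satisfied, and the general statement applies verbatim.

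I expect the main obstacle to be making the correspondence between the two maximization domains fully rigorous: one must check that every admissible $\bar{T}$ in $\mathcal{T}^\alpha$ is realized by some opponent policy and conversely, i.e.\ surjectivity in both directions onto the common family of induced kernels. This is precisely where the assumption that $r$ depends only on $(s,s')$ is indispensable --- it is what lets us discard the joint action distribution and work with the state marginal; with an action-dependent reward the two formulations would in general diverge.
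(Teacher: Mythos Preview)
Your proposal is correct and follows essentially the same route as the paper's proof: reduce the expected return to a functional of the state-trajectory law (using that $r$ is state-only), identify that law with the one-step marginal kernel $\sum_a \pi(a\mid s)\,T(s'\mid s,a)$, compute this kernel on both sides, and match them via the correspondence $\bar{T}(\cdot\mid s)=\sum_a \pi^{\mathrm{op}}(a\mid s)T^{\mathrm{sim}}(\cdot\mid s,a)$. Your write-up is in fact more explicit than the paper's about why the two maximization domains are in bijection and why the state-only assumption on $r$ is essential, but the underlying argument is the same.
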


\begin{proof}
Let us define $P^{\pi, T}(s_0, \dots, s_N) := P_0(s_0) \prod^{N-1}_{t=0}\sum_{a}\pi(a|s_t) T(s_{t+1}|s_t, a)$.
We need to show equality between the distributions $P^{\pi^{\mathrm{pl}}, \alpha T^{\mathrm{sim}} + (1 - \alpha) \bar{T}}$ and $P^{\alpha \pi^{\mathrm{pl}} + (1 - \alpha)\pi^{\mathrm{op}}, T^{\mathrm{sim}}}$.
Due to the Markov property, this is equivalent to show:
\begin{equation}
    \sum_{a}\pi^{\mathrm{pl}}(a|s_t) \bs{\alpha T^{\mathrm{sim}}(s_{t+1}|s_t, a) + (1 - \alpha) \bar{T}(s_{t+1}| s_t)} = \sum_{a} \bs{\alpha \pi^{\mathrm{pl}}(a|s_t) + (1 - \alpha)\pi^{\mathrm{op}}(a|s_t)}T^{\mathrm{sim}}(s_{t+1}|s_t, a) ,
\end{equation}
that implies:
\begin{equation*}
    \underbrace{\sum_{a}\pi^{\mathrm{pl}}(a|s_t)}_{=1} \bar{T}(s_{t+1}| s_t) = \sum_{a}\pi^{\mathrm{op}}(a|s_t)T^{\mathrm{sim}}(s_{t+1}|s_t, a) .
\end{equation*}
Hence, it follows that equality between $P^{\pi^{\mathrm{pl}}, \alpha T^{\mathrm{sim}} + (1 - \alpha) \bar{T}}$ and $P^{\alpha \pi^{\mathrm{pl}} + (1 - \alpha)\pi^{\mathrm{op}}, T^{\mathrm{sim}}}$ holds for:
\begin{equation*} \bar{T}(s_{t+1}| s_t) = \sum_{a}\pi^{\mathrm{op}}(a|s_t)T^{\mathrm{sim}}(s_{t+1}|s_t, a) ,
\end{equation*}
as we used in the definition of the set $\mathcal{T}^{\alpha}$.
\end{proof}

\section{Additional Details on Algorithm~\ref{alg:robust-gailfo}}
\label{app:robust-lfo-details}

By interpreting $R_w\br{s,s'} = - \log{D_w\br{s,s'}}$ as the reward function, we have (for a fixed $w$):
\begin{align*}
J \br{\theta,\phi} ~:=~& \Eee{\rho^{\pi^\mathrm{mix}_{\theta,\phi}}_{M^{\mathrm{sim}}}}{{R_w\br{s,s'}}} + \lambda H_{\rho^{\pi^\mathrm{mix}_{\theta,\phi}}_{M^{\mathrm{sim}}}}\br{\pi^\mathrm{pl}_\theta} ~=~ J_1 \br{\theta,\phi} + J_2 \br{\theta,\phi} ,
\end{align*}
where
\begin{align*}
J_1 \br{\theta,\phi} ~:=~& \E{\sum_{t} \gamma^t R_w(s_t, s_{t+1})\big\vert \pi_{\theta, \phi}^\mathrm{mix}, M^{\mathrm{sim}}} \\
J_2 \br{\theta,\phi} ~:=~& \lambda \E{\sum_{t} \gamma^t H^{\pi_{\theta}^{pl}}(A|S=s_t) \big\vert \pi_{\theta, \phi}^\mathrm{mix}, M^{\mathrm{sim}}} .
\end{align*}
By the policy gradient theorem, the derivatives of the first term w.r.t the player and the opponent policy parameters are given by:
\begin{align*}
\nabla_{\theta} J_1(\theta, \phi) ~=~& \sum_{s \in \mathcal{S}}\sum_{t}\gamma^t \P{S_t = s \mid \pi_{\theta, \phi}^\mathrm{mix}, M^{\mathrm{sim}}} \sum_{a} \nabla_{\theta} \pi_{\theta, \phi}^\mathrm{mix}(a|s) Q_{\pi_{\theta, \phi}^\mathrm{mix}}(s,a) \\
\nabla_{\phi} J_1(\theta, \phi) ~=~& \sum_{s \in \mathcal{S}}\sum_{t}\gamma^t \P{S_t = s \mid \pi_{\theta, \phi}^\mathrm{mix}, M^{\mathrm{sim}}} \sum_{a} \nabla_{\phi} \pi_{\theta, \phi}^\mathrm{mix}(a|s) Q_{\pi_{\theta, \phi}^\mathrm{mix}}(s,a) ,
\end{align*}
where 
\begin{align*}
Q_{\pi_{\theta, \phi}^\mathrm{mix}}(s,a) ~=~& \sum_{s^\prime} T^{\mathrm{sim}}(s^\prime \mid a, s) \br{R_w(s,s^\prime) + \gamma V_{\pi_{\theta, \phi}^\mathrm{mix}}(s^\prime)} \\
V_{\pi_{\theta, \phi}^\mathrm{mix}}(s) ~=~& \E{\sum_{t} \gamma^t R_w(s_t, s_{t+1}) \bigg\vert \pi_{\theta, \phi}^\mathrm{mix}, M^{\mathrm{sim}}, s_0 = s} .
\end{align*}
For the second term, we introduce the following quantities: 
\begin{align*}
Q_{\pi_{\theta, \phi}^\mathrm{mix}}^{\mathrm{log}}(s,a) ~=~& \sum_{s^\prime} T^{\mathrm{sim}}(s^\prime \mid s,a) \br{\lambda H^{\pi_{\theta}^{pl}}(A|S=s_t) + \gamma V_{\pi_{\theta, \phi}^\mathrm{mix}}^{\mathrm{log}}(s')} \\
V_{\pi_{\theta, \phi}^\mathrm{mix}}^{\mathrm{log}}(s) ~=~& \E{\sum_{t} \lambda \gamma^t H^{\pi_{\theta}^{pl}}(A|S=s_t)\bigg \vert \pi_{\theta, \phi}^\mathrm{mix}, M^{\mathrm{sim}}, s_0 = s}
\end{align*}
Then, we obtain the following derivatives of the second term:
\begin{align*}
\nabla_{\theta} J_2(\theta, \phi) ~=~& \sum_{s \in \mathcal{S}}\sum_{t}\gamma^t \P{S_t = s \mid \pi_{\theta, \phi}^\mathrm{mix}, M^{\mathrm{sim}}} \sum_{a} \nabla_{\theta} \pi_{\theta, \phi}^\mathrm{mix}(a|s) Q^{\mathrm{log}}_{\pi_{\theta, \phi}^\mathrm{mix}}(s,a) \\
\nabla_{\phi} J_2(\theta, \phi) ~=~& \sum_{s \in \mathcal{S}}\sum_{t}\gamma^t \P{S_t = s \mid \pi_{\theta, \phi}^\mathrm{mix}, M^{\mathrm{sim}}} \sum_{a} \nabla_{\phi} \pi_{\theta, \phi}^\mathrm{mix}(a|s) Q^{\mathrm{log}}_{\pi_{\theta, \phi}^\mathrm{mix}}(s,a) . 
\end{align*}
For a practical algorithm, we need to compute gradient estimates from a data-set of sampled trajectories $\mathcal{D} = \bc{\tau_i}_i$ with $\tau_i = (s^i_0, a^i_0, \dots, s^i_T, a^i_T)$. The gradient estimates are given by:
\begin{align*}
\widehat{\nabla}_{\theta} J_1(\theta, \phi) ~=~& \sum_{\tau_i \in \mathcal{D}} \sum_{t}\gamma^t \nabla_{\theta} \log \pi_{\theta, \phi}^\mathrm{mix}(a^i_t|s^i_t) \widehat{Q}_{\pi_{\theta, \phi}^\mathrm{mix}}(s^i_t,a^i_t) \\
\widehat{\nabla}_{\phi} J_1(\theta, \phi) ~=~& \sum_{\tau_i \in \mathcal{D}} \sum_{t}\gamma^t \nabla_{\phi} \log \pi_{\theta, \phi}^\mathrm{mix}(a^i_t|s^i_t) \widehat{Q}_{\pi_{\theta, \phi}^\mathrm{mix}}(s^i_t,a^i_t) \\
\widehat{\nabla}_{\theta} J_2(\theta, \phi) ~=~& \sum_{\tau_i \in \mathcal{D}} \sum_{t}\gamma^t \nabla_{\theta} \log \pi_{\theta, \phi}^\mathrm{mix}(a^i_t|s^i_t) \widehat{Q}^{\mathrm{log}}_{\pi_{\theta, \phi}^\mathrm{mix}}(s^i_t,a^i_t) \\
\widehat{\nabla}_{\phi} J_2(\theta, \phi) ~=~& \sum_{\tau_i \in \mathcal{D}} \sum_{t}\gamma^t \nabla_{\phi} \log \pi_{\theta, \phi}^\mathrm{mix}(a^i_t|s^i_t) \widehat{Q}^{\mathrm{log}}_{\pi_{\theta, \phi}^\mathrm{mix}}(s^i_t,a^i_t) ,
\end{align*}
where the estimator $\widehat{Q}_{\pi_{\theta, \phi}^\mathrm{mix}}(s^i_t, a^i_t)$ is the future return observed for the trajectory $i$ after time $t$, i.e., $\widehat{Q}_{\pi_{\theta, \phi}^\mathrm{mix}}(s^i_t, a^i_t) = \sum^{T}_{k=t+1} \gamma^{k - t - 1}R_w(s^i_k, s^i_{k+1})=G^i_t$. Similarly, for the entropy term we have $\widehat{Q}^{\mathrm{log}}_{\pi_{\theta, \phi}^\mathrm{mix}}(s^i_t, a^i_t) = \sum^{T}_{k=t+1} - \gamma^{k - t - 1} H^{\pi^{\mathrm{pl}}_{\theta}}(A|S=s^i_k) =G^{\mathrm{log},i}_t$. The trajectory sampling process is given in Algorithm~\ref{alg:collect_trajs}. 

\clearpage
 
\section{Transfer Performance: MuJoCo}
\label{app:transfer-performance}

We present the following results:
\begin{itemize}
\item The ablation study on the transfer performance of Algorithm~\ref{alg:robust-gailfo} with different values of $\alpha$ under the relative friction mismatches (see Figure~\ref{fig:TransferFrictionAblation}).
\item The ablation study on the transfer performance of Algorithm~\ref{alg:robust-gailfo} with different values of $\alpha$ under the relative mass mismatches (see Figure~\ref{fig:TransferMassAblation}).
\item The transfer performance of Algorithm~\ref{alg:robust-gailfo} with different (best) values of $\alpha$ for each relative friction mismatch of a task (see Figure~\ref{fig:TransferFrictionVarAlpha}). \item The transfer performance of Algorithm~\ref{alg:robust-gailfo} with different (best) values of $\alpha$ for each relative mass mismatch of a task (see Figure~\ref{fig:TransferMassVarAlpha}).
\end{itemize}

\begin{figure}[!h] 
\centering
\begin{tabular}{ccc}
\subfloat[HalfCheetah]{%
       \includegraphics[width=0.25\linewidth]{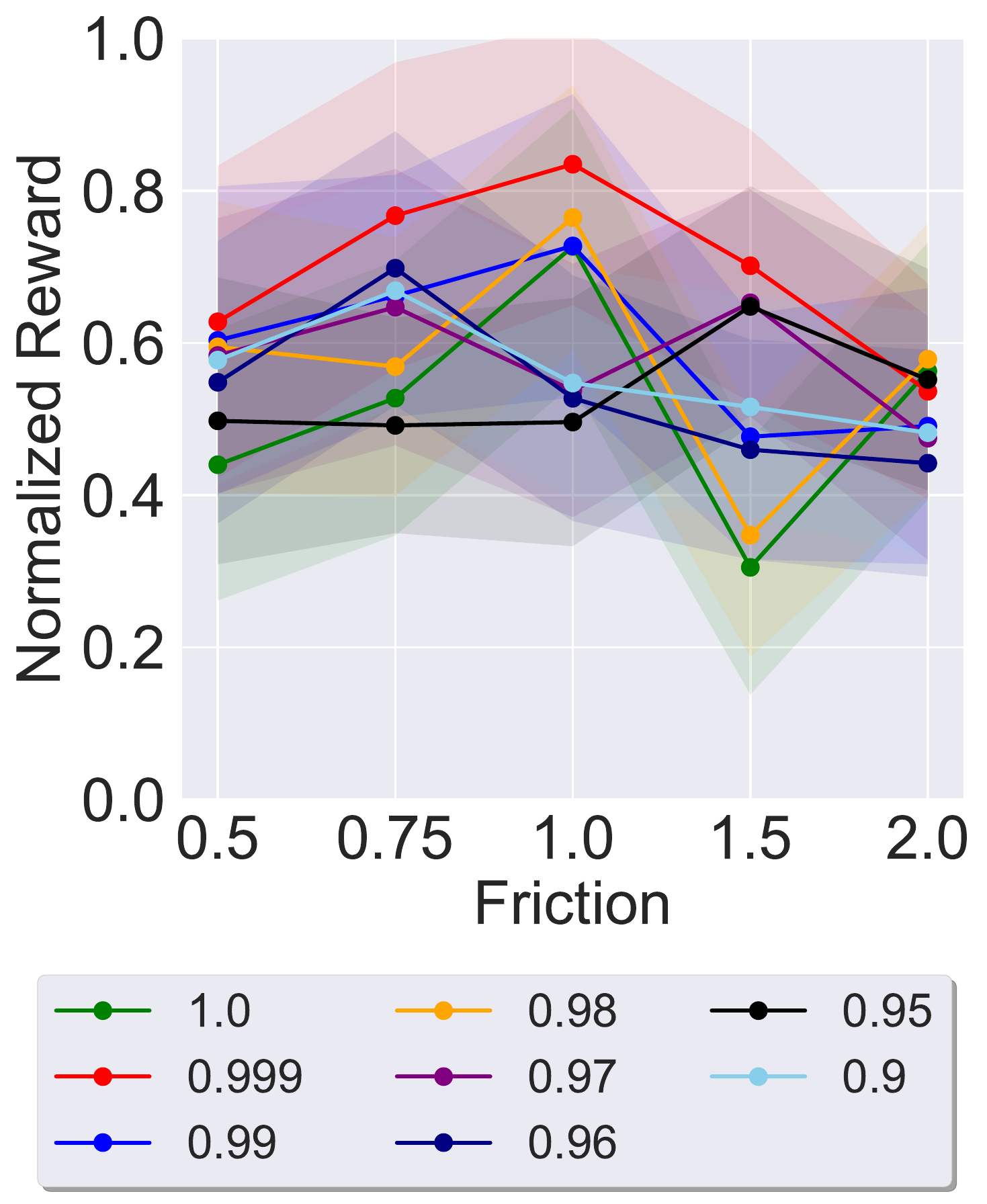}
     } &
\subfloat[Walker]{%
       \includegraphics[width=0.25\linewidth]{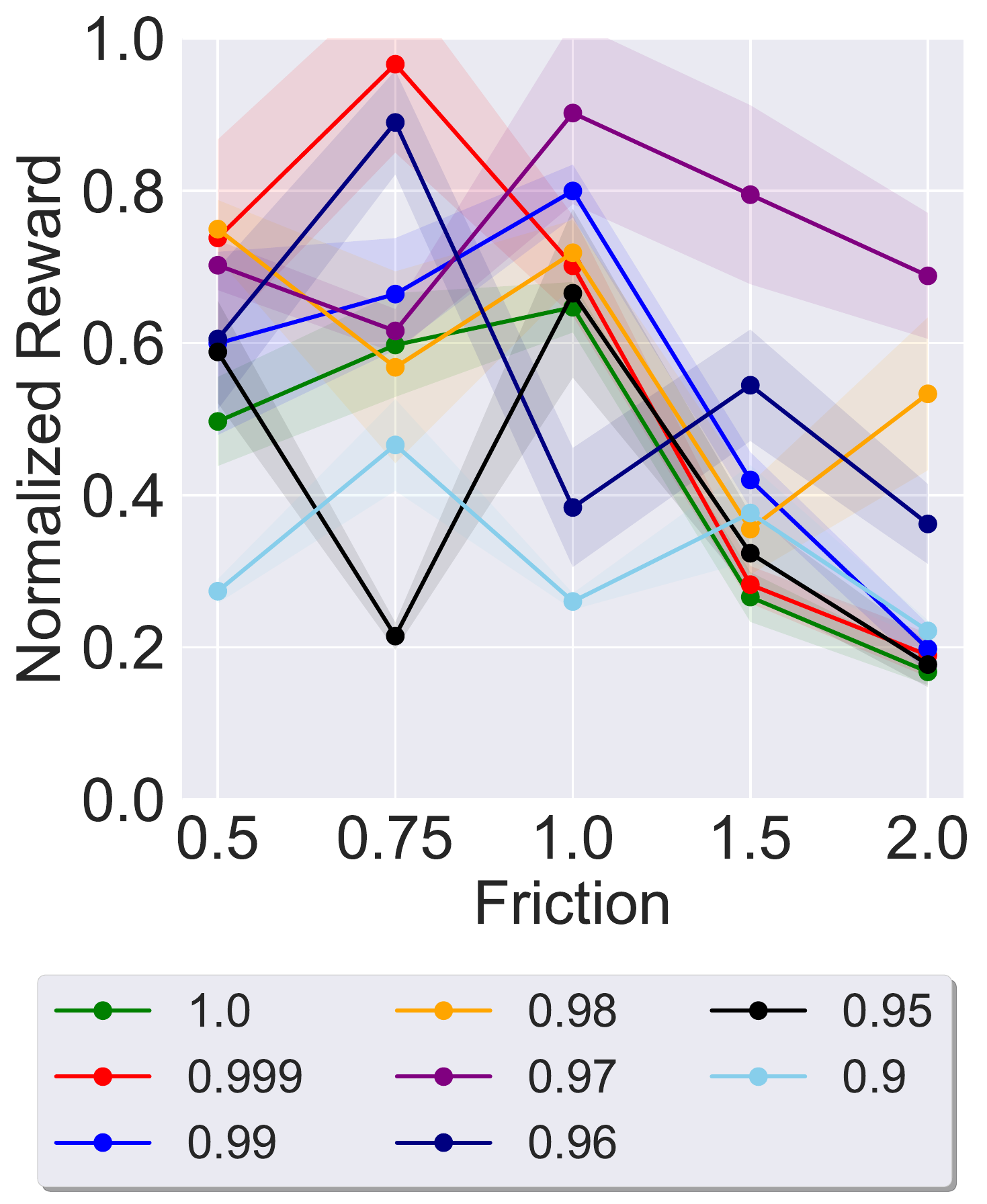}
     } &
\subfloat[Hopper]{%
       \includegraphics[width=0.25\linewidth]{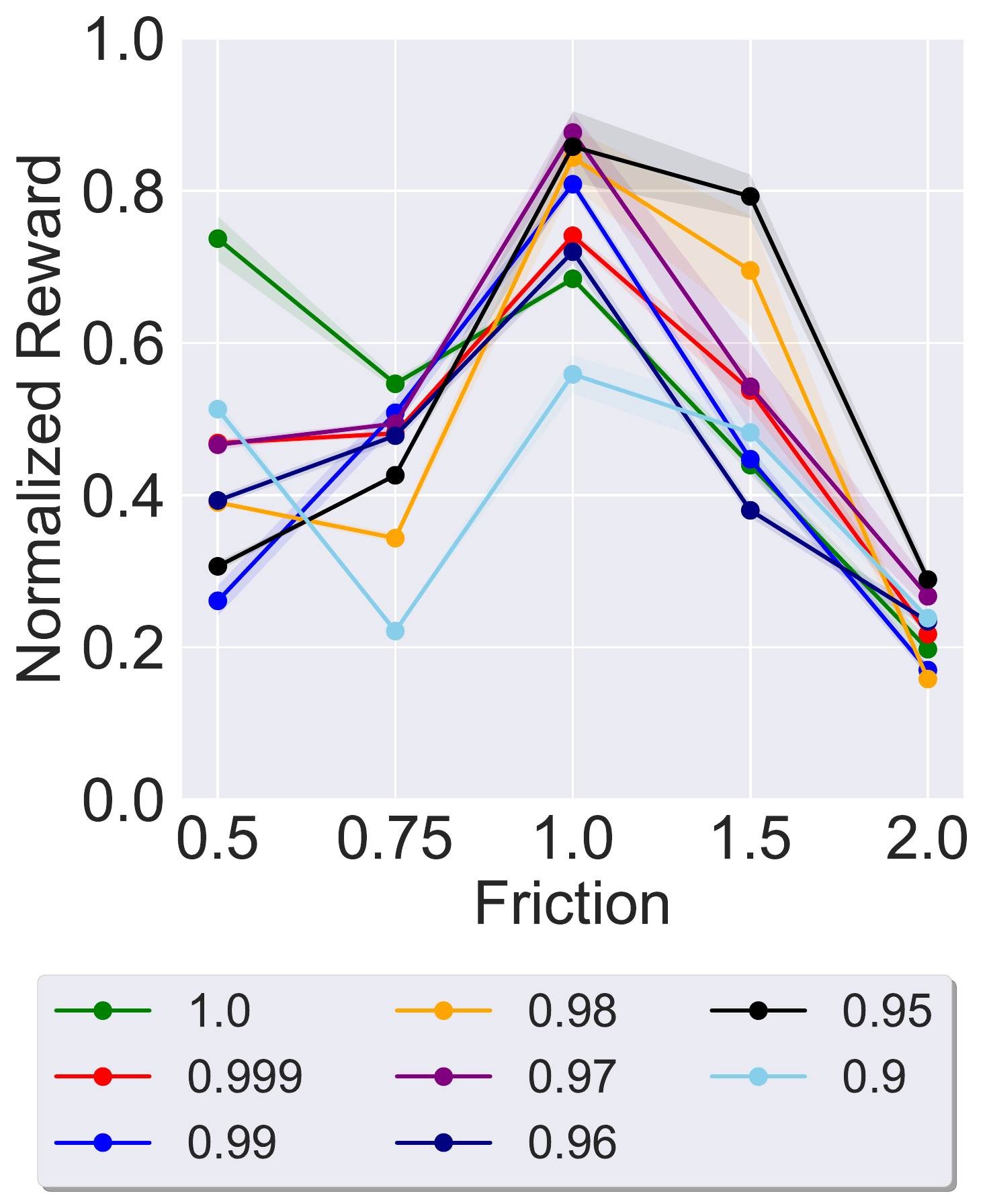}
     } \\
\end{tabular}
\caption{The average (over $3$ seeds) transfer performance of Algorithm~\ref{alg:robust-gailfo} with different values of $\alpha$. The ablation shown here is used to choose $\alpha$ in Figure~\ref{fig:TransferFrictionFixedAlpha}. The x-axis denotes the relative friction of the learner environment $M^\mathrm{sim}$. The policies are evaluated in $M^\mathrm{real}_{c^*}$ over $1e5$ steps truncating the last episode if it does not terminate. Note that robust-GAILfO with $\alpha = 1$ corresponds to GAILfO.}
\label{fig:TransferFrictionAblation}
\end{figure}

\begin{figure}[!h] 
\centering
\begin{tabular}{ccc}
\subfloat[HalfCheetah]{%
       \includegraphics[width=0.25\linewidth]{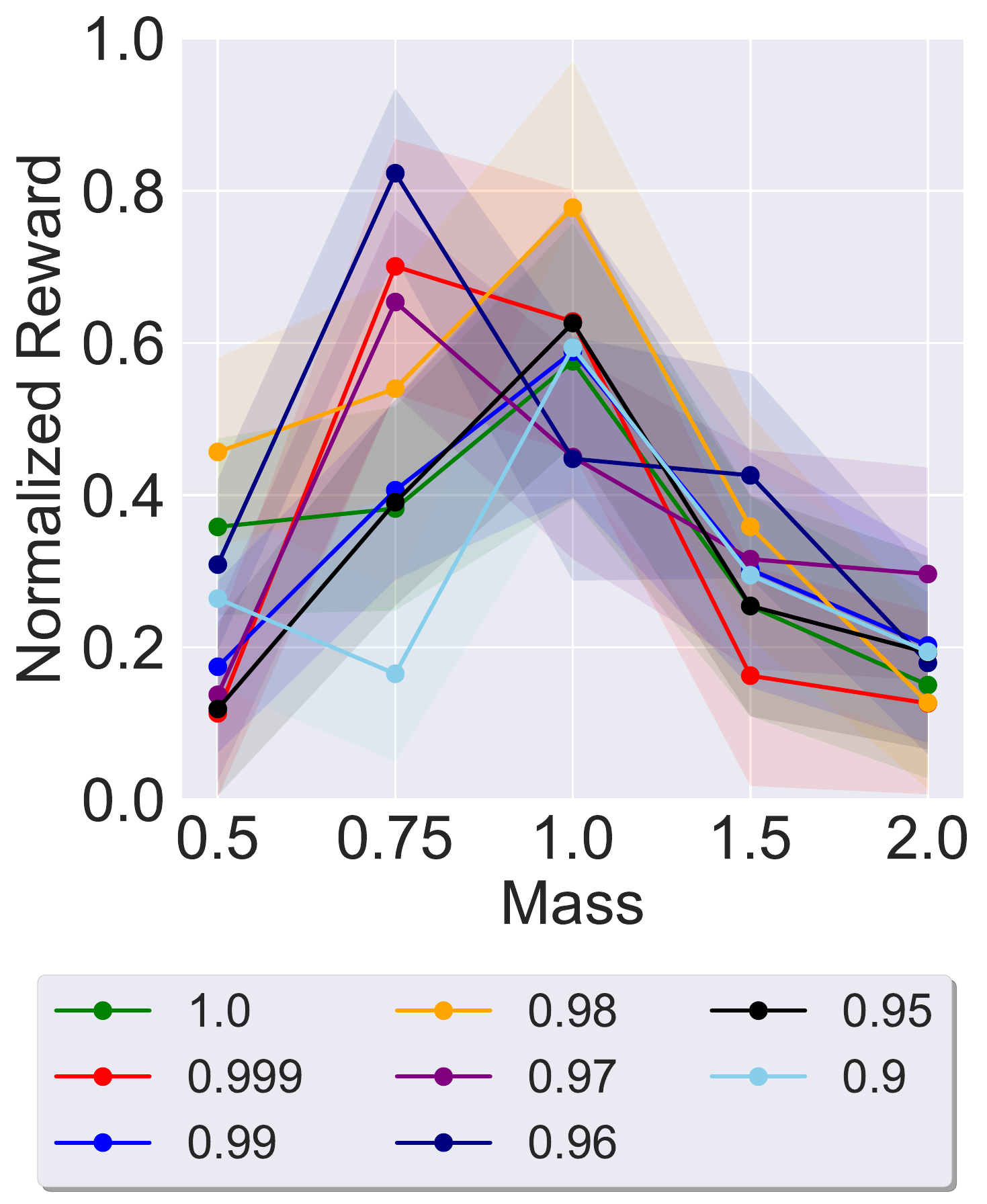}
     } &
\subfloat[Walker]{%
       \includegraphics[width=0.25\linewidth]{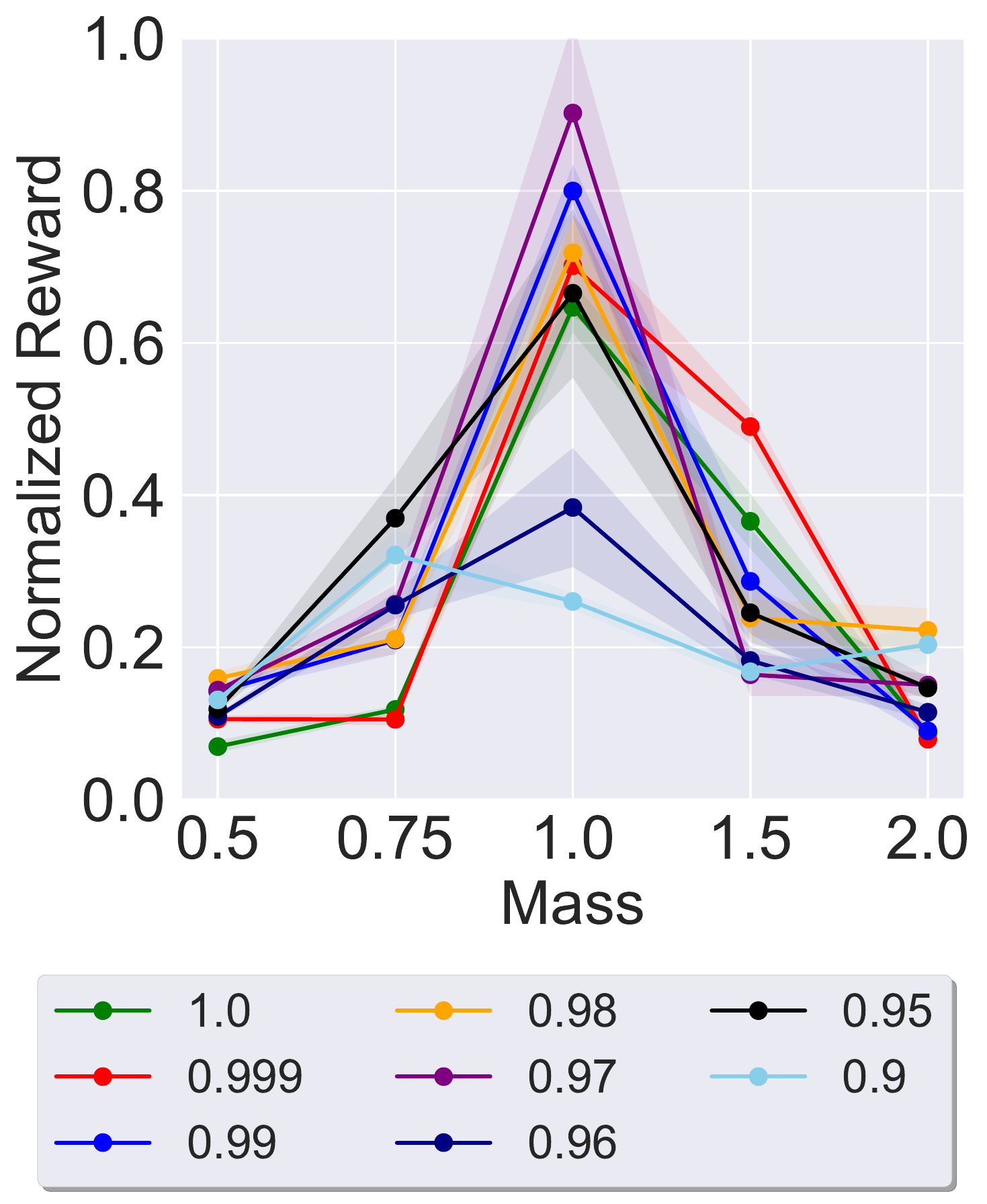}
     } &
\subfloat[Hopper]{%
       \includegraphics[width=0.25\linewidth]{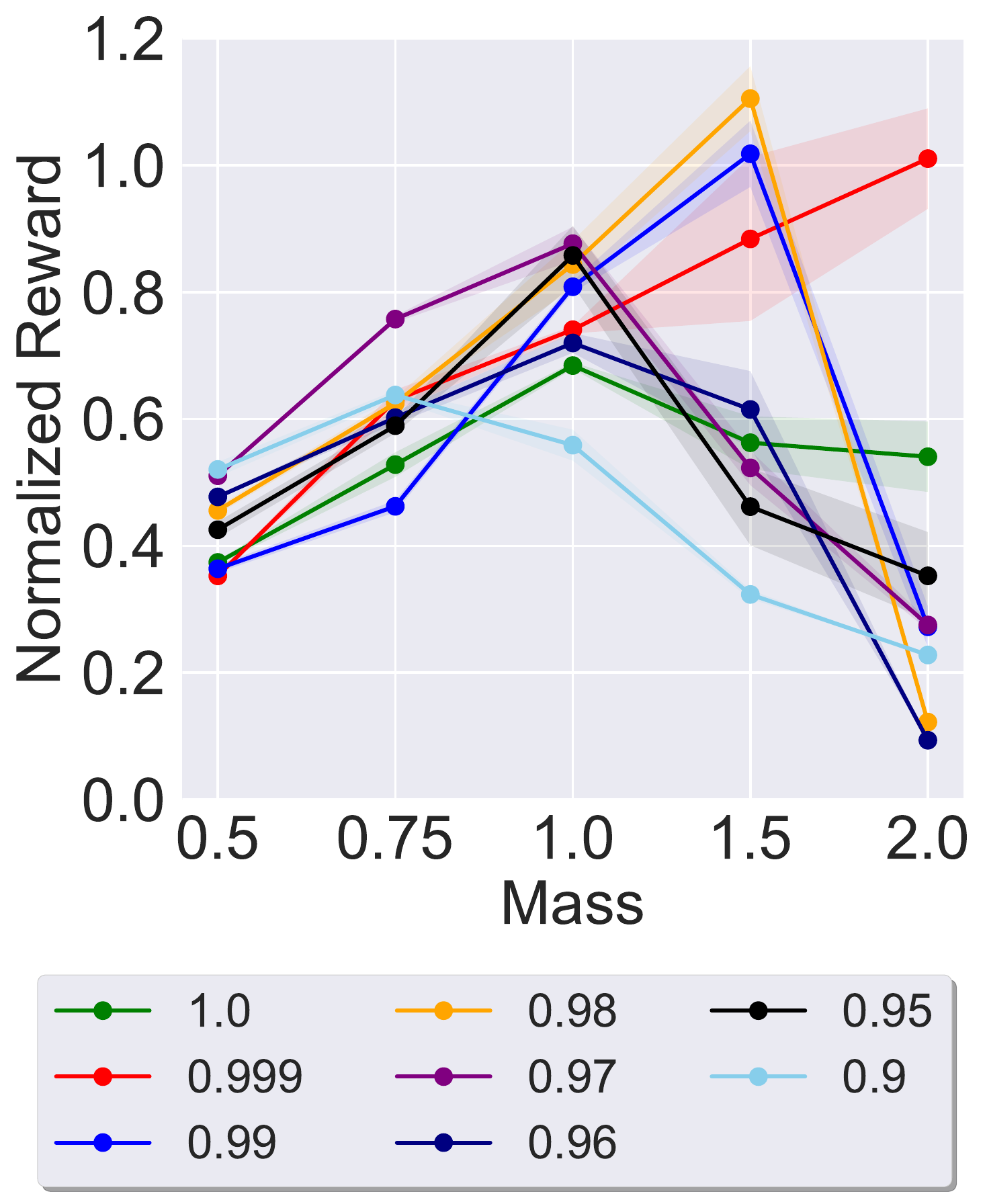}
     } \\
     \subfloat[InvDoublePend]{%
       \includegraphics[width=0.25\linewidth]{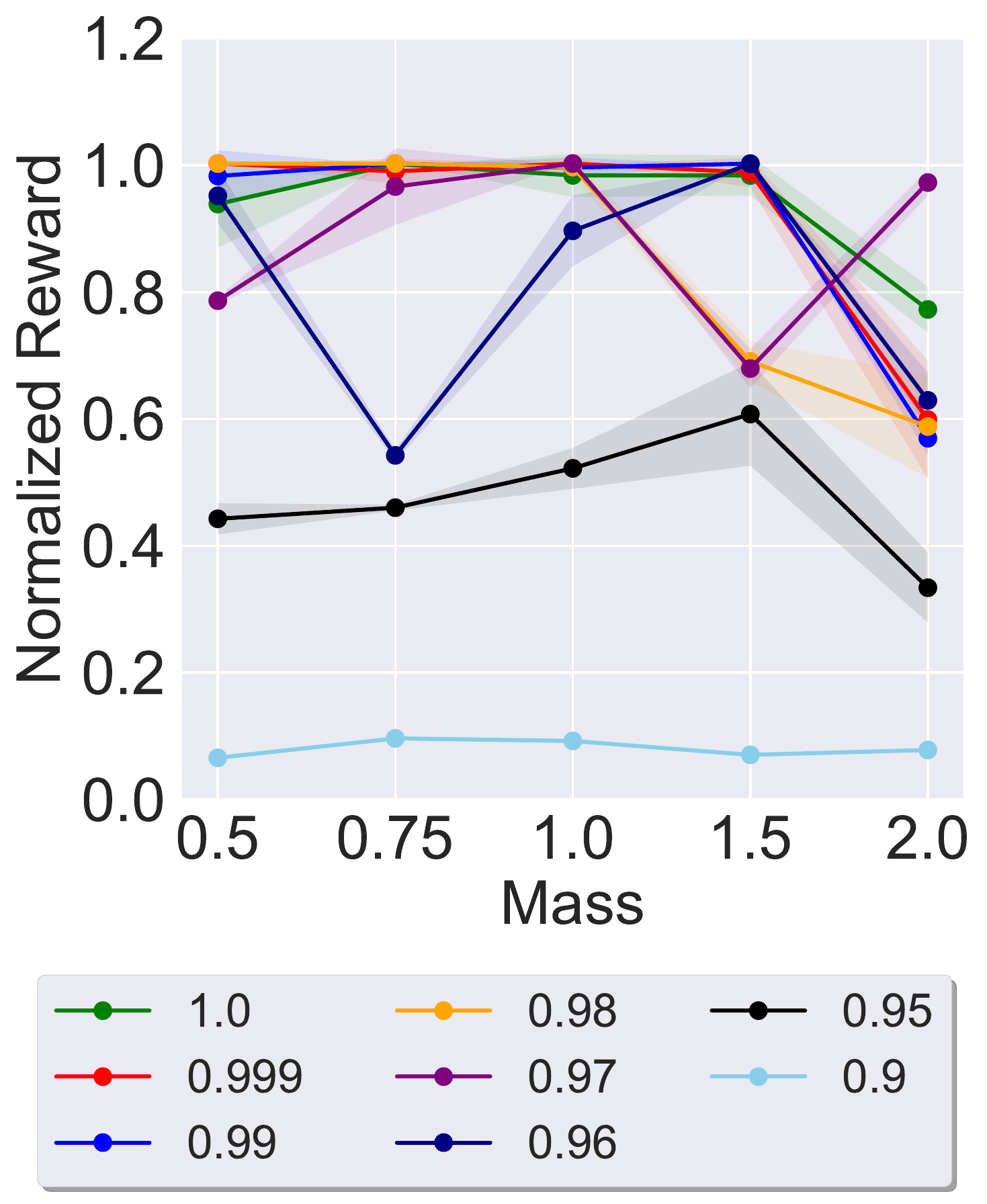}
     } &
\subfloat[Swimmer]{%
       \includegraphics[width=0.25\linewidth]{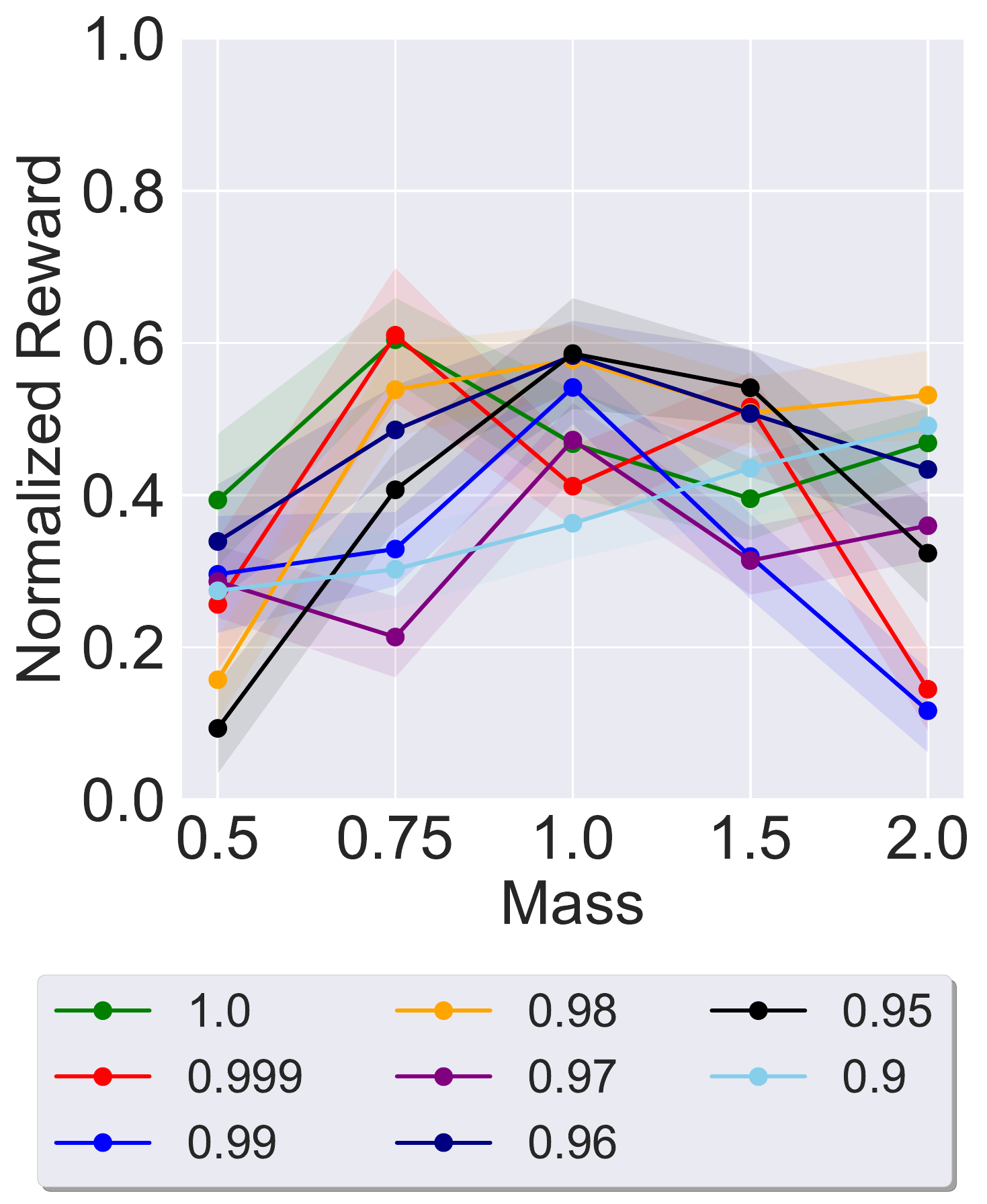}
     } &
     \\
\end{tabular}
\caption{The average (over $3$ seeds) transfer performance of Algorithm~\ref{alg:robust-gailfo} with different values of $\alpha$. The ablation shown here is used to choose $\alpha$ in Figure~\ref{fig:TransferMassFixedAlpha}. The x-axis denotes the relative mass of the learner environment $M^\mathrm{sim}$. The policies are evaluated in $M^\mathrm{real}_{c^*}$ over $1e5$ steps truncating the last episode if it does not terminate. Note that robust-GAILfO with $\alpha = 1$ corresponds to GAILfO.}
\label{fig:TransferMassAblation}
\end{figure}

\begin{table}[h]
\centering
\caption{Best value for $\alpha < 1$ chosen independently for each mismatch based on the ablation in Figure~\ref{fig:TransferFrictionAblation}. The performance of this configuration is reported by the red line in Figure~\ref{fig:TransferFrictionVarAlpha}. We add a $1$ in brackets when standard GAILfO outperforms the robust version. The value outside brackets denotes the best value found for the robust version.}
\begin{tabular}{l|lllll}
               & \multicolumn{5}{c}{Relative Friction}     \\
               & 0.5     & 0.75     & 1.0   & 1.5   & 2.0   \\ \hline
HalfCheetah & 0.999   & 0.999    & 0.999 & 0.999 & 0.999 \\
Walker    & 0.98    & 0.999    & 0.97  & 0.97  & 0.97  \\
Hopper     & 0.9 (1) & 0.99 (1) & 0.97  & 0.95  & 0.95 
\end{tabular}
\label{tab:best_alpha_friction_transfer}
\end{table}

\begin{table}[h]
\centering
\caption{Best value for $\alpha < 1$ chosen independently for each mismatch based on the ablation in Figure~\ref{fig:TransferMassAblation}. The performance of this configuration is reported by the red line in Figure~\ref{fig:TransferMassVarAlpha}. We add a $1$ in brackets when standard GAILfO outperforms the robust version. The value outside brackets denotes the best value found for the robust version.}
\begin{tabular}{l|lllll}
               & \multicolumn{5}{c}{Relative Mass}     \\
               & 0.5     & 0.75     & 1.0   & 1.5   & 2.0   \\ \hline
HalfCheetah & 0.96   & 0.97    & 0.98 & 0.96 & 0.97 \\
Walker    & 0.98    & 0.95    & 0.97  & 0.999  & 0.98  \\
Hopper      & 0.9 & 0.97 & 0.97  & 0.98  & 0.999 \\
InvDoublePendulum & 0.98 & 0.99 & 0.97 & 0.96 & 0.97 \\
Swimmer & 0.96 (1) & 0.999 (1) & 0.95 & 0.95 & 0.98 \\
\end{tabular}
\label{tab:best_alpha_mass_transfer}
\end{table}

\begin{figure}[!h] 
\centering
\begin{tabular}{ccc}
\subfloat[HalfCheetah]{%
       \includegraphics[width=0.25\linewidth]{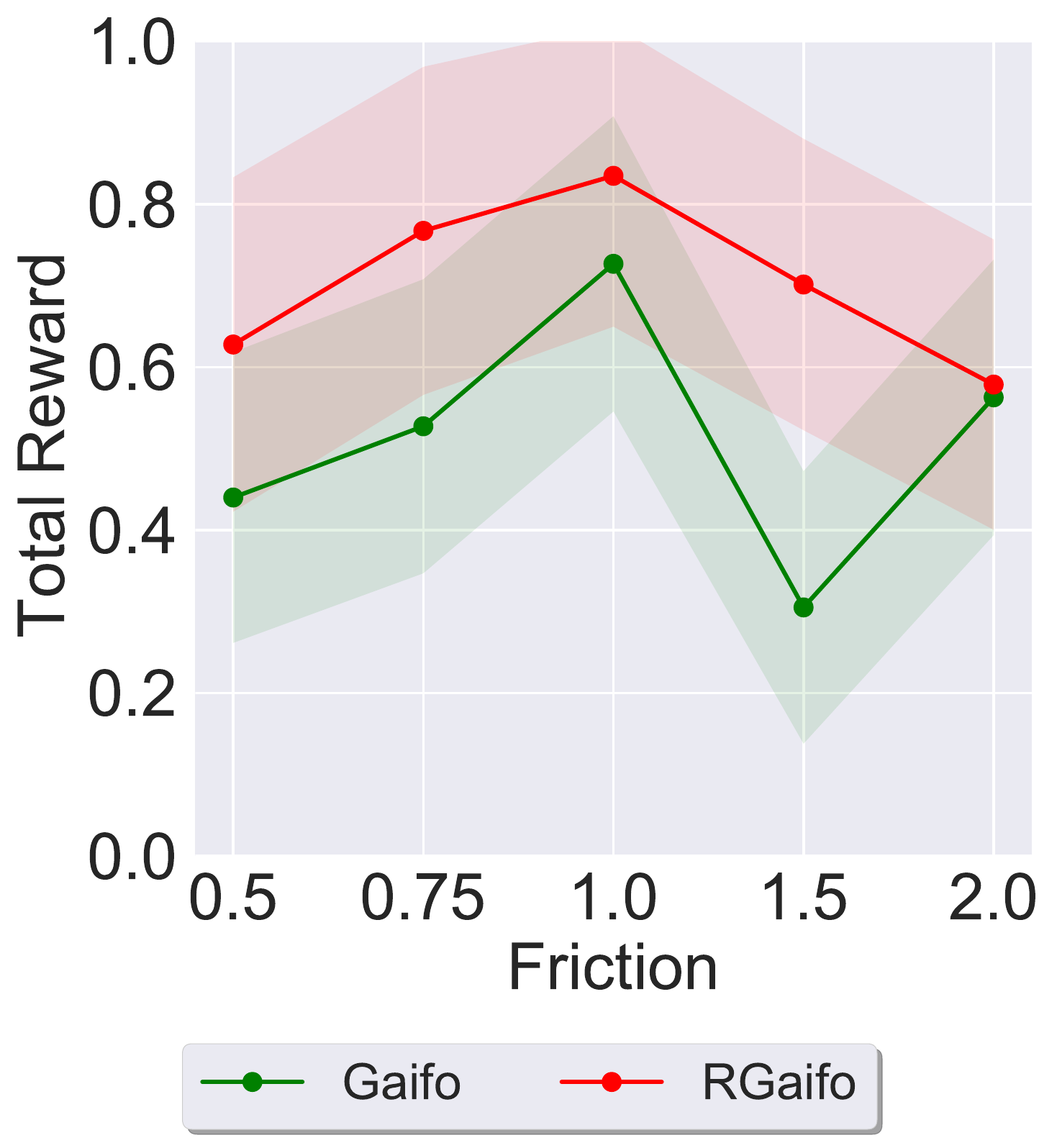}
     } &
\subfloat[Walker]{%
       \includegraphics[width=0.25\linewidth]{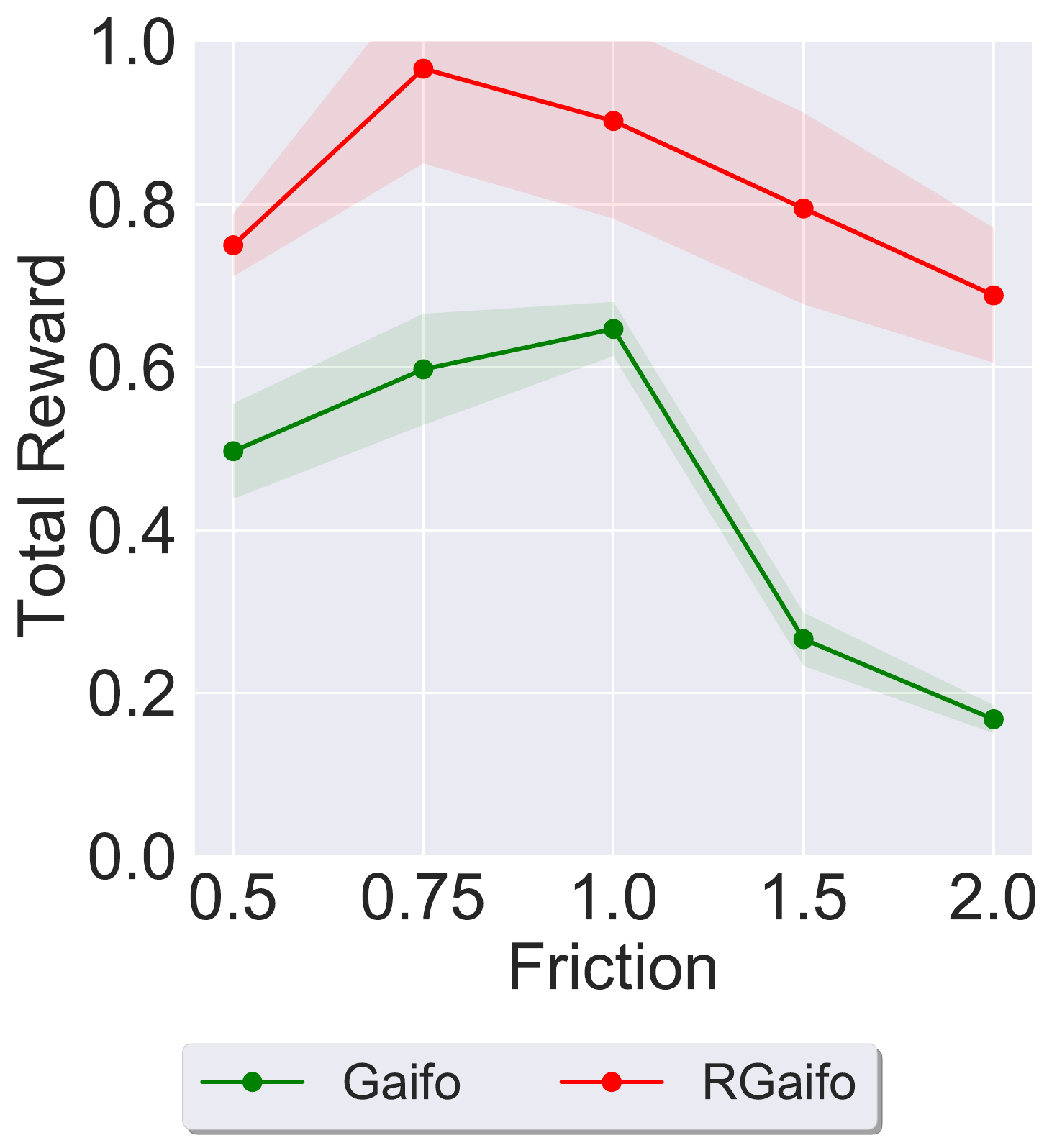}
     } &
\subfloat[Hopper]{%
       \includegraphics[width=0.25\linewidth]{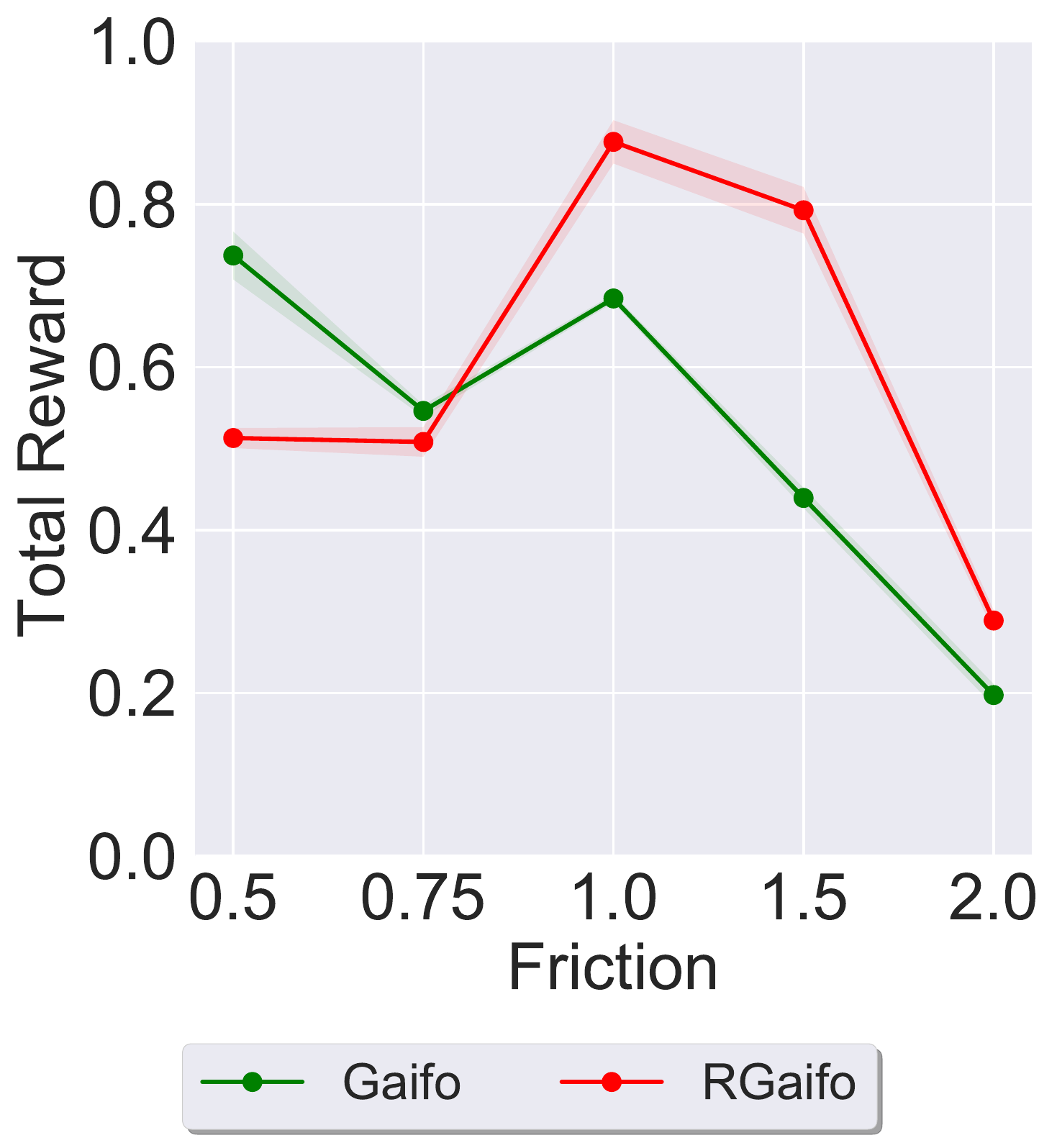}
     } \\
\end{tabular}
\caption{Average performance (over $3$ seeds ) of Algorithm~\ref{alg:robust-gailfo} with the value of $\alpha$ that is chosen indipendently for each mismatch (i.e. each point on the x-axis). The choice is made picking the best performing $\alpha$ for each mismatch in Figure~\ref{fig:TransferFrictionAblation}. The x-axis reports the relative friction of the learner environment. The policies are evaluated over $1e5$ steps truncating the last episode if it does not terminate.The values chosen for $\alpha$ are given in Table~\ref{tab:best_alpha_friction_transfer}.}
\label{fig:TransferFrictionVarAlpha}
\end{figure}

\begin{figure}[!h] 
\centering
\begin{tabular}{ccc}
\subfloat[HalfCheetah]{%
       \includegraphics[width=0.25\linewidth]{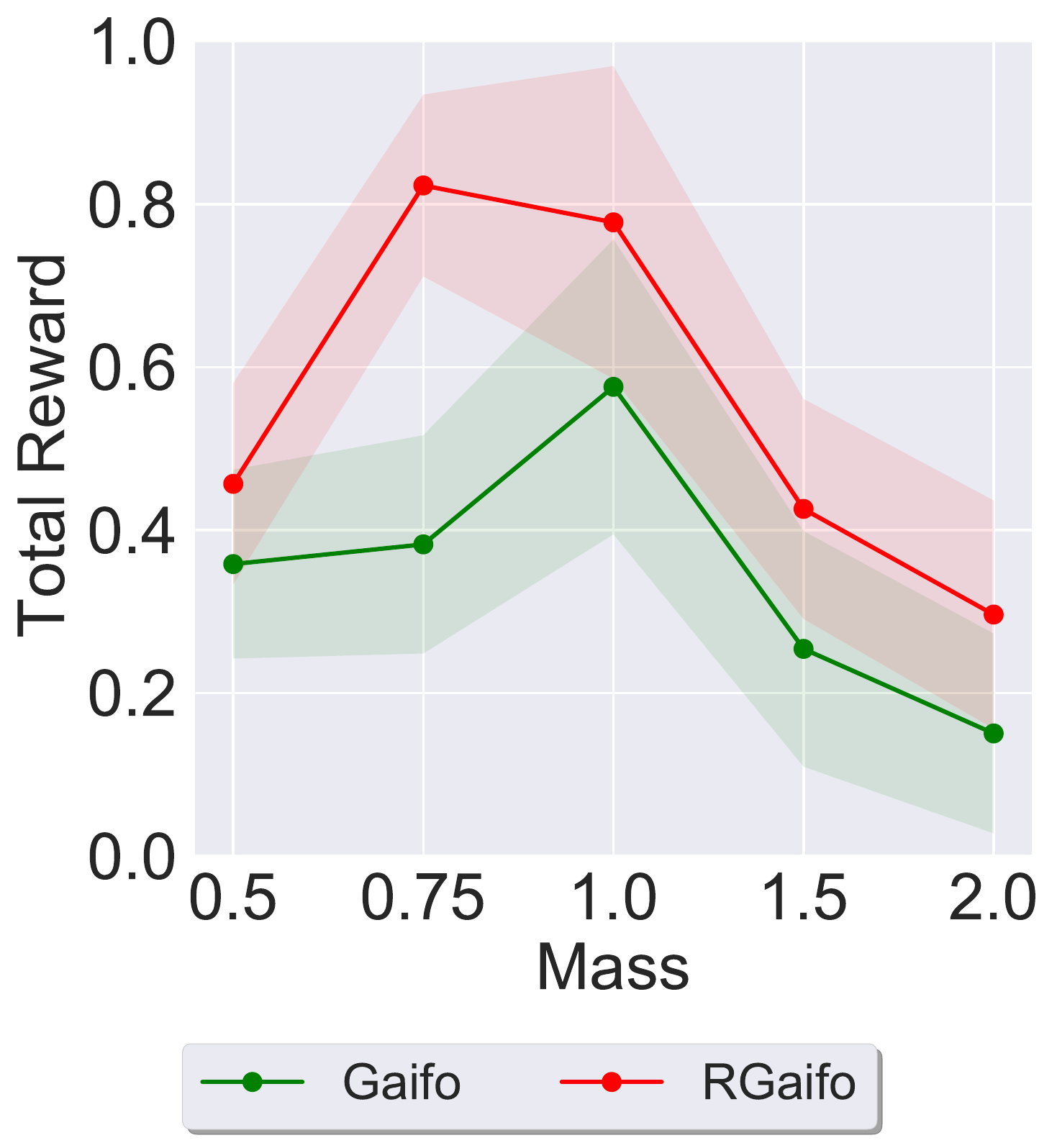}
     } &
\subfloat[Walker]{%
       \includegraphics[width=0.25\linewidth]{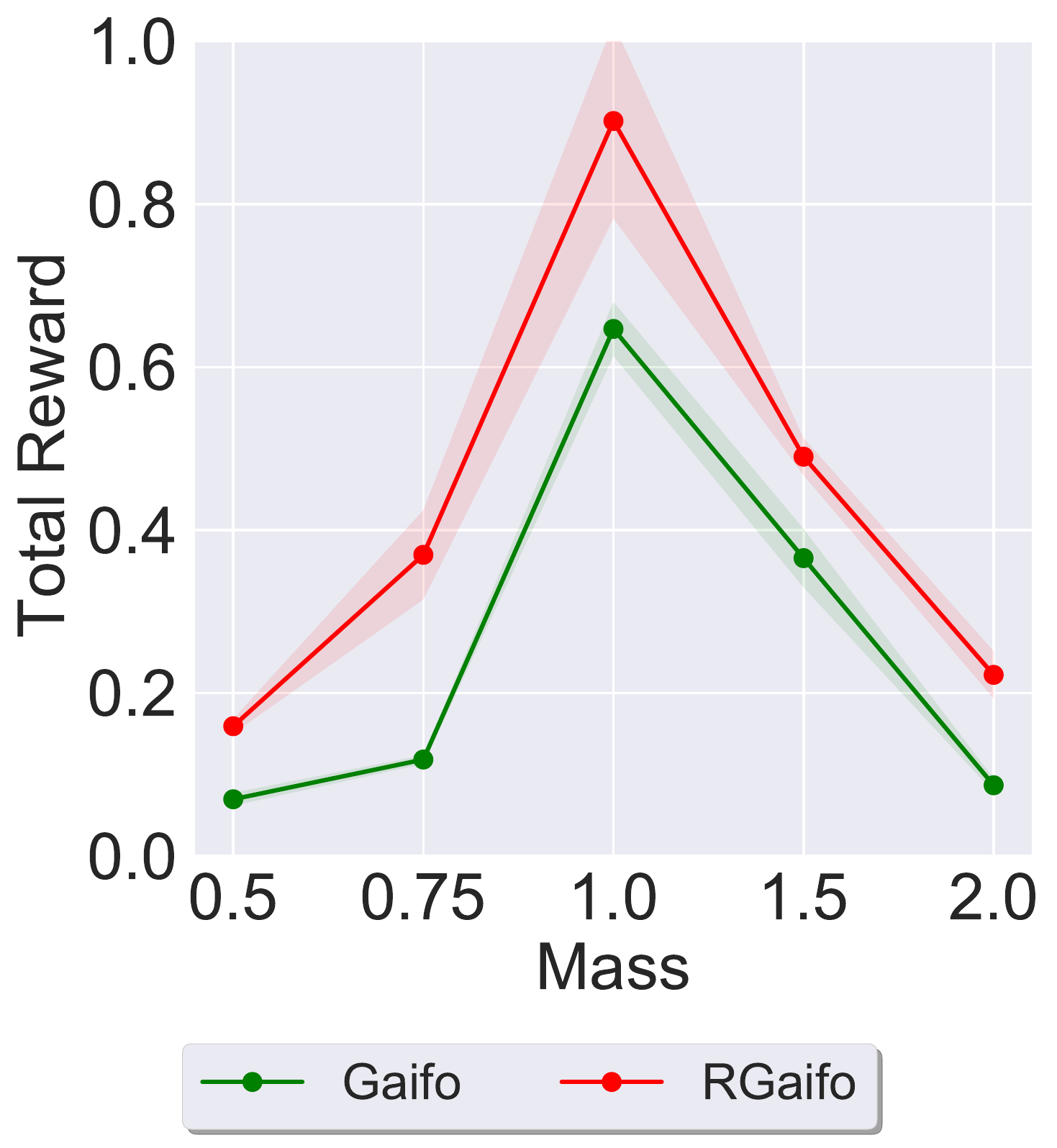}
     } &
\subfloat[Hopper]{%
       \includegraphics[width=0.25\linewidth]{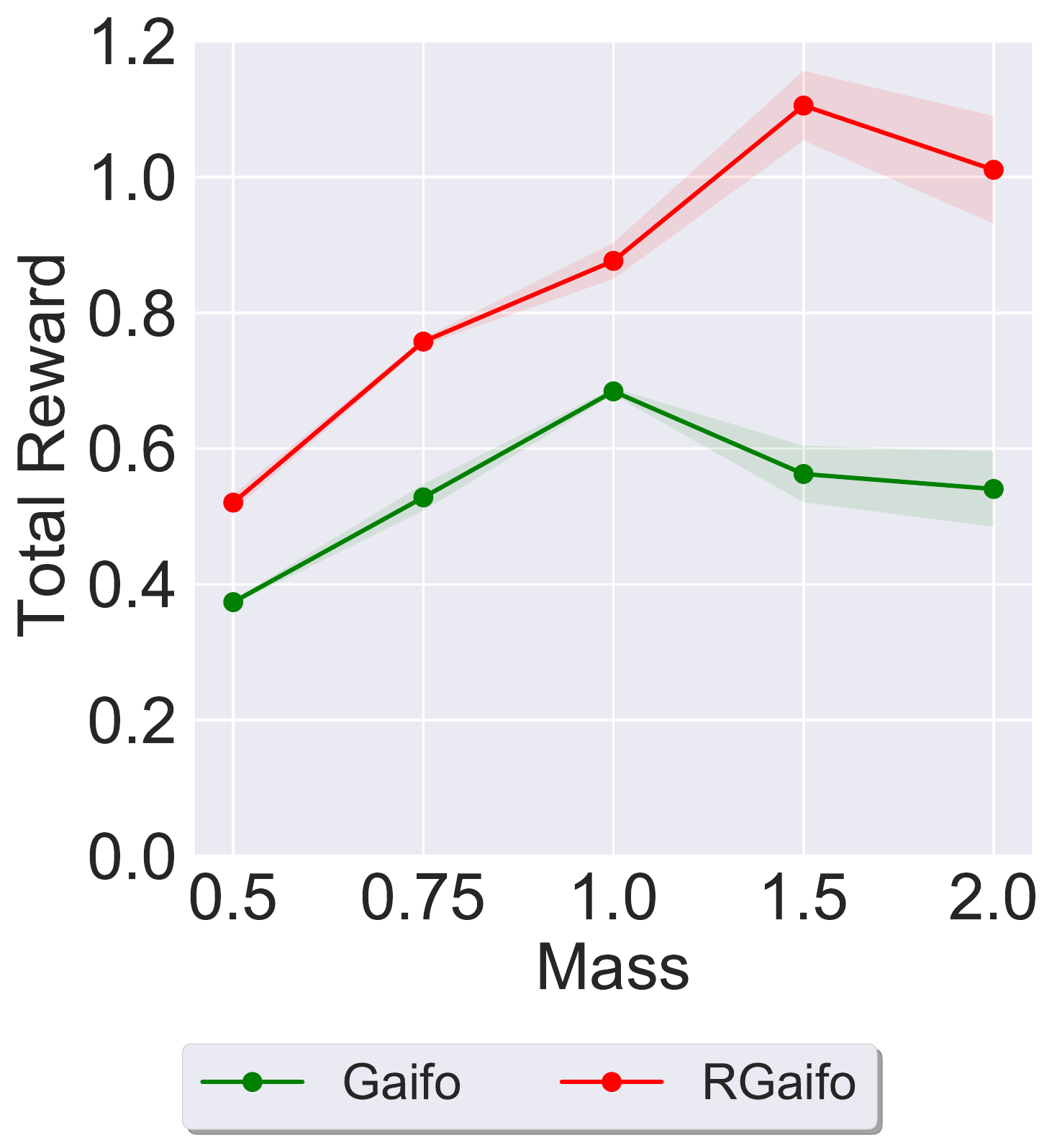}
     } \\
     \subfloat[InvDoublePend]{%
       \includegraphics[width=0.25\linewidth]{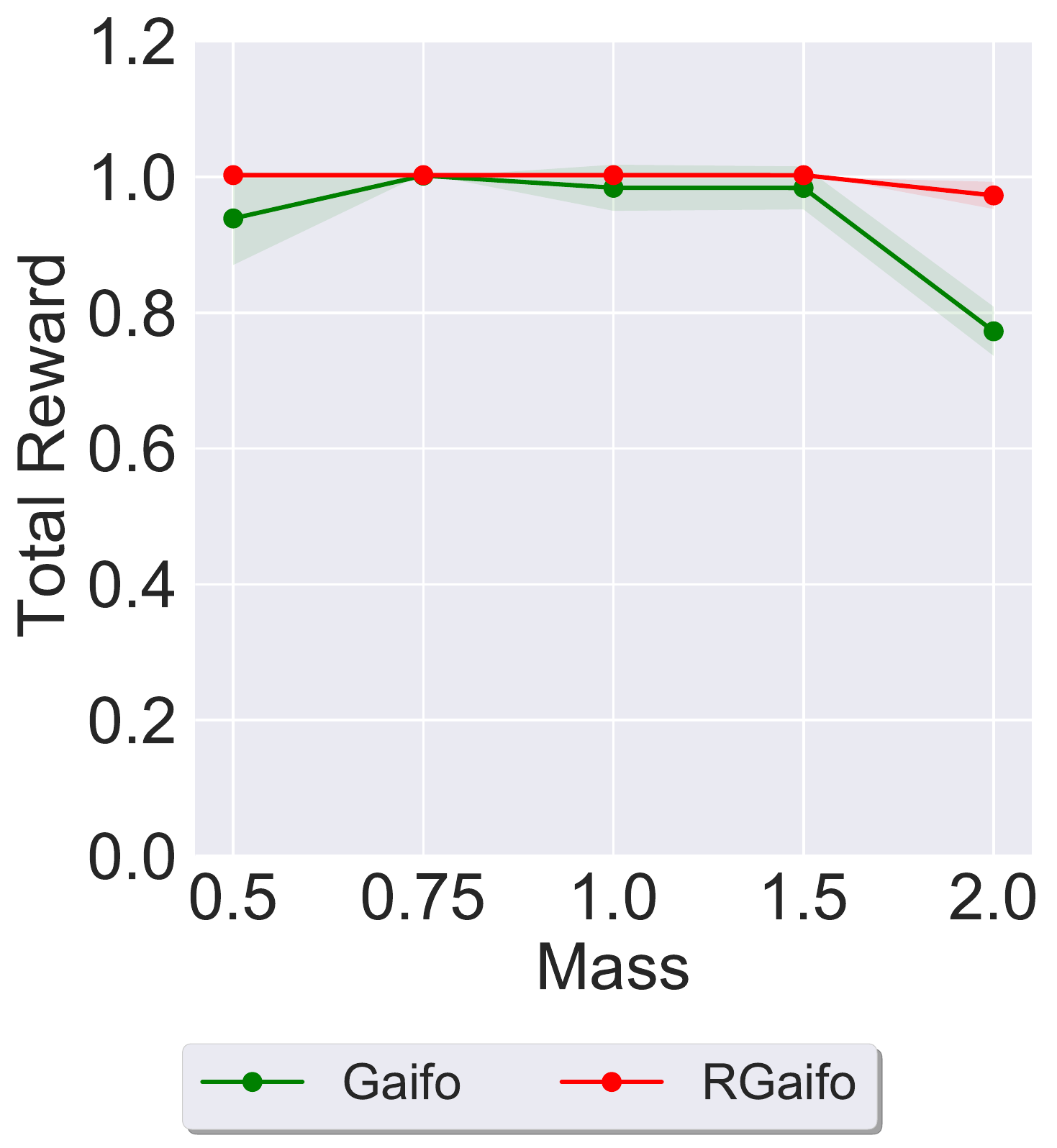}
     } &
\subfloat[Swimmer]{%
       \includegraphics[width=0.25\linewidth]{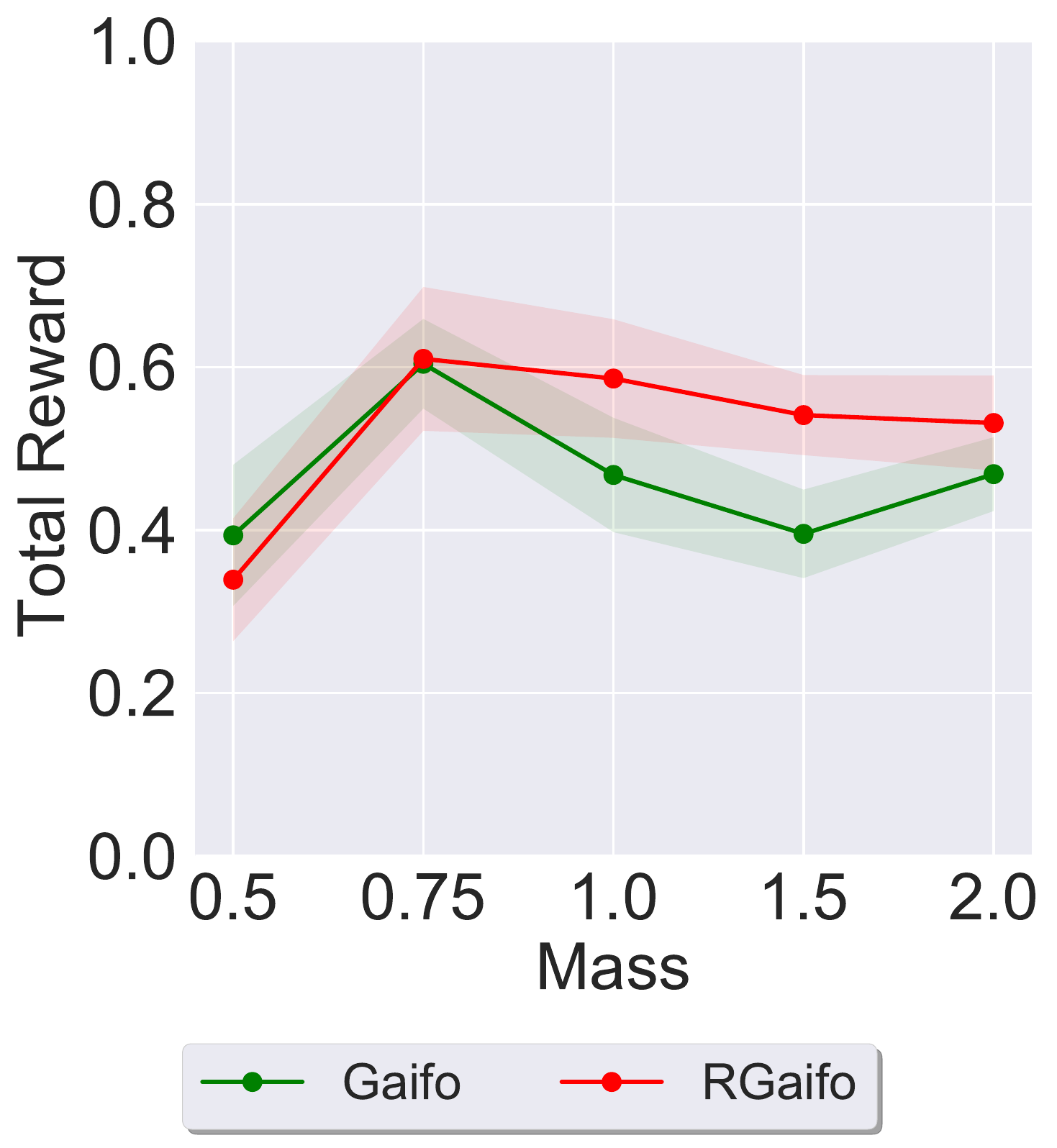}
     } &
     \\
\end{tabular}
\caption{Average performance (over $3$ seeds ) of Algorithm~\ref{alg:robust-gailfo} with the value of $\alpha$ that is chosen indipendently for each mismatch (i.e. each point on the x-axis). The choice is made picking the best performing $\alpha$ for each mismatch in Figure~\ref{fig:TransferMassAblation}. The x-axis reports the relative mass of the learner environment. The policies are evaluated over $1e5$ steps truncating the last episode if it does not terminate. The values chosen for $\alpha$ are given in Table~\ref{tab:best_alpha_mass_transfer}.}
\label{fig:TransferMassVarAlpha}
\end{figure}

\clearpage

\section{Robust Performance: MuJoCo}
\label{app:robust-results}

We present the following results:
\begin{itemize}
\item The ablation study on the robust performance of Algorithm~\ref{alg:robust-gailfo} with different values of $\alpha$ under the relative friction variations (see Figure~\ref{fig:RobustnessFrictionAblation}).
\item The ablation study on the robust performance of Algorithm~\ref{alg:robust-gailfo} with different values of $\alpha$ under the relative mass variations (see Figure~\ref{fig:RobustnessMassAblation}).
\end{itemize}

\begin{figure*}[!h] 
\centering
\begin{tabular}{ccccc}
\subfloat[HalfCheetah]{%
       \includegraphics[width=0.16\linewidth]{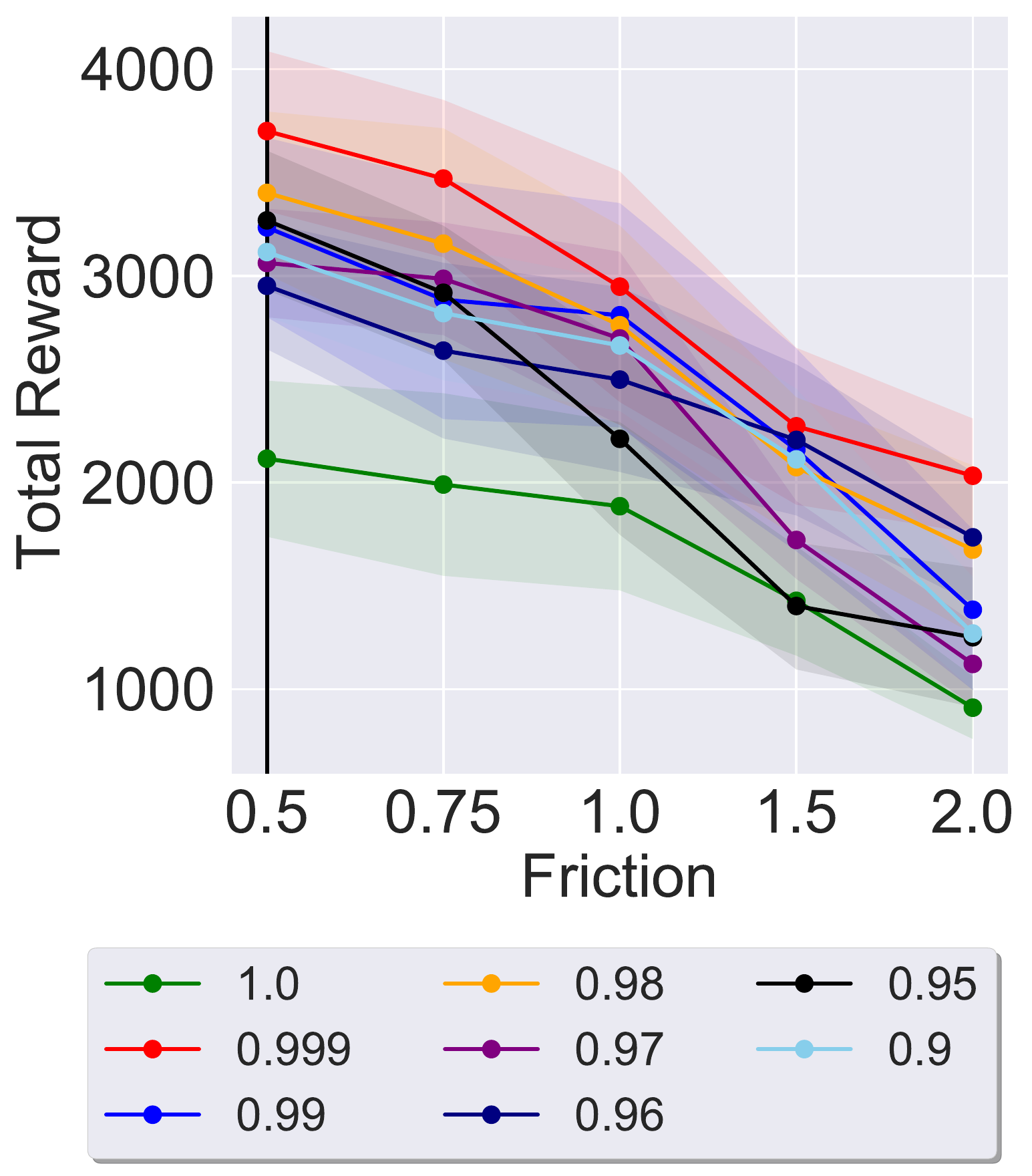}
     } &
\subfloat[HalfCheetah]{%
       \includegraphics[width=0.16\linewidth]{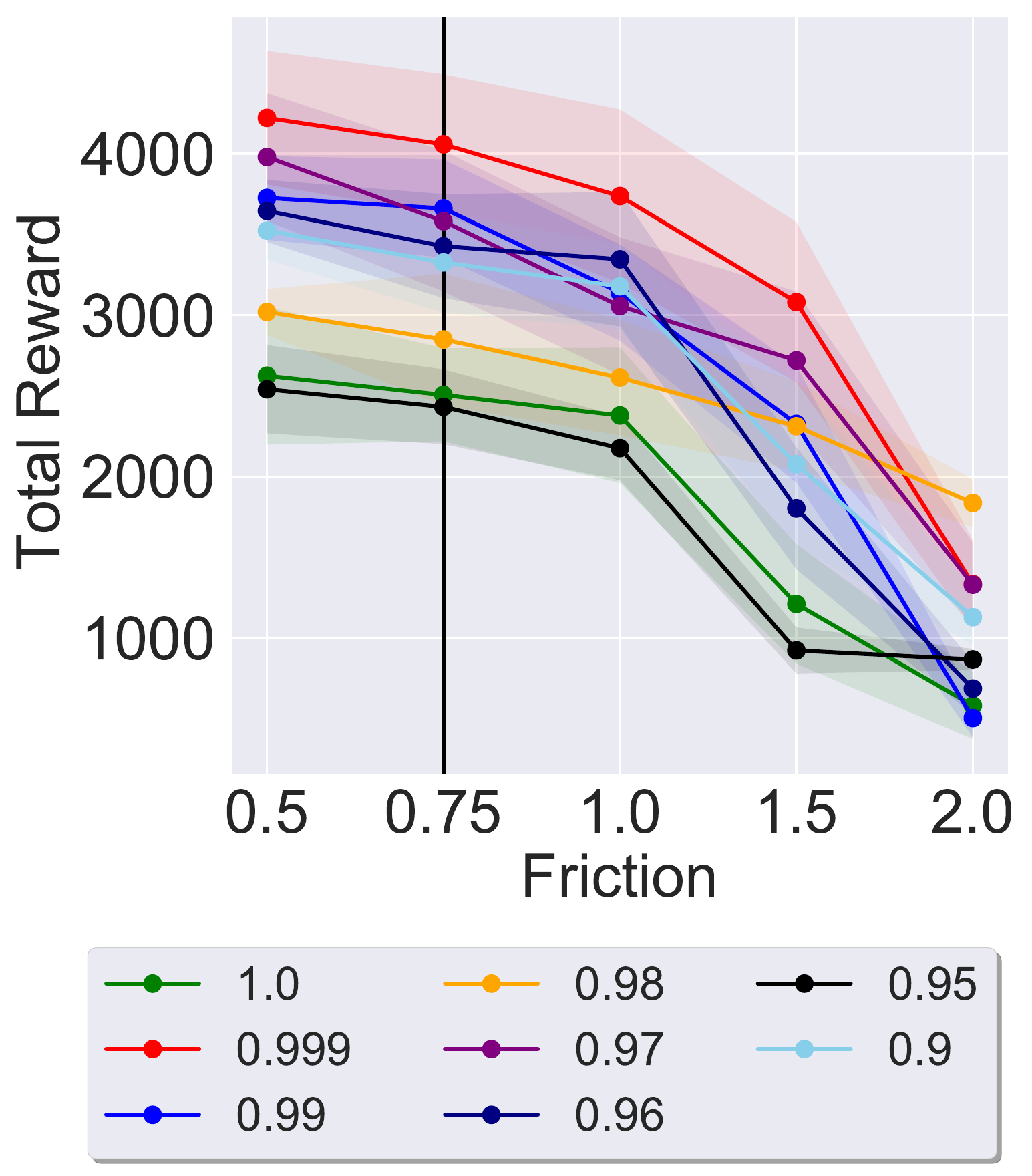}
     } &
\subfloat[HalfCheetah]{%
       \includegraphics[width=0.16\linewidth]{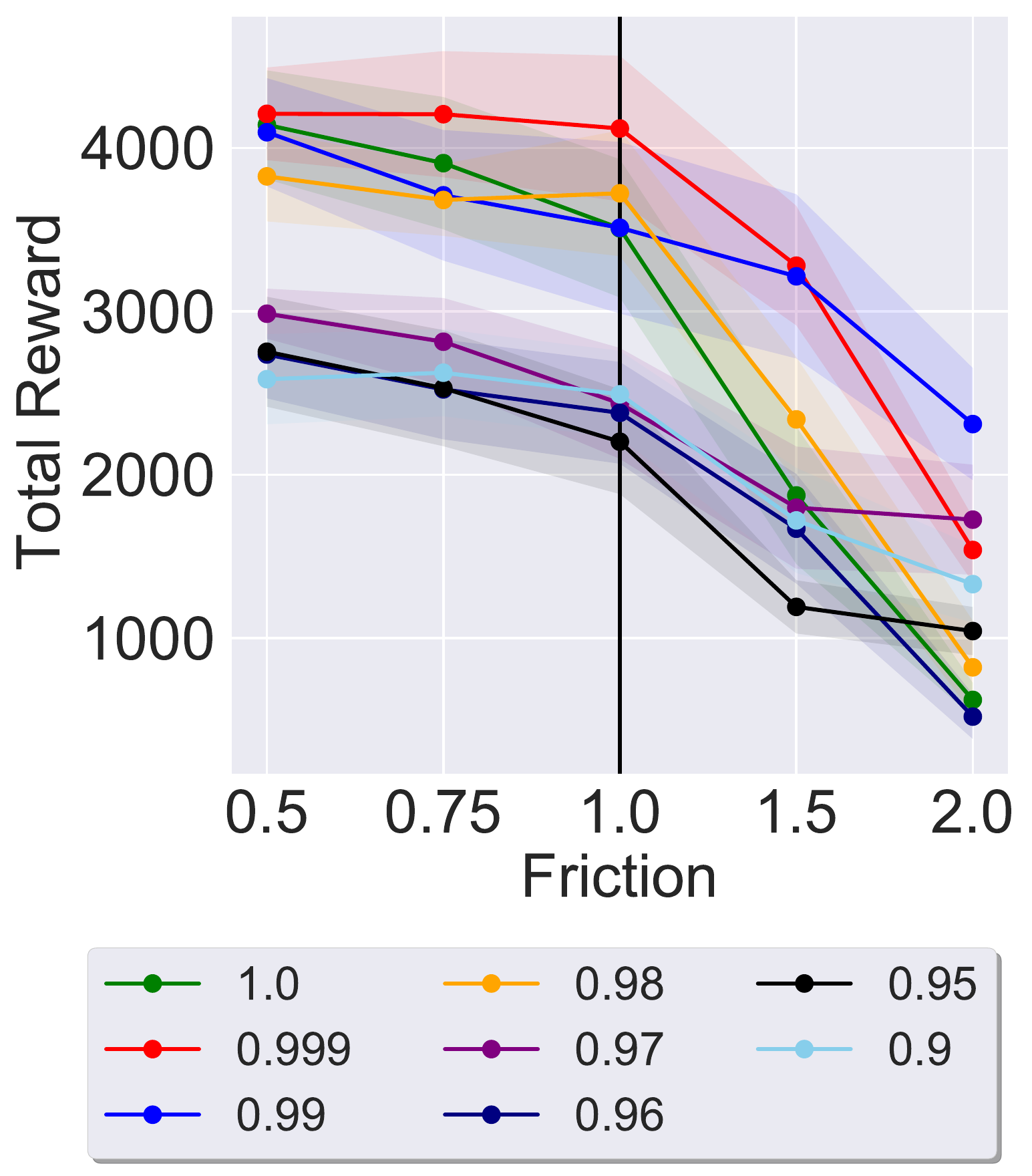}
     } &
\subfloat[HalfCheetah]{%
       \includegraphics[width=0.16\linewidth]{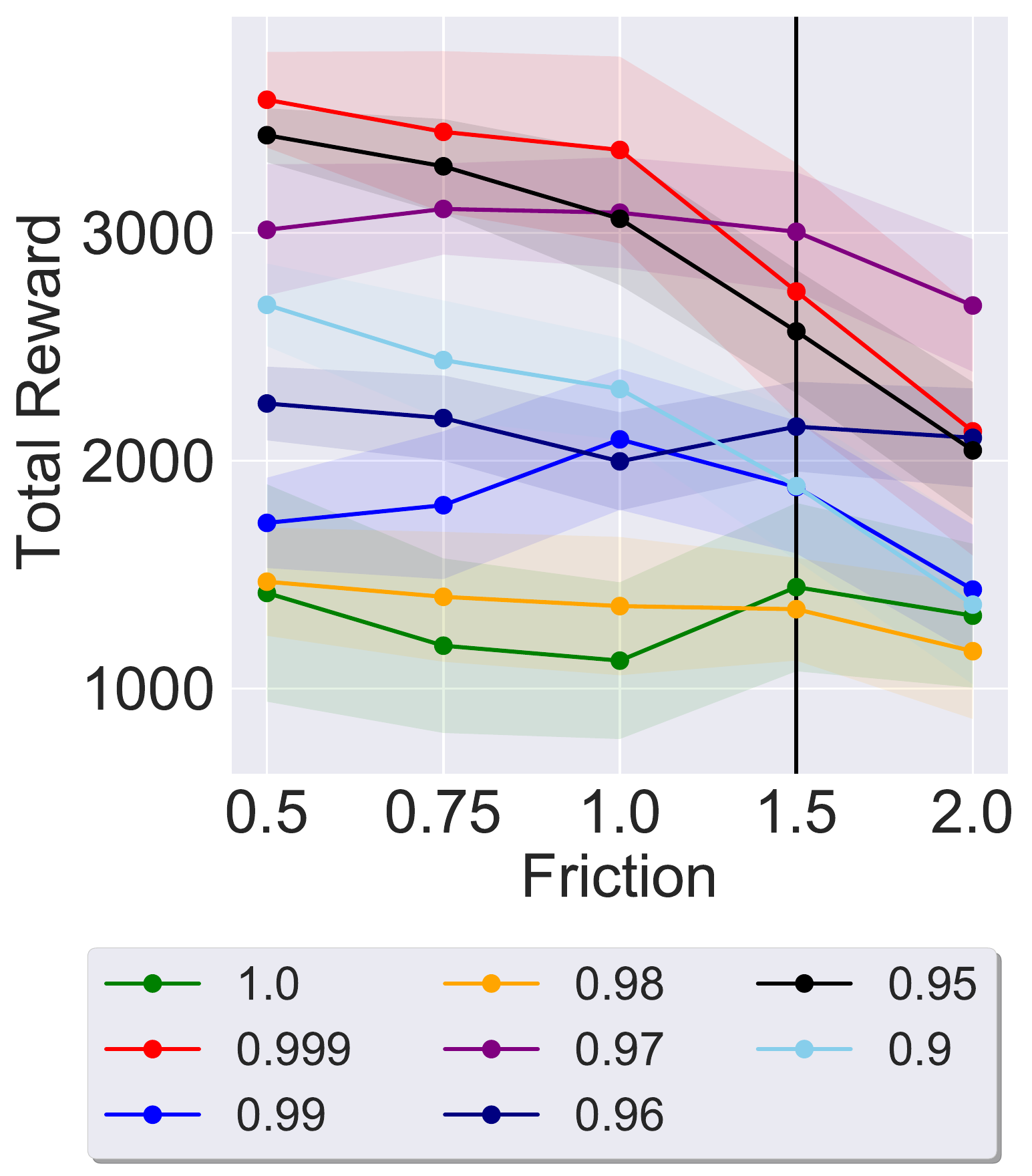}
     } &
\subfloat[HalfCheetah]{%
       \includegraphics[width=0.16\linewidth]{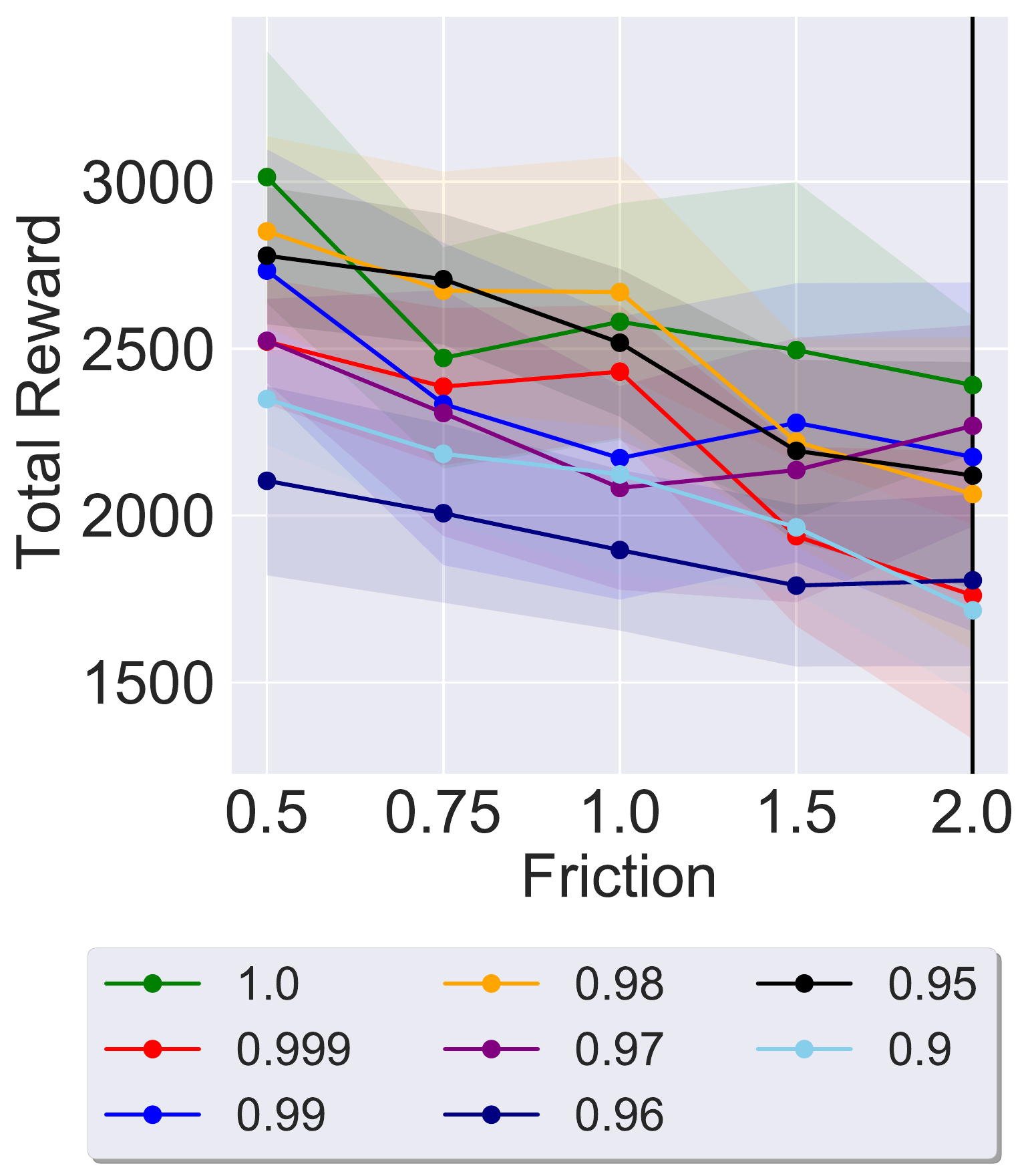}
     } \\
\subfloat[Walker]{%
       \includegraphics[width=0.16\linewidth]{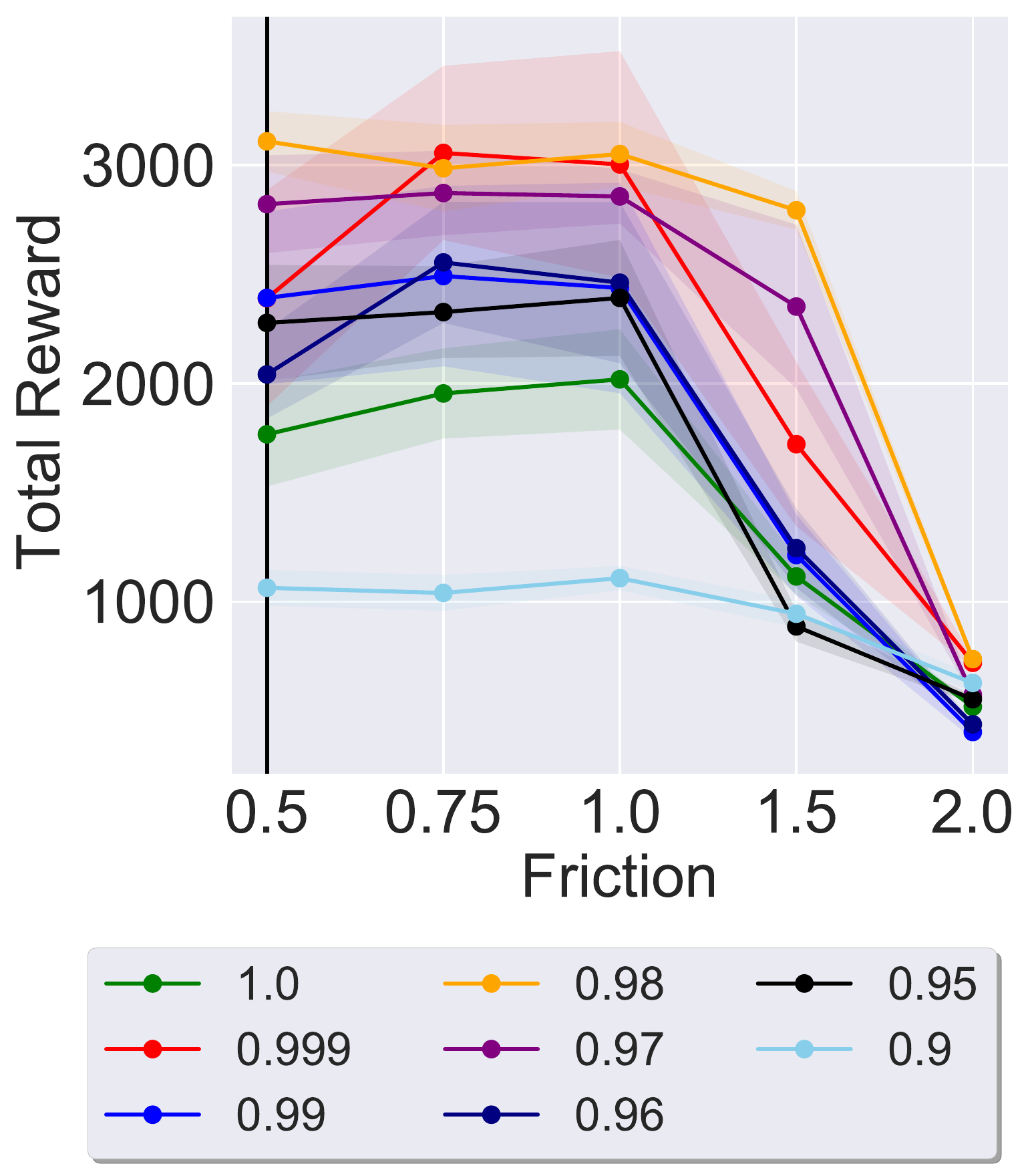}
     } &
\subfloat[Walker]{%
       \includegraphics[width=0.16\linewidth]{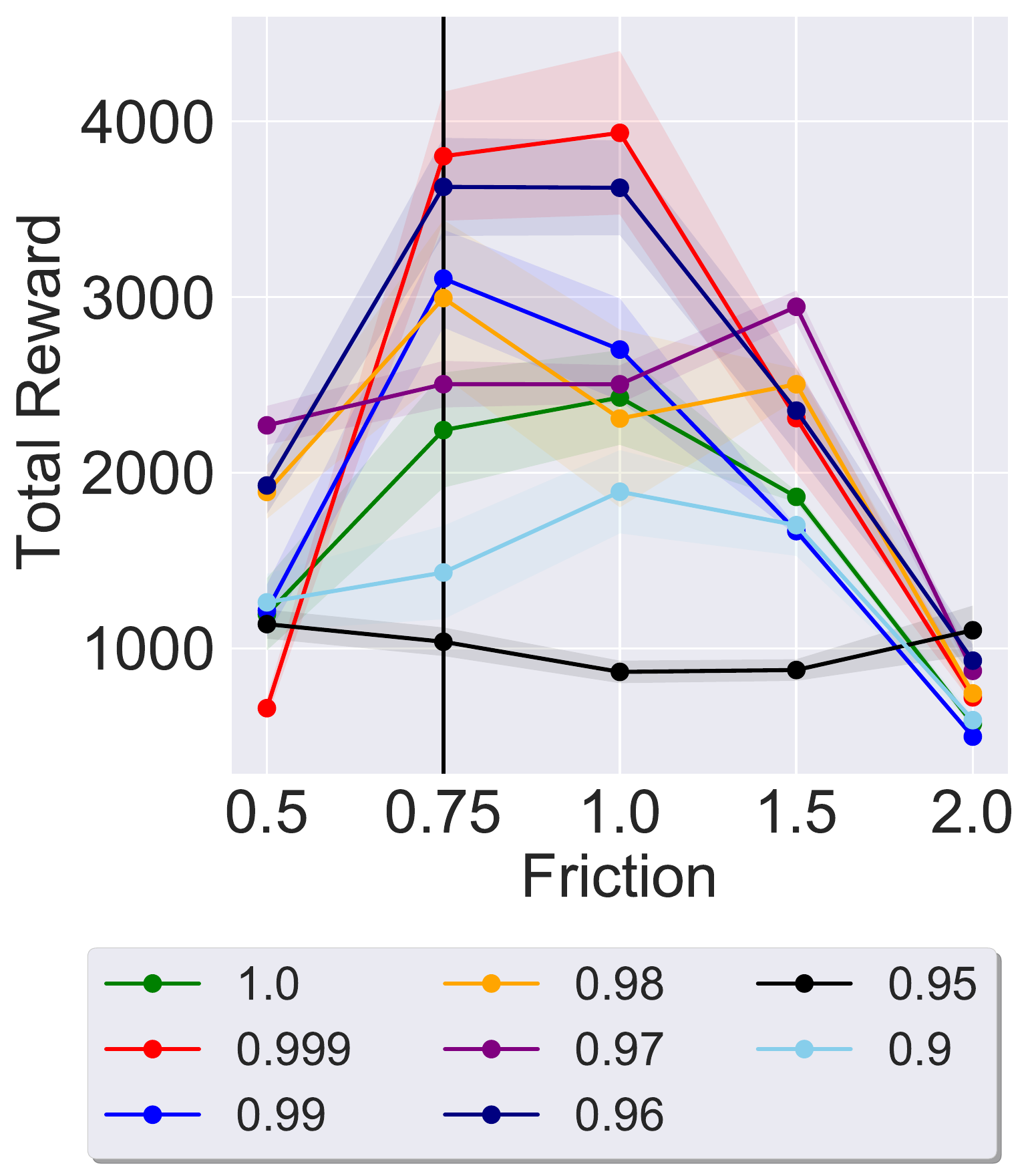}
     } &
\subfloat[Walker]{%
       \includegraphics[width=0.16\linewidth]{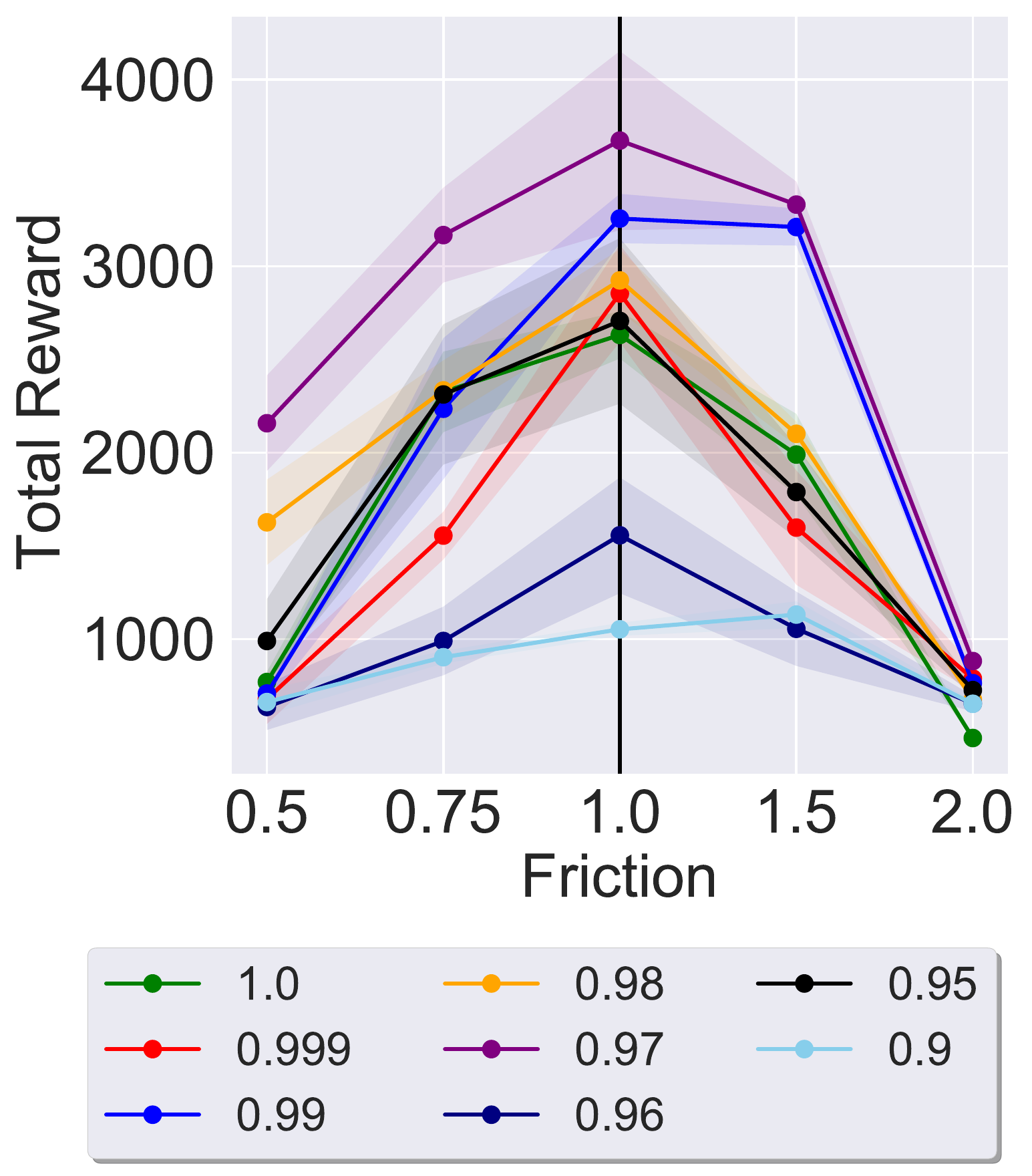}
     } &
\subfloat[Walker]{%
       \includegraphics[width=0.16\linewidth]{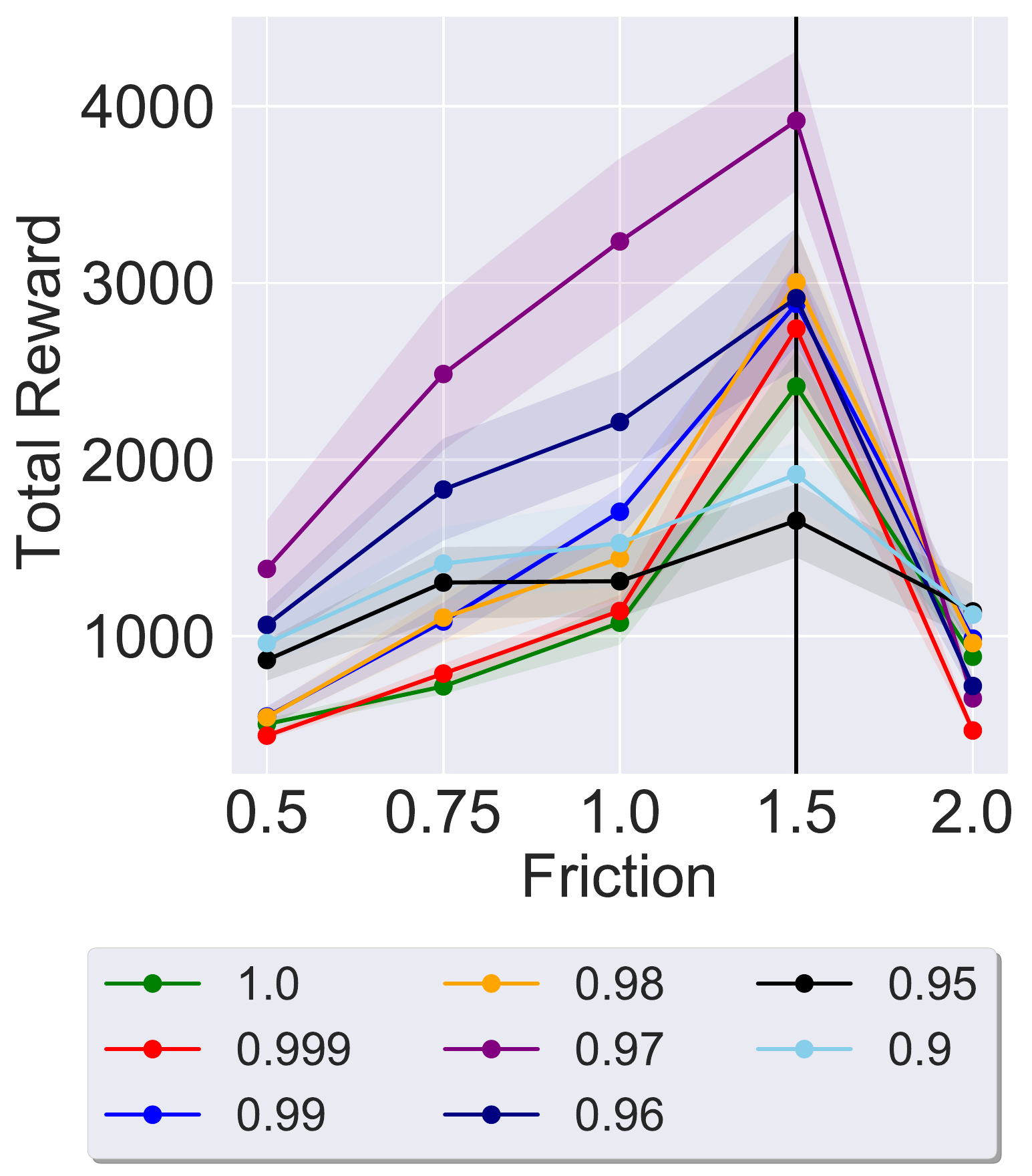}
     } &
\subfloat[Walker]{%
       \includegraphics[width=0.16\linewidth]{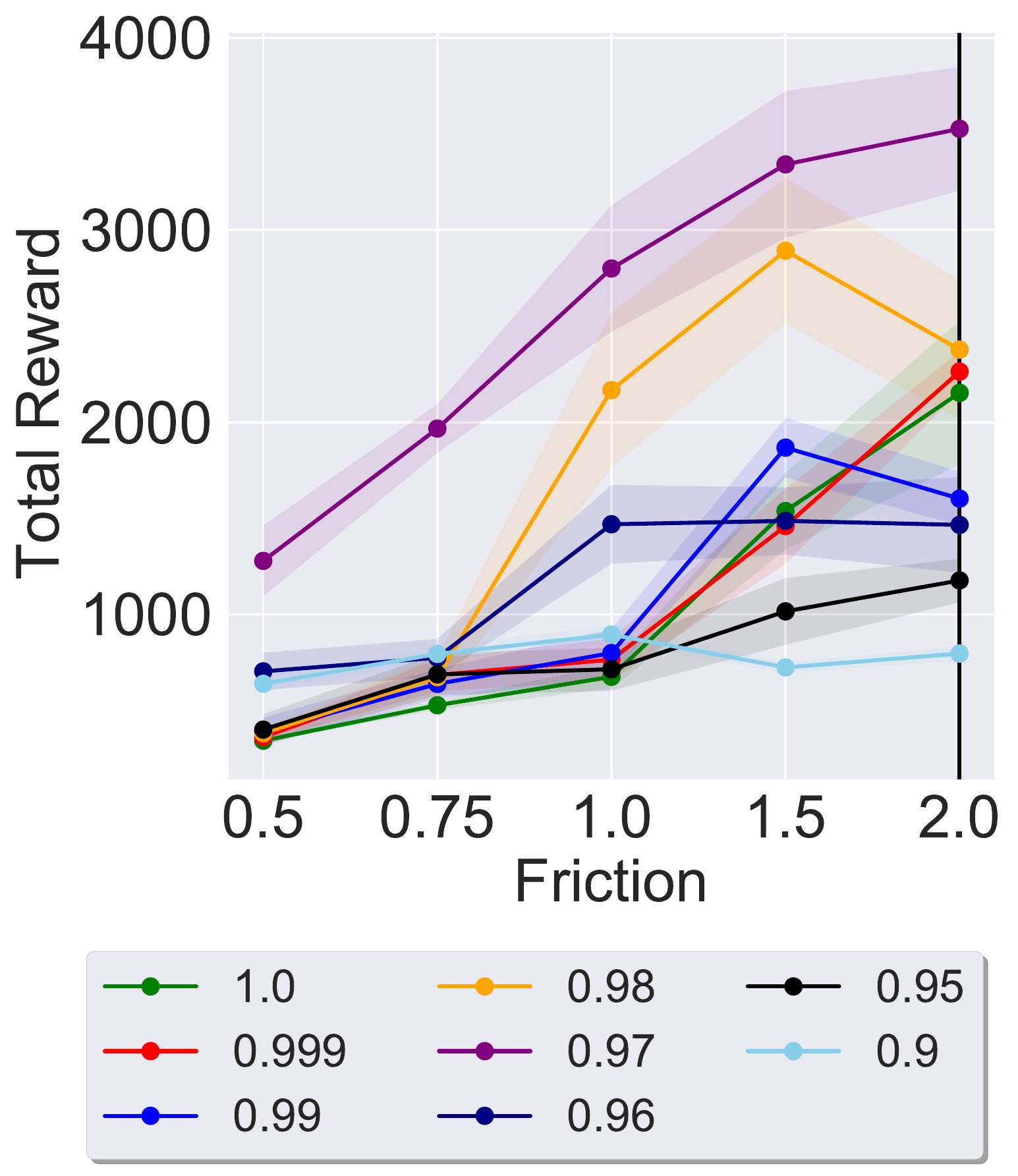}
     } \\
\subfloat[Hopper]{%
       \includegraphics[width=0.16\linewidth]{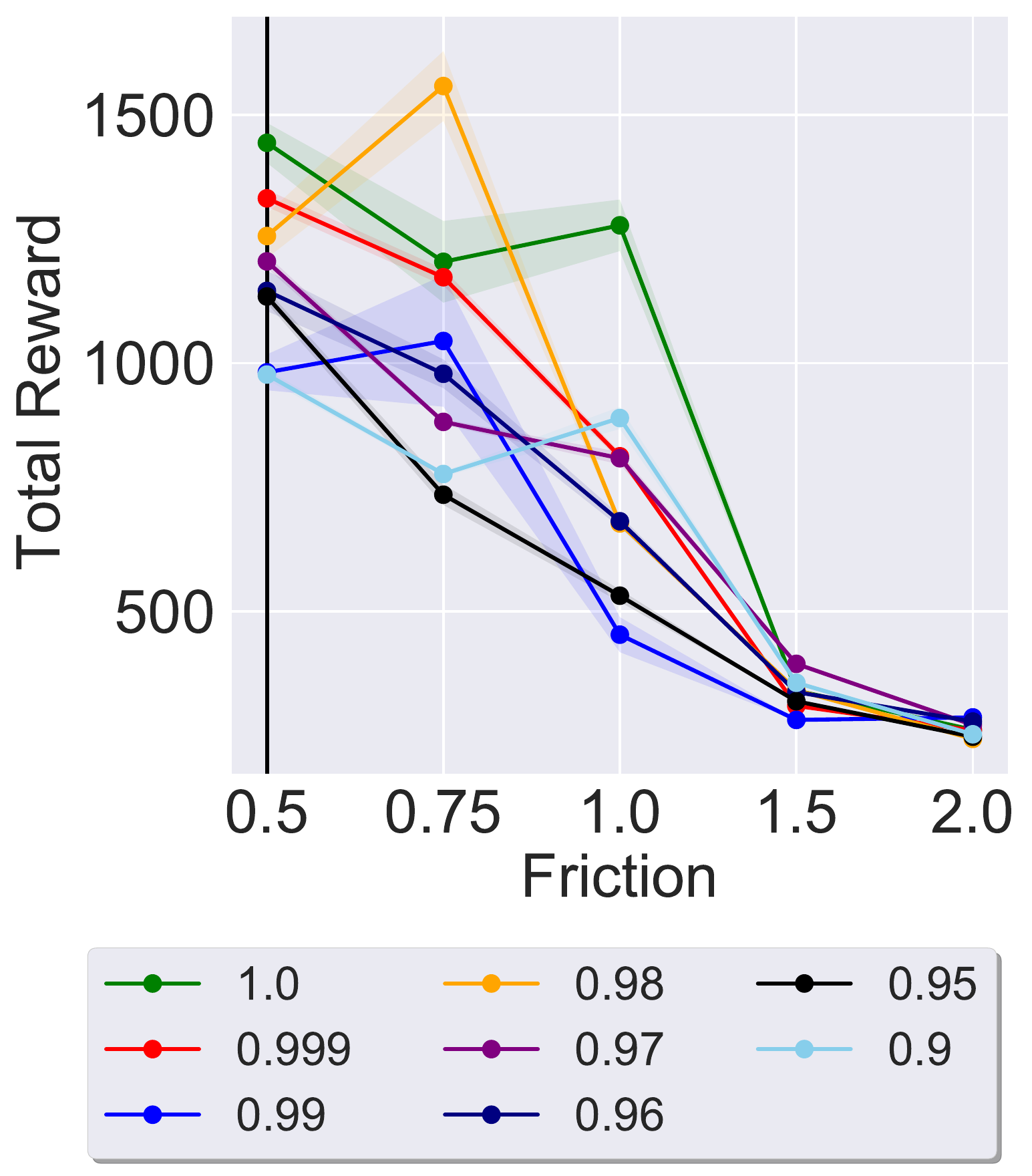}
     } &
\subfloat[Hopper]{%
       \includegraphics[width=0.16\linewidth]{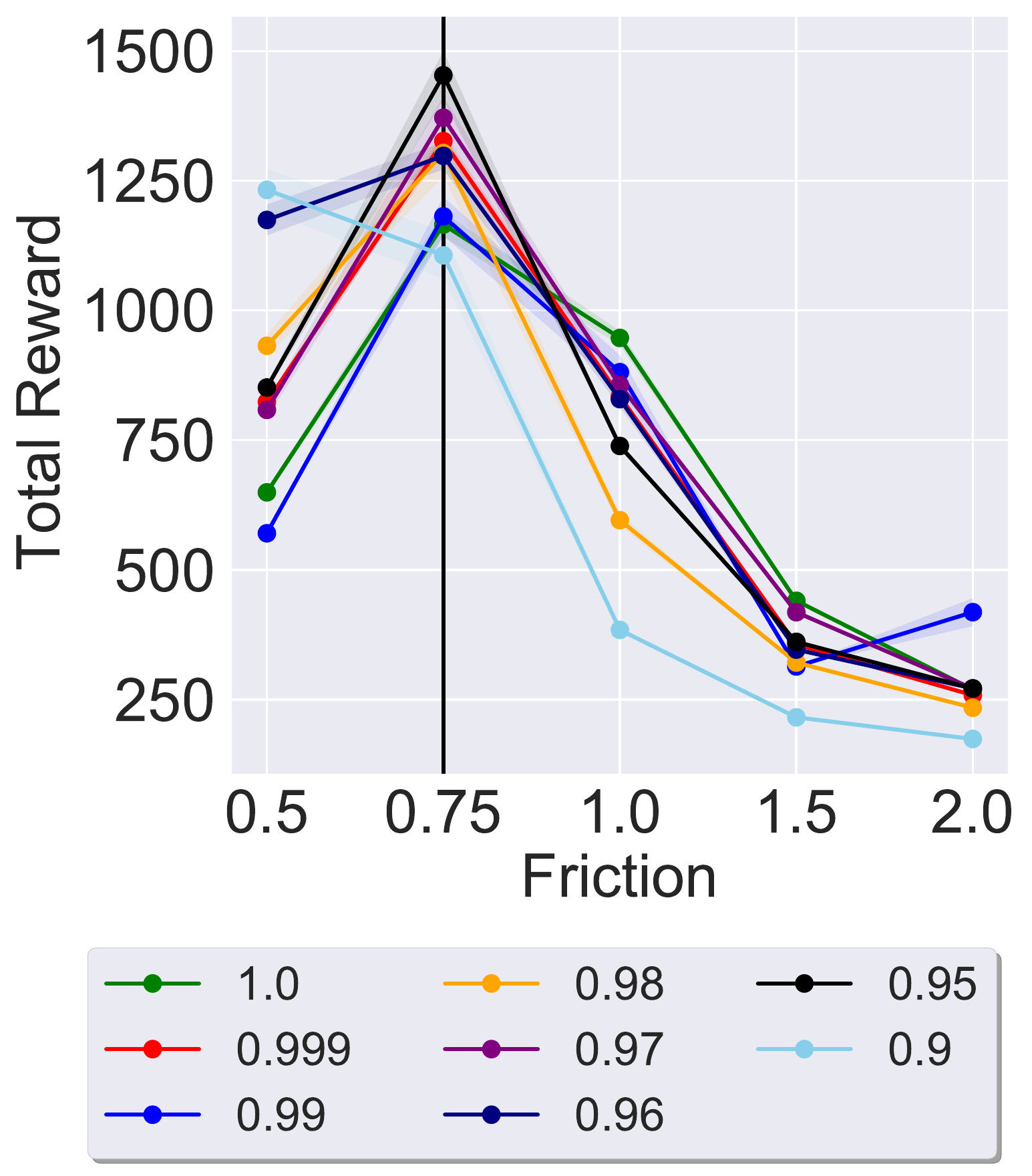}
     } &
\subfloat[Hopper]{%
       \includegraphics[width=0.16\linewidth]{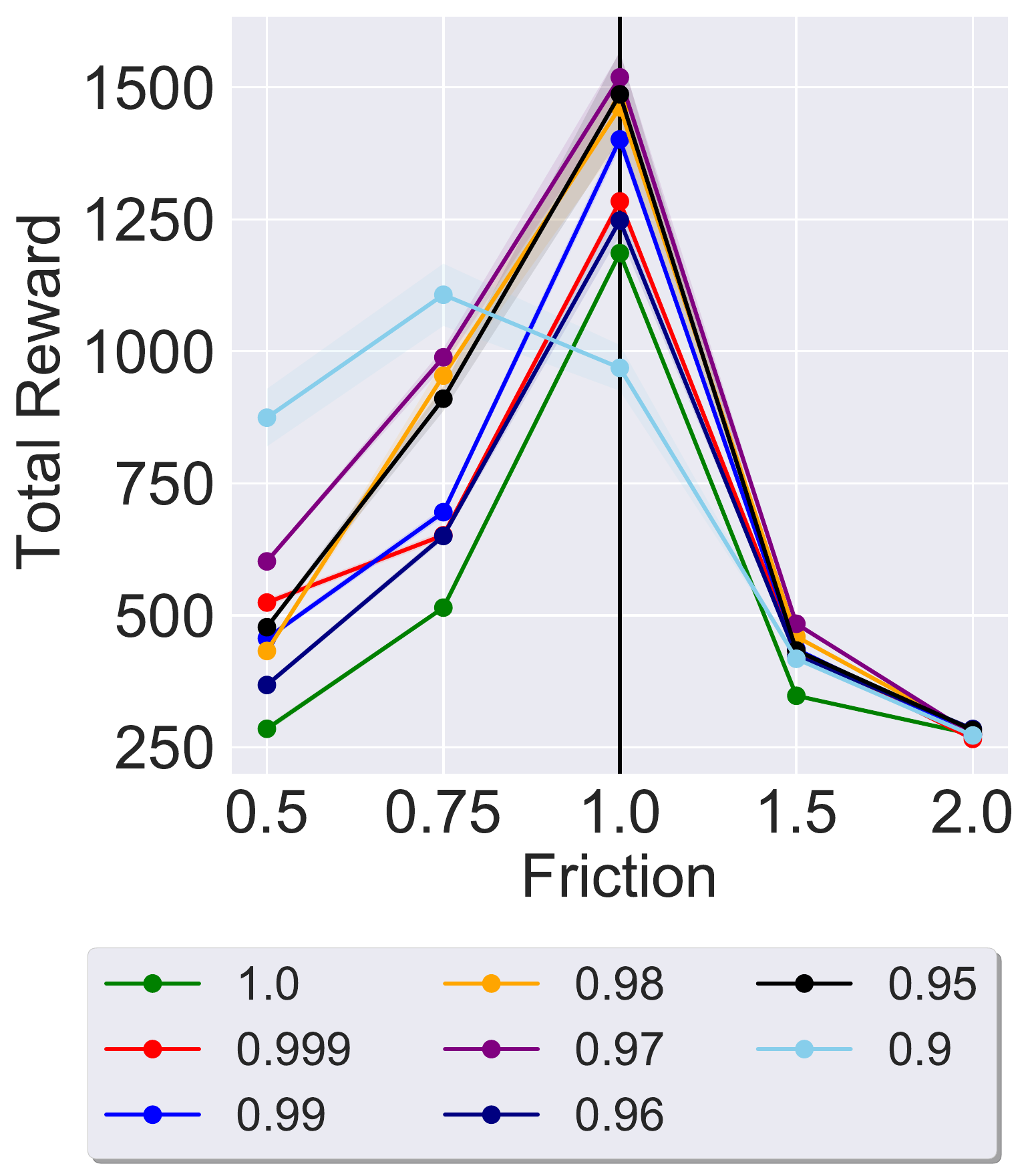}
     } &
\subfloat[Hopper]{%
       \includegraphics[width=0.16\linewidth]{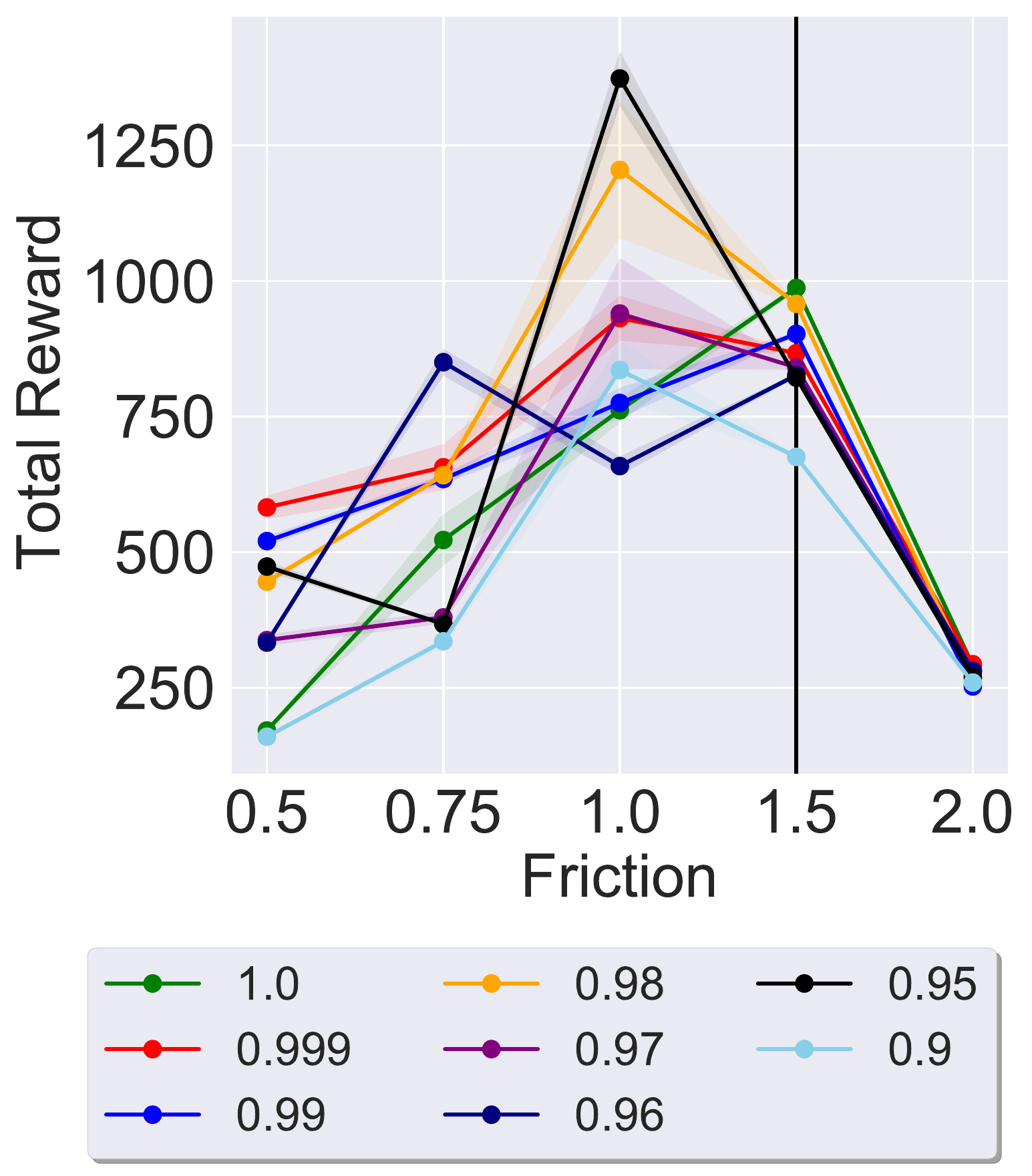}
     } &
\subfloat[Hopper]{%
       \includegraphics[width=0.16\linewidth]{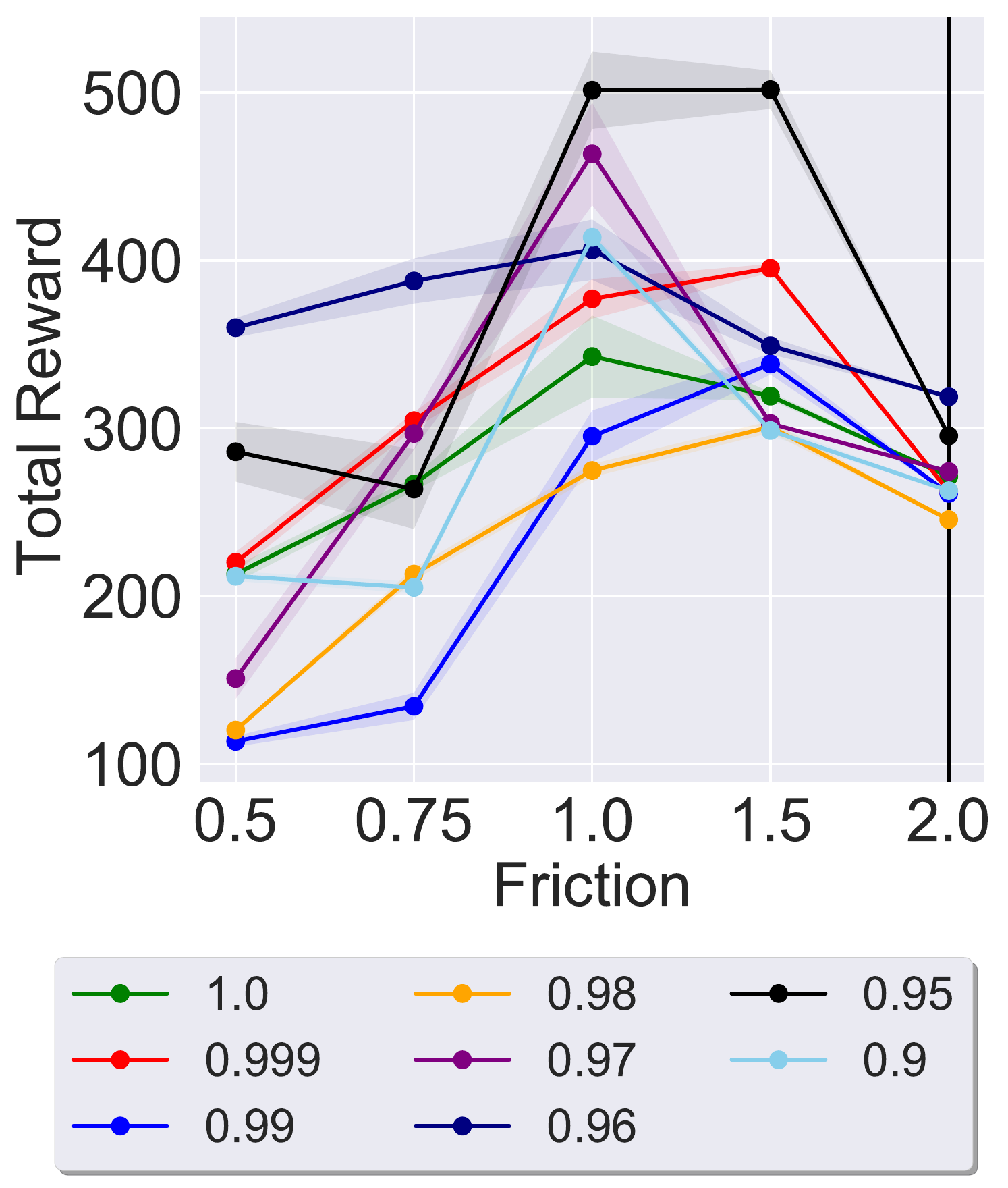}
     } \\
\end{tabular}
\caption{The average (over $3$ seeds) robust performance of Algorithm~\ref{alg:robust-gailfo} with different values of $\alpha$. The ablation shown here is used to choose $\alpha$ in Figure~\ref{fig:RobustnessFrictionFixedAlpha}. The expert environment $M^\mathrm{real}$, in which the demonstrations are collected, has relative friction $1.0$. In each plot, the black vertical line corresponds to the relative friction of the learner environment $M^\mathrm{sim}$ where we trained the policy with Algorithm~\ref{alg:robust-gailfo}. The x-axis denotes the relative friction of the test environment $M^\mathrm{test}$ in which the policies are evaluated. The policies are evaluated over $1e5$ steps truncating the last episode if it does not terminate. Note that robust-GAILfO with $\alpha = 1$ corresponds to GAILfO.}
\label{fig:RobustnessFrictionAblation}
\end{figure*}

\begin{figure*}[!h] 
\centering
\begin{tabular}{ccccc}
\subfloat[HalfCheetah]{%
       \includegraphics[width=0.16\linewidth]{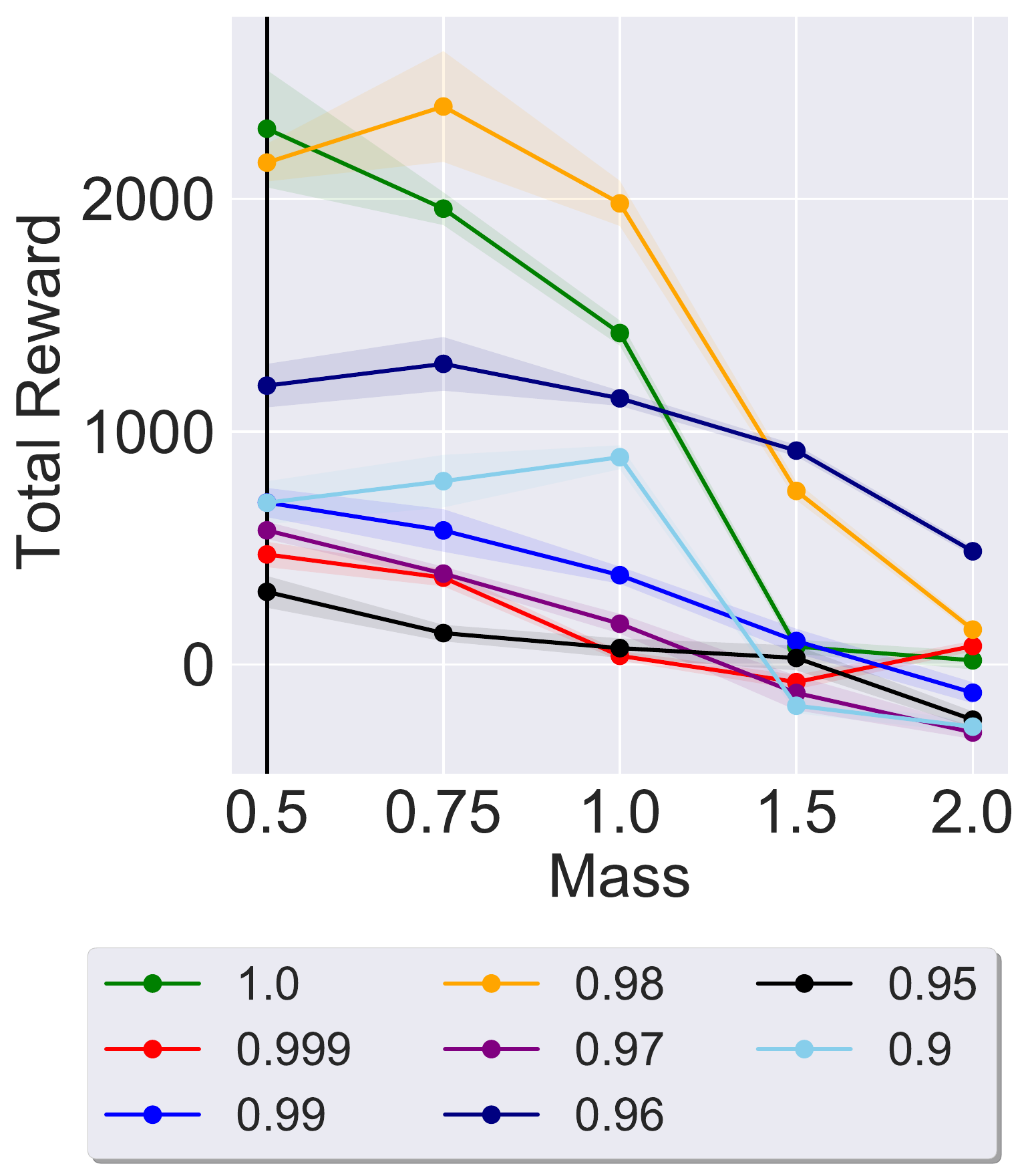}
     } &
\subfloat[HalfCheetah]{%
       \includegraphics[width=0.16\linewidth]{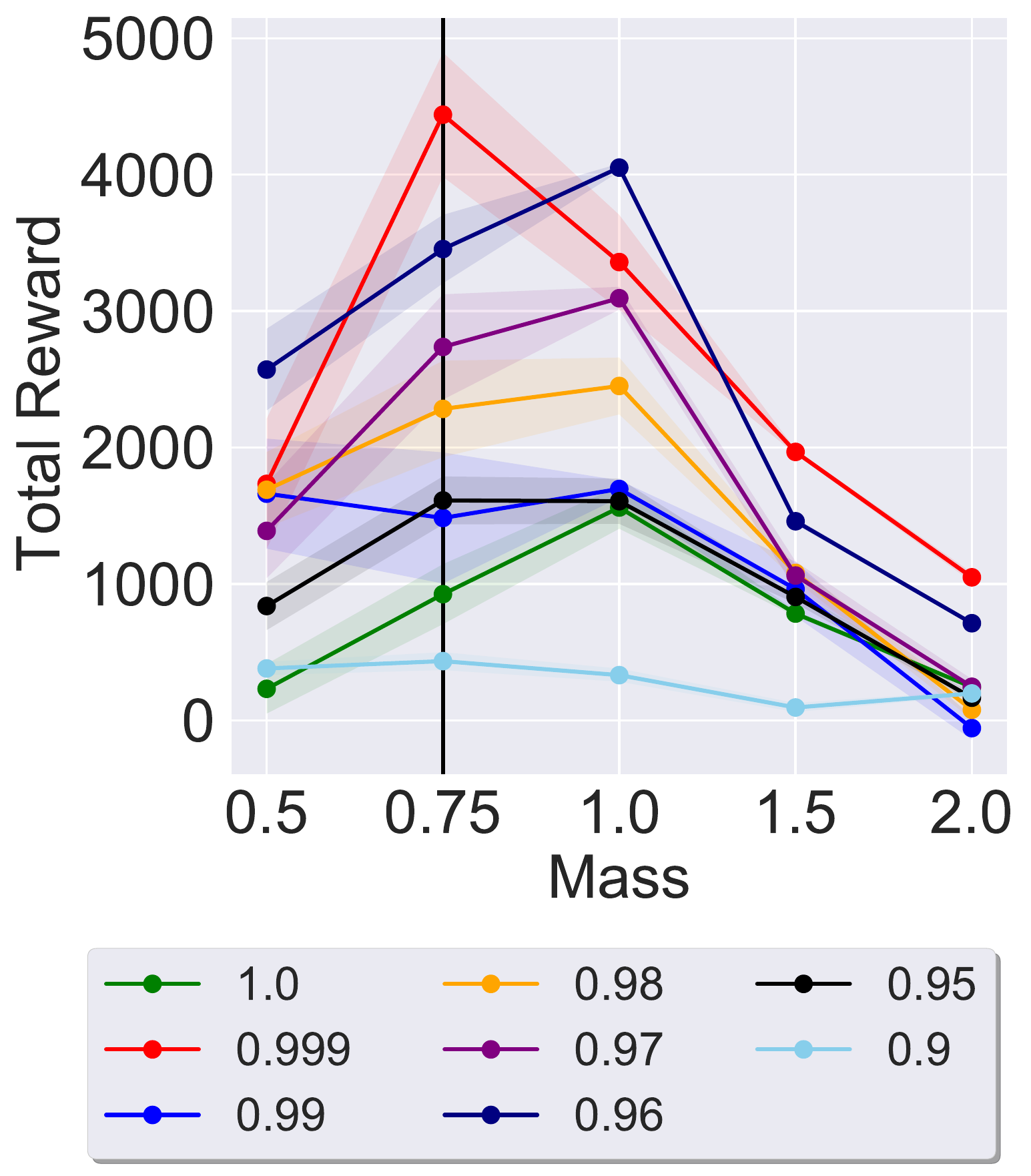}
     } &
\subfloat[HalfCheetah]{%
       \includegraphics[width=0.16\linewidth]{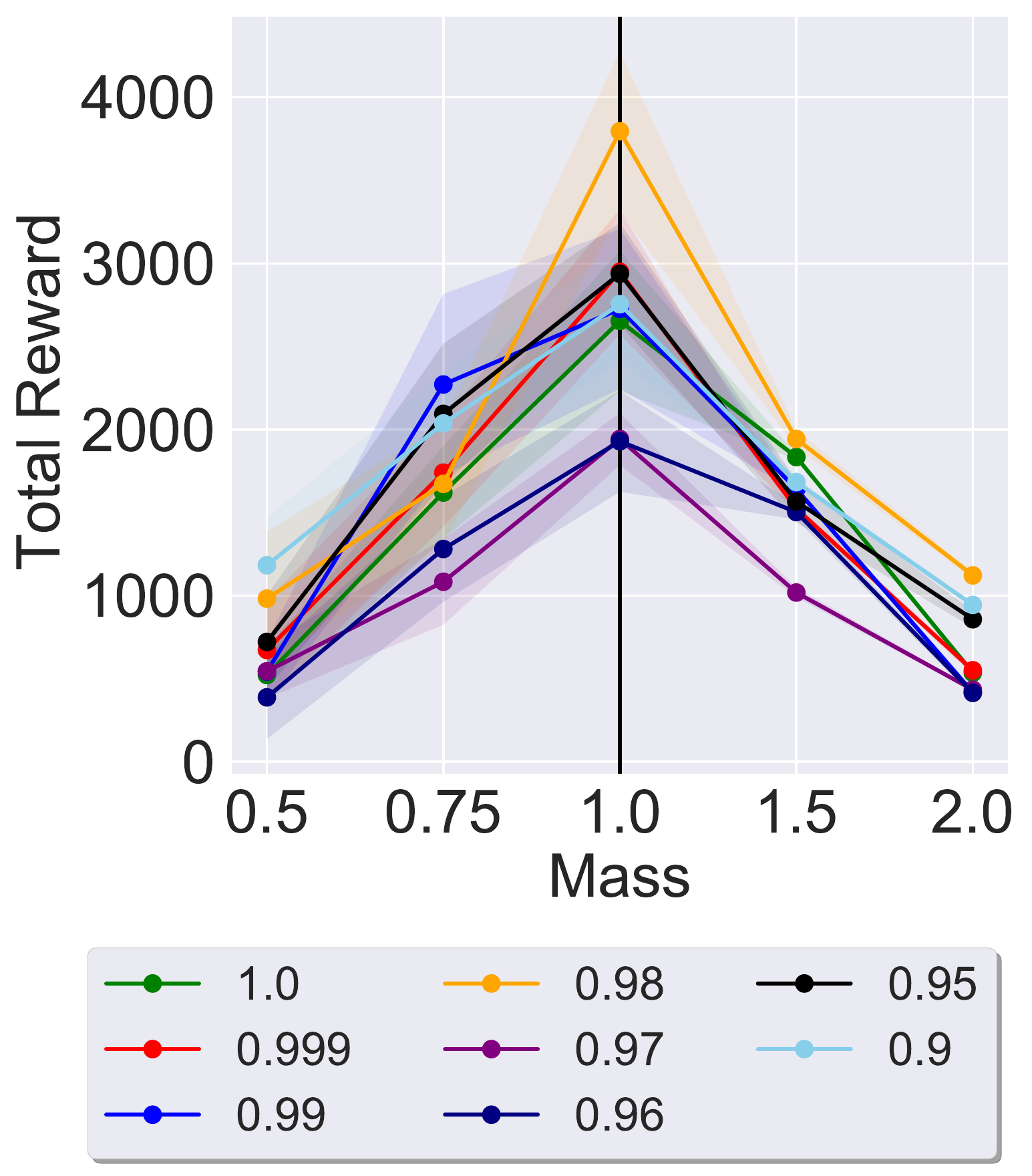}
     } &
\subfloat[HalfCheetah]{%
       \includegraphics[width=0.16\linewidth]{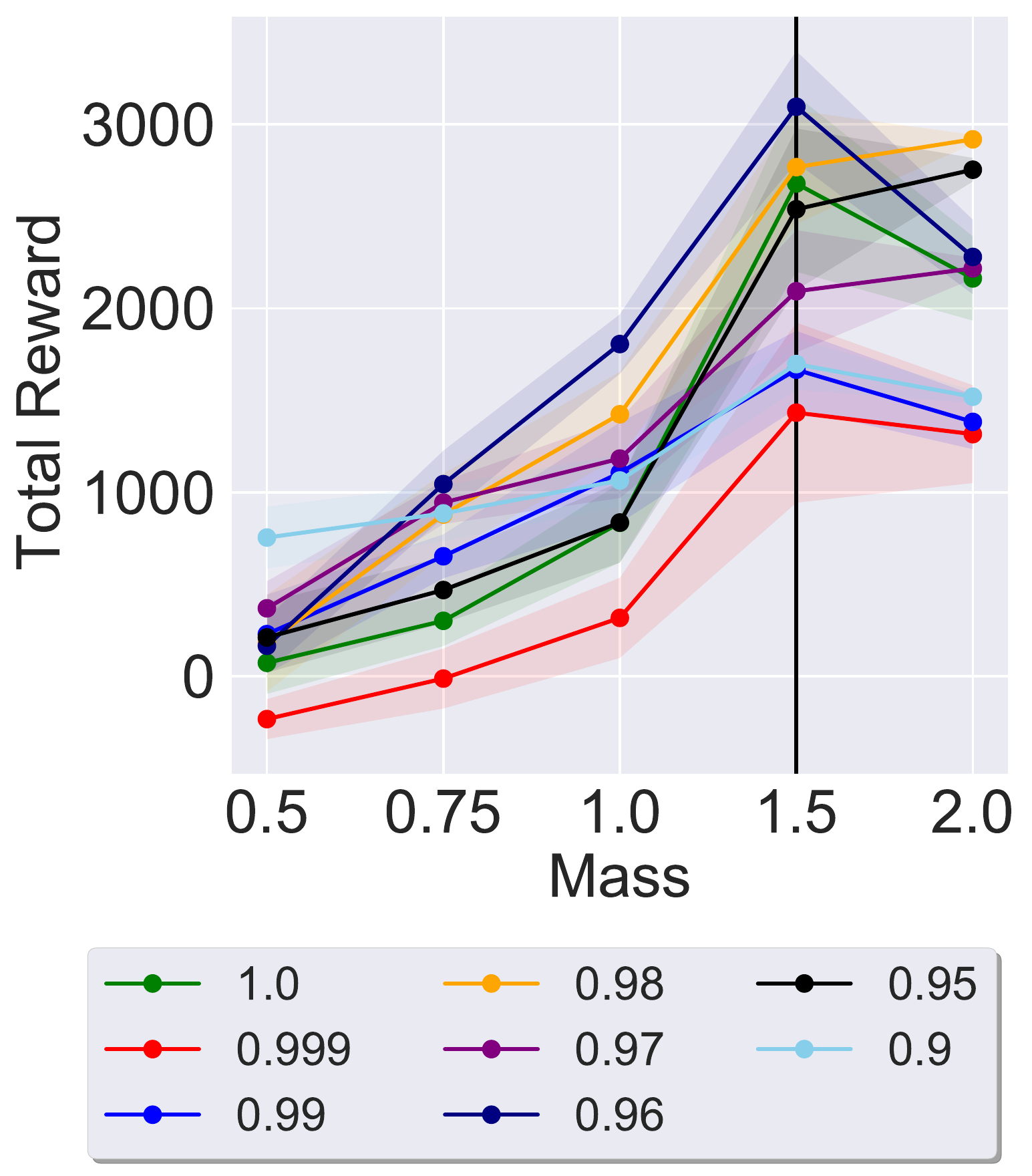}
     } &
\subfloat[HalfCheetah]{%
       \includegraphics[width=0.16\linewidth]{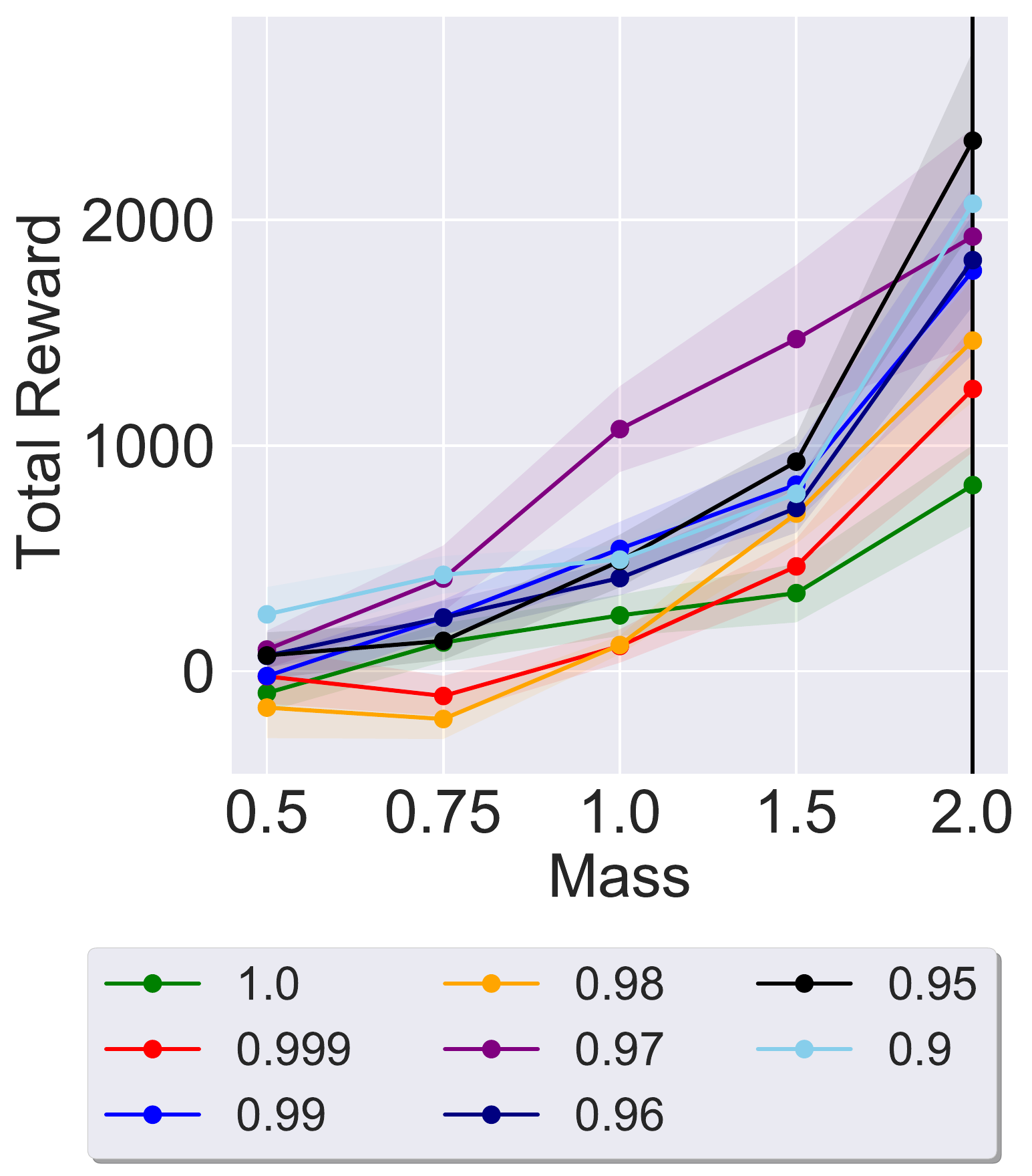}
     } \\
\subfloat[Walker]{%
       \includegraphics[width=0.16\linewidth]{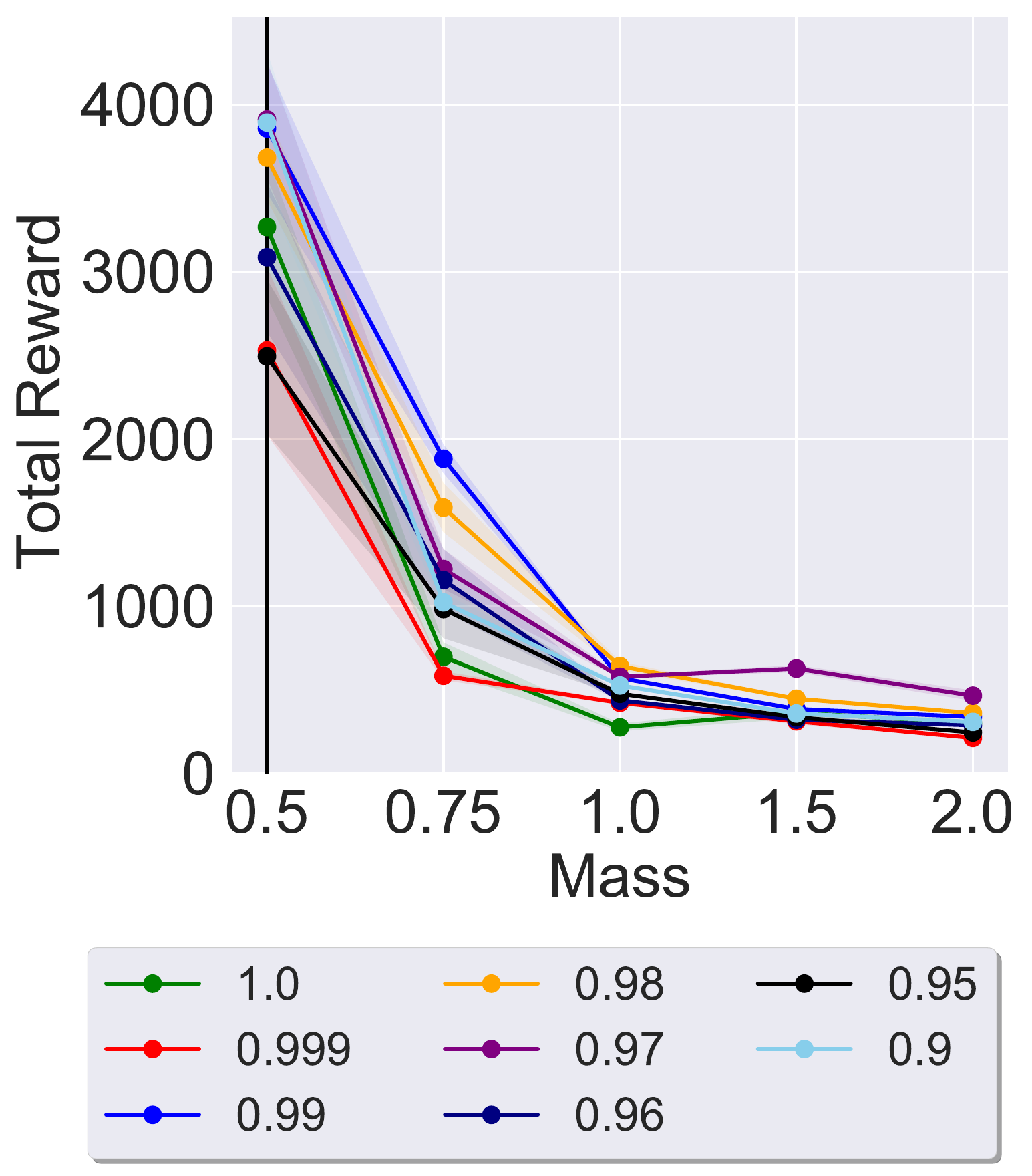}
     } &
\subfloat[Walker]{%
       \includegraphics[width=0.16\linewidth]{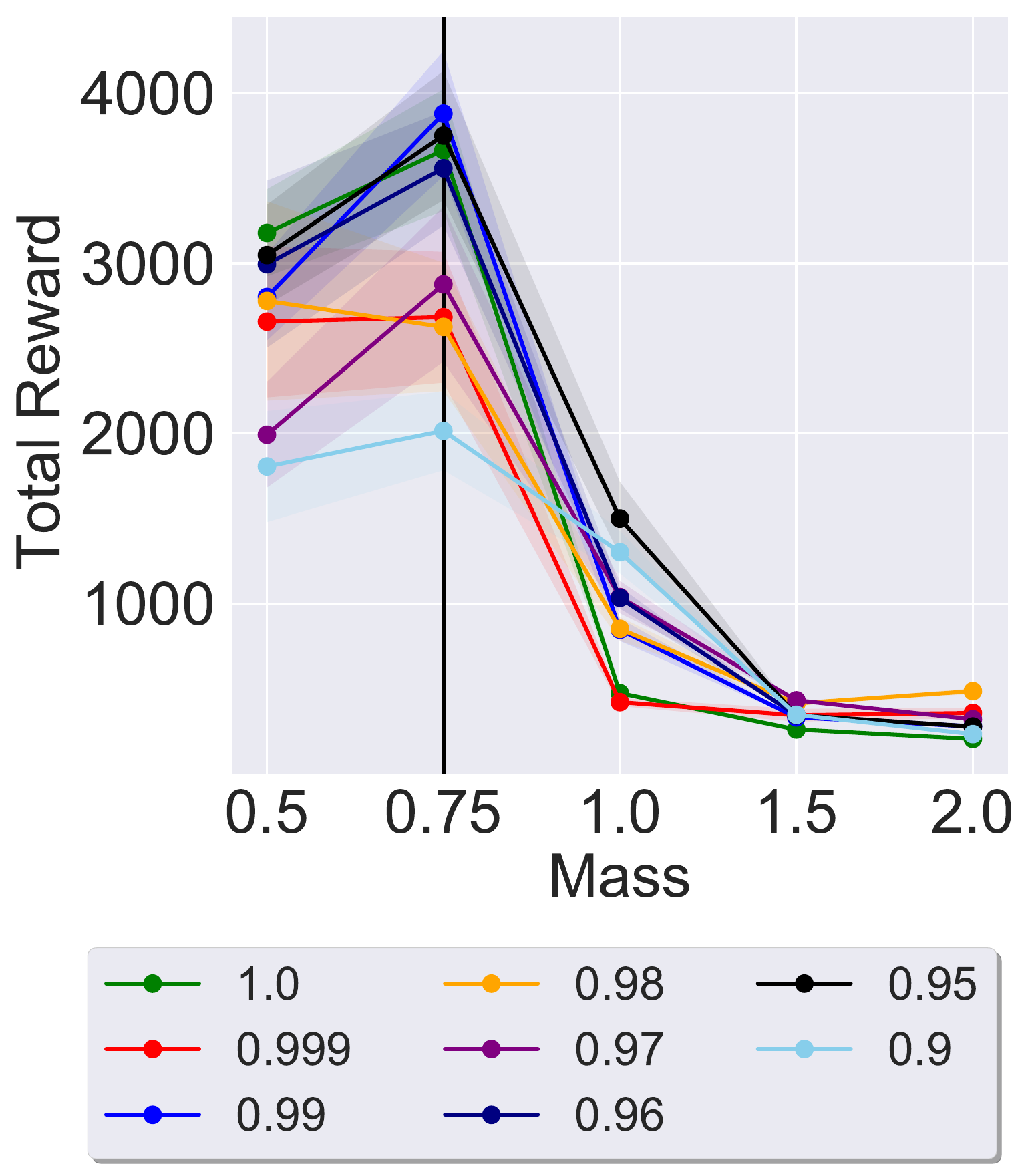}
     } &
\subfloat[Walker]{%
       \includegraphics[width=0.16\linewidth]{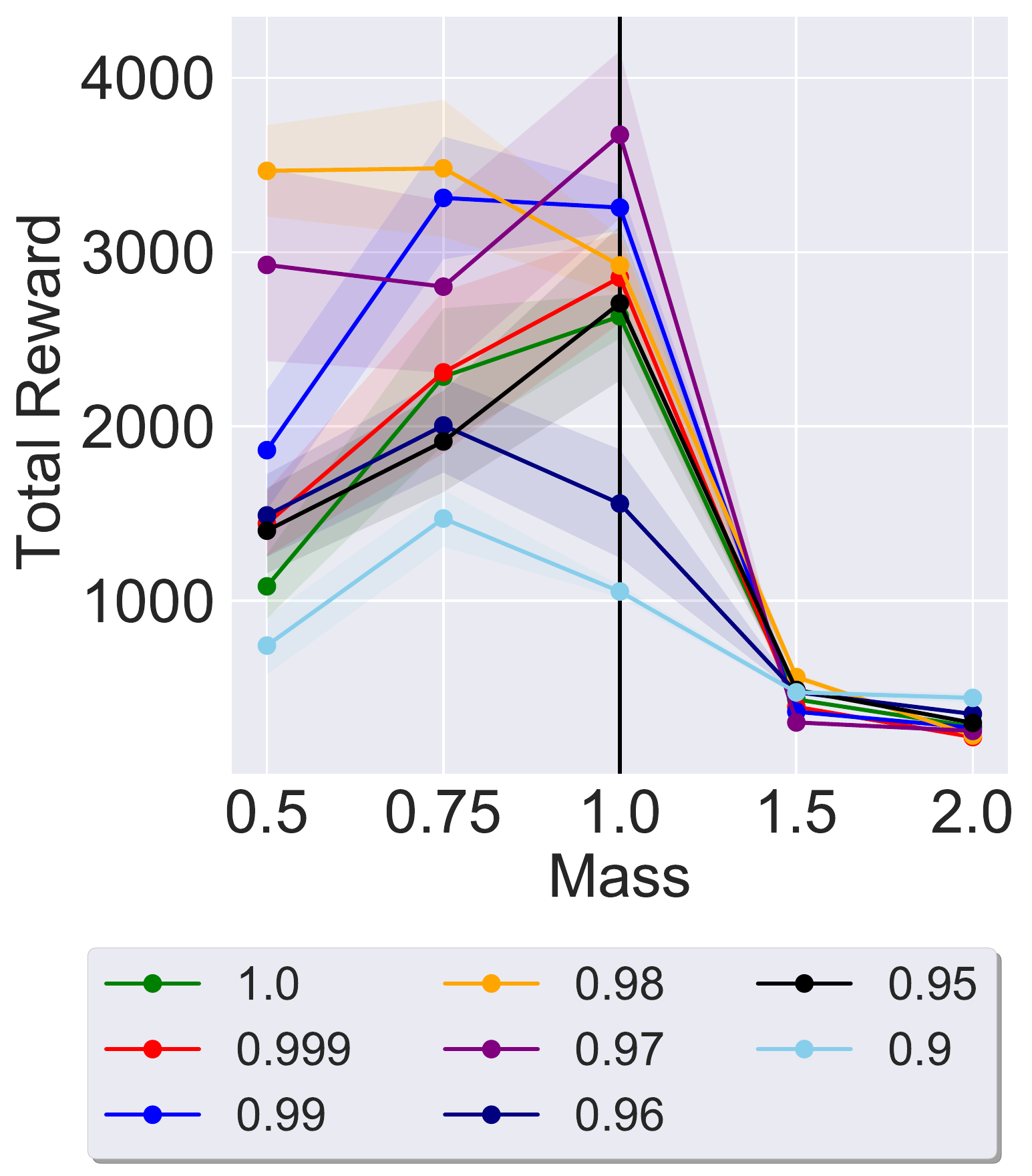}
     } &
\subfloat[Walker]{%
       \includegraphics[width=0.16\linewidth]{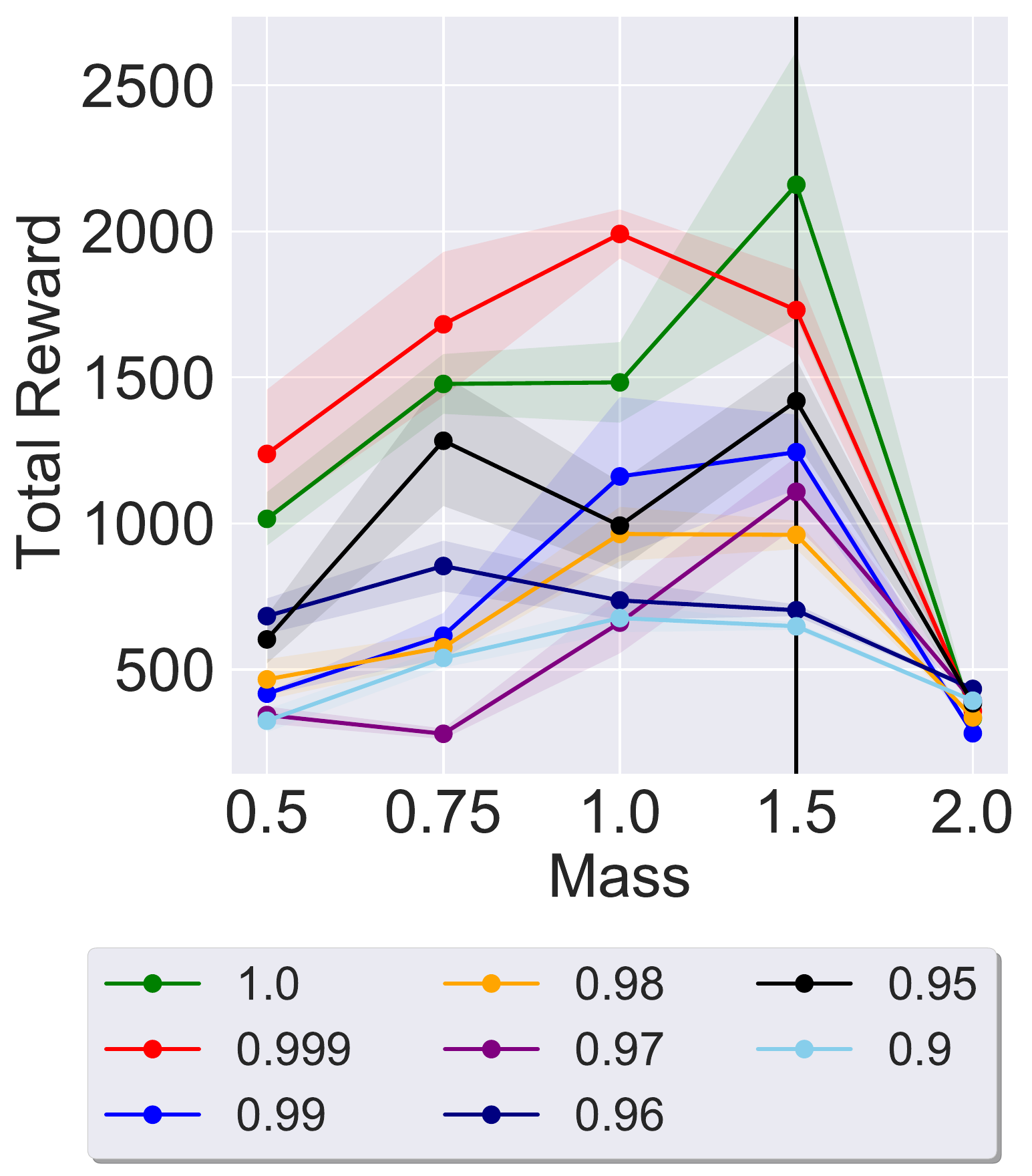}
     } &
\subfloat[Walker]{%
       \includegraphics[width=0.16\linewidth]{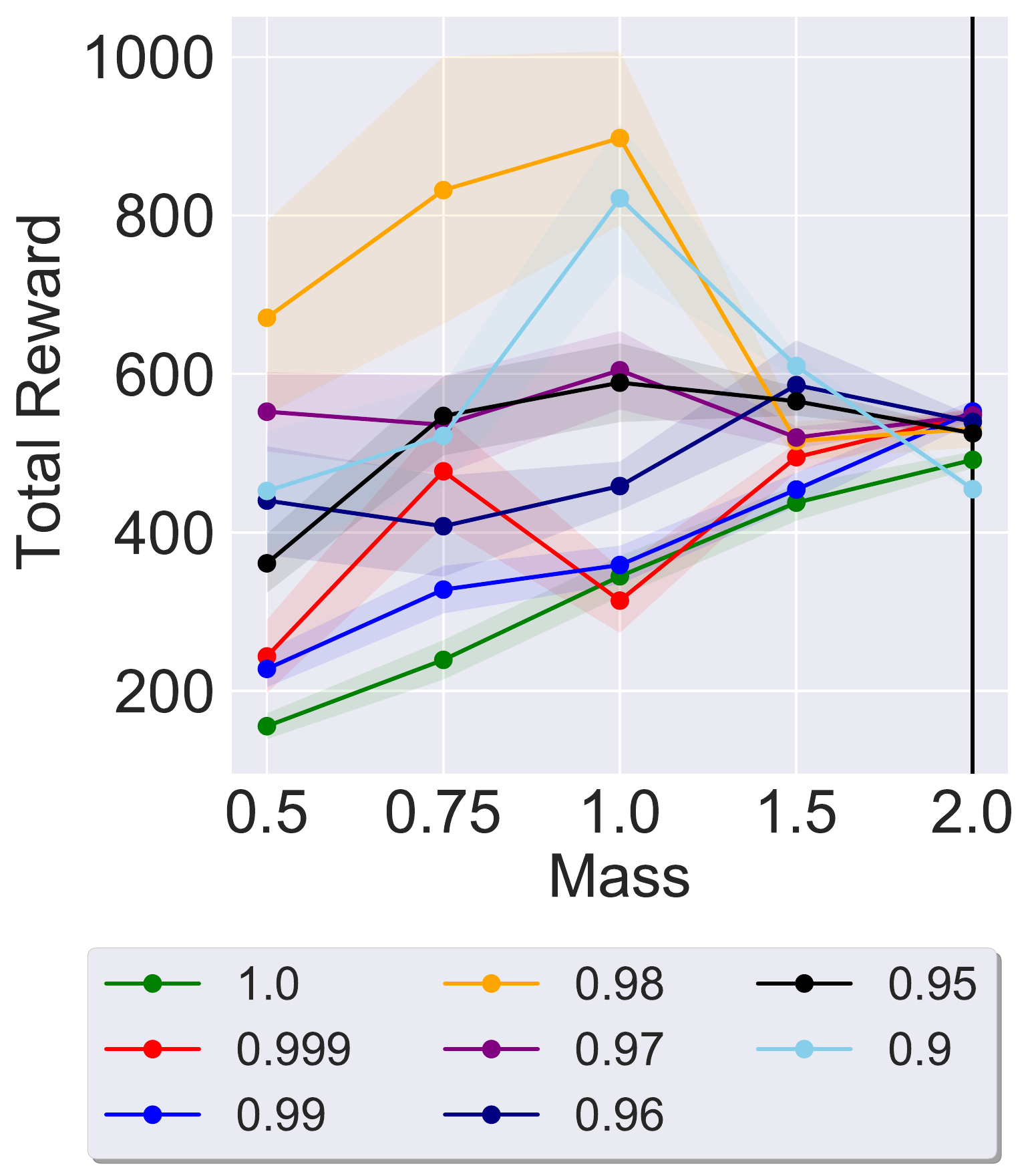}
     } \\
\subfloat[Hopper]{%
       \includegraphics[width=0.16\linewidth]{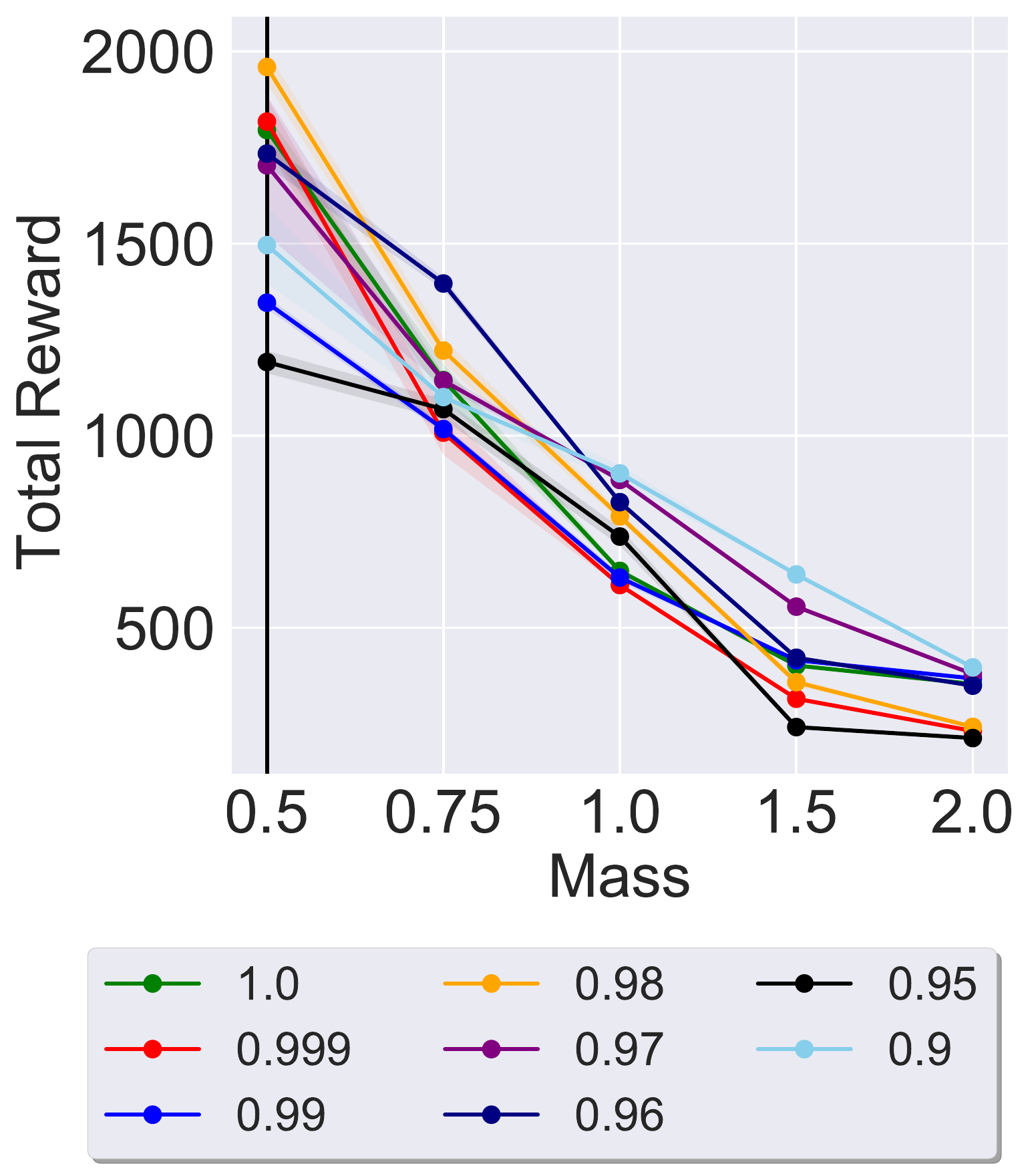}
     } &
\subfloat[Hopper]{%
       \includegraphics[width=0.16\linewidth]{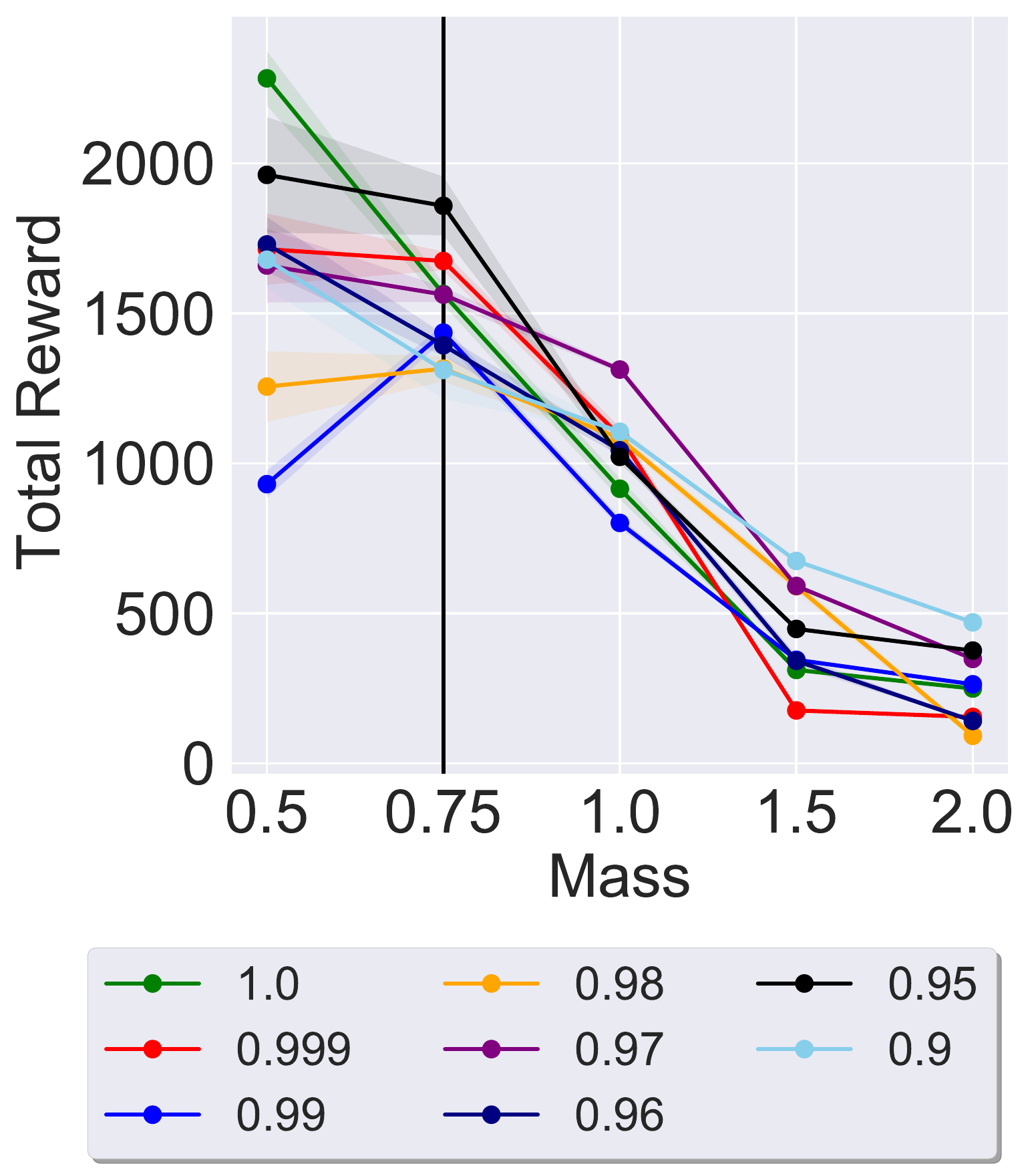}
     } &
\subfloat[Hopper]{%
       \includegraphics[width=0.16\linewidth]{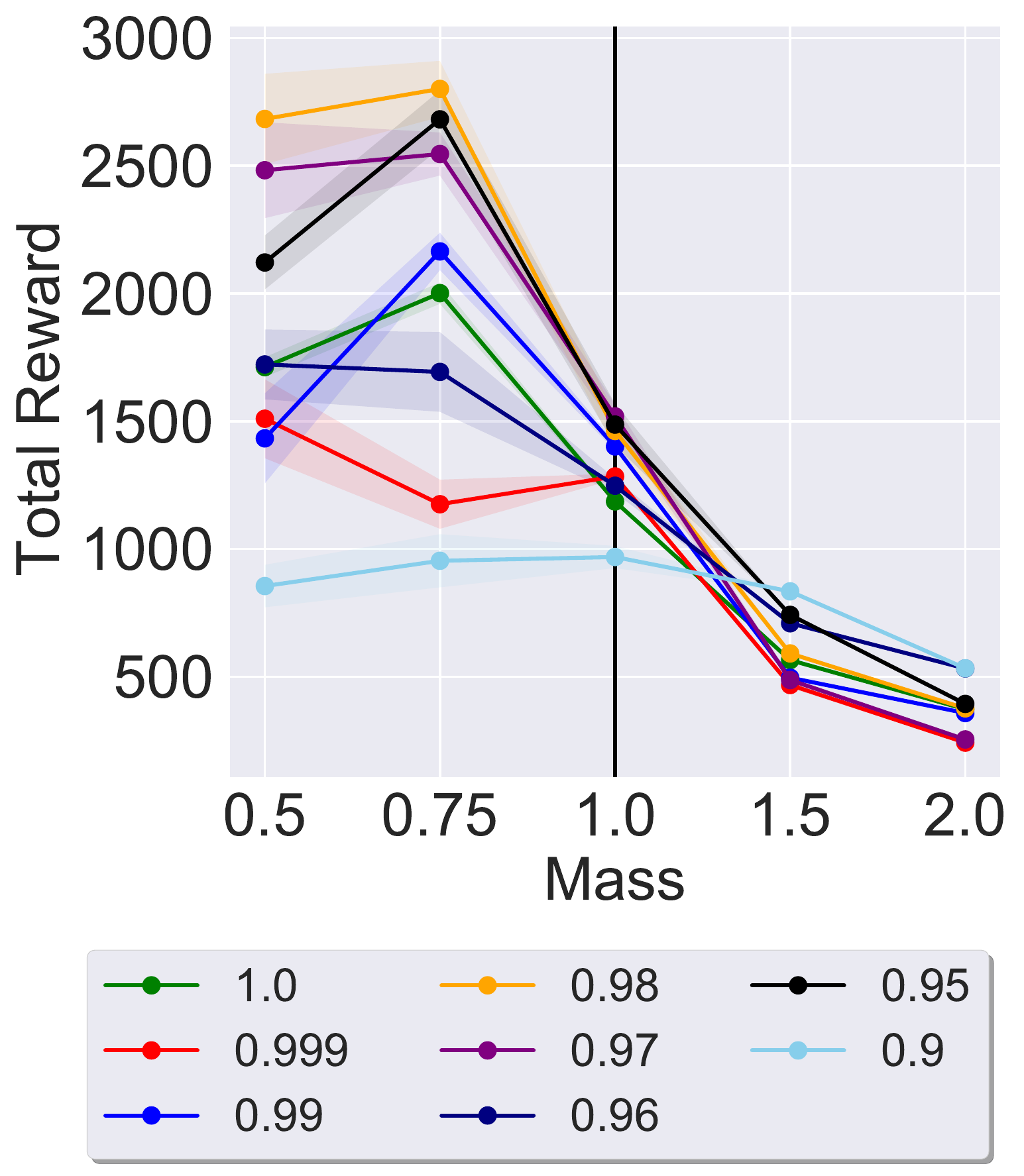}
     } &
\subfloat[Hopper]{%
       \includegraphics[width=0.16\linewidth]{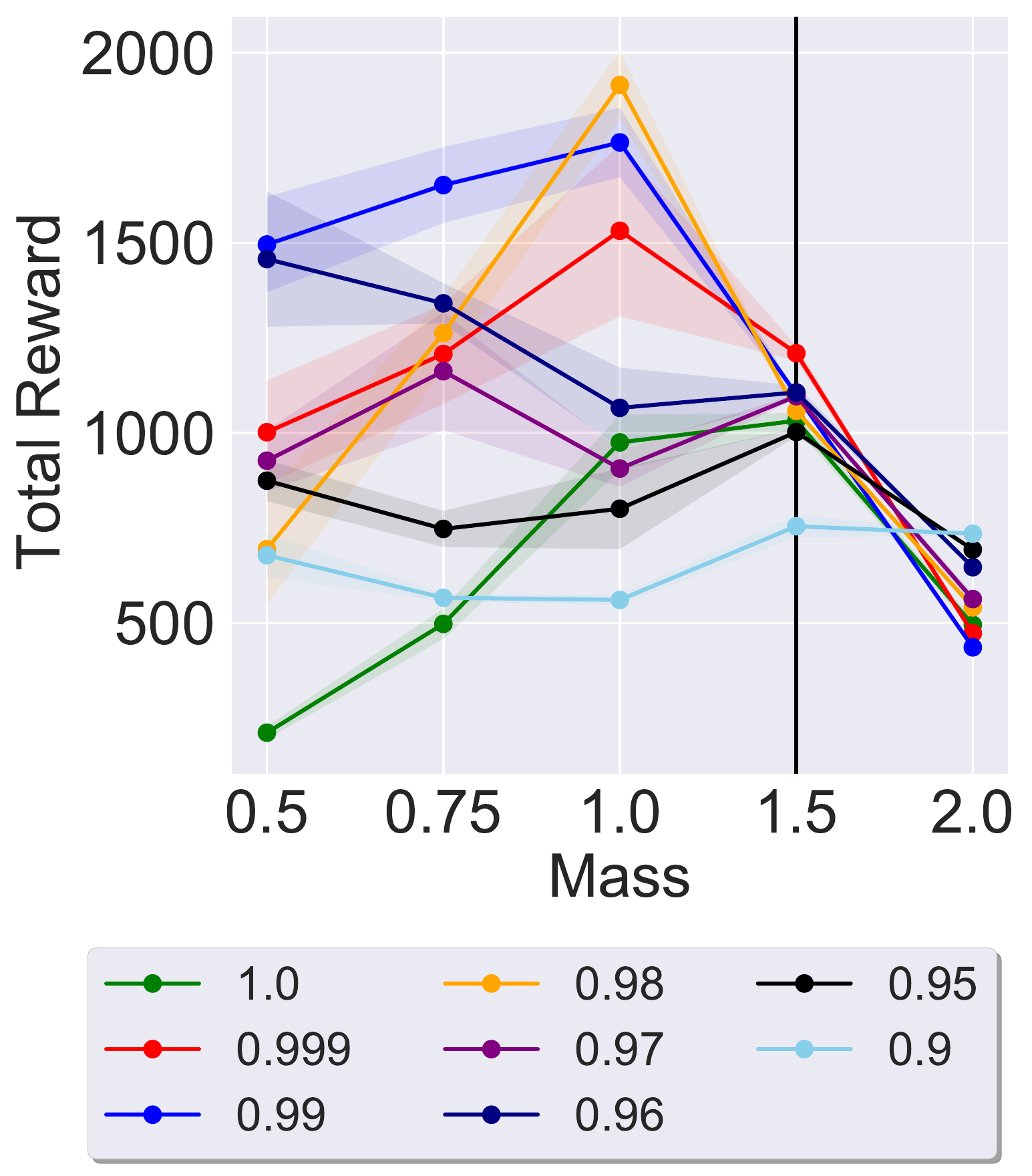}
     } &
\subfloat[Hopper]{%
       \includegraphics[width=0.16\linewidth]{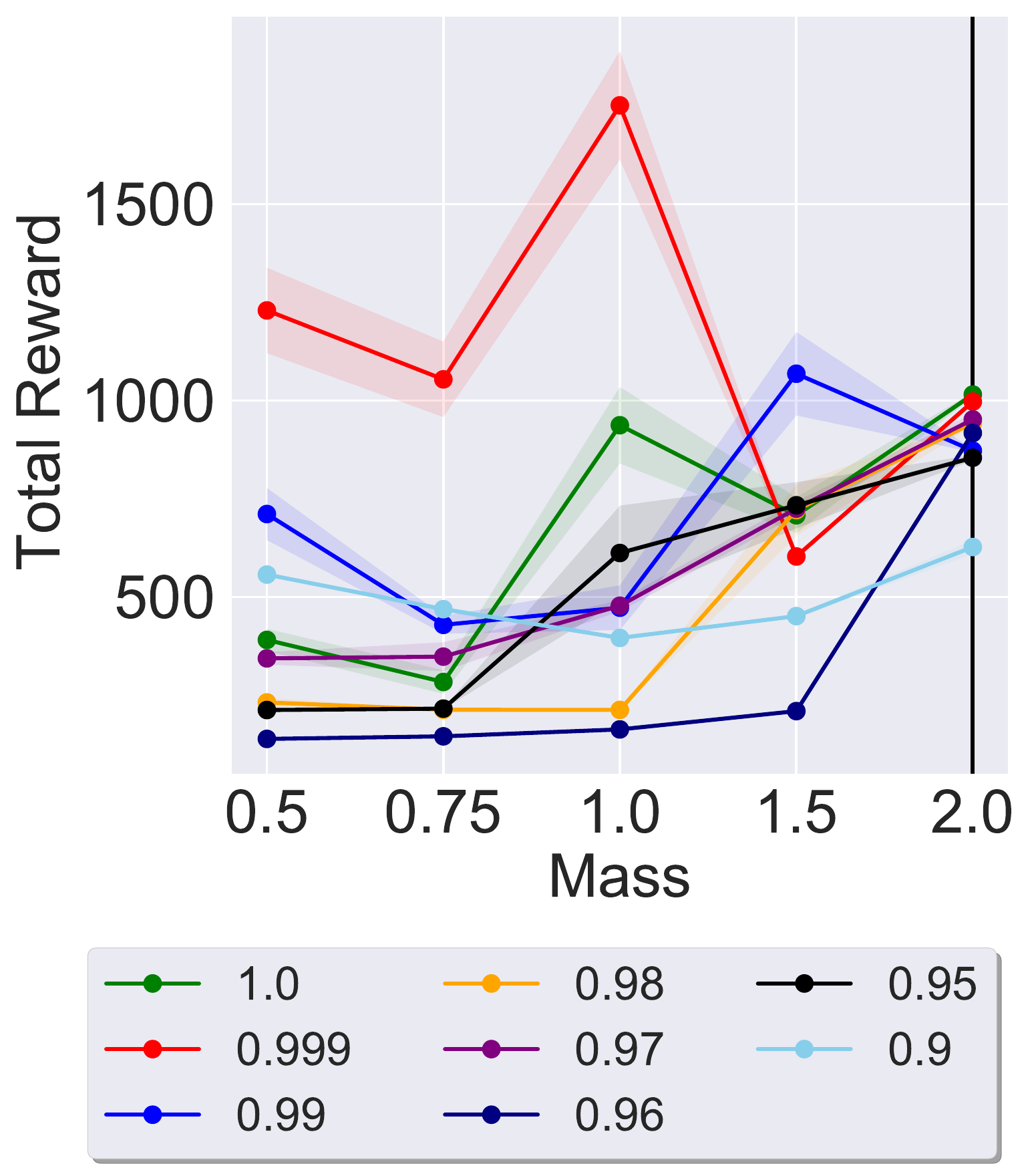}
     } \\
\subfloat[InvDoublePend]{%
       \includegraphics[width=0.16\linewidth]{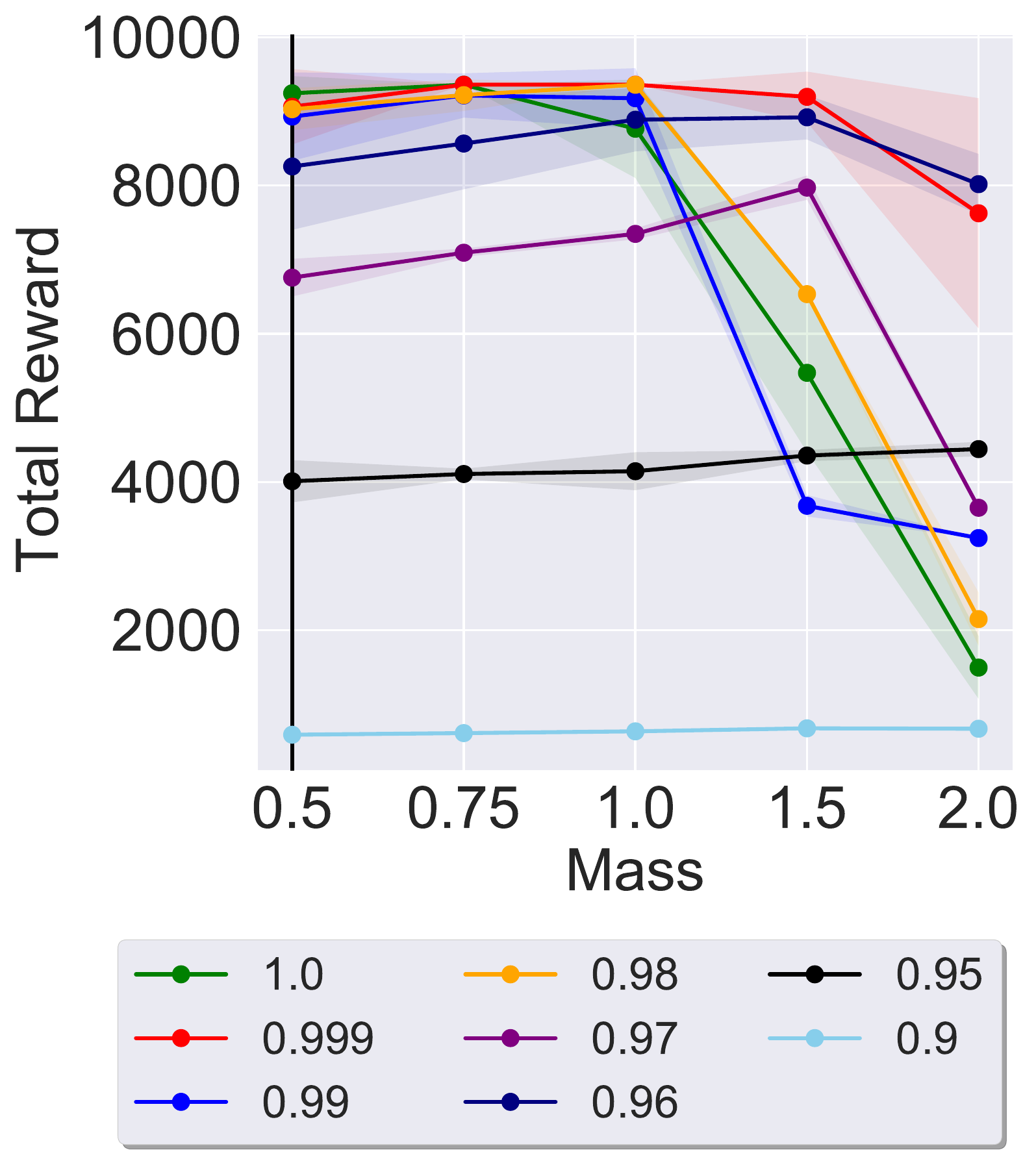}
     } &
\subfloat[InvDoublePend]{%
       \includegraphics[width=0.16\linewidth]{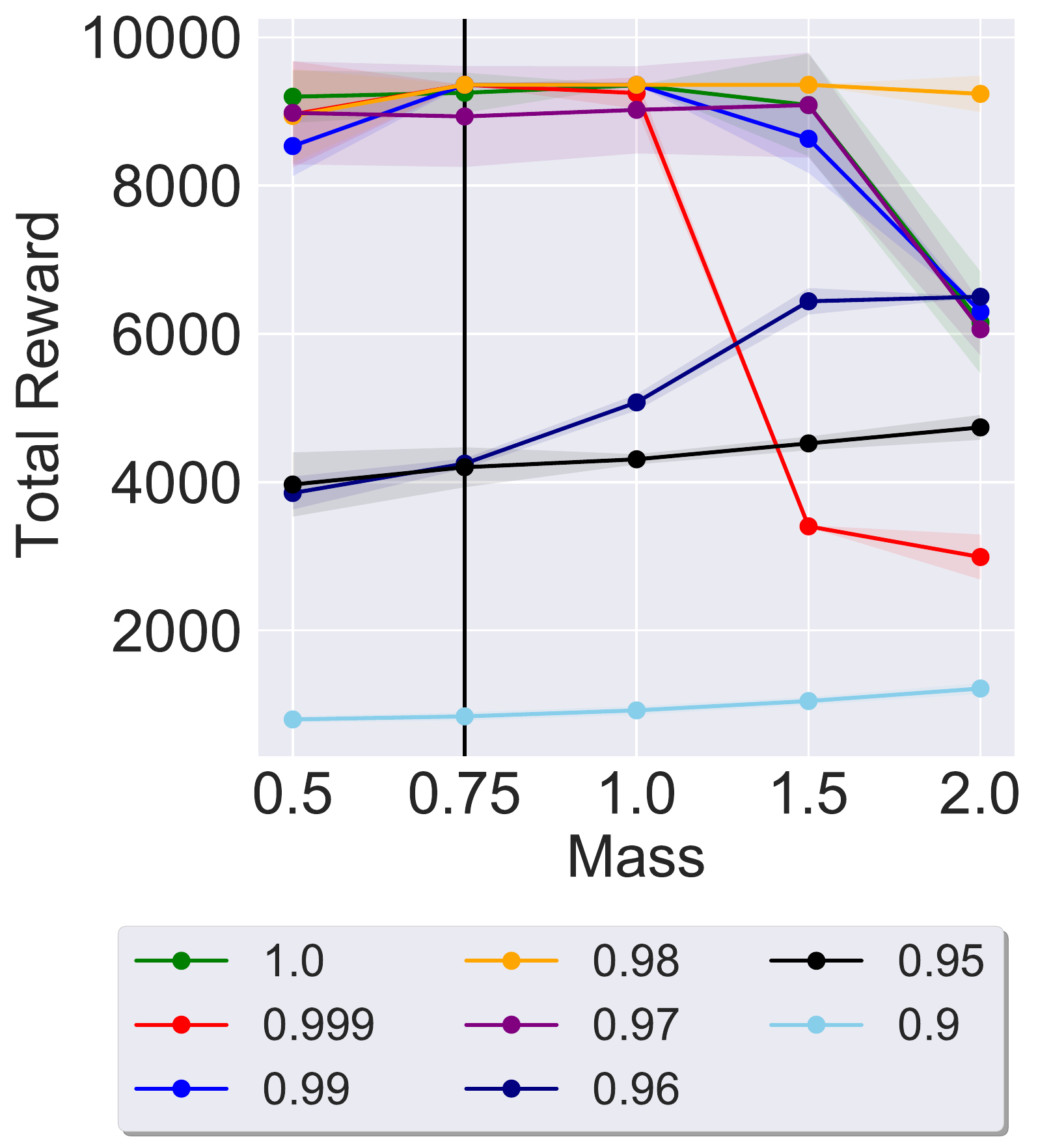}
     } &
\subfloat[InvDoublePend]{%
       \includegraphics[width=0.16\linewidth]{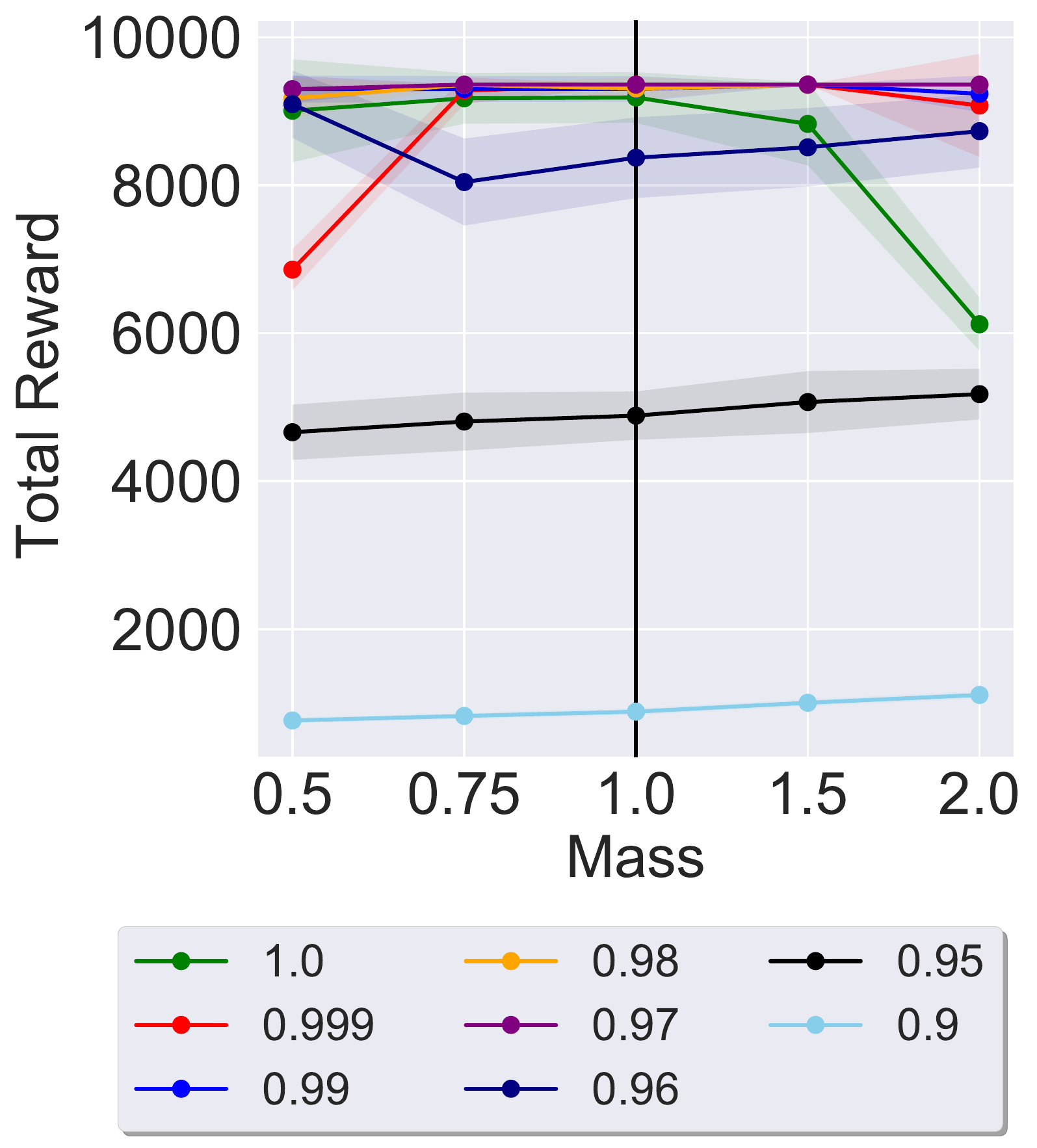}
     } &
\subfloat[InvDoublePend]{%
       \includegraphics[width=0.16\linewidth]{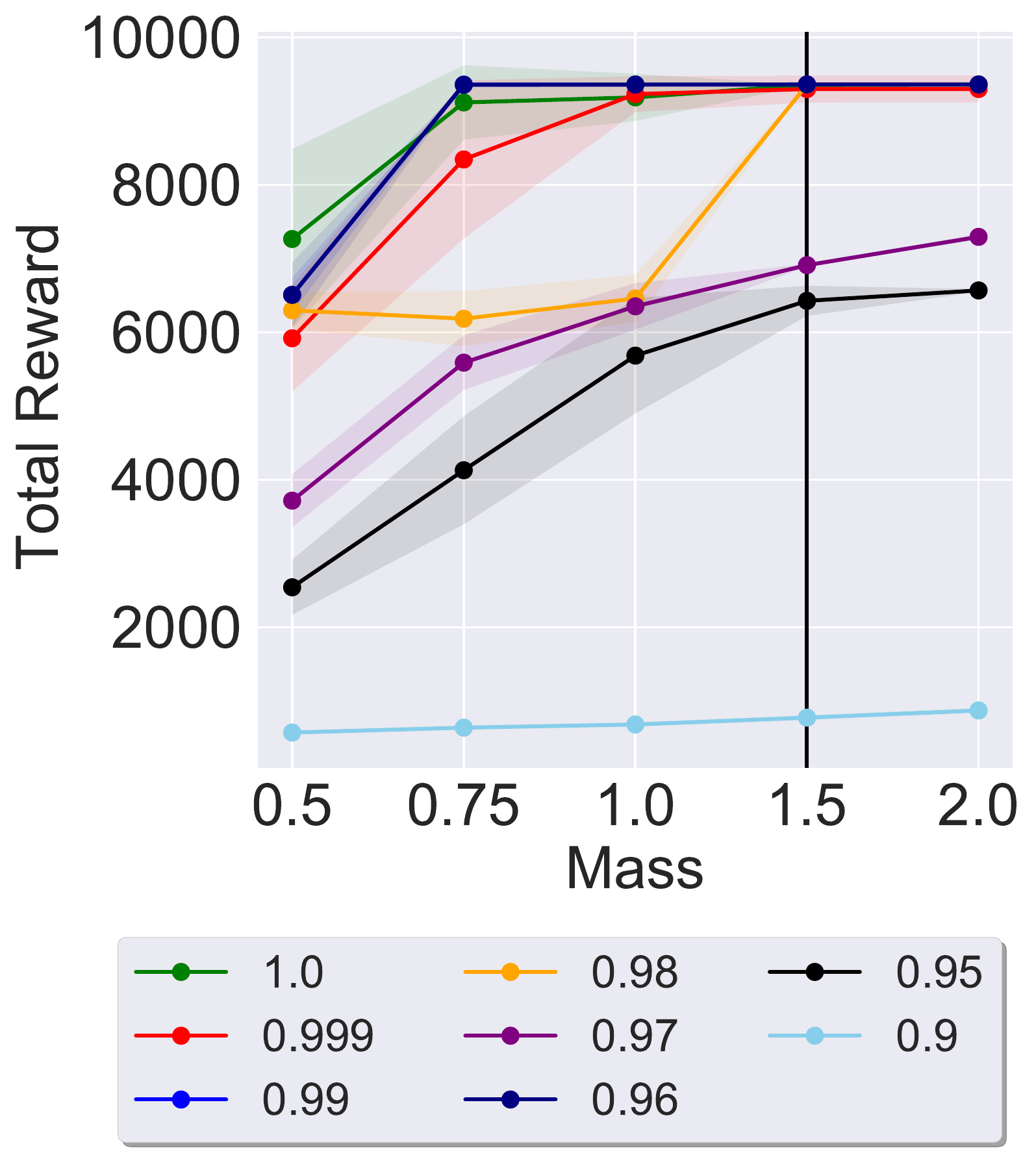}
     } &
\subfloat[InvDoublePend]{%
       \includegraphics[width=0.16\linewidth]{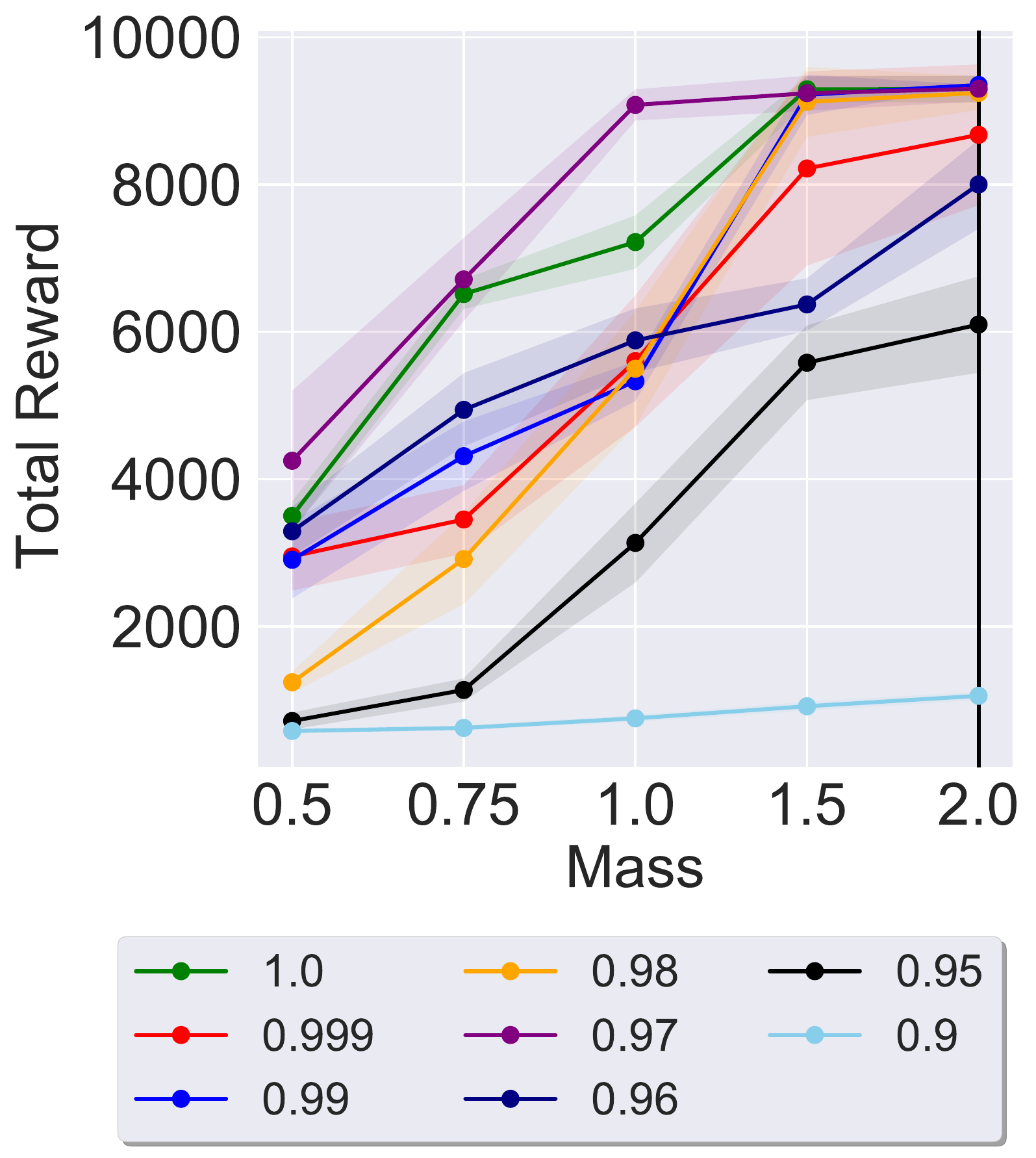}
     } \\
\subfloat[Swimmer]{%
       \includegraphics[width=0.16\linewidth]{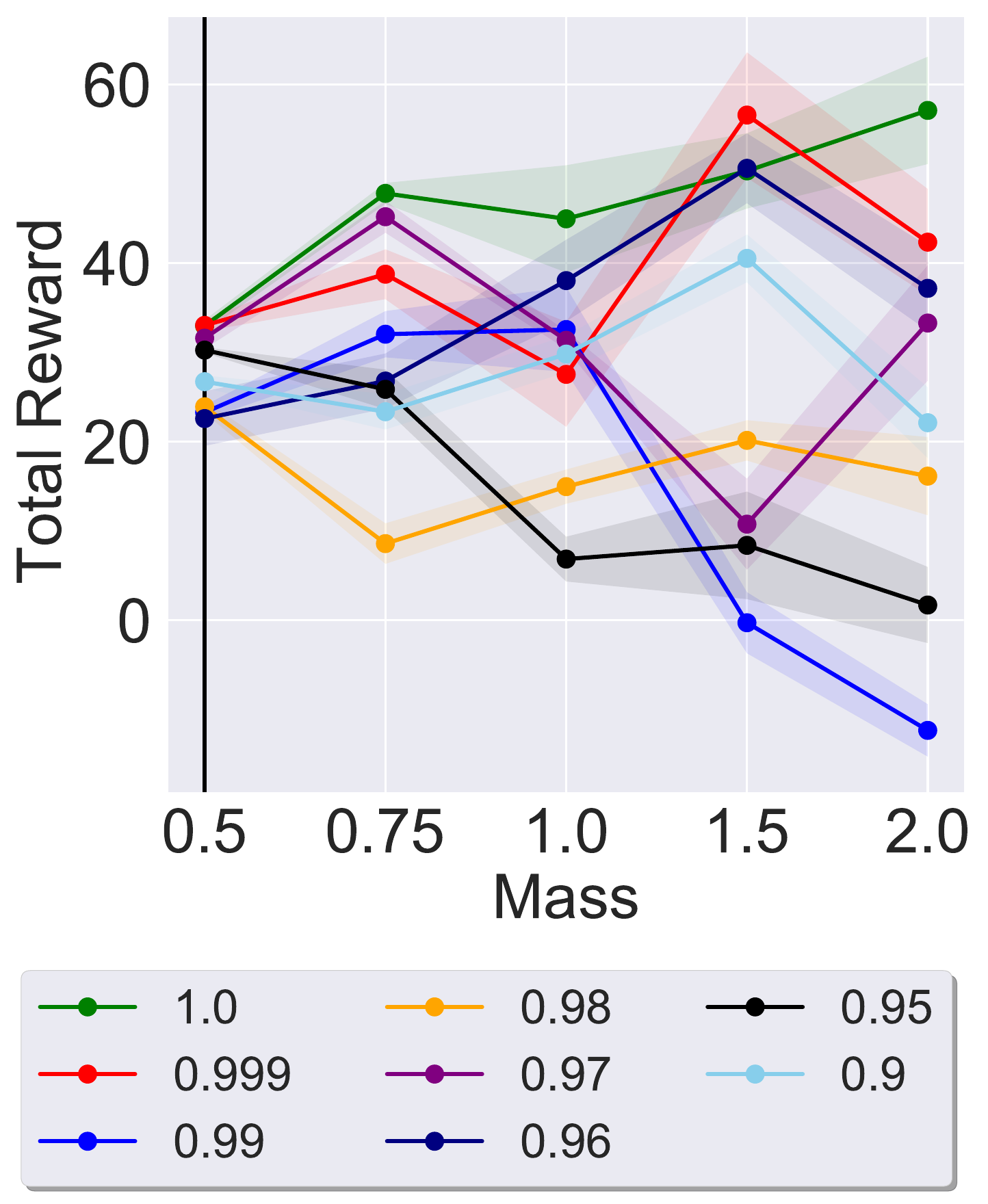}
     } &
\subfloat[Swimmer]{%
       \includegraphics[width=0.16\linewidth]{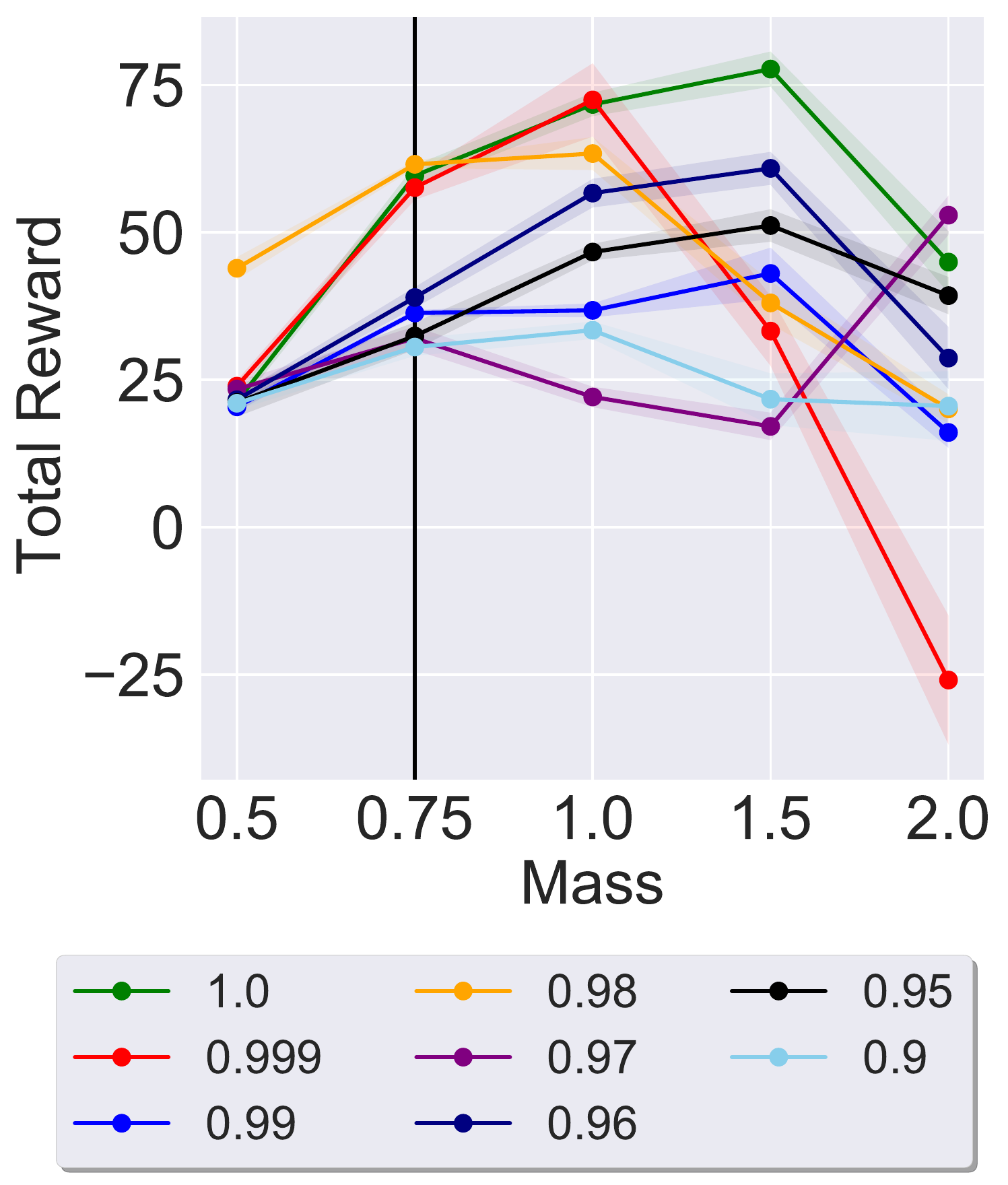}
     } &
\subfloat[Swimmer]{%
       \includegraphics[width=0.16\linewidth]{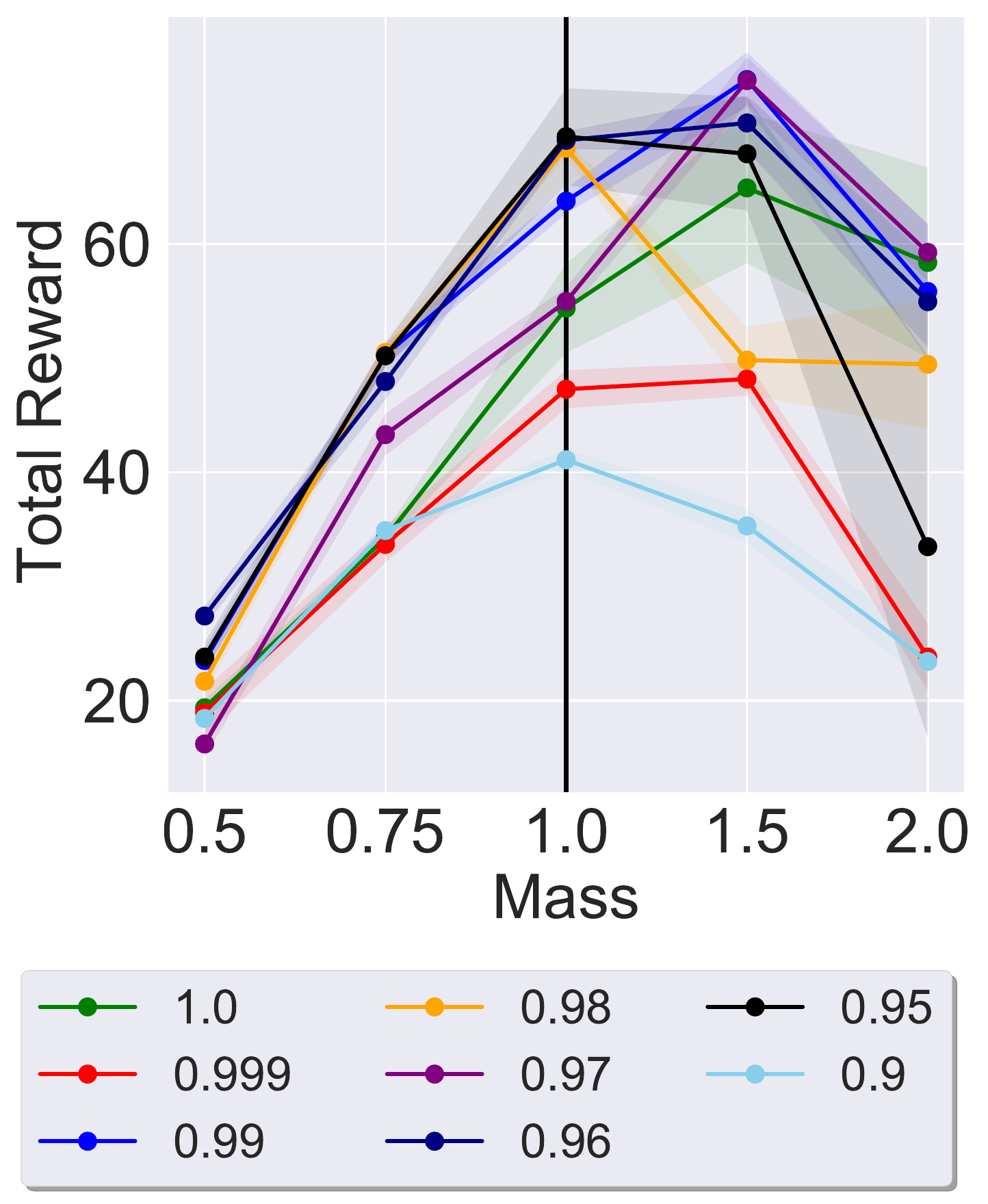}
     } &
\subfloat[Swimmer]{%
       \includegraphics[width=0.16\linewidth]{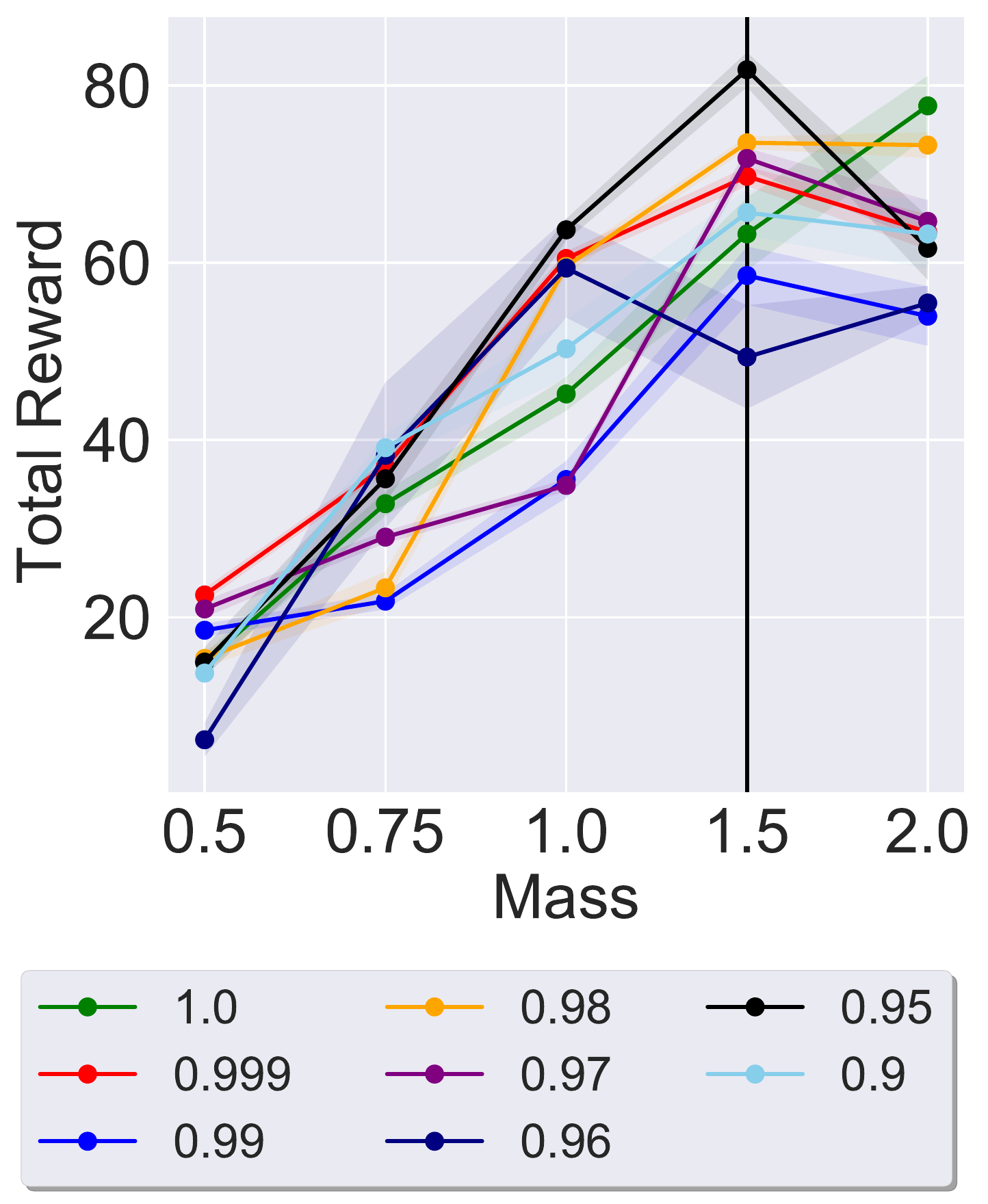}
     } &
\subfloat[Swimmer]{%
       \includegraphics[width=0.16\linewidth]{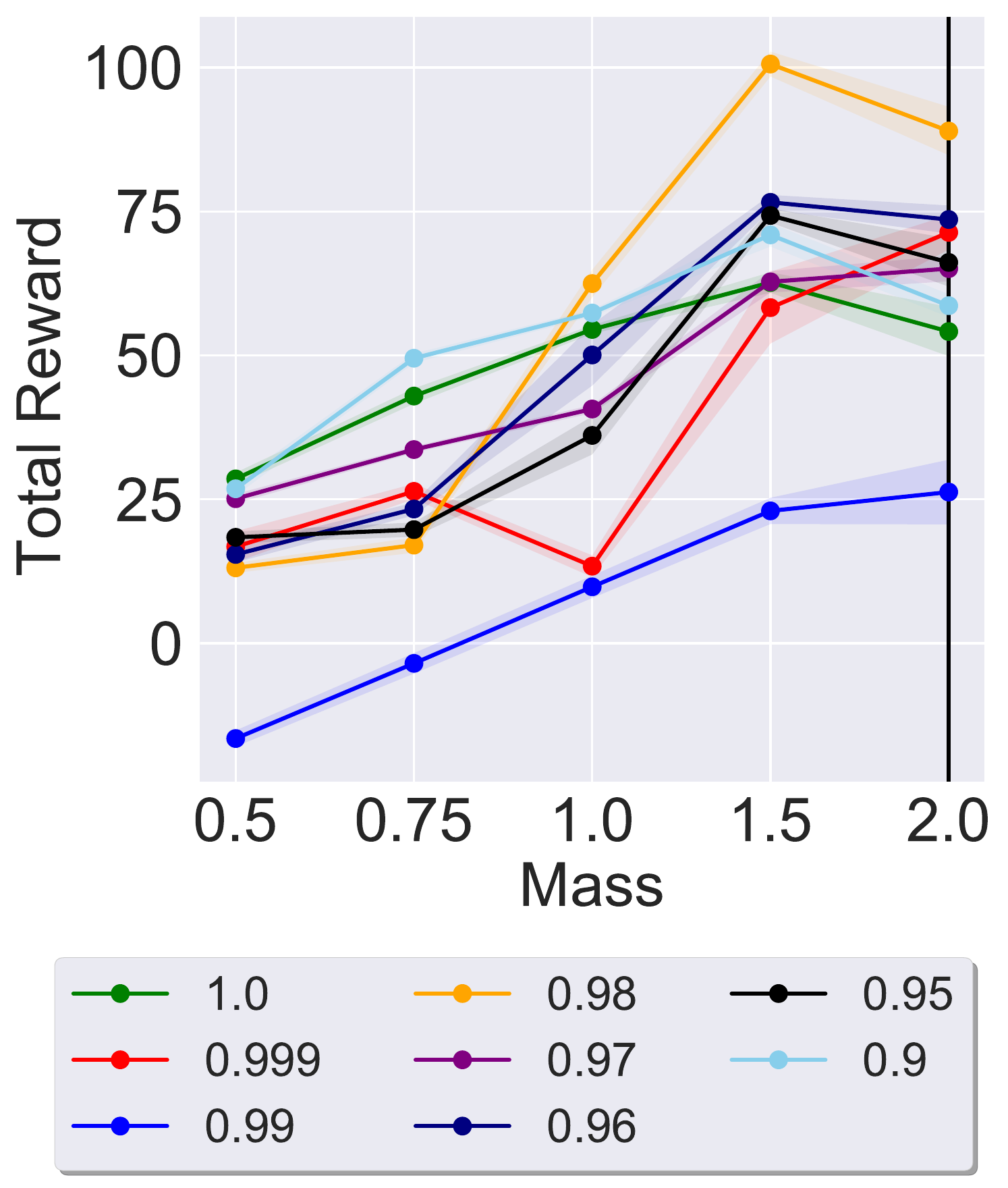}
     } \\

\end{tabular}
\caption{The average (over $3$ seeds) robust performance of Algorithm~\ref{alg:robust-gailfo} with different values of $\alpha$. The ablation shown here is used to choose $\alpha$ in Figure~\ref{fig:RobustnessMassFixedAlpha}. The expert environment $M^\mathrm{real}$, in which the demonstrations are collected, has relative mass $1.0$. In each plot, the black vertical line corresponds to the relative mass of the learner environment $M^\mathrm{sim}$ where we trained the policy with Algorithm~\ref{alg:robust-gailfo}. The x-axis denotes the relative mass of the test environment $M^\mathrm{test}$ in which the policies are evaluated. The policies are evaluated over $1e5$ steps truncating the last episode if it does not terminate. Note that robust-GAILfO with $\alpha = 1$ corresponds to GAILfO.}
\label{fig:RobustnessMassAblation}
\end{figure*}

\clearpage

\section{Transfer Performance: Continuous Gridworld}
\label{app:transfer-performance-grid}

\begin{figure}[!h] 
\centering
\begin{tabular}{ccc}
\subfloat[Expert $\epsilon = 0.0$]{%
       \includegraphics[width=0.3\linewidth]{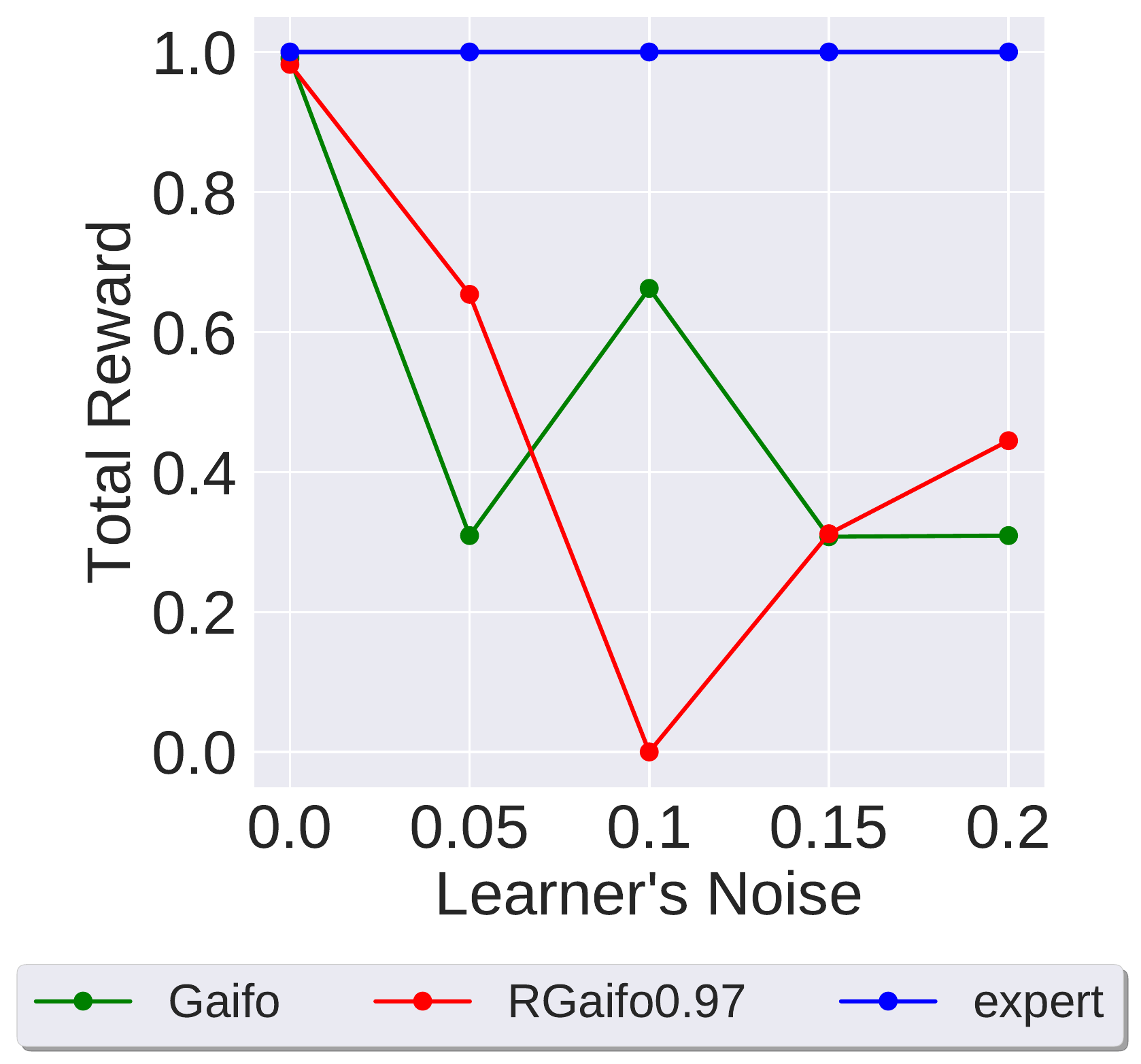}
     } &
\subfloat[Expert $\epsilon = 0.05$]{%
       \includegraphics[width=0.3\linewidth]{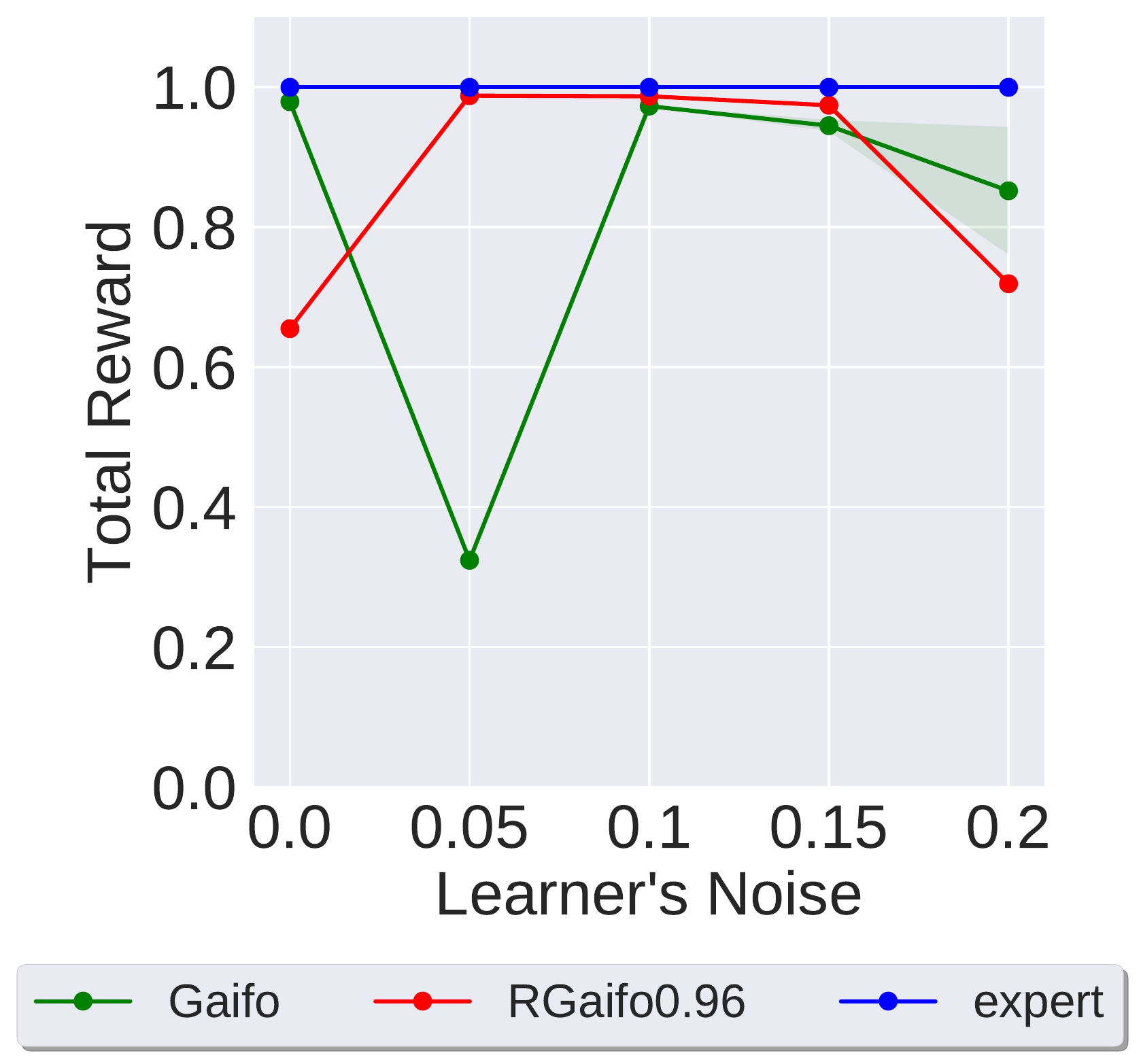}
     } &
\subfloat[Expert $\epsilon = 0.1$]{%
       \includegraphics[width=0.3\linewidth]{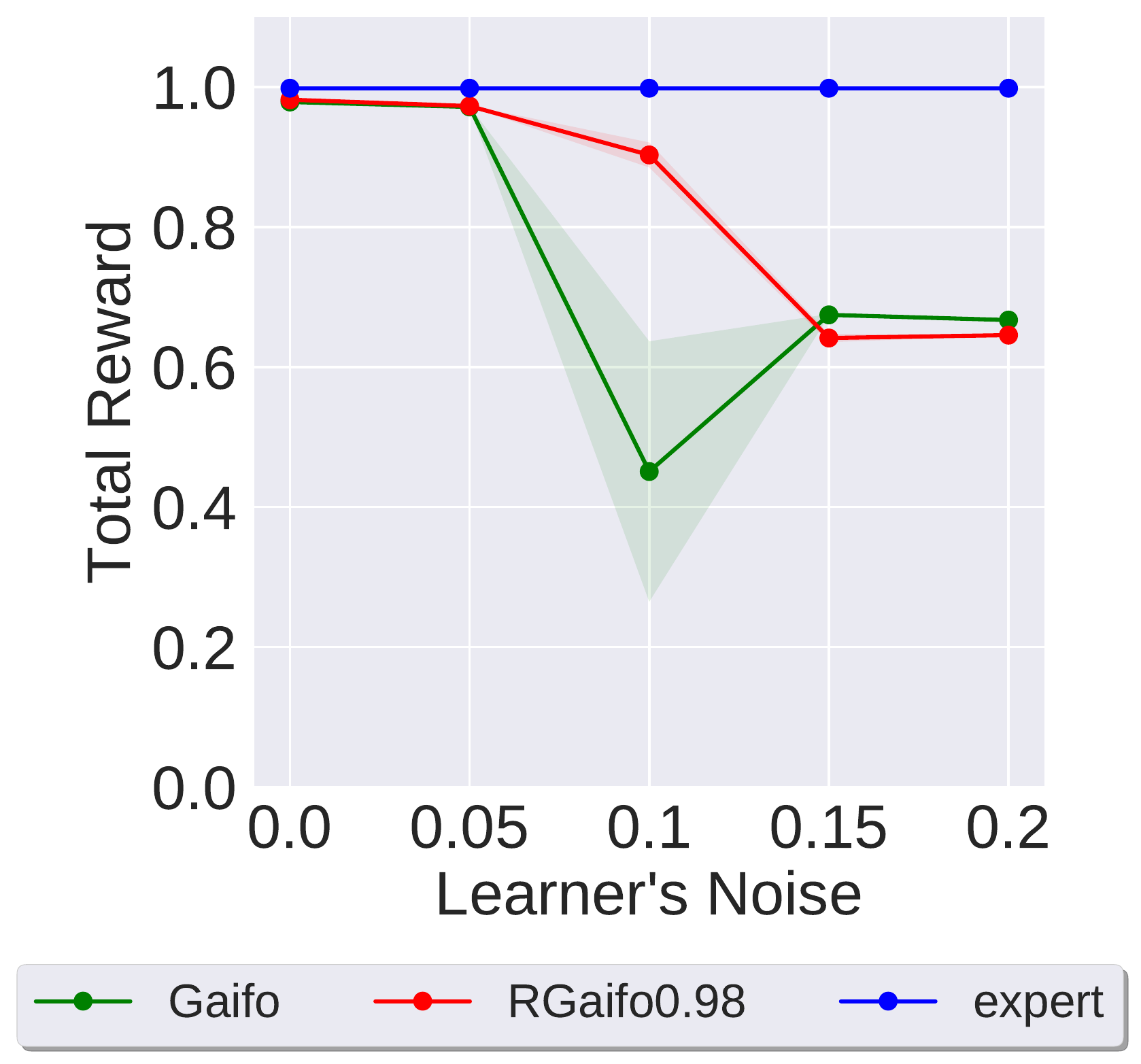}
     } \\
\end{tabular}
\caption{The average (over $3$ seeds) transfer performance of Algorithm~\ref{alg:robust-gailfo} with fixed value of $\alpha$ for each mismatch (i.e., each point on the x-axis) in the environment shown in Figure~\ref{fig:env_continuous}. The x-axis denotes the $\epsilon$ value of the learner environment. The policies are evaluated over $1e5$ steps truncating the last episode if it does not terminate. }
\label{fig:single_best_alpha_continuous}
\end{figure}

\begin{figure}[!h] 
\centering
\begin{tabular}{ccc}
\subfloat[Expert $\epsilon = 0.0$]{%
       \includegraphics[width=0.3\linewidth]{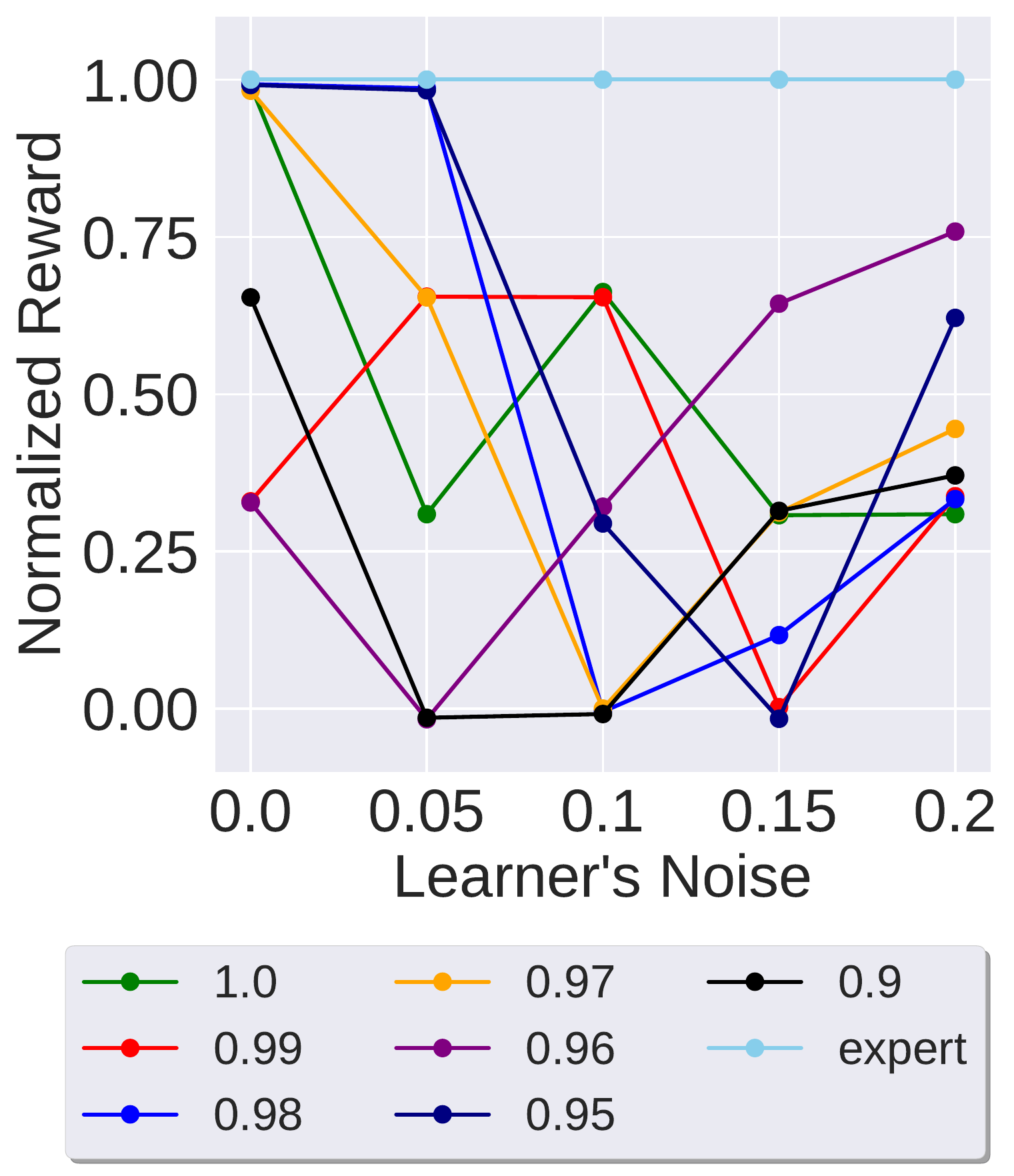}
     } &
\subfloat[Expert $\epsilon = 0.05$]{%
       \includegraphics[width=0.3\linewidth]{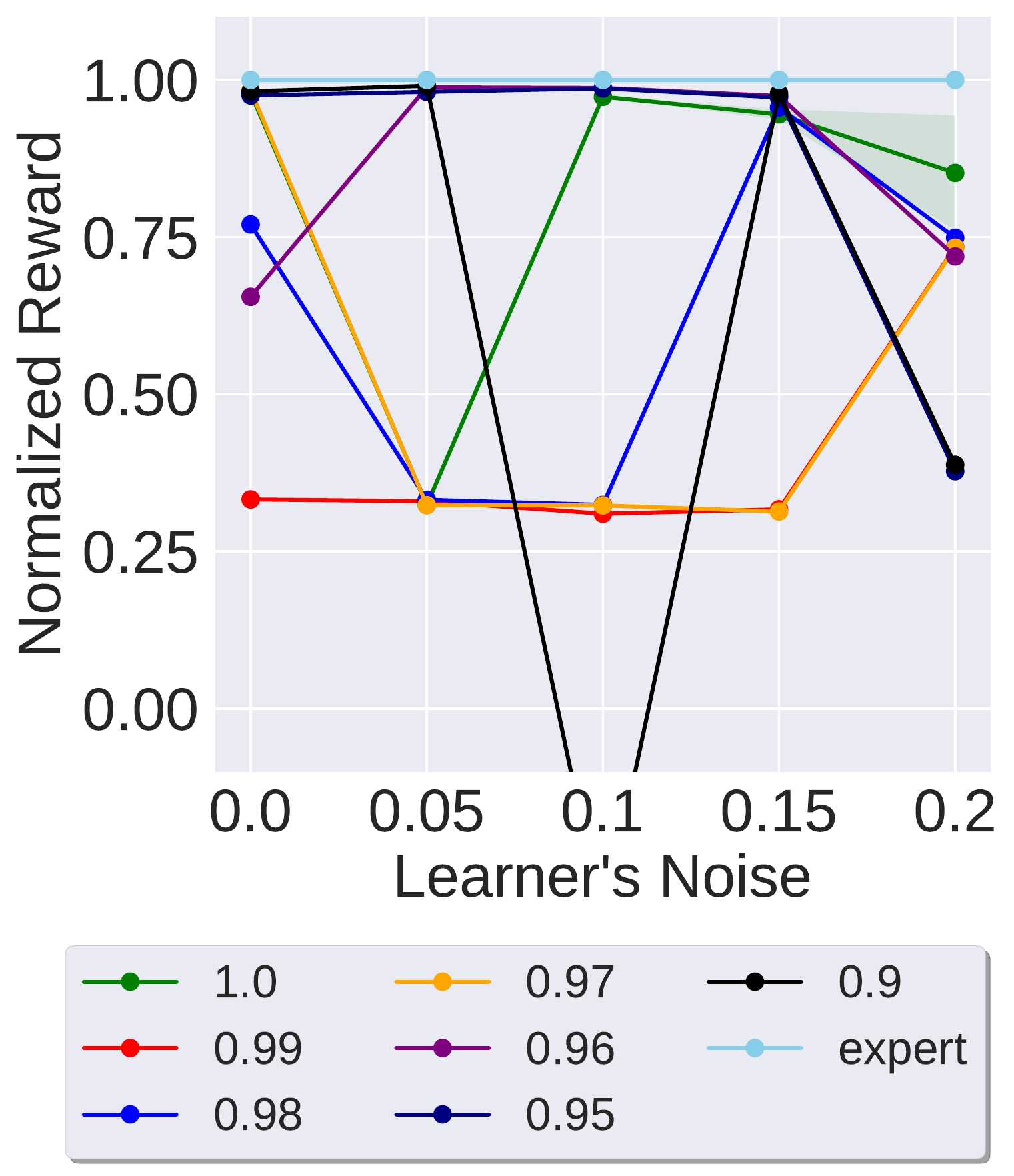}
     } &
\subfloat[Expert $\epsilon = 0.1$]{%
       \includegraphics[width=0.3\linewidth]{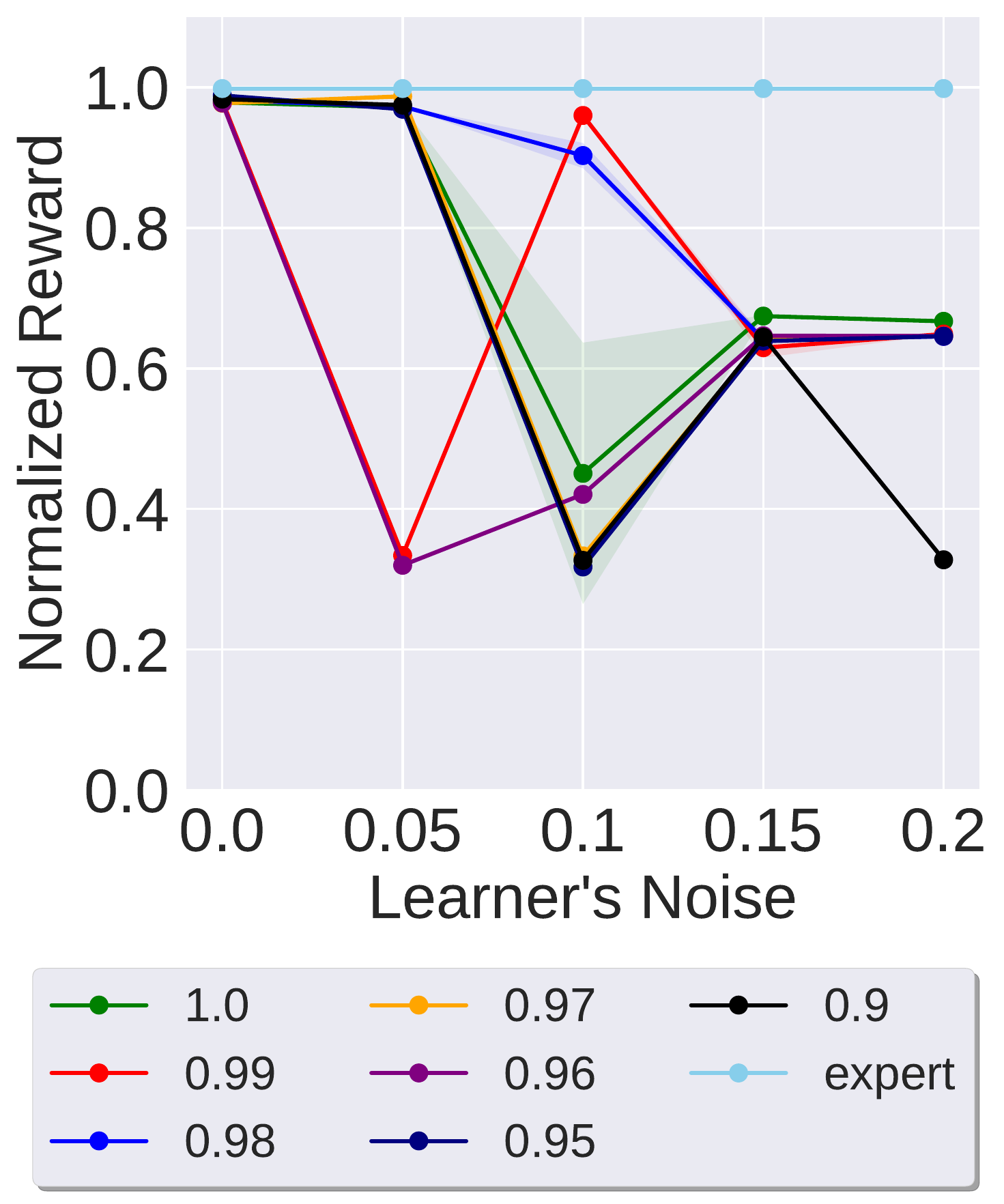}
     } \\
\end{tabular}
\caption{The average (over $3$ seeds) transfer performance of Algorithm~\ref{alg:robust-gailfo} with different values of $\alpha$ for each mismatch (i.e., each point on the x-axis) in the environment shown in Figure~\ref{fig:env_continuous}. The x-axis denotes the $\epsilon$ value of the learner environment. The policies are evaluated over $1e5$ steps truncating the last episode if it does not terminate. The ablation shown here is used to choose $\alpha$ in Figure~\ref{fig:single_best_alpha_continuous}. Note that robust-GAILfO with $\alpha = 1$ corresponds to GAILfO.}
 \label{fig:ablation_continuous}
\end{figure}

\section{Additional Experiments on Choice of $\alpha$}
\label{app:new_seeds}
In this section, we aim to understand whether our strategy of choosing suitable $\alpha$ value introduces maximization bias. For example, in Figure~\ref{fig:ablation_continuous}, the best performing $\alpha$ is chosen, and its performance curve (w.r.t. the original seeds used for training) is presented in Figure~\ref{fig:single_best_alpha_continuous}. To avoid this bias, for the chosen best performing $\alpha$ in Figure~\ref{fig:ablation_continuous}, we conduct a new set of runs with a new set of seeds. The new results presented in Figure~\ref{fig:seed_consistency} suggest that our $\alpha$ selection process does not introduce maximization bias. 


\begin{figure}[!h] 
\centering
\begin{tabular}{cc}
\subfloat[Expert $\epsilon = 0.0$; Original seeds]{%
       \includegraphics[width=0.3\linewidth]{plot/0_0_SingleBest.pdf}
     } &
\subfloat[Expert $\epsilon = 0.0$; New seeds]{%
       \includegraphics[width=0.3\linewidth]{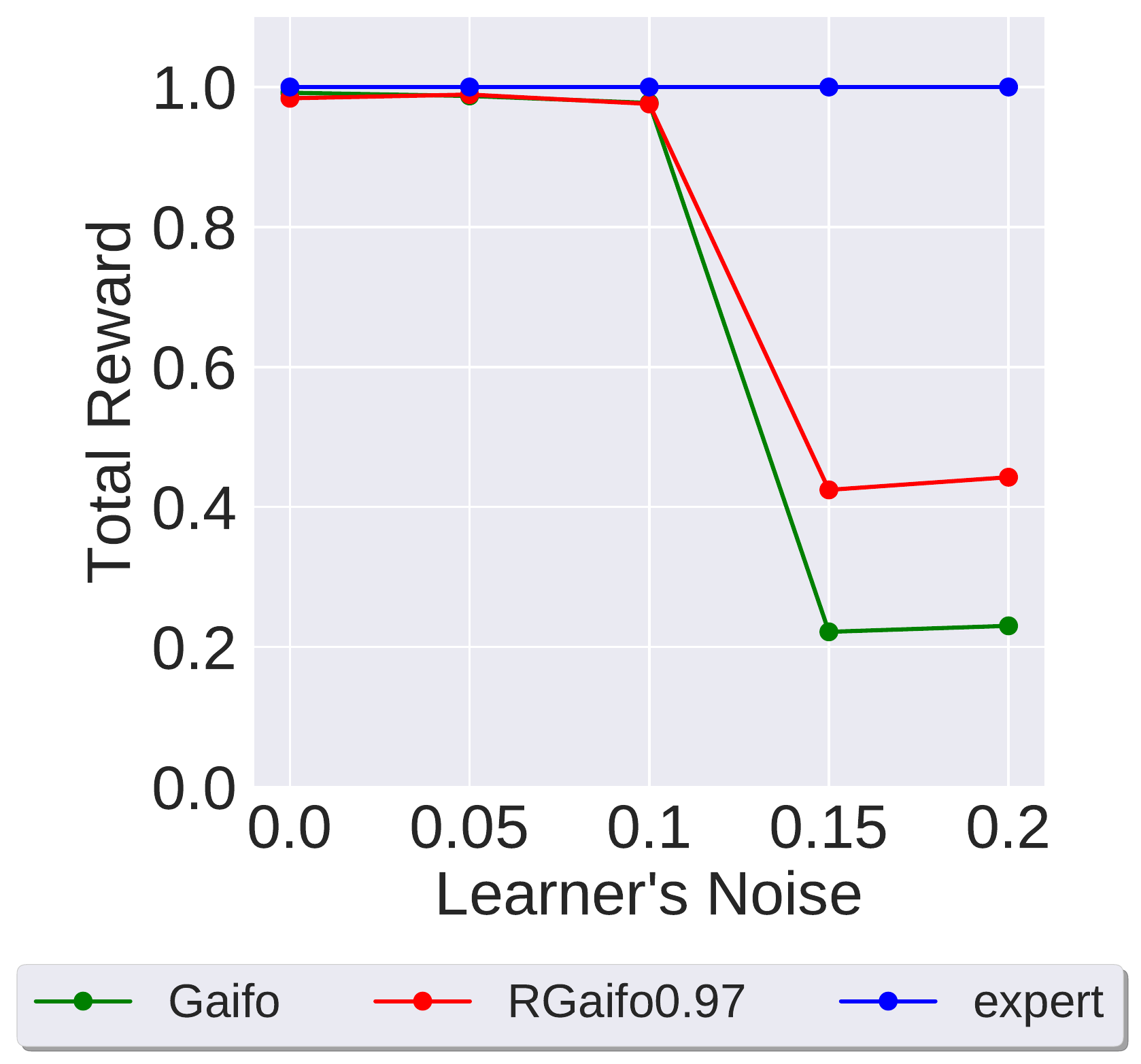}
     } \\
\subfloat[Expert $\epsilon = 0.05$; Original seeds]{%
       \includegraphics[width=0.3\linewidth]{plot/0_05_SingleBest.pdf}
     } &
\subfloat[Expert $\epsilon = 0.05$; New seeds]{%
       \includegraphics[width=0.3\linewidth]{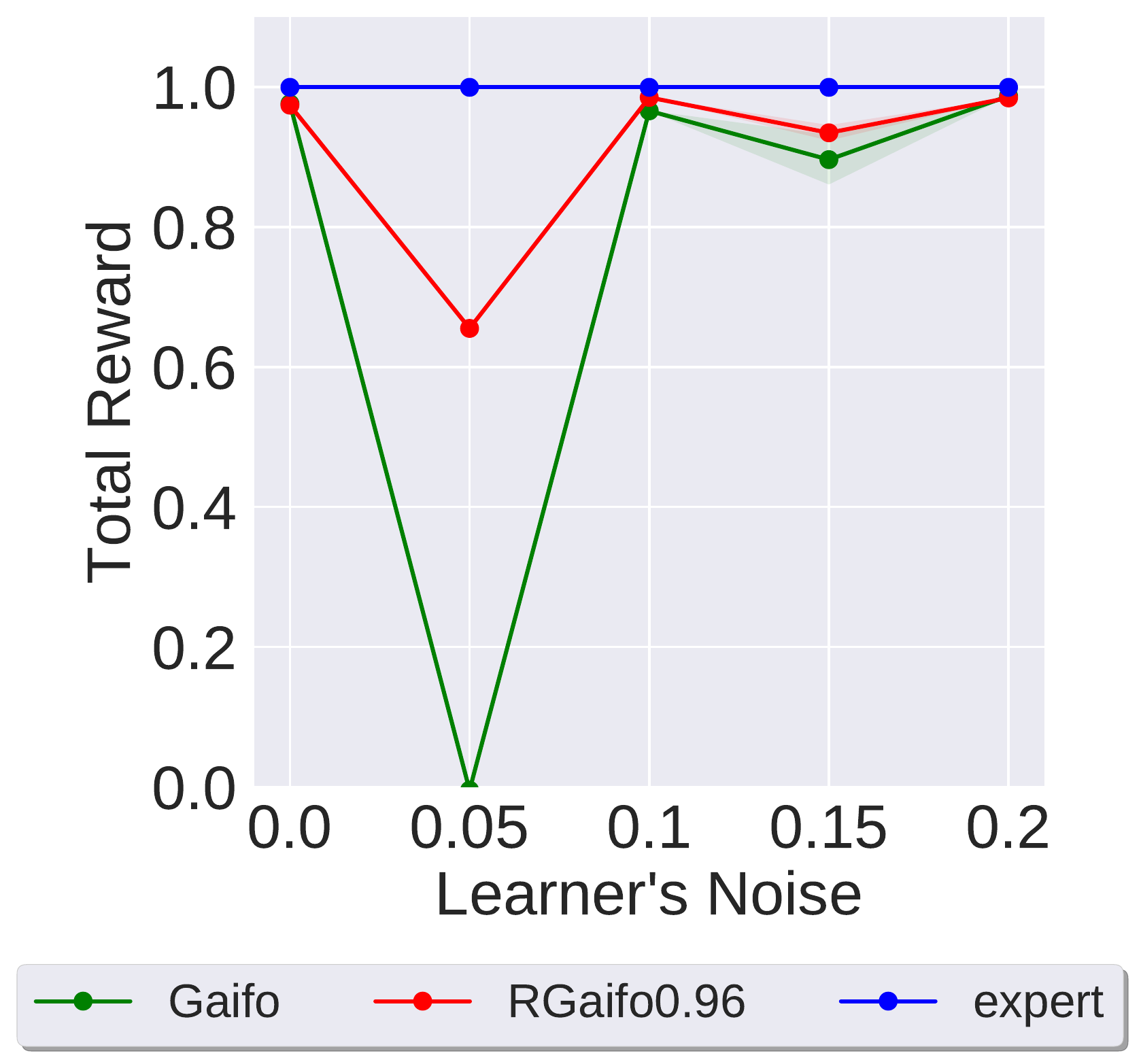}
     } \\
\subfloat[Expert $\epsilon = 0.1$; Original seeds]{%
       \includegraphics[width=0.3\linewidth]{plot/0_1_SingleBest.pdf}
     } &
\subfloat[Expert $\epsilon = 0.1$; New seeds]{%
       \includegraphics[width=0.3\linewidth]{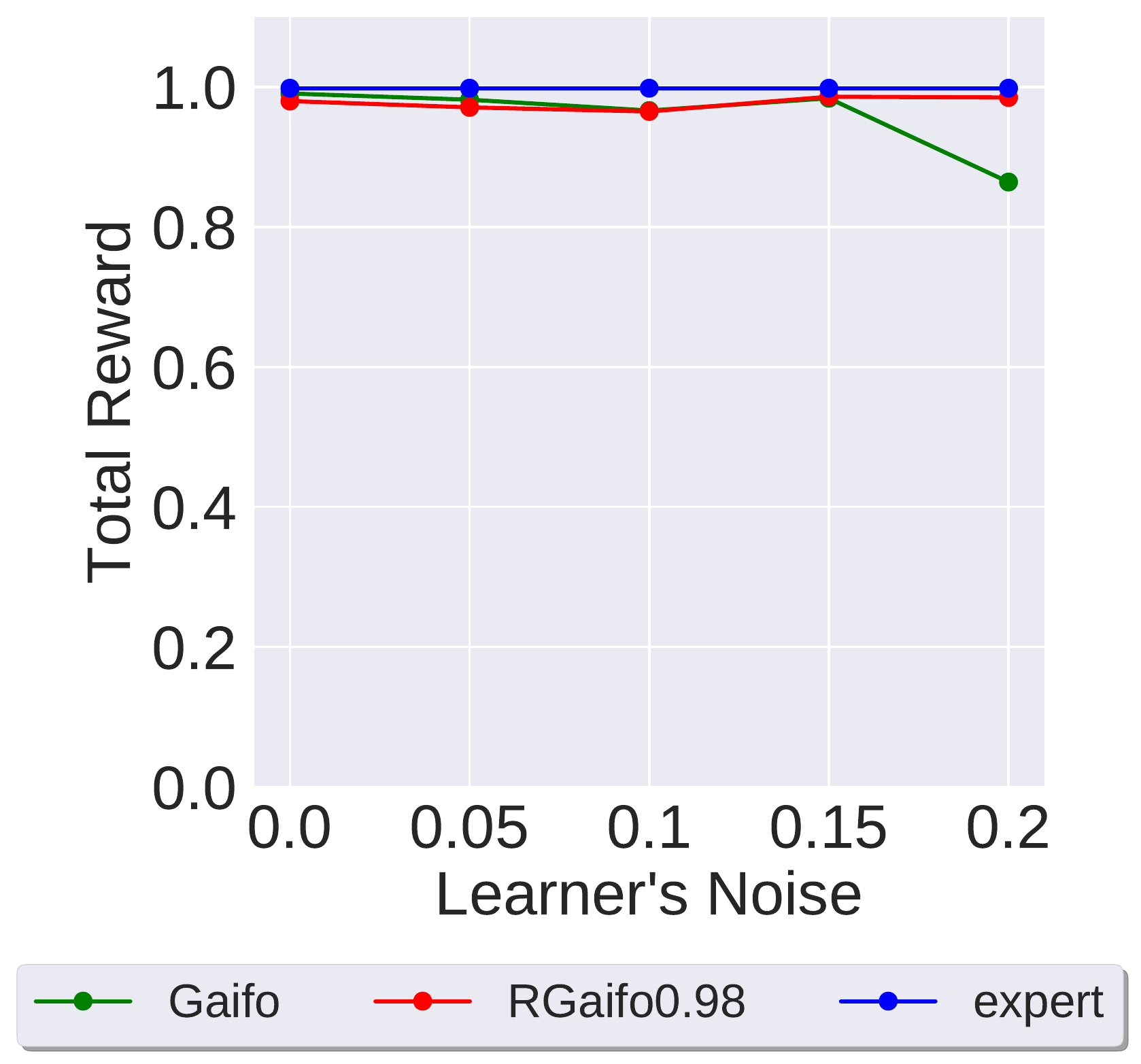}
     } \\
\end{tabular}

\caption{Experiments for understanding whether our strategy of choosing suitable $\alpha$ value introduces maximization bias.}
\label{fig:seed_consistency}
\end{figure}


\end{document}